\newlist{todolist}{itemize}{2}
\setlist[todolist]{label=$\square$}
\theoremstyle{plain}
\newtheorem{theorem}{Theorem}[section]
\newtheorem{proposition}[theorem]{Proposition}
\newtheorem{lemma}[theorem]{Lemma}
\theoremstyle{definition}
\newtheorem{definition}[theorem]{Definition}
\theoremstyle{remark}
\newcommand{\lbbr}{\{\!\!\{}
\newcommand{\rbbr}{\}\!\!\}}
\newcommand{\blbbr}{\big\{\!\!\big\{}
\newcommand{\brbbr}{\big\}\!\!\big\}}
\newcommand{\ly}{\cellcolor{yellow!20}}
\newcommand{\tbm}[1]{\texorpdfstring{$\bm{#1}$}{#1}}
\title{Distance-Restricted Folklore Weisfeiler-Leman GNNs with Provable Cycle Counting Power}
\author{
    Junru Zhou$^1$\quad Jiarui Feng$^2$\quad Xiyuan Wang$^1$\quad Muhan Zhang$^1$\\
    $^1$Institute for Artificial Intelligence, Peking University\\
    $^2$Department of CSE, Washington University in St. Louis\\
    \texttt{zml72062@stu.pku.edu.cn, feng.jiarui@wustl.edu,}\\
    \texttt{\{wangxiyuan,muhan\}@pku.edu.cn}
}
\begin{document}

\maketitle

\begin{abstract}
The ability of graph neural networks (GNNs) to count certain graph substructures, especially cycles, is important for the success of GNNs on a wide range of tasks. It has been recently used as a popular metric for evaluating the expressive power of GNNs. Many of the proposed GNN models with provable cycle counting power are based on subgraph GNNs, i.e., extracting a bag of subgraphs from the input graph, generating representations for each subgraph, and using them to augment the representation of the input graph. However, those methods require heavy preprocessing, and suffer from high time and memory costs. In this paper, we overcome the aforementioned limitations of subgraph GNNs by proposing a novel class of GNNs---$d$-Distance-Restricted FWL(2) GNNs, or $d$-DRFWL(2) GNNs, based on the well-known FWL(2) algorithm. As a heuristic method for graph isomorphism testing, FWL(2) colors all node pairs in a graph and performs message passing among those node pairs. In order to balance the expressive power and complexity, $d$-DRFWL(2) GNNs simplify FWL(2) by restricting the range of message passing to node pairs whose mutual distances are at most $d$. This way, $d$-DRFWL(2) GNNs exploit graph sparsity while avoiding the expensive subgraph extraction operations in subgraph GNNs, making both the time and space complexity lower. We theoretically investigate both the discriminative power and the cycle counting power of $d$-DRFWL(2) GNNs. Our most important finding is that $d$-DRFWL(2) GNNs have provably strong cycle counting power even with $d=2$: they can count all 3, 4, 5, 6-cycles. Since 6-cycles (e.g., benzene rings) are ubiquitous in organic molecules, being able to detect and count them is crucial for achieving robust and generalizable performance on molecular tasks. Experiments on both synthetic datasets and molecular datasets verify our theory. To the best of our knowledge, 2-DRFWL(2) GNN is the most efficient GNN model to date (both theoretically and empirically) that can count up to 6-cycles.
\end{abstract}

\section{Introduction}\label{sect_intro}

Graphs are important data structures suitable for representing relational or structural data. As a powerful tool to learn node-level, link-level or graph-level representations for graph-structured data, graph neural networks (GNNs) have achieved remarkable successes on a wide range of tasks~\citep{wu2020comprehensive, zhou2020graph}.
Among the various GNN models, Message Passing Neural Networks (MPNNs)~\citep{gilmer2017neural,kipf2016semi,velivckovic2017graph,xu2018powerful} are a widely adopted class of GNNs. However, the expressive power of MPNNs has been shown to be upper-bounded by the Weisfeiler-Leman test~\citep{xu2018powerful}. More importantly, MPNNs even fail to detect some simple substructures (e.g., 3, 4-cycles)~\citep{huang2023boosting}, thus losing great structural information in graphs.

The weak expressive power of MPNNs has aroused a search for more powerful GNN models. To evaluate the expressive power of these GNN models, there are usually two perspectives. One is to characterize their discriminative power, i.e., the ability to distinguish between non-isomorphic graphs. It is shown in~\citep{chen2019equivalence} that an equivalence exists between distinguishing all non-isomorphic graph pairs and approximating any permutation-invariant functions on graphs. Although the discriminative power partially reveals the function approximation ability of a GNN model, it fails to tell \emph{what specific functions} a model is able to approximate. Another perspective is to directly characterize \emph{what specific function classes} a model can approximate. In particular, we can study the expressive power of a GNN model by asking \emph{what graph substructures it can count}. This perspective is often more practical since the task of counting substructures (especially cycles) is closely related to a variety of domains such as chemistry~\citep{deshpande2002automated, jin2018junction}, biology~\citep{koyuturk2004efficient}, and social network analysis~\citep{jiang2010finding}. 

Despite the importance of counting cycles, the task gets increasingly difficult as the length of cycle increases \citep{alon1997finding}.
To have an intuitive understanding of this difficulty, we first discuss why MPNNs fail to count any cycles. The reason is clear: MPNNs only keep track of the rooted subtree \textbf{around each node}~\citep{xu2018powerful}, and without node identity, any neighboring node $v$ of $u$ has no idea which of its neighbors are also neighbors of $u$. Therefore, MPNNs cannot count even the simplest 3-cycles. 
In contrast, the 2-dimensional Folklore Weisfeiler-Leman, or FWL(2) test~\citep{cai1992optimal}, uses \emph{2-tuples of nodes} instead of nodes as the basic units for message passing, and thus natually \textbf{encodes closed walks}. For example, when a 2-tuple $(u,v)$ receives messages from 2-tuples $(w,v)$ and $(u,w)$ in FWL(2), \textbf{$(u,v)$ gets aware of the walk $u\rightarrow w\rightarrow v\rightarrow u$}, whose length is $d(u,w)+d(w,v)+d(u,v)=:l$. As long as the intermediate nodes do not overlap, we immediately get an $l$-cycle passing both $u$ and $v$. In fact, FWL(2) can provably count up to 7-cycles~\citep{arvind2020weisfeiler, furer2017combinatorial}. 

However, FWL(2) has a space and time complexity of $O(n^2)$ and $O(n^3)$, which makes it impractical for large graphs and limits its real-world applications. Inspired by the above discussion, we naturally raise a question: \textbf{to what extent can we retain the cycle counting power of FWL(2) while reducing the time and space complexity?} Answering this question requires another observation: \textbf{cycle counting tasks are intrinsically \emph{local}}. For example, the number of 6-cycles that pass a given node $u$ has nothing to do with the nodes that have a shortest-path distance $\geqslant 4$ to $u$. Therefore, if we only record the embeddings of \emph{node pairs with mutual distance $\leqslant d$} in FWL(2), where $d$ is a fixed positive integer, then the ability of FWL(2) to count substructures with diameter $\leqslant d$ should be retained. Moreover, given that most real-world graphs are sufficiently sparse, this simplified algorithm runs with complexities much lower than FWL(2), since we only need to store and update a small portion of the $n^2$ embeddings for all 2-tuples. We call this simplified version \textbf{$\bm{d}$-Distance-Restricted FWL(2)}, or \textbf{$\bm{d}$-DRFWL(2)}, reflecting that only 2-tuples with \emph{restricted} distances get updated. 

\textbf{Main contributions.} Our main contributions are summarized as follows:
\begin{enumerate}[leftmargin=*]
    \item We propose $d$-DRFWL(2) tests as well as their neural versions, $d$-DRFWL(2) GNNs. We study how the hyperparameter $d$ affects the discriminative power of $d$-DRFWL(2), and show that a \textbf{strict expressiveness hierarchy} exists for $d$-DRFWL(2) GNNs with increasing $d$, both theoretically and experimentally.
    \item We study the cycle counting power of $d$-DRFWL(2) GNNs. Our major results include:
    \begin{itemize}
        \item 2-DRFWL(2) GNNs can already count up to 6-cycles, covering common structures like benzene rings in organic chemistry. The time and space complexities of 2-DRFWL(2) are $O(n\ \mathrm{deg}^4)$ and $O(n\ \mathrm{deg}^2)$ respectively, making it the \textbf{most efficient one to date} among other models with 6-cycle counting power such as I$^2$-GNN~\citep{huang2023boosting}. 
        \item With $d\geqslant 3$, $d$-DRFWL(2) GNNs can count up to 7-cycles, \textbf{fully retaining} the cycle counting power of FWL(2) but with complexities \textbf{strictly lower} than FWL(2). This finding also confirms the existence of GNNs with the same cycle counting power as FWL(2), but strictly weaker discriminative power.
    \end{itemize} 
    \item We compare the performance and empirical efficiency of $d$-DRFWL(2) GNNs (especially for $d=2$) with other state-of-the-art GNNs on both synthetic and real-world datasets. The results verify our theory on the counting power of $d$-DRFWL(2) GNNs. Additionally, for the case of $d=2$, the amount of GPU memory required for training 2-DRFWL(2) GNNs is greatly reduced compared with subgraph GNNs like ID-GNN~\citep{you2021identity}, NGNN~\citep{zhang2021nested} and I$^2$-GNN~\citep{huang2023boosting}; the preprocessing time and training time of 2-DRFWL(2) GNNs are also much less than I$^2$-GNN. 
\end{enumerate}

\section{Preliminaries}\label{sect_pre}
\subsection{Notations}
For a simple, undirected graph $G$, we use $\mathcal{V}_G$ and $\mathcal{E}_G$ to denote its node set and edge set respectively. For every node $v\in\mathcal{V}_G$, we define its \emph{$k$-th hop neighbors} as $\mathcal{N}_k(v)=\{u\in\mathcal{V}_G:d(u,v)=k\}$, where $d(u,v)$ is the shortest-path distance between nodes $u$ and $v$. We further introduce the symbols $\mathcal{N}_{\leqslant k}(v)=\bigcup_{i=1}^k \mathcal{N}_i(v)$ and $\mathcal{N}(v)=\mathcal{N}_1(v)$. For any $n\in\mathbb{N}$, we denote $[n]=\{1,2,\ldots, n\}$.

An \emph{$\ell$-cycle} $(\ell\geqslant 3)$ in the (simple, undirected) graph $G$ is a sequence of $\ell$ edges $\{v_1,v_2\},\{v_2,v_3\},\ldots,\{v_\ell,v_1\}\in\mathcal{E}_G$ with $v_i\ne v_j$ for any $i\ne j$ and $i,j\in[\ell]$. The $\ell$-cycle is said to pass a node $v$ if $v$ is among the nodes $\{v_i:i\in[\ell]\}$. An \emph{$\ell$-path} is a sequence of $\ell$ edges $\{v_1,v_2\},\{v_2,v_3\},\ldots,\{v_\ell,v_{\ell+1}\}\in\mathcal{E}_G$ with $v_i\ne v_j$ for any $i\ne j$ and $i,j\in[\ell+1]$, and it is said to start at node $v_1$ and end at node $v_{\ell+1}$. An \emph{$\ell$-walk} from $v_1$ to $v_{\ell+1}$ is a sequence of $\ell$ edges $\{v_1,v_2\},\{v_2,v_3\},\ldots,\{v_\ell,v_{\ell+1}\}\in\mathcal{E}_G$ but the nodes $v_1,v_2,\ldots, v_{\ell+1}$ can coincide. An \emph{$\ell$-clique} is $\ell$ nodes $v_1,\ldots, v_\ell$ such that $\{v_i,v_j\}\in\mathcal{E}_G$ for all $i\ne j$ and $i,j\in[\ell]$.

Let $\mathcal{G}$ be the set of all simple, undirected graphs. If $S$ is a graph substructure on $\mathcal{G}$, and $G\in\mathcal{G}$ is a graph, we use $C(S, G)$ to denote the number of inequivalent substructures $S$ that occur as subgraphs of $G$. 
Similarly, if $u$ is a node of $G$, we use $C(S, u, G)$ to denote the number of inequivalent substructures $S$ that pass node $u$ and occur as subgraphs of $G$. 

Following~\citep{chen2020can, huang2023boosting}, we give the definition of \emph{whether a function class $\mathcal{F}$ on graphs can count a certain substructure $S$}. 

\begin{definition}[Graph-level count]
Let $\mathcal{F}_\mathrm{graph}$ be a function class on $\mathcal{G}$, i.e., $f_\mathrm{graph}:\mathcal{G}\rightarrow \mathbb{R}$ for all $f_\mathrm{graph}\in \mathcal{F}_\mathrm{graph}$. $\mathcal{F}_\mathrm{graph}$ is said to be able to \emph{graph-level count} a substructure $S$ on $\mathcal{G}$ if for $\forall G_1,G_2\in\mathcal{G}$ such that $C(S,G_1)\ne C(S,G_2)$, there exists $f_\mathrm{graph}\in\mathcal{F}_\mathrm{graph}$ such that $f_\mathrm{graph}(G_1)\ne f_\mathrm{graph}(G_2)$.
\end{definition}

\begin{definition}[Node-level count]\label{node_level_count}
Let $\mathcal{G}\times \mathcal{V} = \{(G,u):G\in\mathcal{G}, u\in\mathcal{V}_G\}$. Let $\mathcal{F}_\mathrm{node}$ be a function class on $\mathcal{G}\times\mathcal{V}$, i.e., $f_\mathrm{node}:\mathcal{G}\times\mathcal{V}\rightarrow\mathbb{R}$ for all $f_\mathrm{node}\in\mathcal{F}_\mathrm{node}$. $\mathcal{F}_\mathrm{node}$ is said to be able to \emph{node-level count} a substructure $S$ on $\mathcal{G}$ if for $\forall (G_1,u_1),(G_2,u_2)\in\mathcal{G}\times\mathcal{V}$ such that $C(S,u_1,G_1)\ne C(S,u_2,G_2)$, there exists $f_\mathrm{node}\in\mathcal{F}_\mathrm{node}$ such that $f_\mathrm{node}(G_1,u_1)\ne f_\mathrm{node}(G_2,u_2)$.
\end{definition}

Notice that $C(S,G)$ can be calculated by $\sum_{u\in\mathcal{V}_G}C(S, u, G)$ divided by a factor only depending on\vspace{0.15em} $S$, for any given substructure $S$. (For example, for triangles the factor is 3.) Therefore, counting a substructure at node level is harder than counting it at graph level.


\subsection{FWL(\tbm{k}) graph isomorphism tests}

\paragraph{WL(1) test.} The 1-dimensional Weisfeiler-Leman test, or WL(1) test, is a heuristic algorithm for the graph isomorphism problem~\citep{weisfeiler1968reduction}. For a graph $G$, the WL(1) test iteratively assigns a color $W(v)$ to every node $v\in \mathcal{V}_G$. At the 0-th iteration, the color $W^{(0)}(v)$ is identical for every node $v$. At the $t$-th iteration with $t\geqslant 1$,
\begin{align}\label{wl1}
    W^{(t)}(v) = \mathrm{HASH}^{(t)}\left(W^{(t-1)}(v),\mathrm{POOL}^{(t)}\left(\lbbr W^{(t-1)}(u):u\in\mathcal{N}(v)\rbbr\right)\right),
\end{align}
where $\mathrm{HASH}^{(t)}$ and $\mathrm{POOL}^{(t)}$ are injective hashing functions, and $\lbbr\cdot\rbbr$ means a multiset (set with potentially identical elements). The algorithm stops when the node colors become stable, i.e., $\forall v,u \in \mathcal{V}_G, W^{(t+1)}(v)= W^{(t+1)}(v)\Leftrightarrow W^{(t)}(v)=W^{(t)}(u)$. We denote the \emph{stable coloring} of node $v$ as $W^{(\infty)}(v)$; then the representation for graph $G$ is 
\begin{align}
    W(G)=\mathrm{READOUT}\left(\lbbr W^{(\infty)}(v):v\in\mathcal{V}_G\rbbr\right),
\end{align}
where $\mathrm{READOUT}$ is an arbitrary injective multiset function.

\paragraph{FWL($\bm{k}$) tests.}  

For $k\geqslant 2$, the $k$-dimensional Folklore Weisfeiler-Leman tests, or FWL($k$) tests, define a hierarchy of algorithms for graph isomorphism testing, as described in~\citep{cai1992optimal, huang2021wltutorial, maron2019provably}. For a graph $G$, the FWL($k$) test assigns a color $W(\mathbf{v})$ for every \emph{$k$-tuple} $\mathbf{v}=(v_1,\ldots, v_k)\in\mathcal{V}_G^k$. At the 0-th iteration, the color $W^{(0)}(\mathbf{v})$ is the \emph{atomic type} of $\mathbf{v}$, denoted as $\mathrm{atp}(\mathbf{v})$. If we denote the subgraphs of $G$ induced by $\mathbf{v}$ and $\mathbf{v}'$ as $G(\mathbf{v})$ and $G(\mathbf{v}')$ respectively, then the function $\mathrm{atp}(\cdot)$ can be any function on $\mathcal{V}_G^k$ that satisfies the following condition: $\mathrm{atp}(\mathbf{v})=\mathrm{atp}(\mathbf{v}')$ iff the mapping $v_i\mapsto v'_i, i\in[k]$ (i.e., the mapping that maps each $v_i$ to its corresponding $v'_i$, for $i\in[k]$) induces an isomorphism from $G(\mathbf{v})$ to $G(\mathbf{v}')$.

At the $t$-th iteration with $t\geqslant 1$, FWL($k$) updates the color of $\mathbf{v}\in\mathcal{V}_G^k$ as
\begin{align}\label{kfwl}
    W^{(t)}(\mathbf{v})=\mathrm{HASH}^{(t)}\left(W^{(t-1)}(\mathbf{v}),\mathrm{POOL}^{(t)}\left(\lbbr\mathrm{sift}(W^{(t-1)},\mathbf{v},w):w\in\mathcal{V}_G\rbbr\right)\right),
\end{align}
where $\mathrm{sift}(f,\mathbf{v},w)$ is defined as
\begin{align}
    \mathrm{sift}(f,\mathbf{v},w) = \left(f(\mathbf{v}[1\rightarrow w]), f(\mathbf{v}[2\rightarrow w]),\ldots, f(\mathbf{v}[k\rightarrow w])\right).
\end{align}
Here we use $\mathbf{v}[j\rightarrow w]$ to denote $(v_1,\ldots, v_{j-1},w,v_{j+1},\ldots, v_k)$ for $j\in[k]$ and $w\in\mathcal{V}_G$.

$\mathrm{HASH}^{(t)}$ and $\mathrm{POOL}^{(t)}$ are again injective hashing functions. The algorithm stops when all $k$-tuples receive stable colorings. The stable coloring of $\mathbf{v}\in\mathcal{V}_G^k$ is denoted as $W^{(\infty)}(\mathbf{v})$. We then calculate the representation for graph $G$ as
\begin{align}\label{kwl_readout}
    W(G) = \mathrm{READOUT}\left(\lbbr W^{(\infty)}(\mathbf{v}):\mathbf{v}\in\mathcal{V}_G^k\rbbr\right),
\end{align}
where $\mathrm{READOUT}$ is an arbitrary injective multiset function.

\section{\tbm{d}-Distance-Restricted FWL(2) GNNs}\label{sect_drfwl}

In this section, we propose the $d$-Distance Restricted FWL(2) tests/GNNs. They use 2-tuples like FWL(2), but restrict the distance between nodes in each 2-tuple to be $\leqslant d$, which effectively reduces the number of 2-tuples to store and aggregate while still retaining great cycle counting power.

\subsection{\tbm{d}-DRFWL(2) tests}

We call a 2-tuple $(u,v)\in\mathcal{V}_G^2$ a \textbf{distance-$\bm{k}$ tuple} if the shortest-path distance between $u$ and $v$ is $k$ in the following. 
\textbf{$\bm{d}$-Distance Restricted FWL(2) tests}, or \textbf{$\bm{d}$-DRFWL(2) tests}, assign a color $W(u,v)$ for \emph{every distance-$k$ tuple $(u,v)$ with $0\leqslant k\leqslant d$}. Initially, the color $W^{(0)}(u,v)$ only depends on $d(u,v)$. For the $t$-th iteration with $t\geqslant 1$, $d$-DRFWL(2) updates the colors using the following rule,
\begin{align}
    &\notag\text{For each }k=0,1,\ldots, d,\\
    \label{aggregation_update}&\quad\quad W^{(t)}(u,v)=\mathrm{HASH}_k^{(t)}\left(W^{(t-1)}(u,v),\left(M_{ij}^{k(t)}(u,v)\right)_{0\leqslant i,j\leqslant d}\right),\quad\text{if}\  d(u,v) = k,
\end{align}
where $\mathrm{HASH}_k^{(t)}$ is an injective hashing function for distance $k$ and iteration $t$, and $M_{ij}^{k(t)}(u,v)$ is defined as
\begin{align}\label{multiset_update}
    M_{ij}^{k(t)}(u,v) = \mathrm{POOL}_{ij}^{k(t)}\left(\blbbr\big(W^{(t-1)}(w,v), W^{(t-1)}(u,w)\big):w\in\mathcal{N}_i(u)\cap \mathcal{N}_j(v)\brbbr\right).
\end{align}
The symbol $\left(M_{ij}^{k(t)}(u,v)\right)_{0\leqslant i,j\leqslant d}$ stands for $\left(M_{00}^{k(t)}(u,v), M_{01}^{k(t)}(u,v),\ldots, M_{0d}^{k(t)}(u,v),\ldots,\right.$ $\left.M_{dd}^{k(t)}(u,v)\right)$. Each of the $\mathrm{POOL}_{ij}^{k(t)}$ with $0\leqslant i,j,k\leqslant d$ is an injective multiset hashing function. Briefly speaking, the rules \eqref{aggregation_update} and \eqref{multiset_update} \textbf{update the color of a distance-$\bm{k}$ tuple $(u,v)$ using colors of distance-$\bm{i}$ and distance-$\bm{j}$ tuples}. 

When every distance-$k$ tuple $(u,v)$ with $0\leqslant k\leqslant d$ receives its stable coloring, denoted as $W^{(\infty)}(u,v)$, the representation of $G$ is calculated as
\begin{align}\label{readout_drfwl2}
    W(G) = \mathrm{READOUT}\left(\lbbr W^{(\infty)}(u,v):(u,v)\in\mathcal{V}_G^2\ \text{and}\ 0\leqslant d(u,v)\leqslant d\rbbr\right).
\end{align}

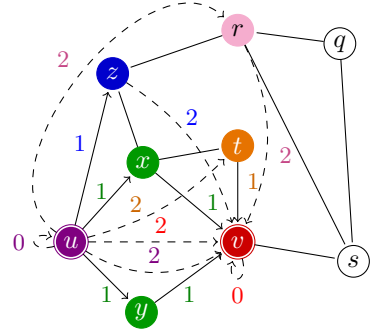
\begin{wrapfigure}{r}{0.4\textwidth}
\vskip -30pt
\begin{tikzpicture}[-,shorten >=1pt,on grid,auto,scale=0.64]
  \node[state,inner sep=1pt,minimum size=12pt,inner sep=1pt,minimum size=12pt](1)[fill=orange!90!black, draw=orange!90!black, text=white] at (0:0){$t$};
  \node[state,inner sep=1pt,minimum size=12pt, accepting,inner sep=1pt,minimum size=12pt](2)[fill=red!80!black, draw=red!80!black, text=white] at (270:2){$v$};
  \node[state,inner sep=1pt,minimum size=12pt,inner sep=1pt,minimum size=12pt](3) at(315: 3.4){$s$};
  \node[state,inner sep=1pt,minimum size=12pt,inner sep=1pt,minimum size=12pt](4) at(45: 3){$q$};
  \node[state,inner sep=1pt,minimum size=12pt,inner sep=1pt,minimum size=12pt](5)[fill=magenta!40!white, draw=magenta!40!white, text=black] at(90: 2.4){$r$};
  \node[state,inner sep=1pt,minimum size=12pt,inner sep=1pt,minimum size=12pt](6)[fill=blue!80!black, draw=blue!80!black, text=white] at(150:3){$z$};
  \node[state,inner sep=1pt,minimum size=12pt,inner sep=1pt,minimum size=12pt](7)[fill=green!60!black, text=white, draw=green!60!black] at (190: 2){$x$};
  \node[state,inner sep=1pt,minimum size=12pt,inner sep=1pt,minimum size=12pt](8)[fill=green!60!black, text=white, draw=green!60!black] at (240: 4){$y$};
  \node[state,inner sep=1pt,minimum size=12pt, accepting,inner sep=1pt,minimum size=12pt](9) [fill=violet,draw=violet, text=white] at (210:4) {$u$};
  \path (2) edge (3);
  \path[->, draw=green!50!black, text=green!50!black] (7)edge node [right=0.1]{\small$1$} (2);
    \path[->, draw=orange!80!black, text=orange!80!black] (1)edge node [right, pos=0.3]{\small$1$} (2);
    \path[->, draw=green!50!black, text=green!50!black] (8)edge node[below] {\small$1$} (2);
  \path (4) edge (5) edge(3) ;
  \path (5) edge (3) edge (6);
  \path [->, draw=blue, text=blue](9) edge node [left, pos=0.6] {\small $1$}(6);
    \path[->, draw=green!50!black, text=green!50!black] (9)edge node[left, pos=0.7] {\small$1$} (7);
        \path[->, draw=green!50!black, text=green!50!black] (9)edge node[below] {\small$1$} (8);
  \path (7) edge(1) edge(6);
  \path[->, dashed, draw=violet] (9) edge [loop left] node [text=violet] {\small$0$} (9) edge [bend right=32] node [text=violet, above]{\small $2$} (2);
  \path[->, dashed, draw=red] (9) edge node [text=red, above, pos=0.55] {\small$2$} (2);
  \path[->, dashed, draw=red] (2) edge [loop below] node[text=red]{\small $0$} (2);
  \path[->, dashed, draw=blue] (6) edge [bend left=20] node [text=blue, right=0.07, pos=0.3] {\small$2$} (2);
    \path[->, dashed, draw=orange!80!black] (9) edge  [bend right=15] node [text=orange!80!black, above, pos=0.3] {\small$2$} (1);
        \path[->, dashed, draw=magenta!80!black] (9) edge  [bend left=95] node [text=magenta!80!black] {\small$2$} (5);    \path[->, dashed, draw=magenta!80!black] (5) edge  [bend left=25] node [text=magenta!80!black, below right=0.05] {\small$2$} (2);
\end{tikzpicture}
\caption{
Neighbor aggregation in 2-DRFWL(2) for 2-tuple $(u,v)$.}
\label{dr_fwl2_pic}
\vskip -10pt
\end{wrapfigure}
Figure \ref{dr_fwl2_pic} illustrates how 2-DRFWL(2) updates the color of a distance-2 tuple $(u,v)$. Since only distance-$k$ tuples with $0\leqslant k\leqslant 2$ are colored in 2-DRFWL(2) tests, there are 7 terms of the form $((W(w,v),W(u,w))$ (with $w\in\{u,v,x,y,z,t,r\}$) contributing to the update of $W(u,v)$. The 7 nodes $u,v,x,y,z,t,r$ are filled with different colors, according to their distances to $u$ and to $v$. For example, the \textcolor{violet}{\emph{violet}} node $u$ has distance 0 to $u$ and distance 2 to $v$, thus contributing to $M_{02}^2(u,v)$; the \textcolor{green!60!black}{\emph{green}} nodes $x$ and $y$ have distance 1 to either $u$ or $v$, thus contributing to $M_{11}^2(u,v)$. Analogously, nodes with \emph{\textcolor{red!80!black}{red}, \textcolor{blue!80!black}{blue}, \textcolor{orange!90!black}{orange}} and \emph{\textcolor{magenta!70!white}{pink}} colors contribute to $M_{20}^2(u,v), M_{12}^2(u,v), M_{21}^2(u,v)$ and $M_{22}^2(u,v)$, respectively. Finally, the uncolored nodes $q$ and $s$ do not contribute to the update of $W(u,v)$, since they have distance 3 (which is greater than 2) to $u$. From the figure, we can observe that by redefining neighbors and sparsifying 2-tuples of FWL(2), $d$-DRFWL(2) significantly reduces the complexity and focuses only on local structures, especially on sparse graphs.

Now we study the expressive power of $d$-DRFWL(2) tests by comparing them with the WL hierarchy. First, we can prove that $d$-DRFWL(2) tests are strictly more powerful than WL(1), for every $d$.

\begin{theorem}\label{drfwl_vs_wl1}
    In terms of the ability to distinguish between non-isomorphic graphs, the $d$-DRFWL(2) test is strictly more powerful than WL(1), for any $d\geqslant 1$.
\end{theorem}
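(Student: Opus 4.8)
The plan is to prove the two halves of the statement separately: (i) $d$-DRFWL(2) is at least as powerful as WL(1) for every $d\geqslant 1$, and (ii) there is a pair of non-isomorphic graphs that $d$-DRFWL(2) distinguishes but WL(1) does not. Throughout I would work with the stable colorings, so that the one-step lag inherent in message passing does not clutter the argument.

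For (i), I would read off a node coloring from the DRFWL(2) stable coloring by setting $c(v):=W^{(\infty)}(v,v)$, using only the diagonal (distance-$0$) tuples, and then show that the partition induced by $c$ refines the WL(1) stable partition. Concretely, I would prove by induction on the WL(1) iteration count $t$ that $c(v)=c(v')$ implies $W^{(t)}(v)=W^{(t)}(v')$ (WL(1) colors), from which $c$ refines the WL(1) stable coloring. The base case is immediate since both colorings start constant. For the inductive step, the assumption $c(v)=c(v')$ together with the equitability of the stable DRFWL(2) coloring forces $M_{11}^{0}(v,v)=M_{11}^{0}(v',v')$; by injectivity of $\mathrm{POOL}_{11}^{0}$ in \eqref{multiset_update} this yields a bijection $\pi:\mathcal N(v)\to\mathcal N(v')$ with $W^{(\infty)}(w,v)=W^{(\infty)}(\pi(w),v')$ for all neighbors $w$.

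The crux --- and the place where the distance restriction must be shown to be harmless --- is the sub-claim that for distance-$1$ tuples, $W^{(\infty)}(w,v)=W^{(\infty)}(w',v')$ implies $c(w)=c(w')$. The key observation is that $\mathcal N_0(w)\cap\mathcal N_1(v)=\{w\}$, so the term $M_{01}^{1}(w,v)$ in the update \eqref{aggregation_update}--\eqref{multiset_update} pools the singleton multiset $\lbbr(W^{(\infty)}(w,v),W^{(\infty)}(w,w))\rbbr$; by injectivity of $\mathrm{POOL}_{01}^{1}$, the stable color of the distance-$1$ tuple $(w,v)$ therefore determines the diagonal color $W^{(\infty)}(w,w)=c(w)$. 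Thus, even though a distance-$1$ tuple cannot see all of $w$'s neighbors, it still encodes $c(w)$. Feeding this back, $c(v)=c(v')$ gives $\lbbr c(w):w\in\mathcal N(v)\rbbr=\lbbr c(w'):w'\in\mathcal N(v')\rbbr$, and applying the induction hypothesis neighborwise matches the WL(1) neighbor multisets, so $W^{(t)}(v)=W^{(t)}(v')$. Finally, since distance-$0$ tuples keep colors disjoint from all other tuples (distance is recorded in the initial color and refinement preserves this), the READOUT \eqref{readout_drfwl2} recovers the multiset of $c$-values, which in turn determines the WL(1) graph representation; hence $d$-DRFWL(2) distinguishes every pair that WL(1) distinguishes. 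For (ii), I would use the standard example of two disjoint triangles $C_3\sqcup C_3$ versus the single hexagon $C_6$: both are $2$-regular on six vertices, so WL(1) assigns the constant coloring to both, whereas already for $d=1$ the term $M_{11}^{1}(u,v)$ pools over common neighbors $\mathcal N(u)\cap\mathcal N(v)$, which is nonempty for every edge of $C_3\sqcup C_3$ but empty for every edge of $C_6$; by injectivity of the hashing functions these distance-$1$ colors differ and the difference survives in the READOUT, and the same common-neighbor term is present for all $d\geqslant 1$.

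The main obstacle is the sub-claim in (i): a priori the distance restriction discards every neighbor of an endpoint that lies farther than $d$ from the other endpoint, which seems to block the recovery of WL(1) neighborhood information. The resolution is the $z=w$ (equivalently $\mathcal N_0(w)\cap\mathcal N_1(v)=\{w\}$) term, which lets each distance-$1$ tuple smuggle the diagonal color of its endpoint through the aggregation; verifying that this term is genuinely present and injective in the update rule \eqref{multiset_update} is the delicate point, whereas both the outer induction and the strictness example are routine.
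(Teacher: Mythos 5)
Your proof is correct, and both halves rest on exactly the structural facts the paper exploits: the singleton term $M_{01}^{1}$ (the $w=u$ contribution to the update of a distance-$1$ tuple) lets that tuple carry the diagonal color of its endpoint, the term $M_{11}^{0}$ lets a diagonal tuple aggregate over its full neighborhood, and your separating pair is the paper's own example of two $3$-cycles versus one $6$-cycle, detected through $M_{11}^{1}$. The packaging of the positive half is genuinely different, though. The paper first reduces to $d=1$ by invoking the hierarchy of Theorem \ref{power_and_separation_of_drfwl2}, and then gives an explicit simulation: it chooses concrete $\mathrm{HASH}$ and $\mathrm{POOL}$ functions so that two iterations of 1-DRFWL(2) reproduce one iteration of WL(1) (odd iterations stash $(W(u,u),W(v,v))$ in each edge tuple via $M_{01}^{1}$ and $M_{10}^{1}$; even iterations apply the WL(1) hash inside $M_{11}^{0}$). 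You instead work with the injective-hash stable coloring and show by induction on the WL(1) iteration count that the diagonal coloring $c(v)=W^{(\infty)}(v,v)$ refines every WL(1) coloring. Your route is self-contained (no forward reference to the hierarchy theorem) and handles all $d\geqslant 1$ uniformly; the paper's route is constructive and makes the iteration overhead explicit. One step you should spell out when writing this up: extracting the sub-multiset $\lbbr c(v):v\in\mathcal{V}_G\rbbr$ from the READOUT multiset requires that distance-$0$ colors never collide with colors of tuples at positive distance; this holds because the initial coloring separates distances and injective hashing only refines, which is exactly the parenthetical you included and is all that is needed.
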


Next, we compare $d$-DRFWL(2) tests with FWL(2). Actually, since it is easy for the FWL(2) test to compute distance between every pair of nodes (we can initialize $W(u,v)$ as $0$, $1$ and $\infty$ for $u=v$, $(u,v)\in\mathcal{E}_G$ and all other cases, and iteratively update $W(u,v)$ with $\min \{W(u,v), \min_w \{W(u,w)+W(w,v)\}\}$), the FWL(2) test can use its update rule to simulate \eqref{aggregation_update} and \eqref{multiset_update} by applying different $\mathrm{HASH}_k^{(t)}$ and $\mathrm{POOL}_{ij}^{k(t)}$ functions to different $(i,j,k)$ values. This implies that \textbf{$\bm{d}$-DRFWL(2) tests are at most as powerful as the FWL(2) test}. Actually, this hierarchy in expressiveness is strict, due to the following theorem.

\begin{theorem}\label{power_and_separation_of_drfwl2}
    In terms of the ability to distinguish between non-isomorphic graphs, FWL(2) is strictly more powerful than $d$-DRFWL(2), for any $d\geqslant 1$. Moreover, $(d+1)$-DRFWL(2) is strictly more powerful than $d$-DRFWL(2).
\end{theorem}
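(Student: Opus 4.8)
The plan is to split the statement into its two claimed separations and, for each, combine an ``at most / at least as powerful'' direction with an explicit witness pair of graphs. The inclusion that $d$-DRFWL(2) is at most as powerful as FWL(2) is already argued in the text, so for the first claim only a witness is needed. For the second claim I would first prove that $(d+1)$-DRFWL(2) is at least as powerful as $d$-DRFWL(2) by a refinement induction: writing $W^{(t)}$ for the $(d+1)$-colors and $V^{(t)}$ for the $d$-colors (restricted to distance-$\le d$ tuples), I would show by induction on $t$ that $V^{(t)}$ is a function of $W^{(t)}$. The base case holds since both initial colors encode only $d(u,v)$; for the step, the $d$-update of a distance-$k$ tuple ($k\le d$) reads off exactly the multisets $M_{ij}^{k}$ with $i,j\le d$, and these are recoverable from the $(d+1)$-update's argument because $\{i,j\le d\}\subset\{i,j\le d+1\}$ and every sub-tuple $(u,w),(w,v)$ involved has distance $\le d$, hence is colored by both tests. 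Passing to stable colorings and noting that the $(d+1)$-readout determines the $d$-readout then gives the inclusion.

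Remarkably, a single family of graphs should witness both strict separations: let $G_1=C_{6d+2}$ be one cycle and $G_2=C_{3d+1}\sqcup C_{3d+1}$ be two disjoint cycles, both on $6d+2$ vertices. The central lemma I would prove is a \emph{locality lemma}: on any disjoint union of cycles each of length $\ge 3d+1$, the $d$-DRFWL(2) stable coloring is determined solely by the total number of vertices. Its proof has three ingredients. First, by the vertex- and reflection-transitivity of $C_m$, all ordered distance-$k$ tuples form a single automorphism orbit, so (colorings being automorphism-invariant) each iteration assigns exactly one color $\gamma_k^{(t)}$ per distance $k$. Second, for a distance-$k$ tuple the aggregation reduces to the counts $N_{ij}^k=|\mathcal{N}_i(u)\cap\mathcal{N}_j(v)|$ together with the $\gamma$'s, and I would check that for $m\ge 3d+1$ no ``wrap-around'' identification occurs among vertices within distance $d$ of both endpoints, so each $N_{ij}^k$ equals its value on an infinite path and is length-independent; this makes the recursion $\gamma_k^{(t)}=F_k(\gamma_0^{(t-1)},\dots,\gamma_d^{(t-1)})$ length-independent, hence so is its fixed point. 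Third, colored tuples have finite mutual distance, so disjoint components never interact and $G_2$'s coloring is just two copies of that of $C_{3d+1}$. Since $G_1$ and $G_2$ have equal vertex counts, their $d$-DRFWL(2) readouts coincide, proving $d$-DRFWL(2) cannot tell them apart.

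It then remains to show the two stronger tests \emph{can} separate $G_1$ from $G_2$. FWL(2) succeeds because it tracks connectivity: in $G_2$ cross-component pairs have distance $\infty$, an atomic type never occurring in the connected $G_1$, so the two graphs already differ at initialization. For $(d+1)$-DRFWL(2) the point is that $C_{3d+1}$ is \emph{not} $(d+1)$-generic while $C_{6d+2}$ is, since the threshold moves up to $3(d+1)+1=3d+4$. Concretely, for a distance-$(d+1)$ tuple $(0,d+1)$ in $C_{3d+1}$ the vertex at position $2d$ lies at distance $d+1$ from $0$ and $d-1$ from $d+1$ via wrap-around, contributing to $M_{d+1,d-1}^{d+1}$, whereas no such vertex exists on a long cycle; by injectivity of the hashing, the distance-$(d+1)$ tuples of $C_{3d+1}$ therefore receive a color absent from $C_{6d+2}$, so the readouts differ. (For $d=1$ this degenerates to the simpler fact that $C_4$ has a single antipode while $C_8$ has two vertices at distance $2$, a difference in the \emph{count} of distance-$2$ tuples.)

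The hard part will be the locality lemma, specifically pinning down the exact genericity threshold and verifying that no unexpected wrap-around coincidence survives for $m\ge 3d+1$; this is where one must argue carefully that the local aggregation multisets on $C_m$ agree with those on the infinite path, and then lift this from a single iteration to the stable coloring via the homogeneity reduction. The refinement induction and the FWL(2) separation are routine by comparison, and the $(d+1)$-side distinguishing feature, once the threshold is established, follows directly from injectivity of the hash and pooling functions.
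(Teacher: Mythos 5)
Your proposal is correct and follows essentially the same route as the paper: the same witness family ($C_{3d+1}\sqcup C_{3d+1}$ versus $C_{6d+2}$), the same inclusion arguments, and the same locality/homogeneity induction showing that on long cycles every distance-$k$ tuple ($k\leqslant d$) sees only its inferior arc and hence cannot sense the cycle length. One small correction: FWL(2) does \emph{not} separate the two graphs at initialization, since the atomic type of a 2-tuple encodes only equality/adjacency (both graphs have identical multisets of atomic types); the separation appears only after FWL(2) iteratively computes shortest-path distances and detects pairs at distance $\infty$, which is how the paper argues it. Your $(d+1)$-side witness (the multiset $M_{d+1,d-1}^{d+1}$ being nonempty on $C_{3d+1}$ but empty on $C_{6d+2}$, with the counting fallback for $d=1$) is a valid alternative to the paper's choice of comparing the total number of contributing nodes ($d+5$ versus $d+2$).
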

The proofs of all theorems within this section are included in Appendix \ref{proof_sect_drfwl}. 



\subsection{\tbm{d}-DRFWL(2) GNNs}\label{drfwl2_gnns}

Based on the $d$-DRFWL(2) tests, we now propose $d$-DRFWL(2) GNNs. Let $G$ be a graph which can have node features $f_v\in\mathbb{R}^{d_f},v\in\mathcal{V}_G$ and (or) edge features $e_{uv}\in\mathbb{R}^{d_e}, \{u,v\}\in\mathcal{E}_G$. A \textbf{$\bm{d}$-DRFWL(2) GNN} is defined as a function of the form 
\begin{align}\label{drfwlgnn}
    f = M\circ R\circ L_T \circ\sigma_{T-1}\circ\cdots\circ\sigma_1\circ L_1.
\end{align} 
The input of $f$ is the \emph{initial labeling} $h^{(0)}_{uv}, 0\leqslant d(u,v)\leqslant d$. Each $L_t$ with $t=1,2,\ldots, T$ in \eqref{drfwlgnn} is called a \textbf{$\bm{d}$-DRFWL(2) GNN layer}, which transforms $h_{uv}^{(t-1)}$ into $h_{uv}^{(t)}$ using rules
\begin{align}
    \notag& \text{For each }k=0,1,\ldots, d,\\
    \notag &\quad\quad \text{For each }(u,v)\in\mathcal{V}_G^2\text{ with }d(u,v)=k,\\
    &\label{drfwl2gnn1}\quad\quad\quad\quad a^{ijk(t)}_{uv} =\bigoplus_{w\in\mathcal{N}_i(u)\cap \mathcal{N}_j(v)}  m_{ijk}^{(t)}\left(h^{(t-1)}_{wv}, h^{(t-1)}_{uw}\right),\\
    &\label{drfwl2gnn2}\quad\quad\quad\quad h^{(t)}_{uv} = f_k^{(t)}\left( h^{(t-1)}_{uv},\left(a^{ijk(t)}_{uv}\right)_{0\leqslant i,j\leqslant d}\right),
\end{align}
where $m_{ijk}^{(t)}$ and $f_k^{(t)}$ are arbitrary learnable functions; $\bigoplus$ denotes a permutation-invariant aggregation operator (e.g., sum, mean or max). \eqref{drfwl2gnn1} and \eqref{drfwl2gnn2} are simply counterparts of \eqref{multiset_update} and \eqref{aggregation_update} with $\mathrm{POOL}_{ij}^{k(t)}$ and $\mathrm{HASH}_k^{(t)}$ replaced with continuous functions. $\sigma_t, t=1,\ldots, T-1$ are entry-wise activation functions. $R$ is a permutation-invariant readout function, whose input is the multiset $\lbbr h_{uv}^{(T)}:(u,v)\in\mathcal{V}_G^2 \text{ and } 0\leqslant d(u,v)\leqslant d\rbbr$. Finally, $M$ is an MLP that acts on the graph representation output by $R$.

We can prove that (i) the representation power of any $d$-DRFWL(2) GNN is upper-bounded by the $d$-DRFWL(2) test, and (ii) there exists a $d$-DRFWL(2) GNN instance that has equal representation power to the $d$-DRFWL(2) test. We leave the formal statement and proof to Appendix \ref{proof_sect_drfwl}.

\section{The cycle counting power of \tbm{d}-DRFWL(2) GNNs}\label{sect_count}


By Theorem \ref{power_and_separation_of_drfwl2}, the expressive power of $d$-DRFWL(2) tests (or $d$-DRFWL(2) GNNs) strictly increases with $d$. However, the space and time complexities of $d$-DRFWL(2) GNNs also increase with $d$. On one hand, since there are $O(n\ \mathrm{deg}^k)$ distance-$k$ tuples in a graph $G$, at least $O(n\ \mathrm{deg}^k)$ space is necessary to store the representations for all distance-$k$ tuples. For $d$-DRFWL(2) GNNs, this results in \textbf{a space complexity of $O(n\ \mathrm{deg}^d)$}. On the other hand, since there are at most $O(\mathrm{deg}^{\min\{i,j\}})$ nodes in $\mathcal{N}_i(u)\cap\mathcal{N}_j(v)$, $\forall 0\leqslant i,j\leqslant d$, there are at most $O(\mathrm{deg}^d)$ terms at the RHS of \eqref{drfwl2gnn1}. Therefore, it takes $O(d^2\ \mathrm{deg}^d)$ time to update a single representation vector $h_{uv}^{(t)}$ using \eqref{drfwl2gnn1} and \eqref{drfwl2gnn2}. This implies \textbf{the time complexity of $\bm{d}$-DRFWL(2) GNNs is $O(nd^2\ \mathrm{deg}^{2d})$}.

For scalability, $d$-DRFWL(2) GNNs with a relatively small value of $d$ are used in practice. \textbf{But how to find the $\bm{d}$ value that best strikes a balance between expressive power and efficiency?} To answer this question, we need a \emph{practical, quantitative} metric of expressive power. In the following, we characterize the \emph{cycle counting} power of $d$-DRFWL(2) GNNs. We find that 2-DRFWL(2) GNNs are powerful enough to \textbf{node-level count up to 6-cycles}, as well as \textbf{many other useful graph substructures}. Since for $d=2$, the time and space complexities of $d$-DRFWL(2) GNNs are $O(n\ \mathrm{deg}^4)$ and $O(n\ \mathrm{deg}^2)$ respectively, our model is \textbf{much more efficient} than I$^2$-GNN which requires $O(n\ \mathrm{deg}^5)$ time and $O(n\ \mathrm{deg}^4)$ space to count up to 6-cycles~\citep{huang2023boosting}. Moreover, with $d\geqslant 3$, $d$-DRFWL(2) GNNs are able to \textbf{node-level count up to 7-cycles}, already matching the cycle counting power of FWL(2). 

Our main results are stated in Theorems \ref{1_count}--\ref{geq_3_count}. Before we present the theorems, we need to give revised definitions of $C(S,u,G)$ for some certain substructures $S$. This is because in those substructures not all nodes are structurally equal.
\begin{definition}\label{def-4-path-count}
    If $S$ is an $\ell$-path with $\ell\geqslant 2$, $C(S,u,G)$ is defined to be the number of $\ell$-paths in $G$ \emph{starting from} node $u$.
\end{definition}
Figure \ref{4-path-fig} illustrates Definition \ref{def-4-path-count} for the $\ell=4$ case.
\begin{definition}\label{def-other-count}
    The substructures in Figures \ref{tailed-triangle-fig}, \ref{chordal-cycle-fig} and \ref{triangle-rectangle-fig} are called \emph{tailed triangles}, \emph{chordal cycles} and \emph{triangle-rectangles}, respectively. If $S$ is a tailed triangle (or chordal cycle or triangle-rectangle), $C(S,u,G)$ is defined to be the number of tailed triangles (or chordal cycles or triangle-rectangles) that occur as subgraphs of $G$ and include node $u$ at a position shown in the figures. 
\end{definition}

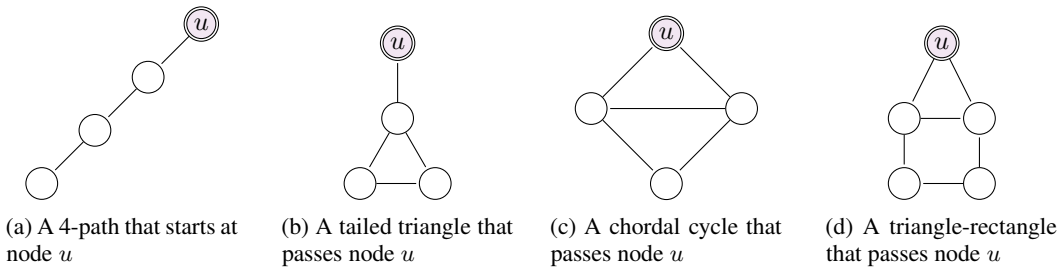
\begin{figure}[H]
\centering

\begin{subfigure}{0.22\textwidth}
\centering
\begin{tikzpicture}[-,shorten >=1pt,on grid,auto,scale=0.5]

\node[state,inner sep=1pt,minimum size=12pt, fill=violet!10, accepting,inner sep=1pt,minimum size=12pt](1) at(45:3){$u$};
\node[state,inner sep=1pt,minimum size=12pt,inner sep=1pt,minimum size=12pt](2) at(45:1){};
\node[state,inner sep=1pt,minimum size=12pt,inner sep=1pt,minimum size=12pt](3) at (225:1){};
\node[state,inner sep=1pt,minimum size=12pt,inner sep=1pt,minimum size=12pt](4) at (225:3){};

\path (1) edge (2);
\path (2) edge (3);
\path (3) edge (4);
\end{tikzpicture}
\caption{A 4-path that starts at node $u$}\label{4-path-fig}
\end{subfigure}
\hfill
\begin{subfigure}{0.22\textwidth}
\centering
\begin{tikzpicture}[-,shorten >=1pt,on grid,auto, scale=0.5]

\node[state,inner sep=1pt,minimum size=12pt,inner sep=1pt,minimum size=12pt](1) at(0:0){};
\node[state,inner sep=1pt,minimum size=12pt, fill=violet!10, accepting,inner sep=1pt,minimum size=12pt](2) at(90:2){$u$};
\node[state,inner sep=1pt,minimum size=12pt,inner sep=1pt,minimum size=12pt](3) at (240:2){};
\node[state,inner sep=1pt,minimum size=12pt,inner sep=1pt,minimum size=12pt](4) at (300:2){};
\path (1) edge (2) edge (3) edge (4);
\path (3) edge (4);
\end{tikzpicture}
\caption{A tailed triangle that passes node $u$}\label{tailed-triangle-fig}
\end{subfigure}\hfill
\begin{subfigure}{0.22\textwidth}
\centering
\begin{tikzpicture}[-,shorten >=1pt,on grid,auto, scale=0.5]

\node[state,inner sep=1pt,minimum size=12pt,inner sep=1pt,minimum size=12pt](1) at(0:2){};
\node[state,inner sep=1pt,minimum size=12pt, fill=violet!10, accepting,inner sep=1pt,minimum size=12pt](2) at(90:2){$u$};
\node[state,inner sep=1pt,minimum size=12pt,inner sep=1pt,minimum size=12pt](3) at (180:2){};
\node[state,inner sep=1pt,minimum size=12pt,inner sep=1pt,minimum size=12pt](4) at (270:2){};
\path (1) edge (2) edge (3) edge (4);
\path (3) edge (4) edge (2);
\end{tikzpicture}
\caption{A chordal cycle that passes node $u$}\label{chordal-cycle-fig}
\end{subfigure}
\hfill
\begin{subfigure}{0.22\textwidth}
\centering
\begin{tikzpicture}[-,shorten >=1pt,on grid,auto,scale=0.5]

\node[state,inner sep=1pt,minimum size=12pt, fill=violet!10, accepting,inner sep=1pt,minimum size=12pt](1) at(90:2){$u$};
\node[state,inner sep=1pt,minimum size=12pt,inner sep=1pt,minimum size=12pt](2) at(180:1){};
\node[state,inner sep=1pt,minimum size=12pt,inner sep=1pt,minimum size=12pt](3) at (0:1){};
\node[state,inner sep=1pt,minimum size=12pt,inner sep=1pt,minimum size=12pt](4) at (240:2){};
\node[state,inner sep=1pt,minimum size=12pt,inner sep=1pt,minimum size=12pt](5) at (300:2){};
\path (1) edge (2) edge (3);
\path (2) edge (3) edge (4);
\path (5) edge (3) edge (4);
\end{tikzpicture}
\caption{A triangle-rectangle that passes node $u$}\label{triangle-rectangle-fig}
\end{subfigure}
\caption{Illustrations of node-level counts of certain substructures.}
\end{figure}

Now we state our main theorems. In the following, the definition for ``node-level counting'' is the same as Definition \ref{node_level_count}, but one should treat $C(S,u,G)$ differently (following Definitions \ref{def-4-path-count} and \ref{def-other-count}) when $S$ is a path, a tailed triangle, a chordal cycle, or a triangle-rectangle. To define the output of a $d$-DRFWL(2) GNN $f$ on domain $\mathcal{G}\times\mathcal{V}$, we denote
\begin{align}
    f(G,u)=h_{uu}^{(T)},\quad u\in\mathcal{V}_G,
\end{align}
i.e., we treat the embedding of $(u,u)$ as the representation of node $u$. For 1-DRFWL(2) GNNs, we have
\begin{theorem}\label{1_count}
    1-DRFWL(2) GNNs can node-level count 3-cycles, but cannot graph-level count any longer cycles.
\end{theorem}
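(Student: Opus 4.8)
The plan is to handle the two claims separately: an explicit GNN construction for the positive (node-level 3-cycle counting) part, and an indistinguishable pair of graphs for the negative (no graph-level counting of longer cycles) part.

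For the positive direction I would build a two-layer 1-DRFWL(2) GNN with sum aggregation whose node output $h_{uu}^{(2)}$ equals $C(\text{3-cycle},u,G)$. The crucial observation for $d=1$ is that the only aggregation term reaching beyond a tuple's own endpoints is the $i=j=1$ term: for an edge $(u,v)$ it ranges over $w\in\mathcal{N}(u)\cap\mathcal{N}(v)$, i.e. the common neighbours of $u$ and $v$. In the first layer I set the message map $m_{111}^{(1)}\equiv 1$, so that $a^{111(1)}_{uv}=|\mathcal{N}(u)\cap\mathcal{N}(v)|=:T_{uv}$, the number of triangles through $\{u,v\}$, and store $T_{uv}$ in $h_{uv}^{(1)}$. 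In the second layer I use the update of the distance-$0$ tuple $(u,u)$, whose $i=j=1$ term aggregates the incident edge embeddings over $w\in\mathcal{N}(u)$; extracting $T_{uw}$ and summing gives
\[
a^{110(2)}_{uu}=\sum_{w\in\mathcal{N}(u)}T_{uw}=2\,C(\text{3-cycle},u,G),
\]
because each triangle through $u$ is counted once per incident edge. Halving this in $f_0^{(2)}$ yields $h_{uu}^{(2)}=C(\text{3-cycle},u,G)$, which separates any two rooted graphs with different triangle counts, fulfilling Definition \ref{node_level_count}. All maps used are realizable since the GNN update functions are arbitrary and the initial labels already distinguish distance $0$ from distance $1$.

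For the negative direction I would produce, for each $\ell\geqslant 4$, a pair that 1-DRFWL(2) cannot distinguish but whose $\ell$-cycle counts differ: let $G_1=C_\ell\sqcup C_\ell$ and $G_2=C_{2\ell}$. Both have $2\ell$ nodes and $2\ell$ edges, are $2$-regular and triangle-free, yet $C(\ell\text{-cycle},G_1)=2\neq 0=C(\ell\text{-cycle},G_2)$, since a single cycle has no proper subcycle. The heart of the argument is an induction showing that 1-DRFWL(2) gives a \emph{trivial} stable coloring to every $2$-regular triangle-free graph: initially all nodes share one color $p$ and all edges one color $q$ (labels depend only on distance); assuming this at iteration $t-1$, triangle-freeness makes every edge's $i=j=1$ aggregation empty, its other nonempty terms are the constants $m(q,p)$ and $m(p,q)$, and each node's $i=j=1$ term equals $\deg(u)\,m(q,q)=2\,m(q,q)$ by $2$-regularity, so the coloring is again trivial at iteration $t$. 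Hence $G_1$ and $G_2$ yield identical multisets of stable colors, the 1-DRFWL(2) test cannot separate them, and — invoking the stated fact that a 1-DRFWL(2) GNN is no more powerful than the test — no 1-DRFWL(2) GNN distinguishes them either. Therefore none can graph-level count $\ell$-cycles for any $\ell\geqslant 4$.

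The main obstacle is making the indistinguishability induction airtight: I must check that \emph{all} terms $a^{ijk}_{uv}$ with $i,j,k\in\{0,1\}$ remain constant across tuples of the same distance type, simultaneously at every iteration, and it is precisely here that triangle-freeness (annihilating the only nonlocal, $i=j=1$ edge term) and $2$-regularity (pinning the neighbor count in the node term) are both indispensable. A secondary care-point is the test-to-GNN transfer, which I resolve by the earlier upper-bound result. By contrast the positive direction is comparatively routine once one sees that triangles through an edge are exactly the common neighbours counted by the $i=j=1$ term.
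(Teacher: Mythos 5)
Your proposal is correct and follows essentially the same route as the paper: the positive direction is the identical two-step construction (first aggregate $|\mathcal{N}(u)\cap\mathcal{N}(v)|$ into each edge tuple via the $i=j=1$ term, then sum over incident edges at $(u,u)$ and halve), and the negative direction uses the same counterexample family of two $\ell$-cycles versus one $2\ell$-cycle for $\ell\geqslant 4$. The only difference is presentational: the paper cites its general remark (from the proof of Theorem \ref{power_and_separation_of_drfwl2}) that $d$-DRFWL(2) cannot separate two $k$-cycles from a $2k$-cycle for $k\geqslant 3d+1$, whereas you prove the $d=1$ instance directly with a clean self-contained induction showing that $2$-regularity and triangle-freeness force a trivial stable coloring — a valid and slightly more elementary packaging of the same argument.
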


For 2-DRFWL(2) GNNs, we investigate not only their cycle counting power, but also their ability to count many other graph substructures. Our results include

\begin{theorem}\label{paths_count}
    2-DRFWL(2) GNNs can node-level count 2, 3, 4-paths. 
\end{theorem}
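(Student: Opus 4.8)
The plan is to construct, for each $\ell\in\{2,3,4\}$, an explicit 2-DRFWL(2) GNN whose node output $h_{uu}^{(T)}$ equals $C(\ell\text{-path},u,G)$ in the sense of Definition \ref{def-4-path-count}. This suffices for the theorem: if a 2-DRFWL(2) GNN $f$ satisfies $f(G,u)=C(\ell\text{-path},u,G)$ for all $(G,u)$, then any two inputs with distinct counts are separated by $f$, which is exactly the requirement in Definition \ref{node_level_count}. Since the message and update functions $m_{ijk}^{(t)}, f_k^{(t)}$ in \eqref{drfwl2gnn1}--\eqref{drfwl2gnn2} may be chosen arbitrarily (and, via the GNN/test equivalence discussed in Section \ref{drfwl2_gnns}, any quantity computable by the test is realizable by some instance), it is enough to exhibit a finite sequence of such update rules that assembles the desired count. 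The governing idea is that \emph{walks} of bounded length are trivially accumulated by message passing, whereas \emph{paths} differ from walks only in requiring all nodes to be distinct; I therefore count walks and remove the degenerate (node-repeating) contributions by inclusion--exclusion, checking each correction term is itself a locally computable quantity.

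The building blocks are as follows. After one layer, $h_{uu}^{(1)}$ can encode $\deg(u)=|\mathcal N_1(u)\cap\mathcal N_1(u)|$ via the term $M_{11}^{0}(u,u)$; more generally, for any $0\le i,j\le 2$ the cardinality $|\mathcal N_i(u)\cap\mathcal N_j(v)|$ is recorded in $M_{ij}^{k}(u,v)$ for each tracked tuple $(u,v)$ with $k=d(u,v)\le 2$. These cardinalities (degrees, numbers of common neighbors, numbers of length-$2$ connections) are precisely the local statistics needed below. A degree computed at $(w,w)$ can be transported to an incident distance-$1$ tuple $(w,u)$ through the index $(i,j)=(0,1)$, and from there aggregated back at $(u,u)$ over $w\in\mathcal N_1(u)$. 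Iterating this transport/aggregate pattern over a constant number of layers lets the embedding at $(u,u)$ accumulate sums such as $\sum_{a\in\mathcal N_1(u)}\deg(a)$ and $\sum_{a\in\mathcal N_1(u)}|\mathcal N_1(a)\cap\mathcal N_1(u)|$. For $\ell=2$ this already gives the result, since $C(2\text{-path},u,G)=\sum_{a\in\mathcal N_1(u)}(\deg(a)-1)$; for $\ell=3$ one expands the number of length-$3$ walks from $u$ and subtracts the degenerate walks in which the endpoint returns to $u$, lands on an already-visited neighbor, or the middle step backtracks, each such term being a sum of degrees or of triangle counts at edges incident to $u$, all among the local statistics above.

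The main obstacle is the $\ell=4$ case, where enforcing distinctness of all five nodes on a length-$4$ walk $u\to a\to b\to c\to d$ demands a genuine inclusion--exclusion over the ways two nodes may coincide. The plan is to write $C(4\text{-path},u,G)$ as the number of length-$4$ walks from $u$ minus a sum of correction terms, and then to show each correction term is, up to a constant, the node-level count of a lower-complexity substructure through $u$ (such as a shorter path, a triangle or tailed triangle, a chordal cycle, or a $4$-cycle), every one of which has diameter at most $2$ and is therefore expressible through the tracked distance-$\le 2$ tuples. The delicate point is precisely this locality check: one must verify that in every decomposition used to accumulate the walk count and its corrections, the two nodes whose embeddings are combined in an aggregation \eqref{drfwl2gnn1} are always at mutual distance at most $2$, so that the required tuple is actually colored by 2-DRFWL(2). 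Concretely, along $u\to a\to b\to c\to d$ the interacting pairs $(u,b)$, $(a,c)$, $(b,d)$ all have distance $\le 2$, which is what makes a four-step pattern assemblable without ever forming the possibly distance-$3$ or distance-$4$ tuples $(u,c)$ or $(u,d)$. Carrying out this bookkeeping, enumerating the coincidence patterns, matching each resulting correction with a countable local substructure, and confirming the distance constraints throughout, is the technical heart of the argument; the remaining steps are the routine layer-by-layer realizations sketched above.
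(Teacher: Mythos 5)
Your plan is essentially the paper's proof: both count $\ell$-paths as $\ell$-walks minus an inclusion--exclusion over node coincidences, and both rest on the same locality observation that a 4-walk $u\to a\to b\to c\to d$ can be assembled by combining colors only of tuples at mutual distance at most $2$ (splitting at the midpoint $b$, so that $(u,b)$ and $(b,d)$ are tracked, and never forming $(u,c)$ or $(u,d)$). The paper organizes this through pair-level counts $P_k(u,v)$ and $W_k(u,v)$ on tracked tuples and recovers the contribution of endpoints at distance $3$ or $4$ from the node-level walk count $W_k(u)$ of Lemma \ref{can_node_level_count_walk_lemma}, whereas you accumulate directly at $(u,u)$; this is an organizational rather than a substantive difference, but note that the coincidence enumeration you defer as ``bookkeeping'' is where the paper's argument (Lemmas \ref{can_pair_level_count_2_path_lemma}--\ref{can_pair_level_count_4_walk_lemma}, in particular the corrections \eqref{proof_4_path_count_a_formula}--\eqref{proof_4_path_count_d_formula}) does essentially all of its work, so your proposal is a correct outline rather than a finished proof.
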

\begin{theorem}\label{cycles_count}
    2-DRFWL(2) GNNs can node-level count 3, 4, 5, 6-cycles.
\end{theorem}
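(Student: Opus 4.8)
The plan is to show that for each node $u$ the stable 2-DRFWL(2) coloring determines $C(S,u,G)$ for every $S\in\{C_3,C_4,C_5,C_6\}$; together with the result of Section~\ref{drfwl2_gnns} that some 2-DRFWL(2) GNN instance is as expressive as the test, and the fact that the readout MLP can implement the required arithmetic, this yields node-level countability. Because the node output is $h^{(T)}_{uu}$ and the self-loop $(u,u)$ aggregates, via $M^{0(t)}_{ii}$, the colors of all tuples $(u,w)$ with $w\in\mathcal{N}_1(u)\cup\mathcal{N}_2(u)$, it suffices to (i) attach to each distance-$\leqslant 2$ tuple $(u,w)$ a set of local quantities provably determined by $W^{(\infty)}(u,w)$, and (ii) write each cycle count through $u$ as an aggregation over $w$ of a fixed function of those quantities.

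For (i) I would prove, by induction on the layer index, that $W^{(\infty)}(u,w)$ determines $d(u,w)$, the common-neighbor profile $\bigl(|\mathcal{N}_i(u)\cap\mathcal{N}_j(w)|\bigr)_{0\leqslant i,j\leqslant 2}$, and the endpoint-pinned path counts $P_\ell(u,w)$ (the number of $\ell$-paths from $u$ to $w$) for $\ell\leqslant 4$. The inductive step is exactly the mechanism behind Theorem~\ref{paths_count}: one application of \eqref{drfwl2gnn1}--\eqref{drfwl2gnn2} hands the tuple $(u,w)$ the multiset of color-pairs of $(u,w')$ and $(w',w)$ over $w'\in\mathcal{N}_i(u)\cap\mathcal{N}_j(w)$, so with injective aggregation the color accumulates products of shorter-path counts relayed through a middle vertex $w'$ that is itself at distance $\leqslant 2$ from both $u$ and $w$. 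This relaying is what lets a $4$-path count survive the distance restriction even though its interior vertex may sit at distance $3$ from $u$.

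For (ii) I would use a single decomposition: in any $\ell$-cycle through $u$ with $\ell\in\{4,5,6\}$ pick an anchor $w$ at cycle-distance $2$ from $u$ (so $d(u,w)\leqslant 2$) and split the cycle into a $2$-path and an $(\ell-2)$-path from $u$ to $w$; the leading term is then $\sum_w P_2(u,w)\,P_{\ell-2}(u,w)$ up to a fixed symmetry factor. The two smallest cases are exact: $C(C_3,u,G)=\tfrac12\sum_{w\in\mathcal{N}_1(u)}|\mathcal{N}_1(u)\cap\mathcal{N}_1(w)|$ and $C(C_4,u,G)=\sum_{w\in\mathcal{N}_1(u)\cup\mathcal{N}_2(u)}\binom{|\mathcal{N}_1(u)\cap\mathcal{N}_1(w)|}{2}$, each a function of the quantities from step (i). The point that makes the $6$-cycle reachable is that even though its antipodal vertex may lie at distance $3$ from $u$, it never serves as an anchor and instead appears only as the relayed middle vertex of the $4$-path $P_4(u,w)$, which is legitimately encoded in a distance-$\leqslant 2$ tuple.

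The main obstacle is the inclusion--exclusion needed to turn the leading products $\sum_w P_2\,P_{\ell-2}$ into exact counts for $\ell=5,6$: these products overcount every configuration in which two interior vertices coincide or an extra chord is present. I would enumerate each coincidence/chord pattern, verify that the corresponding correction is either a shorter cycle (already handled) or one of the tailed triangles, chordal cycles, and triangle-rectangles of Definition~\ref{def-other-count}, establish node-level countability of each such auxiliary substructure by the same step-(i)/step-(ii) recipe, and then solve the resulting triangular linear system for $C(C_5,u,G)$ and $C(C_6,u,G)$. A secondary point is realizability: the readout must compute binomial coefficients, products, and signed sums of the local quantities, which is available through the maximally-expressive GNN instance of Section~\ref{drfwl2_gnns} together with MLPs, and every aggregation above respects the distance-restricted pattern of \eqref{drfwl2gnn1}--\eqref{drfwl2gnn2}.
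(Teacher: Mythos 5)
Your proposal is correct and follows essentially the same route as the paper's proof: establish pair-level path counts $P_\ell(u,v)$ for $\ell\leqslant 4$ on distance-$\leqslant 2$ tuples (with the $4$-path relayed through a middle vertex that is within distance $2$ of both endpoints), decompose each cycle through $u$ into two paths meeting at a nearby anchor, and repair the overcounting via tailed triangles, chordal cycles, and triangle-rectangles. The only cosmetic difference is that you anchor $4$- and $5$-cycles at a vertex of cycle-distance $2$ (giving $\sum_w\binom{P_2(u,w)}{2}$ and $\sum_w P_2(u,w)P_3(u,w)$ with corrections), whereas the paper anchors them at an adjacent vertex via $C_{1,3}$ and $C_{1,4}$, reserving the cycle-distance-$2$ anchor for the $6$-cycle exactly as you do.
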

\begin{theorem}\label{others_count}
    2-DRFWL(2) GNNs can node-level count tailed triangles, chordal cycles and triangle-rectangles. 
\end{theorem}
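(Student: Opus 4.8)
My plan is to prove the stronger claim that for each of the three substructures $S$ there is an explicit 2-DRFWL(2) GNN whose diagonal output $h_{uu}^{(T)}$ equals $C(S,u,G)$ exactly; since a function that reproduces the count separates all pairs with different counts, this yields node-level counting in the sense of Definition \ref{node_level_count}. The construction rests on one elementary observation about the layer \eqref{drfwl2gnn1}--\eqref{drfwl2gnn2}: because $m_{ijk}^{(t)}$ and $f_k^{(t)}$ may be chosen to be arbitrary functions and $\bigoplus$ may be taken to be summation, a single layer can produce, at any tuple $(u,v)$ with $d(u,v)\le 2$, the value $\sum_{w\in\mathcal{N}_i(u)\cap\mathcal{N}_j(v)} g\!\left(h^{(t-1)}_{wv},h^{(t-1)}_{uw}\right)$ for any prescribed index pair $(i,j)$ and any function $g$. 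Taking $g\equiv 1$ recovers $|\mathcal{N}_i(u)\cap\mathcal{N}_j(v)|$, and it is this freedom that I use throughout.

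With it I would assemble a small library of local counts, each held in a coordinate of the tuple embeddings and each produced by one further layer from the previous ones: at every edge or distance-$2$ tuple $(x,y)$ the common-neighbour count $c(x,y)=|\mathcal{N}_1(x)\cap\mathcal{N}_1(y)|$, obtained directly through $M_{11}$; at every diagonal tuple $(x,x)$ the triangle count $T(x)=\tfrac12\sum_{y\in\mathcal{N}(x)}c(x,y)$, obtained from $M_{11}^{0}(x,x)$ while reading the stored $c(x,y)$; and at every edge tuple $(a,b)$ the number $R(a,b)$ of $4$-cycles through $\{a,b\}$, which is one of the intermediate quantities already manufactured in the proofs of Theorems \ref{paths_count} and \ref{cycles_count} and which I would reuse verbatim. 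The key mechanical point is how these quantities migrate so as to be collected at $(u,u)$: the term $M_{10}^{1}(u,a)$ ranges over the single vertex $w=a$, letting the edge tuple $(u,a)$ read the diagonal embedding $h_{aa}$ (hence $T(a)$), while the diagonal aggregation $M_{11}^{0}(u,u)=\sum_{a\in\mathcal{N}(u)}(\cdot)$ lets $(u,u)$ sum any quantity stored on its incident edges. The restriction $d=2$ is exactly sufficient because in each target substructure every vertex lies within distance $2$ of the marked node $u$.

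The proof then reduces to expressing each count as a fixed linear combination of library quantities, with inclusion--exclusion correcting for coinciding vertices. For the tailed triangle with $u$ at the tail tip, a triangle through a neighbour $a$ either contains $u$ (there are $c(u,a)$ of these) or not, giving
\begin{equation}
C(\text{tailed triangle},u,G)=\sum_{a\in\mathcal{N}(u)}\bigl(T(a)-c(u,a)\bigr).
\end{equation}
For the chordal cycle $u$ is a non-chord (degree-$2$) vertex, so each copy is a triangle $\{u,a,b\}$ together with a further common neighbour $c\neq u$ of $a$ and $b$, giving
\begin{equation}
C(\text{chordal cycle},u,G)=\sum_{\{a,b\}\subseteq\mathcal{N}(u),\,a\sim b}\bigl(c(a,b)-1\bigr).
\end{equation}
For the triangle-rectangle $u$ is the triangle apex, so each copy is a triangle $\{u,a,b\}$ together with a $4$-cycle through the shared edge $\{a,b\}$ whose two new vertices avoid $u,a,b$; this is $R(a,b)$ minus those $4$-cycles through $\{a,b\}$ that reuse $u$, the latter being a short common-neighbour expression. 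Each right-hand side is summed over triangles at $u$ exactly as described in the previous paragraph.

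I expect the main obstacle to be this inclusion--exclusion bookkeeping, not the availability of the features. The walk-level precursors of $R(a,b)$ and of the rectangle count overcount configurations in which internal vertices coincide with one another or with $u,a,b$, and one must check that every degenerate case subtracted is again one of the computable local quantities (degrees, common-neighbour counts, or triangle counts), so that the final expression stays within the model's reach. The triangle-rectangle is the delicate case, since its two arms $a\!-\!x$ and $b\!-\!y$ must be simple and vertex-disjoint from the triangle; the corrected path- and cycle-counts furnished by Theorems \ref{paths_count}--\ref{cycles_count} supply precisely the quantities needed, so what remains is careful combinatorial accounting rather than any new expressiveness argument.
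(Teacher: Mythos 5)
Your proposal is correct and follows essentially the same route as the paper: the paper likewise reduces each count to an explicit inclusion--exclusion formula over locally computable quantities --- $C(\text{tailed triangle},u,G)=\sum_{v\in\mathcal{N}_1(u)}(C_3(v)-P_2(u,v))$, $CC_1(u)=\tfrac12\sum_{v}\sum_{x\in\mathcal{N}_1(u)\cap\mathcal{N}_1(v)}(P_2(x,v)-1)$, and a triangle-rectangle formula built from $P_3(z,v)$ (your $R(a,b)$) with the same common-neighbour corrections for 4-cycles reusing $u$ --- and realizes each formula via the $M_{11}^{1}$, $M_{10}^{1}$ and $M_{11}^{0}$ aggregations exactly as you describe. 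The only cosmetic difference is that the paper first computes pair-level quantities $TR_1(u,v)$ and $CC_1(v,u)$ and then sums over $v\in\mathcal{N}_1(u)$ with a factor $\tfrac12$, whereas you sum directly over unordered triangles at $u$.
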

\begin{theorem}\label{cant_count}
    2-DRFWL(2) GNNs cannot graph-level count $k$-cycles with $k\geqslant 7$ or $k$-cliques with $k\geqslant 4$.
\end{theorem}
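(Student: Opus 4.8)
The plan is the standard lower-bound recipe. Since the representation power of any 2-DRFWL(2) GNN is upper-bounded by the 2-DRFWL(2) test, it suffices, by the definition of graph-level counting, to exhibit for each forbidden substructure $S$ a pair of graphs $G_1,G_2$ with $C(S,G_1)\neq C(S,G_2)$ on which the 2-DRFWL(2) test produces identical stable colorings --- equivalently, identical readout multisets in \eqref{readout_drfwl2}. I will supply one witness family for the long cycles and one for the cliques. The common technical core is a \emph{refinement-stability} argument: on the graphs I construct, 2-DRFWL(2) never refines the coloring beyond the distance class $d(u,v)$, so the color of every tuple is a function of $d(u,v)\in\{0,1,2\}$ alone, identical across $G_1$ and $G_2$.

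For $k$-cycles with $k\geq 7$, take $G_1=C_{2k}$ and $G_2=C_k\sqcup C_k$. Both are $2$-regular on $2k$ vertices, and since $k\geq 7>4$ both have girth $\geq 5$; hence the induced subgraph on $\{u,v\}\cup\mathcal{N}_{\leqslant 2}(u)\cup\mathcal{N}_{\leqslant 2}(v)$ is the same path segment for every distance-$\leqslant 2$ tuple in either graph. I would prove by induction on $t$ that $W^{(t)}(u,v)$ depends only on $d(u,v)$: in \eqref{aggregation_update} each aggregate $M_{ij}^{k(t)}(u,v)$ is determined by this shared local structure, so a distance class is never split. As $C_{2k}$ and $C_k\sqcup C_k$ carry the same number of distance-$0$, distance-$1$, and distance-$2$ tuples, the readouts coincide; but $C(C_k,G_1)=0\neq 2=C(C_k,G_2)$, since a single $2k$-cycle contains no $k$-cycle whereas two disjoint $k$-cycles contain two.

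For $k$-cliques I start from $k=4$, using two distance-regular graphs that share an intersection array but differ in their $K_4$-count: the Shrikhande graph (no $K_4$) and the rook's graph $K_4\,\square\,K_4$ ($8$ copies of $K_4$, one per row and per column), both strongly regular with parameters $(16,6,2,2)$. The key lemma is that any two distance-regular graphs with a common intersection array are 2-DRFWL(2)-indistinguishable. Its proof is the same induction: if $W^{(t-1)}$ is distance-only, then the multiset feeding $M_{ij}^{k(t)}(u,v)$ consists of $p_{ij}^k:=|\mathcal{N}_i(u)\cap\mathcal{N}_j(v)|$ identical pairs, and for $i,j,k\leqslant 2$ the generalized intersection numbers $p_{ij}^k$ are invariants of the intersection array; hence $W^{(t)}$ is again distance-only and agrees across the two graphs, as do the readouts. (This is unaffected by the graphs having diameter $2$: indistinguishability here comes from algebraic regularity rather than sparsity, and in fact FWL(2) itself fails to separate them --- consistent with $K_4$ having treewidth $3$.) To reach every $k\geq 5$, I would lift this pair by taking the join of each graph with a clique $K_{k-4}$ on $k-4$ fresh universal vertices. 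Since cliques in a join decompose across the two sides, the number of $K_k$'s in the join of $G_i$ with $K_{k-4}$ equals $\sum_{j}\binom{k-4}{j}\,C(K_{k-j},G_i)$; as the two base graphs agree on $C(K_m,\cdot)$ for all $m\neq 4$ but differ for $m=4$, the $K_k$-counts of the two joins differ by $8$.

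The cycle case is routine; the clique case is where the work lies, and I expect two steps to be the main obstacles. First, the refinement-stability lemma must be argued carefully: although a single 2-DRFWL(2) update only reaches distance $2$, iterating \eqref{aggregation_update}--\eqref{multiset_update} propagates information along the graph, so one must verify that this iterated refinement still never separates the symmetric classes --- which is precisely what the constancy of $p_{ij}^k$ (respectively, the shared radius-$2$ structure for cycles) guarantees. Second, lifting from $K_4$ to all $k\geq 5$ requires showing that joining with a fixed clique preserves 2-DRFWL(2)-indistinguishability; here one must track the colors of the new apex-tuples and check that they too are determined by the array-invariant local data common to both base graphs, so that the augmented stable colorings --- and hence the readouts --- continue to coincide.
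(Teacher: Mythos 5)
Your reduction to the 2-DRFWL(2) test and your cycle counterexample ($C_{2k}$ versus $C_k\sqcup C_k$ for $k\geqslant 7$) coincide with the paper's, which shows in the proof of Theorem \ref{power_and_separation_of_drfwl2} that $d$-DRFWL(2) cannot separate two $k$-cycles from one $2k$-cycle whenever $k\geqslant 3d+1$. One correction there: your justification via ``the induced subgraph on $\{u,v\}\cup\mathcal{N}_{\leqslant 2}(u)\cup\mathcal{N}_{\leqslant 2}(v)$ is the same path segment'' fails at the boundary case $k=7$, where for a distance-$2$ tuple this union is all of $C_7$ and induces a $7$-cycle, not a path. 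What the induction actually needs---and what does hold for all $k\geqslant 7$---is only that the multiset $\lbbr (i,j): w\in\mathcal{N}_i(u)\cap\mathcal{N}_j(v),\ 0\leqslant i,j\leqslant 2\rbbr$ depends only on $d(u,v)$ and agrees across the two graphs; nodes such as the antipode in $C_7$ lie at distance $3$ from one endpoint and never contribute.

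For cliques your route genuinely differs from the paper's. The paper performs no construction: it quotes a result from the literature that for every $k\geqslant 2$ some pair of graphs with different $(k+1)$-clique counts is FWL($k-1$)-indistinguishable, concludes that FWL(2) cannot graph-level count $m$-cliques for any $m\geqslant 4$, and finishes by the strict containment of 2-DRFWL(2) in FWL(2) from Theorem \ref{power_and_separation_of_drfwl2}. You instead exhibit an explicit witness---the Shrikhande graph versus $K_4\,\square\,K_4$---with a direct stability argument via the intersection numbers $p_{ij}^k$, and lift to $k\geqslant 5$ by joining with $K_{k-4}$. Your $k=4$ argument is correct and self-contained, which is a genuine gain over the paper (no external citation, and an explicit pair one can test). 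The join-lifting step you flag as the main obstacle does go through: in $G\ast K_{k-4}$ the tuple classes are indexed by $(d(u,v),\text{endpoint types})$, the relevant intersection cardinalities on each class are again constants determined by the strongly regular parameters together with $k-4$, and the class sizes match between the two joins; alternatively you could bypass it by noting that FWL(2)-equivalence is preserved under complement and disjoint union, hence under join with a fixed graph, and then falling back on the containment in FWL(2) exactly as the paper does. Either way the proof closes, but as written that step is a sketch rather than a proof.
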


For $d$-DRFWL(2) GNNs with $d\geqslant 3$, we have
\begin{theorem}\label{geq_3_count}
    For any $d\geqslant 3$, $d$-DRFWL(2) GNNs can node-level count 3, 4, 5, 6, 7-cycles, but cannot graph-level count any longer cycles.
\end{theorem}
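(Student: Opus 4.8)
The plan is to prove the two halves of the statement separately, reducing the positive half to the single case $d=3$ and the negative half to a known limitation of FWL(2). For the positive direction I would first observe that it suffices to show $3$-DRFWL(2) GNNs can node-level count $3,4,5,6,7$-cycles. By the monotonicity in Theorem \ref{power_and_separation_of_drfwl2}, a $(d+1)$-DRFWL(2) GNN simulates a $d$-DRFWL(2) GNN verbatim by restricting every aggregation in \eqref{drfwl2gnn1}--\eqref{drfwl2gnn2} to indices $i,j,k\leqslant d$ (all tuples and neighbor-intersections involved then stay within distance $d$), so for $d\geqslant 3$ the embedding $h_{uu}^{(T)}$ is at least as refined as the $d=3$ one and counting power only grows; for the same reason the $3,4,5,6$-cycle counts are inherited from Theorem \ref{cycles_count}. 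Hence the only genuinely new content is the $7$-cycle.

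For $7$-cycles the key lemma I would establish is that $3$-DRFWL(2) GNNs can encode, in the embedding $h_{uv}^{(t)}$ of every tuple with $d(u,v)\leqslant 3$, the number of $\ell$-walks from $u$ to $v$ for $\ell\leqslant 4$. This goes by induction on $\ell$: any such walk between endpoints at distance $\leqslant 3$ has a middle vertex $w$ with $d(u,w)\leqslant 2$ and $d(w,v)\leqslant 2$, hence a \emph{trackable} midpoint, so its count factors as a sum over $w\in\mathcal{N}_i(u)\cap\mathcal{N}_j(v)$ of products of shorter walk counts — exactly the composition \eqref{drfwl2gnn1} performs. Converting walk counts to \emph{path} counts $P_3(u,v),P_4(u,v)$ is a finite inclusion--exclusion whose corrections count walks with coincidences, i.e.\ shorter closed walks, again computable by the same lemma. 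I would then use the decomposition of a $7$-cycle through $u$: it has exactly two vertices at cycle-distance $3$ from $u$, and at each it splits uniquely into an internally disjoint $3$-path and $4$-path; conversely every internally disjoint $(3\text{-path},4\text{-path})$ pair from $u$ to some $v$ (necessarily $d(u,v)\leqslant 3$) closes into a $7$-cycle. Thus
\begin{align}
    C(C_7,u,G)=\tfrac12\sum_{v:\,d(u,v)\leqslant 3}\big(P_3(u,v)P_4(u,v)-R(u,v)\big),
\end{align}
where $R(u,v)$ counts pairs whose two paths share an internal vertex. I would express $R(u,v)$ through counts of the smaller vertex-identified quotients of the glued paths, store the per-tuple quantity in $h_{uv}$, and gather it into $h_{uu}$ via the distance-$0$ update (which aggregates all $(u,w)$ with $d(u,w)\leqslant 3$). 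Since $d$-DRFWL(2) GNNs can realize any function of the stable test coloring (Appendix \ref{proof_sect_drfwl}), determining $C(C_7,u,G)$ from the embeddings yields the required counting GNN.

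The delicate point — and the main obstacle — is the inclusion--exclusion: I must verify that \emph{every} correction pattern arising when the $3$-path and $4$-path (or the walks used to assemble them) overlap still has all its vertices pairwise within distance $3$ and admits a trackable-midpoint decomposition, so that $R(u,v)$ and the path counts are genuinely computable under the distance restriction. This requires a careful enumeration of the degenerate configurations rather than any deep new idea.

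For the negative direction ($k\geqslant 8$) I would construct nothing new. It is known that FWL(2) cannot graph-level count $k$-cycles for $k\geqslant 8$, since its cycle-counting power stops at $7$~\citep{arvind2020weisfeiler, furer2017combinatorial}; hence for each such $k$ there is a pair $G_1,G_2$ indistinguishable by FWL(2) with $C(C_k,G_1)\neq C(C_k,G_2)$. Because $d$-DRFWL(2) is at most as powerful as FWL(2), this pair is also indistinguishable by $d$-DRFWL(2), so no $d$-DRFWL(2) GNN can separate it, and $d$-DRFWL(2) GNNs cannot graph-level count $k$-cycles for any $k\geqslant 8$.
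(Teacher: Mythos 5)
Your proposal follows essentially the same route as the paper's proof: reduce the positive half to $d=3$ via monotonicity of the update rule, inherit the $3,4,5,6$-cycle counts from Theorem \ref{cycles_count}, compute $C_7(u)=\frac12\sum_{v:1\leqslant d(u,v)\leqslant 3}C_{3,4}(u,v)$ with $C_{3,4}(u,v)=P_3(u,v)P_4(u,v)$ minus corrections for overlapping path pairs, and obtain the negative half from the fact that FWL(2) itself cannot graph-level count $k$-cycles for $k\geqslant 8$ combined with Theorem \ref{power_and_separation_of_drfwl2}. One concrete warning about the step you defer: it is not true that every correction term can be stored as a per-tuple quantity in $h_{uv}$ and then gathered into $h_{uu}$. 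The paper enumerates twelve degenerate configurations (Figure \ref{subtract_struct}), and several of them --- e.g.\ those whose vertex-identified quotient is a chordal cycle, triangle-rectangle, or $5$-cycle attached to $u$ --- are only recoverable \emph{after} summing over $v$, as node-level quantities such as $CC_1(u)$, $TR_2(u)$, $TR_3(u)$ and $C_5(u)$; this is the same phenomenon as \eqref{no_way_to_pair_level_count} in the $6$-cycle argument. The paper therefore first proves separate lemmas that these node-level counts are computable by ($2$- or $3$-)DRFWL(2) tests and then expresses each $\sum_{v}\#(\cdot)$ in terms of them, so your inclusion--exclusion must be organized at the node level rather than the pair level for those configurations. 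With that adjustment your plan goes through, and the remaining work is exactly the careful enumeration you anticipate.
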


The proofs of all theorems within this section are included in Appendix \ref{proof_sect_count}. To give an intuitive explanation for the cycle counting power of $d$-DRFWL(2) GNNs, let us consider, e.g., why 2-DRFWL(2) GNNs can count up to 6-cycles. The key reason is that \textbf{they allow a distance-2 tuple $(u,v)$ to receive messages from other two distance-2 tuples $(u,w)$ and $(w,v)$}, and are thus aware of closed 6-walks (since $6=2+2+2$). Indeed, if we forbid such kind of message passing, the modified 2-DRFWL(2) GNNs can no longer count 6-cycles, as experimentally verified in Appendix~\ref{sect_ablation_study}.


\section{Related works}\label{sect_relwork}

\textbf{The cycle counting power of GNNs.} It is proposed in~\citep{huang2023boosting} to use GNNs' ability to count given-length cycles as a metric for their expressiveness. Prior to this work, \citet{arvind2020weisfeiler} and \citet{furer2017combinatorial} have discussed the cycle counting power of the FWL(2) test: FWL(2) can and only can graph-level count up to 7-cycles. \citet{huang2023boosting} also characterizes the node-level cycle counting power of subgraph MPNNs and I$^2$-GNN. Apart from counting cycles, there are also some works analyzing the general substructure counting power of GNNs. \citet{chen2020can} discusses the ability of WL($k$) tests to count general subgraphs or induced subgraphs, but the result is loose. \citet{tahmasebi2020counting} analyzes the substructure counting power of Recursive Neighborhood Pooling, which can be seen as a subgraph GNN with recursive subgraph extraction procedures.

\textbf{The trade-off between expressive power and efficiency of GNNs.} Numerous methods have been proposed to boost the expressive power of MPNNs. Many of the provably powerful GNN models have direct correspondence to the Weisfeiler-Leman hierarchy~\citep{cai1992optimal, grohe2015pebble}, such as higher-order GNNs~\citep{morris2019weisfeiler} and IGNs~\citep{maron2018invariant, maron2019provably, maron2019universality}. Despite their simplicity in theory, those models require $O(n^{k+1})$ time and $O(n^k)$ space in order to achieve equal expressive power to FWL($k$) tests, and thus do not scale to large graphs even for $k=2$.

Recent works try to strike a balance between the expressive power of GNNs and their efficiency. Among the state-of-the-art GNN models with \emph{sub-}$O(n^3)$ time complexity, subgraph GNNs have gained much research interest~\citep{bevilacqua2022equivariant,cotta2021reconstruction,frascaunderstanding2022, huang2023boosting,li2020distance,qianordered2022,you2021identity, zhang2023complete,zhang2021nested,zhao2022stars}. Subgraph GNNs process a graph $G$ by 1) extracting a bag of subgraphs $\{G_i:i=1,2,\ldots, p\}$ from $G$, 2) generating representations for every subgraph $G_i,i=1,2,\ldots, p$ (often using a weak GNN such as MPNN), and 3) combining the representations of all subgraphs into a representation of $G$. Most commonly, the number $p$ of subgraphs extracted is equal to the number of nodes $n$ (called a \emph{node-based subgraph extraction policy}~\citep{frascaunderstanding2022}). In this case, the time complexity of subgraph GNNs is upper-bounded by $O(nm)$, where $m$ is the number of edges. If we further adopt the \emph{$K$-hop ego-network policy}, i.e., extracting a $K$-hop subgraph $G_u$ around each node $u$, the time complexity becomes $O(n\ \mathrm{deg}^{K+1})$. \citet{frascaunderstanding2022} and \citet{zhang2023complete} theoretically characterize the expressive power of subgraph GNNs, and prove that subgraph GNNs with node-based subgraph extraction policies lie strictly between WL(1) and FWL(2) in the Weisfeiler-Leman hierarchy. Despite the lower complexity, in practice subgraph GNNs still suffer from heavy preprocessing and a high GPU memory usage.

Apart from subgraph GNNs, there are also attempts to add sparsity to higher-order GNNs. \citet{morris2020weisfeiler} proposes $\delta$-$k$-LWL, a variant of the WL($k$) test that updates a $k$-tuple $\mathbf{u}$ only from the $k$-tuples with a component \emph{connected} to the corresponding component in $\mathbf{u}$. \citet{zhang2023complete} proposes LFWL(2) and SLFWL(2), which are variants of FWL(2) that update a 2-tuple $(u,v)$ from nodes in either $\mathcal{N}(v)$ or $\mathcal{N}(u)\cup\mathcal{N}(v)$. Our model can also be classified into this type of approaches, yet we not only sparsify neighbors of a 2-tuple, but also 2-tuples used in message passing, which results in much lower space complexity. A detailed comparison between our method and LFWL(2)/SLFWL(2) is included in Appendix \ref{comp_with_sparse}.


\section{Experiments}\label{sect_exp}

In this section, we empirically evaluate the performance of $d$-DRFWL(2) GNNs (especially for the case of $d=2$) and verify our theoretical results. To be specific, we focus on the following questions:

\textbf{Q1:} Can $d$-DRFWL(2) GNNs reach their theoretical counting power as stated in Theorems \ref{1_count}--\ref{geq_3_count}?


\textbf{Q2:} How do $d$-DRFWL(2) GNNs perform compared with other state-of-the-art GNN models on open benchmarks for graphs?

\textbf{Q3:} What are the time and memory costs of $d$-DRFWL(2) GNNs on various datasets?

\textbf{Q4:} Do $d$-DRFWL(2) GNNs with increasing $d$ values construct a hierarchy in discriminative power (as shown in Theorem \ref{power_and_separation_of_drfwl2})? Further, does this hierarchy lie between WL(1) and FWL(2) empirically?

We answer \textbf{Q1}--\textbf{Q3} in \ref{subsect_counting}--\ref{subsect_efficiency}, as well as in Appendix \ref{sect_additional_exp}. The answer to \textbf{Q4} is included in Appendix~\ref{subsect_exp_dis}. The details of our implementation of $d$-DRFWL(2) GNNs, along with the experimental settings, are included in Appendix \ref{sect_impl_detail}. Our code for all experiments, including those in Section \ref{sect_exp} of the main paper and in Appendix \ref{sect_additional_exp}, is available at \href{https://github.com/zml72062/DR-FWL-2}{\texttt{https://github.com/zml72062/DR-FWL-2}}.

\subsection{Substructure counting}\label{subsect_counting}
\paragraph{Datasets.} To answer \textbf{Q1}, we perform node-level substructure counting tasks on the synthetic dataset in~\citep{huang2023boosting, zhao2022stars}. The synthetic dataset contains 5,000 random graphs, and the training/validation/test splitting is 0.3/0.2/0.5. The task is to perform regression on the node-level counts of certain substructures. Normalized MAE is used as the evaluation metric. 

\paragraph{Tasks and baselines.} To verify Theorems \ref{paths_count}--\ref{others_count}, we use 2-DRFWL(2) GNN to perform node-level counting task on 9 different substructures: 3-cycles, 4-cycles, 5-cycles, 6-cycles, tailed triangles, chordal cycles, 4-cliques, 4-paths and triangle-rectangles. We choose MPNN, node-based subgraph GNNs (ID-GNN, NGNN, GNNAK+), PPGN, and I$^2$-GNN as our baselines. Results for all baselines are from~\citep{huang2023boosting}.

To verify Theorem \ref{geq_3_count}, we compare the performances of 2-DRFWL(2) GNN and 3-DRFWL(2) GNN on node-level cycle counting tasks, with the cycle length ranging from 3 to 7. 

We also conduct ablation studies to investigate \emph{what kinds of message passing are essential} (among those depicted in Figure \ref{dr_fwl2_pic}) for 2-DRFWL(2) GNNs to successfully count up to 6-cycles and other substructures. The experimental details and results are given in Appendix \ref{sect_ablation_study}. The studies also serve as a verification of Theorem \ref{1_count}.

\vspace{-5pt}

\begin{table}[H]
\begin{minipage}{\textwidth}
\caption{Normalized MAE results of node-level counting cycles and other substructures on synthetic dataset. The colored cell means an error less than 0.01.} 
\renewcommand\arraystretch{1.25}
\centering
\resizebox{\textwidth}{!}{
\begin{tabular}{lcccc|ccccc}
 \Xhline{2.5\arrayrulewidth}
\multirow{2}{*}{Method} & \multicolumn{9}{c}{Synthetic (norm. MAE)}  \\ 
\cmidrule(r){2-10}
& 3-Cyc. & 4-Cyc. & 5-Cyc. & 6-Cyc. & Tail. Tri. & Chor. Cyc. & 4-Cliq. & 4-Path & Tri.-Rect.     \\ \Xhline{1.7\arrayrulewidth}
MPNN & 0.3515 & 0.2742   &  0.2088   & 0.1555& 0.3631 & 0.3114 & 0.1645 & 0.1592 & 0.2979 \\
ID-GNN  & \ly 0.0006 & \ly 0.0022 &0.0490 & 0.0495  & 0.1053 & 0.0454 &\ly 0.0026 & 0.0273 & 0.0628        \\
NGNN    &\ly 0.0003  & \ly 0.0013 & 0.0402 & 0.0439  &0.1044 & 0.0392&\ly 0.0045 & 0.0244 & 0.0729   \\
GNNAK+  &\ly 0.0004&\ly 0.0041  &0.0133  &0.0238 &\ly 0.0043&0.0112&\ly 0.0049&\ly 0.0075 &0.1311  \\
PPGN  &\ly   0.0003    &  \ly  0.0009  & \ly  0.0036 & \ly  0.0071  &\ly 0.0026 &\ly 0.0015 & 0.1646 &\ly 0.0041 & 0.0144\\
I$^2$-GNN  &\ly  0.0003  & \ly  0.0016   &\ly  0.0028 & \ly  0.0082  &\ly 0.0011 &\ly 0.0010 &\ly 0.0003 &\ly 0.0041 &\ly 0.0013 \\
\hline
2-DRFWL(2) GNN  & \ly 0.0004 & \ly 0.0015 & \ly 0.0034 & \ly 0.0087& \ly 0.0030 &\ly 0.0026 &\ly 0.0009 &\ly 0.0081 &\ly 0.0070 \\
 \Xhline{2.5\arrayrulewidth}
\end{tabular}
}
\label{exp_count_cycle}
\end{minipage}
\end{table}

\paragraph{Results.} From Table \ref{exp_count_cycle}, we see that 2-DRFWL(2) GNN achieves less-than-0.01 normalized MAE on all 3, 4, 5 and 6-cycles, verifying Theorem \ref{cycles_count}; 2-DRFWL(2) GNN also achieves less-than-0.01 normalized MAE on tailed triangles, chordal cycles, 4-paths and triangle-rectangles, verifying Theorems \ref{paths_count} and \ref{others_count}. 

It is interesting that 2-DRFWL(2) GNN has a very good performance on the task of node-level counting 4-cliques, which by Theorem \ref{cant_count} it cannot count in theory. A similar phenomenon happens for subgraph GNNs. This may be because 2-DRFWL(2) GNN and subgraph GNNs still learn some local structural biases that have strong correlation with the number of 4-cliques.

\begin{wraptable}{l}{0.65\textwidth}
\caption{Normalized MAE results of node-level counting $k$-cycles $(3\leqslant k\leqslant 7)$ on synthetic dataset.} 
\renewcommand\arraystretch{1.25}
\centering
\resizebox{0.65\textwidth}{!}{
\begin{tabular}{lccccccccc}
 \Xhline{2.5\arrayrulewidth}
\multirow{2}{*}{Method} & \multicolumn{5}{c}{Synthetic (norm. MAE)}  \\ 
\cmidrule(r){2-6}
& 3-Cyc. & 4-Cyc.& 5-Cyc. & 6-Cyc. & 7-Cyc.     \\ \Xhline{1.7\arrayrulewidth}
2-DRFWL(2) GNN & \textbf{0.0004}&\textbf{0.0015}&\textbf{0.0034}&\textbf{0.0087} &0.0362 \\
3-DRFWL(2) GNN & 0.0006 & 0.0020 & 0.0047 & 0.0099 & \textbf{0.0176}\\
 \Xhline{2.5\arrayrulewidth}
\end{tabular}
}
\label{count_up_to_7}
\end{wraptable}

From Table \ref{count_up_to_7}, we see that 3-DRFWL(2) GNN achieves a comparable performance to 2-DRFWL(2) GNN on the tasks of counting 3, 4, 5 and 6-cycles. Yet when it comes to counting 7-cycles, 3-DRFWL(2) GNN greatly outperforms 2-DRFWL(2) GNN, verifying \mbox{Theorems} \ref{cant_count} and \ref{geq_3_count}.

Finally, from the last row of Table \ref{exp_ablation} we see that 1-DRFWL(2) GNN achieves less-than-0.01 normalized MAE on 3-cycles, but performs badly on 4, 5 and 6-cycles. This result verifies Theorem~\ref{1_count}.

\subsection{Molecular property prediction}\label{mol_prop_pred}

\paragraph{Datasets.} To answer \textbf{Q2}, we evaluate the performance of $d$-DRFWL(2) GNNs on four popular molecular graph datasets---QM9, ZINC, ogbg-molhiv and ogbg-molpcba. QM9 contains 130k small molecules, and the task is regression on 12 targets. One can refer to the \href{https://pytorch-geometric.readthedocs.io/en/latest/generated/torch_geometric.datasets.QM9.html}{page} for the meaning of those 12 targets. ZINC~\citep{dwivedi2020benchmarking}, including a smaller version (ZINC-12K) and a full version (ZINC-250K), is a dataset of chemical compounds and the task is graph regression. The ogbg-molhiv (containing 41k molecules) and ogbg-molpcba (containing 438k molecules) datasets belong to the Open Graph Benchmark (OGB)~\citep{hu2020open}; the task on both datasets is binary classification. Details of the four datasets are given in Appendix~\ref{dataset_molecular}.

\begin{table*}[h]
  \vspace{-5pt}\caption{MAE results on QM9 (smaller the better). The top two are highlighted as \textcolor{red}{\textbf{First}}, \textbf{Second}.}\vspace{2pt}
    
    \centering
     \vspace{-8pt}
     \resizebox{\textwidth}{!}{
    \begin{tabular}{l|ccccccc|c}
    \Xhline{2.5\arrayrulewidth}
        Target & 1-GNN &1-2-3-GNN&DTNN& Deep LRP & PPGN &NGNN & I$^2$-GNN & 2-DRFWL(2) GNN\\ \hline
        $\mu$  &0.493 &0.476 &\textbf{0.244} & 0.364& \textcolor{red}{\textbf{0.231}}&0.428 & 0.428 & 0.346\\
        $\alpha$  & 0.78&0.27&0.95 & 0.298& 0.382&0.29 & \textbf{0.230} & \textcolor{red}{\textbf{0.222}}\\
        $\varepsilon_{\text{homo}}$ &0.00321&0.00337&0.00388 & \textbf{0.00254}&0.00276& 0.00265 & 0.00261  & \textcolor{red}{\textbf{0.00226}}  \\
        $\varepsilon_{\text{lumo}}$ &0.00355& 0.00351&0.00512& 0.00277& 0.00287&0.00297 & \textbf{0.00267} & \textcolor{red}{\textbf{0.00225}}\\ 
        $\Delta\varepsilon$ &0.0049 & 0.0048&0.0112& \textbf{0.00353}& 0.00406&0.0038 & 0.0038 & \textcolor{red}{\textbf{0.00324}}  \\ 
        $R^2$ &34.1 &22.9& 17.0& 19.3& \textbf{16.07}&20.5 & 18.64 & \textcolor{red}{\textbf{15.04}}\\
        ZPVE &0.00124 &0.00019&0.00172 &0.00055&0.0064 &0.0002 & \textcolor{red}{\textbf{0.00014}} & \textbf{0.00017}\\
        $U_0$ &2.32&\textcolor{red}{\textbf{0.0427}}&2.43 & 0.413& 0.234&0.295 &0.211 & \textbf{0.156} \\
        $U$ &2.08 &\textcolor{red}{\textbf{0.111}}&2.43 &0.413&0.234&0.361 &0.206 & \textbf{0.153} \\
        $H$ &2.23 &\textcolor{red}{\textbf{0.0419}}&2.43&0.413&0.229& 0.305 &0.269 & \textbf{0.145}  \\
        $G$ &1.94 &\textcolor{red}{\textbf{0.0469}}&2.43&0.413& 0.238&0.489 &0.261 & \textbf{0.156} \\
        $C_v$ &0.27&0.0944&2.43&0.129&0.184&0.174 &\textcolor{red}{\textbf{0.0730}} & \textbf{0.0901}\\
    \Xhline{2.5\arrayrulewidth}
    \end{tabular}
    }
    \label{exp-qm9}
\end{table*}

\vspace{-10pt}

\paragraph{Baselines.} For QM9, the baselines are chosen as 1-GNN, 1-2-3-GNN~\citep{morris2019weisfeiler}, DTNN~\citep{wu2018moleculenet}, Deep LRP~\citep{chen2020can}, PPGN, NGNN~\citep{zhang2021nested} and I$^2$-GNN~\citep{huang2023boosting}. Methods~\citep{anderson2019cormorant, gasteiger2020directional, liu2021spherical, qiao2020orbnet} utilizing geometric features or quantum mechanic theory are omitted to fairly compare the graph representation power of the models. For ZINC and ogbg-molhiv, we adopt GIN, PNA~\citep{corso2020principal}, DGN~\citep{beaini2021directional}, HIMP~\citep{fey2020hierarchical}, GSN~\citep{bouritsas2022improving}, Deep LRP~\citep{chen2020can}, CIN~\citep{bodnar2021weisfeiler}, NGNN, GNNAK+, SUN~\citep{frascaunderstanding2022} and I$^2$-GNN as our baselines. For ogbg-molpcba, the baselines are GIN, PNA, DGN, NGNN and GNNAK+. The experimental details are given in Appendix~\ref{dataset_molecular}. We leave the results on ZINC, ogbg-molhiv and ogbg-molpcba to Appendix \ref{sect_additional_molecule}. 

\paragraph{Results.} On QM9, Table \ref{exp-qm9} shows that 2-DRFWL(2) GNN attains top two results on most (11 out of 12) targets. Moreover, 2-DRFWL(2) GNN shows a good performance on targets $U_0, U,H$ and $G$, where subgraph GNNs like NGNN or I$^2$-GNN have a poor performance. The latter fact actually reveals that our method has a stronger ability to capture \emph{long-range interactions} on graphs than subgraph GNNs, since the targets $U_0,U,H$ and $G$ are macroscopic thermodynamic properties of molecules and heavily depend on such long-range interactions. We leave the detailed analysis to Appendix \ref{subsect_lrgb}. Apart from QM9, Table \ref{exp-zinc} of Appendix \ref{sect_additional_molecule} shows that $d$-DRFWL(2) GNN outperforms CIN~\citep{bodnar2021weisfeiler} on ZINC-12K; the performance on ogbg-molhiv is also comparable to baseline methods. These results show that although designed for cycle counting, $d$-DRFWL(2) GNN is also highly competitive on general molecular tasks.

It is interesting to notice that $d$-DRFWL(2) GNN shows an inferior performance on ogbg-molpcba, compared with baseline methods. We conjecture that the results on ogbg-molpcba might be insensitive to the gain in cycle counting power, and might prefer simple model architectures rather than highly expressive ones. 

\subsection{Empirical efficiency}\label{subsect_efficiency}

To answer \textbf{Q3}, we compare the time and memory costs of 2-DRFWL(2) GNN with MPNN, NGNN and I$^2$-GNN on two datasets---QM9 and ogbg-molhiv. We use three metrics to evaluate the empirical efficiency of 2-DRFWL(2) GNNs: the maximal GPU memory usage during training, the preprocessing time, and the training time per epoch. 

To make a fair comparison, we fix the number of 2-DRFWL(2) GNN layers and the number of message passing layers in all baseline methods to 5; we also fix the size of hidden dimension to 64 for QM9 and 300 for ogbg-molhiv. The subgraph heights for NGNN and I$^2$-GNN are both 3. The batch size is always 64.
\vspace{-14pt}
\begin{table}[H]
\caption{Empirical efficiency of 2-DRFWL(2) GNN.} 
\renewcommand\arraystretch{1.25}
\centering
\resizebox{\textwidth}{!}{
\begin{tabular}{lccc|ccc}
 \Xhline{2.5\arrayrulewidth}
\multirow{2}{*}{Method} & \multicolumn{3}{c|}{QM9} & \multicolumn{3}{c}{ogbg-molhiv} \\ 
\cmidrule(r){2-7}
& Memory (GB) & Pre. (s) & Train (s/epoch)  & Memory (GB) & Pre. (s) & Train (s/epoch)      \\ \Xhline{1.7\arrayrulewidth}
MPNN & 2.28 & 64 & 45.3 & 2.00 & 2.4 & 18.8\\
NGNN    &  13.72 & 2354 & 107.8 & 5.23 & 1003 & 42.7\\
I$^2$-GNN  & 19.69 & 5287 & 209.9 & 11.07 & 2301 & 84.3\\
\hline
2-DRFWL(2) GNN  & 2.31 & 430 & 141.9 & 4.44 & 201 & 44.3\\
 \Xhline{2.5\arrayrulewidth}
\end{tabular}
}
\label{exp_efficiency}
\end{table}
\vspace{-20pt}

\paragraph{Results.} From Table \ref{exp_efficiency}, we see that the preprocessing time of 2-DRFWL(2) GNN is much shorter than subgraph GNNs like NGNN or I$^2$-GNN. Moreover, 2-DRFWL(2) GNN requires much less GPU memory while training, compared with subgraph GNNs. The training time of 2-DRFWL(2) GNN is comparable to NGNN (which can only count up to 4-cycles), and much shorter than I$^2$-GNN. 

We also evaluate the empirical efficiency of 2-DRFWL(2) GNN on graphs of larger sizes. The results, along with details of the datasets we use, are given in Appendix \ref{subsect_scalability}. From Table \ref{exp_efficiency_on_proteins} in Appendix \ref{subsect_scalability}, we see that 2-DRFWL(2) GNN easily scales to graphs with $\sim 500$ nodes, as long as the average degree is small.

\section{Conclusion and limitations}\label{sect_concl}

Motivated by the analysis of why FWL(2) has a stronger cycle counting power than WL(1), we propose $d$-DRFWL(2) tests and $d$-DRFWL(2) GNNs. It is then proved that with $d=2$, $d$-DRFWL(2) GNNs can already count up to 6-cycles, retaining most of the cycle counting power of FWL(2). Because $d$-DRFWL(2) GNNs explicitly leverage the \emph{local} nature of cycle counting, they are much more efficient than other existing GNN models that have comparable cycle counting power. Besides, $d$-DRFWL(2) GNNs also have an outstanding performance on various real-world tasks. Finally, we have to point out that our current implementation of $d$-DRFWL(2) GNNs, though being efficient most of the time, still has difficulty scaling to datasets with a large average degree such as ogbg-ppa, since the preprocessing time is too long ($\sim$40 seconds per graph on ogbg-ppa). This also makes our method unsuitable for node classification tasks, since these tasks typically involve graphs with large average degrees. We leave the exploration of more efficient $d$-DRFWL(2) implementation to future work.

\section*{Acknowledgement}

The work is supported in part by the National Natural Science Foundation of China
(62276003), the National Key Research and Development Program of China (No. 2021ZD0114702), and Alibaba Innovative Research Program.

\bibliography{references}

\begin{thebibliography}{61}
\providecommand{\natexlab}[1]{#1}
\providecommand{\url}[1]{\texttt{#1}}
\expandafter\ifx\csname urlstyle\endcsname\relax
  \providecommand{\doi}[1]{doi: #1}\else
  \providecommand{\doi}{doi: \begingroup \urlstyle{rm}\Url}\fi

\bibitem[Abboud et~al.(2020)Abboud, Ceylan, Grohe, and
  Lukasiewicz]{abboud2020surprising}
Ralph Abboud, Ismail~Ilkan Ceylan, Martin Grohe, and Thomas Lukasiewicz.
\newblock The surprising power of graph neural networks with random node
  initialization.
\newblock \emph{arXiv preprint arXiv:2010.01179}, 2020.

\bibitem[Alon et~al.(1997)Alon, Yuster, and Zwick]{alon1997finding}
Noga Alon, Raphael Yuster, and Uri Zwick.
\newblock Finding and counting given length cycles.
\newblock \emph{Algorithmica}, 17\penalty0 (3):\penalty0 209--223, 1997.

\bibitem[Anderson et~al.(2019)Anderson, Hy, and Kondor]{anderson2019cormorant}
Brandon Anderson, Truong~Son Hy, and Risi Kondor.
\newblock Cormorant: Covariant molecular neural networks.
\newblock \emph{Advances in neural information processing systems}, 32, 2019.

\bibitem[Arvind et~al.(2020{\natexlab{a}})Arvind, Fuhlbrück, Köbler, and
  Verbitsky]{ARVIND202042}
V.~Arvind, Frank Fuhlbrück, Johannes Köbler, and Oleg Verbitsky.
\newblock On weisfeiler-leman invariance: Subgraph counts and related graph
  properties.
\newblock \emph{Journal of Computer and System Sciences}, 113:\penalty0 42--59,
  2020{\natexlab{a}}.
\newblock ISSN 0022-0000.
\newblock \doi{https://doi.org/10.1016/j.jcss.2020.04.003}.
\newblock URL
  \url{https://www.sciencedirect.com/science/article/pii/S0022000020300386}.

\bibitem[Arvind et~al.(2020{\natexlab{b}})Arvind, Fuhlbr{\"u}ck, K{\"o}bler,
  and Verbitsky]{arvind2020weisfeiler}
Vikraman Arvind, Frank Fuhlbr{\"u}ck, Johannes K{\"o}bler, and Oleg Verbitsky.
\newblock On weisfeiler-leman invariance: Subgraph counts and related graph
  properties.
\newblock \emph{Journal of Computer and System Sciences}, 113:\penalty0 42--59,
  2020{\natexlab{b}}.

\bibitem[Balcilar et~al.(2021)Balcilar, H{\'e}roux, Gauzere, Vasseur, Adam, and
  Honeine]{balcilar2021breaking}
Muhammet Balcilar, Pierre H{\'e}roux, Benoit Gauzere, Pascal Vasseur,
  S{\'e}bastien Adam, and Paul Honeine.
\newblock Breaking the limits of message passing graph neural networks.
\newblock In \emph{International Conference on Machine Learning}, pages
  599--608. PMLR, 2021.

\bibitem[Beaini et~al.(2021)Beaini, Passaro, L{\'e}tourneau, Hamilton, Corso,
  and Li{\`o}]{beaini2021directional}
Dominique Beaini, Saro Passaro, Vincent L{\'e}tourneau, Will Hamilton, Gabriele
  Corso, and Pietro Li{\`o}.
\newblock Directional graph networks.
\newblock In \emph{International Conference on Machine Learning}, pages
  748--758. PMLR, 2021.

\bibitem[Bevilacqua et~al.(2022)Bevilacqua, Frasca, Lim, Srinivasan, Cai,
  Balamurugan, Bronstein, and Maron]{bevilacqua2022equivariant}
Beatrice Bevilacqua, Fabrizio Frasca, Derek Lim, Balasubramaniam Srinivasan,
  Chen Cai, Gopinath Balamurugan, Michael~M Bronstein, and Haggai Maron.
\newblock Equivariant subgraph aggregation networks.
\newblock In \emph{International Conference on Learning Representations}, 2022.

\bibitem[Bodnar et~al.(2021)Bodnar, Frasca, Otter, Wang, Lio, Montufar, and
  Bronstein]{bodnar2021weisfeiler}
Cristian Bodnar, Fabrizio Frasca, Nina Otter, Yuguang Wang, Pietro Lio, Guido~F
  Montufar, and Michael Bronstein.
\newblock Weisfeiler and lehman go cellular: Cw networks.
\newblock \emph{Advances in Neural Information Processing Systems},
  34:\penalty0 2625--2640, 2021.

\bibitem[Bouritsas et~al.(2022)Bouritsas, Frasca, Zafeiriou, and
  Bronstein]{bouritsas2022improving}
Giorgos Bouritsas, Fabrizio Frasca, Stefanos Zafeiriou, and Michael~M
  Bronstein.
\newblock Improving graph neural network expressivity via subgraph isomorphism
  counting.
\newblock \emph{IEEE Transactions on Pattern Analysis and Machine
  Intelligence}, 45\penalty0 (1):\penalty0 657--668, 2022.

\bibitem[Bresson and Laurent(2017)]{bresson2017residual}
Xavier Bresson and Thomas Laurent.
\newblock Residual gated graph convnets.
\newblock \emph{arXiv preprint arXiv:1711.07553}, 2017.

\bibitem[Cai et~al.(1992)Cai, F{\"u}rer, and Immerman]{cai1992optimal}
Jin-Yi Cai, Martin F{\"u}rer, and Neil Immerman.
\newblock An optimal lower bound on the number of variables for graph
  identification.
\newblock \emph{Combinatorica}, 12\penalty0 (4):\penalty0 389--410, 1992.

\bibitem[Chen et~al.(2020{\natexlab{a}})Chen, Wei, Huang, Ding, and
  Li]{chen2020simple}
Ming Chen, Zhewei Wei, Zengfeng Huang, Bolin Ding, and Yaliang Li.
\newblock Simple and deep graph convolutional networks.
\newblock In \emph{International conference on machine learning}, pages
  1725--1735. PMLR, 2020{\natexlab{a}}.

\bibitem[Chen et~al.(2019)Chen, Villar, Chen, and Bruna]{chen2019equivalence}
Zhengdao Chen, Soledad Villar, Lei Chen, and Joan Bruna.
\newblock On the equivalence between graph isomorphism testing and function
  approximation with gnns.
\newblock \emph{Advances in neural information processing systems}, 32, 2019.

\bibitem[Chen et~al.(2020{\natexlab{b}})Chen, Chen, Villar, and
  Bruna]{chen2020can}
Zhengdao Chen, Lei Chen, Soledad Villar, and Joan Bruna.
\newblock Can graph neural networks count substructures?
\newblock \emph{Advances in neural information processing systems},
  33:\penalty0 10383--10395, 2020{\natexlab{b}}.

\bibitem[Corso et~al.(2020)Corso, Cavalleri, Beaini, Li{\`o}, and
  Veli{\v{c}}kovi{\'c}]{corso2020principal}
Gabriele Corso, Luca Cavalleri, Dominique Beaini, Pietro Li{\`o}, and Petar
  Veli{\v{c}}kovi{\'c}.
\newblock Principal neighbourhood aggregation for graph nets.
\newblock \emph{Advances in Neural Information Processing Systems},
  33:\penalty0 13260--13271, 2020.

\bibitem[Cotta et~al.(2021)Cotta, Morris, and Ribeiro]{cotta2021reconstruction}
Leonardo Cotta, Christopher Morris, and Bruno Ribeiro.
\newblock Reconstruction for powerful graph representations.
\newblock \emph{Advances in Neural Information Processing Systems},
  34:\penalty0 1713--1726, 2021.

\bibitem[Deshpande et~al.(2002)Deshpande, Kuramochi, and
  Karypis]{deshpande2002automated}
Mukund Deshpande, Michihiro Kuramochi, and George Karypis.
\newblock Automated approaches for classifying structures.
\newblock Technical report, MINNESOTA UNIV MINNEAPOLIS DEPT OF COMPUTER
  SCIENCE, 2002.

\bibitem[Dwivedi and Bresson(2020)]{dwivedi2020generalization}
Vijay~Prakash Dwivedi and Xavier Bresson.
\newblock A generalization of transformer networks to graphs.
\newblock \emph{arXiv preprint arXiv:2012.09699}, 2020.

\bibitem[Dwivedi et~al.(2020)Dwivedi, Joshi, Laurent, Bengio, and
  Bresson]{dwivedi2020benchmarking}
Vijay~Prakash Dwivedi, Chaitanya~K Joshi, Thomas Laurent, Yoshua Bengio, and
  Xavier Bresson.
\newblock Benchmarking graph neural networks.
\newblock 2020.

\bibitem[Dwivedi et~al.(2022)Dwivedi, Ramp{\'a}{\v{s}}ek, Galkin, Parviz, Wolf,
  Luu, and Beaini]{dwivedi2022long}
Vijay~Prakash Dwivedi, Ladislav Ramp{\'a}{\v{s}}ek, Michael Galkin, Ali Parviz,
  Guy Wolf, Anh~Tuan Luu, and Dominique Beaini.
\newblock Long range graph benchmark.
\newblock \emph{Advances in Neural Information Processing Systems},
  35:\penalty0 22326--22340, 2022.

\bibitem[Fey and Lenssen(2019)]{fey2019fast}
Matthias Fey and Jan~Eric Lenssen.
\newblock Fast graph representation learning with pytorch geometric.
\newblock \emph{arXiv preprint arXiv:1903.02428}, 2019.

\bibitem[Fey et~al.(2020)Fey, Yuen, and Weichert]{fey2020hierarchical}
Matthias Fey, Jan-Gin Yuen, and Frank Weichert.
\newblock Hierarchical inter-message passing for learning on molecular graphs.
\newblock \emph{arXiv preprint arXiv:2006.12179}, 2020.

\bibitem[Frasca et~al.(2022)Frasca, Bevilacqua, Bronstein, and
  Maron]{frascaunderstanding2022}
Fabrizio Frasca, Beatrice Bevilacqua, Michael~M Bronstein, and Haggai Maron.
\newblock Understanding and extending subgraph gnns by rethinking their
  symmetries.
\newblock In \emph{Advances in Neural Information Processing Systems}, 2022.

\bibitem[F{\"u}rer(2017)]{furer2017combinatorial}
Martin F{\"u}rer.
\newblock On the combinatorial power of the weisfeiler-lehman algorithm.
\newblock In \emph{International Conference on Algorithms and Complexity},
  pages 260--271. Springer, 2017.

\bibitem[Gasteiger et~al.(2020)Gasteiger, Gro{\ss}, and
  G{\"u}nnemann]{gasteiger2020directional}
Johannes Gasteiger, Janek Gro{\ss}, and Stephan G{\"u}nnemann.
\newblock Directional message passing for molecular graphs.
\newblock \emph{arXiv preprint arXiv:2003.03123}, 2020.

\bibitem[Gilmer et~al.(2017)Gilmer, Schoenholz, Riley, Vinyals, and
  Dahl]{gilmer2017neural}
Justin Gilmer, Samuel~S Schoenholz, Patrick~F Riley, Oriol Vinyals, and
  George~E Dahl.
\newblock Neural message passing for quantum chemistry.
\newblock In \emph{International conference on machine learning}, pages
  1263--1272. PMLR, 2017.

\bibitem[Grohe and Otto(2015)]{grohe2015pebble}
Martin Grohe and Martin Otto.
\newblock Pebble games and linear equations.
\newblock \emph{The Journal of Symbolic Logic}, 80\penalty0 (3):\penalty0
  797--844, 2015.

\bibitem[Hermosilla et~al.(2020)Hermosilla, Sch{\"a}fer, Lang, Fackelmann,
  V{\'a}zquez, Kozl{\'\i}kov{\'a}, Krone, Ritschel, and
  Ropinski]{hermosilla2020intrinsic}
Pedro Hermosilla, Marco Sch{\"a}fer, Mat{\v{e}}j Lang, Gloria Fackelmann,
  Pere~Pau V{\'a}zquez, Barbora Kozl{\'\i}kov{\'a}, Michael Krone, Tobias
  Ritschel, and Timo Ropinski.
\newblock Intrinsic-extrinsic convolution and pooling for learning on 3d
  protein structures.
\newblock \emph{arXiv preprint arXiv:2007.06252}, 2020.

\bibitem[Hu et~al.(2019)Hu, Liu, Gomes, Zitnik, Liang, Pande, and
  Leskovec]{hu2019strategies}
Weihua Hu, Bowen Liu, Joseph Gomes, Marinka Zitnik, Percy Liang, Vijay Pande,
  and Jure Leskovec.
\newblock Strategies for pre-training graph neural networks.
\newblock \emph{arXiv preprint arXiv:1905.12265}, 2019.

\bibitem[Hu et~al.(2020)Hu, Fey, Zitnik, Dong, Ren, Liu, Catasta, and
  Leskovec]{hu2020open}
Weihua Hu, Matthias Fey, Marinka Zitnik, Yuxiao Dong, Hongyu Ren, Bowen Liu,
  Michele Catasta, and Jure Leskovec.
\newblock Open graph benchmark: Datasets for machine learning on graphs.
\newblock \emph{Advances in neural information processing systems},
  33:\penalty0 22118--22133, 2020.

\bibitem[Huang and Villar(2021)]{huang2021wltutorial}
Ningyuan~Teresa Huang and Soledad Villar.
\newblock A short tutorial on the weisfeiler-lehman test and its variants.
\newblock In \emph{ICASSP 2021 - 2021 IEEE International Conference on
  Acoustics, Speech and Signal Processing (ICASSP)}, pages 8533--8537, 2021.
\newblock \doi{10.1109/ICASSP39728.2021.9413523}.

\bibitem[Huang et~al.(2023)Huang, Peng, Ma, and Zhang]{huang2023boosting}
Yinan Huang, Xingang Peng, Jianzhu Ma, and Muhan Zhang.
\newblock Boosting the cycle counting power of graph neural networks with
  i$^2$-gnns.
\newblock 2023.

\bibitem[Jiang et~al.(2010)Jiang, Coenen, and Zito]{jiang2010finding}
Chuntao Jiang, Frans Coenen, and Michele Zito.
\newblock Finding frequent subgraphs in longitudinal social network data using
  a weighted graph mining approach.
\newblock In \emph{Advanced Data Mining and Applications: 6th International
  Conference, ADMA 2010, Chongqing, China, November 19-21, 2010, Proceedings,
  Part I 6}, pages 405--416. Springer, 2010.

\bibitem[Jin et~al.(2018)Jin, Barzilay, and Jaakkola]{jin2018junction}
Wengong Jin, Regina Barzilay, and Tommi Jaakkola.
\newblock Junction tree variational autoencoder for molecular graph generation.
\newblock In \emph{International conference on machine learning}, pages
  2323--2332. PMLR, 2018.

\bibitem[Kipf and Welling(2016)]{kipf2016semi}
Thomas~N Kipf and Max Welling.
\newblock Semi-supervised classification with graph convolutional networks.
\newblock \emph{arXiv preprint arXiv:1609.02907}, 2016.

\bibitem[Koyut{\"u}rk et~al.(2004)Koyut{\"u}rk, Grama, and
  Szpankowski]{koyuturk2004efficient}
Mehmet Koyut{\"u}rk, Ananth Grama, and Wojciech Szpankowski.
\newblock An efficient algorithm for detecting frequent subgraphs in biological
  networks.
\newblock \emph{Bioinformatics}, 20\penalty0 (suppl\_1):\penalty0 i200--i207,
  2004.

\bibitem[Kreuzer et~al.(2021)Kreuzer, Beaini, Hamilton, L{\'e}tourneau, and
  Tossou]{kreuzer2021rethinking}
Devin Kreuzer, Dominique Beaini, Will Hamilton, Vincent L{\'e}tourneau, and
  Prudencio Tossou.
\newblock Rethinking graph transformers with spectral attention.
\newblock \emph{Advances in Neural Information Processing Systems},
  34:\penalty0 21618--21629, 2021.

\bibitem[Li et~al.(2020)Li, Wang, Wang, and Leskovec]{li2020distance}
Pan Li, Yanbang Wang, Hongwei Wang, and Jure Leskovec.
\newblock Distance encoding: Design provably more powerful neural networks for
  graph representation learning.
\newblock \emph{Advances in Neural Information Processing Systems},
  33:\penalty0 4465--4478, 2020.

\bibitem[Liu et~al.(2021)Liu, Wang, Liu, Zhang, Oztekin, and
  Ji]{liu2021spherical}
Yi~Liu, Limei Wang, Meng Liu, Xuan Zhang, Bora Oztekin, and Shuiwang Ji.
\newblock Spherical message passing for 3d graph networks.
\newblock \emph{arXiv preprint arXiv:2102.05013}, 2021.

\bibitem[Maron et~al.(2018)Maron, Ben-Hamu, Shamir, and
  Lipman]{maron2018invariant}
Haggai Maron, Heli Ben-Hamu, Nadav Shamir, and Yaron Lipman.
\newblock Invariant and equivariant graph networks.
\newblock \emph{arXiv preprint arXiv:1812.09902}, 2018.

\bibitem[Maron et~al.(2019{\natexlab{a}})Maron, Ben-Hamu, Serviansky, and
  Lipman]{maron2019provably}
Haggai Maron, Heli Ben-Hamu, Hadar Serviansky, and Yaron Lipman.
\newblock Provably powerful graph networks.
\newblock \emph{Advances in neural information processing systems}, 32,
  2019{\natexlab{a}}.

\bibitem[Maron et~al.(2019{\natexlab{b}})Maron, Fetaya, Segol, and
  Lipman]{maron2019universality}
Haggai Maron, Ethan Fetaya, Nimrod Segol, and Yaron Lipman.
\newblock On the universality of invariant networks.
\newblock In \emph{International conference on machine learning}, pages
  4363--4371. PMLR, 2019{\natexlab{b}}.

\bibitem[Morris et~al.(2019)Morris, Ritzert, Fey, Hamilton, Lenssen, Rattan,
  and Grohe]{morris2019weisfeiler}
Christopher Morris, Martin Ritzert, Matthias Fey, William~L Hamilton, Jan~Eric
  Lenssen, Gaurav Rattan, and Martin Grohe.
\newblock Weisfeiler and leman go neural: Higher-order graph neural networks.
\newblock In \emph{Proceedings of the AAAI conference on artificial
  intelligence}, pages 4602--4609, 2019.

\bibitem[Morris et~al.(2020)Morris, Rattan, and Mutzel]{morris2020weisfeiler}
Christopher Morris, Gaurav Rattan, and Petra Mutzel.
\newblock Weisfeiler and leman go sparse: Towards scalable higher-order graph
  embeddings.
\newblock \emph{Advances in Neural Information Processing Systems},
  33:\penalty0 21824--21840, 2020.

\bibitem[Qian et~al.(2022)Qian, Rattan, Geerts, Niepert, and
  Morris]{qianordered2022}
Chendi Qian, Gaurav Rattan, Floris Geerts, Mathias Niepert, and Christopher
  Morris.
\newblock Ordered subgraph aggregation networks.
\newblock In \emph{Advances in Neural Information Processing Systems}, 2022.

\bibitem[Qiao et~al.(2020)Qiao, Welborn, Anandkumar, Manby, and
  Miller~III]{qiao2020orbnet}
Zhuoran Qiao, Matthew Welborn, Animashree Anandkumar, Frederick~R Manby, and
  Thomas~F Miller~III.
\newblock Orbnet: Deep learning for quantum chemistry using symmetry-adapted
  atomic-orbital features.
\newblock \emph{The Journal of chemical physics}, 153\penalty0 (12):\penalty0
  124111, 2020.

\bibitem[Tahmasebi et~al.(2020)Tahmasebi, Lim, and
  Jegelka]{tahmasebi2020counting}
Behrooz Tahmasebi, Derek Lim, and Stefanie Jegelka.
\newblock Counting substructures with higher-order graph neural networks:
  Possibility and impossibility results.
\newblock \emph{arXiv preprint arXiv:2012.03174}, 2020.

\bibitem[Vaswani et~al.(2017)Vaswani, Shazeer, Parmar, Uszkoreit, Jones, Gomez,
  Kaiser, and Polosukhin]{vaswani2017attention}
Ashish Vaswani, Noam Shazeer, Niki Parmar, Jakob Uszkoreit, Llion Jones,
  Aidan~N Gomez, {\L}ukasz Kaiser, and Illia Polosukhin.
\newblock Attention is all you need.
\newblock \emph{Advances in neural information processing systems}, 30, 2017.

\bibitem[Veli{\v{c}}kovi{\'c} et~al.(2017)Veli{\v{c}}kovi{\'c}, Cucurull,
  Casanova, Romero, Lio, and Bengio]{velivckovic2017graph}
Petar Veli{\v{c}}kovi{\'c}, Guillem Cucurull, Arantxa Casanova, Adriana Romero,
  Pietro Lio, and Yoshua Bengio.
\newblock Graph attention networks.
\newblock \emph{arXiv preprint arXiv:1710.10903}, 2017.

\bibitem[Wang and Zhang(2023)]{wang2023towards}
Yanbo Wang and Muhan Zhang.
\newblock Towards better evaluation of gnn expressiveness with brec dataset.
\newblock \emph{arXiv preprint arXiv:2304.07702}, 2023.

\bibitem[Weisfeiler and Leman(1968)]{weisfeiler1968reduction}
Boris Weisfeiler and Andrei Leman.
\newblock The reduction of a graph to canonical form and the algebra which
  appears therein.
\newblock \emph{NTI, Series}, 2\penalty0 (9):\penalty0 12--16, 1968.

\bibitem[Wu et~al.(2018)Wu, Ramsundar, Feinberg, Gomes, Geniesse, Pappu,
  Leswing, and Pande]{wu2018moleculenet}
Zhenqin Wu, Bharath Ramsundar, Evan~N Feinberg, Joseph Gomes, Caleb Geniesse,
  Aneesh~S Pappu, Karl Leswing, and Vijay Pande.
\newblock Moleculenet: a benchmark for molecular machine learning.
\newblock \emph{Chemical science}, 9\penalty0 (2):\penalty0 513--530, 2018.

\bibitem[Wu et~al.(2020)Wu, Pan, Chen, Long, Zhang, and
  Philip]{wu2020comprehensive}
Zonghan Wu, Shirui Pan, Fengwen Chen, Guodong Long, Chengqi Zhang, and S~Yu
  Philip.
\newblock A comprehensive survey on graph neural networks.
\newblock \emph{IEEE transactions on neural networks and learning systems},
  32\penalty0 (1):\penalty0 4--24, 2020.

\bibitem[Xu et~al.(2018)Xu, Hu, Leskovec, and Jegelka]{xu2018powerful}
Keyulu Xu, Weihua Hu, Jure Leskovec, and Stefanie Jegelka.
\newblock How powerful are graph neural networks?
\newblock In \emph{International Conference on Learning Representations}, 2018.

\bibitem[Yan et~al.(2023)Yan, Zhou, Gao, Tang, and Zhang]{yan2023efficiently}
Zuoyu Yan, Junru Zhou, Liangcai Gao, Zhi Tang, and Muhan Zhang.
\newblock Efficiently counting substructures by subgraph gnns without running
  gnn on subgraphs.
\newblock \emph{arXiv preprint arXiv:2303.10576}, 2023.

\bibitem[You et~al.(2021)You, Gomes-Selman, Ying, and
  Leskovec]{you2021identity}
Jiaxuan You, Jonathan~M Gomes-Selman, Rex Ying, and Jure Leskovec.
\newblock Identity-aware graph neural networks.
\newblock In \emph{Proceedings of the AAAI Conference on Artificial
  Intelligence}, pages 10737--10745, 2021.

\bibitem[Zhang et~al.(2023)Zhang, Feng, Du, He, and Wang]{zhang2023complete}
Bohang Zhang, Guhao Feng, Yiheng Du, Di~He, and Liwei Wang.
\newblock A complete expressiveness hierarchy for subgraph gnns via subgraph
  weisfeiler-lehman tests.
\newblock \emph{arXiv preprint arXiv:2302.07090}, 2023.

\bibitem[Zhang and Li(2021)]{zhang2021nested}
Muhan Zhang and Pan Li.
\newblock Nested graph neural networks.
\newblock \emph{Advances in Neural Information Processing Systems},
  34:\penalty0 15734--15747, 2021.

\bibitem[Zhao et~al.(2022)Zhao, Jin, Akoglu, and Shah]{zhao2022stars}
Lingxiao Zhao, Wei Jin, Leman Akoglu, and Neil Shah.
\newblock From stars to subgraphs: Uplifting any gnn with local structure
  awareness.
\newblock In \emph{International Conference on Learning Representations}, 2022.

\bibitem[Zhou et~al.(2020)Zhou, Cui, Hu, Zhang, Yang, Liu, Wang, Li, and
  Sun]{zhou2020graph}
Jie Zhou, Ganqu Cui, Shengding Hu, Zhengyan Zhang, Cheng Yang, Zhiyuan Liu,
  Lifeng Wang, Changcheng Li, and Maosong Sun.
\newblock Graph neural networks: A review of methods and applications.
\newblock \emph{AI open}, 1:\penalty0 57--81, 2020.

\end{thebibliography}
\bibliographystyle{plainnat}
\newpage
\appendix

\section{Message passing neural networks}\label{sect_mpnn}

Message passing neural networks (MPNNs)~\citep{gilmer2017neural,kipf2016semi,velivckovic2017graph,xu2018powerful} are a class of GNNs that iteratively updates node representations $h_u, u\in\mathcal{V}_G$ by aggregating information from neighbors. At the $t$-th $(t\geqslant1)$ iteration, the update rule for MPNNs is
\begin{align}
    h^{(t)}_u = f^{(t)}\left(h^{(t-1)}_u, \bigoplus_{v\in\mathcal{N}(u)} m^{(t)}\left(h^{(t-1)}_u, h^{(t-1)}_v, e_{uv}\right)\right),
\end{align}
where $h_u^{(t)}$ is the representation of node $u$ at the $t$-th iteration and $e_{uv}$ is the edge feature of $\{u,v\}\in\mathcal{E}_G$. $f^{(t)}$ and $m^{(t)}$ are learnable functions, and $\bigoplus$ is a permutation-invariant aggregation operator, such as sum, mean or max. The representation of graph $G$ is obtained by
\begin{align}
    h_G=R(\lbbr h_u:u\in\mathcal{V}_G\rbbr),
\end{align}
where $R$ is a permutation-invariant function of multisets.

It is known~\citep{xu2018powerful} that the ability of MPNNs to discriminate between non-isomorphic graphs is upper-bounded by the WL(1) test. When it comes to counting cycles, \emph{MPNNs cannot graph-level count any cycles}, as stated in~\citep{huang2023boosting}.

\section{Proofs of theorems in Section \ref{sect_drfwl}}\label{proof_sect_drfwl}
\subsection{Proof of Theorem \ref{drfwl_vs_wl1}}
We restate Theorem \ref{drfwl_vs_wl1} as following,

\begin{theorem}
        In terms of the ability to distinguish between non-isomorphic graphs, the $d$-DRFWL(2) test is strictly more powerful than WL(1), for any $d\geqslant 1$.
\end{theorem}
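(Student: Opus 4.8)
The plan is to establish the two halves of ``strictly more powerful'' separately: first that $d$-DRFWL(2) is \emph{at least} as powerful as WL(1) for every $d\geqslant 1$, and then to exhibit a pair of non-isomorphic graphs that $d$-DRFWL(2) distinguishes but WL(1) does not.

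For the first half, I would work with the \emph{diagonal} coloring $\chi^{(\infty)}(v):=W^{(\infty)}(v,v)$ read off from the stable $d$-DRFWL(2) coloring, and show that it refines the stable WL(1) coloring. The two structural facts I would use are: (i) in the update of a distance-$0$ tuple $(v,v)$, the term $M_{11}^{0}(v,v)$ aggregates over exactly $\mathcal{N}_1(v)\cap\mathcal{N}_1(v)=\mathcal{N}(v)$, so a diagonal tuple always ``sees'' the full neighborhood of $v$, \emph{regardless of $d$}; and (ii) in the update of a distance-$1$ tuple $(v,w)$, the term $M_{10}^{1}(v,w)$ aggregates over the single node $w$ and hence contains $W^{(t-1)}(w,w)=\chi^{(t-1)}(w)$, so the color of an edge tuple refines the diagonal color of its endpoint. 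Combining (i) and (ii) at stability, any two nodes $u,u'$ with $\chi^{(\infty)}(u)=\chi^{(\infty)}(u')$ must have equal multisets of edge-tuple colors over their neighbors, hence equal multisets $\lbbr \chi^{(\infty)}(w):w\in\mathcal{N}(u)\rbbr=\lbbr \chi^{(\infty)}(w):w\in\mathcal{N}(u')\rbbr$. This is precisely the statement that the node partition induced by $\chi^{(\infty)}$ is \emph{equitable} (and it trivially refines the uniform initial coloring). Since the stable WL(1) coloring is the coarsest equitable partition refining the uniform coloring, every equitable refinement---in particular $\chi^{(\infty)}$---is finer than it, giving that $d$-DRFWL(2) is at least as discriminative as WL(1). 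Phrasing the argument at the level of stable colorings conveniently sidesteps the one-step index lag that would otherwise clutter a naive iteration-matched induction.

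For the second half, I would exhibit a single separating pair that works for every $d\geqslant 1$: the disjoint union $C_3\sqcup C_3$ of two triangles versus the single hexagon $C_6$. Both graphs are $2$-regular on six vertices, so WL(1) assigns the uniform color to every node at every iteration and produces identical graph representations; yet the two graphs are non-isomorphic. In contrast, $d$-DRFWL(2) detects the triangles: an edge tuple $(u,v)$ in $C_3\sqcup C_3$ has a common neighbor and thus a nonempty $M_{11}^{1}$ aggregate, whereas no edge tuple in $C_6$ does, so the two colorings already differ at the first iteration. Equivalently, one may invoke Theorem~\ref{1_count} (even $1$-DRFWL(2) node-level counts $3$-cycles) together with Theorem~\ref{power_and_separation_of_drfwl2} (every $d$-DRFWL(2) with $d\geqslant 1$ is at least as powerful as $1$-DRFWL(2)) to conclude the separation.

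Combining the two halves yields strict dominance for every $d\geqslant 1$. I expect the main obstacle to be the first half: the delicate point is justifying that the diagonal coloring faithfully transports neighbor information through the edge tuples, and the cleanest resolution---rather than tracking a lagged, iteration-matched induction---is the equitable-partition argument at stability sketched above.
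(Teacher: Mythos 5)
Your proof is correct, and the separating pair in your second half ($C_3\sqcup C_3$ versus $C_6$, detected via the empty/non-empty $M_{11}^{1}$ aggregate over common neighbours of an edge tuple) is exactly the counterexample the paper uses. The first half, however, takes a genuinely different route. The paper first reduces to $d=1$ by invoking the hierarchy $(d+1)$-DRFWL(2) $\succeq$ $d$-DRFWL(2), and then gives an explicit \emph{simulation}: it constructs concrete $\mathrm{HASH}$ and $\mathrm{POOL}$ functions so that two iterations of 1-DRFWL(2) reproduce one iteration of WL(1), with odd iterations caching the pair $(W_{\text{WL(1)}}(u),W_{\text{WL(1)}}(v))$ in the edge tuple and even iterations pushing the neighbourhood multiset back into the diagonal tuple. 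Your argument instead works with arbitrary injective hash functions and shows directly, for every $d\geqslant 1$ at once, that the stable diagonal colouring $\chi^{(\infty)}$ induces an equitable partition (via $M_{11}^{0}(v,v)$ seeing $\mathcal{N}(v)$ and $M_{10}^{1}(v,w)$ carrying $W(w,w)$), hence refines the coarsest equitable partition, which is the WL(1) stable colouring. This buys you a cleaner, construction-free argument with no iteration bookkeeping and no appeal to the separation theorem; what it costs is that a couple of routine points are left implicit and should be stated when you write it up: (i) to compare colours \emph{across} two graphs you should run the equitable-partition argument on the disjoint union $G\sqcup H$ (harmless here, since distance-restricted updates never mix components), and (ii) the $d$-DRFWL(2) readout is over the multiset of \emph{all} tuples with $d(u,v)\leqslant d$, so you need the observation that diagonal tuples retain a colour distinguishable from off-diagonal ones (their initial colours differ and refinement preserves this), which lets you recover the multiset $\lbbr\chi^{(\infty)}(v):v\in\mathcal{V}_G\rbbr$ from the graph representation. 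Both gaps are one-line fixes, so the proposal stands as a valid alternative proof.
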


\begin{proof}
    In Theorem \ref{power_and_separation_of_drfwl2} we will prove that $(d+1)$-DRFWL(2) is strictly more powerful than $d$-DRFWL(2), for any $d\geqslant1$. Therefore, we only need to show that 1-DRFWL(2) is strictly more powerful than WL(1). We will first prove that 1-DRFWL(2) can give an implementation of the WL(1) test. Actually, let
    \begin{align}
        W^{(0)}(u,v) =\left\{\begin{array}{ll}
             W^{(0)}_{\text{WL(1)}}(u),&  d(u,v)=0,\\
             \mathrm{NULL},&d(u,v)=1,  
        \end{array}\right.
    \end{align}
    be the initial 1-DRFWL(2) colors of $(u,v)\in\mathcal{V}_G^2$ with $0\leqslant d(u,v)\leqslant 1$. Here $W^{(0)}_{\text{WL(1)}}(u)$ is the initial WL(1) color of node $u$ (which is identical for all $u\in\mathcal{V}_G$). It is obvious that $W^{(0)}(u,v)$ does only depend on $d(u,v)$. As for the update rule, at the $(2t-1)$-th iteration with $t\geqslant 1$, we ask
    \begin{align}
        \mathrm{POOL}_{01}^{1(2t-1)}\left(\lbbr(W^{(2t-2)}(u,v),W^{(2t-2)}(u,u))\rbbr\right) = W^{(2t-2)}(u,u),
    \end{align}
    and
    \begin{align}
        \mathrm{POOL}_{10}^{1(2t-1)}\left(\lbbr(W^{(2t-2)}(v,v),W^{(2t-2)}(u,v))\rbbr\right) = W^{(2t-2)}(v,v),
    \end{align}
    for those $(u,v)$ with distance 1. Here, both $\mathcal{N}_0(u)\cap\mathcal{N}_1(v)$ and $\mathcal{N}_1(u)\cap\mathcal{N}_0(v)$ have only one element, so $\mathrm{POOL}_{01}^{1(2t-1)}$ and $\mathrm{POOL}_{10}^{1(2t-1)}$ simply select the second and the first component from the unique 2-tuple in the corresponding multisets, respectively. The $\mathrm{HASH}_k^{(2t-1)}$ functions for $k=0$ or $1$ are chosen as
    \begin{gather}
        \mathrm{HASH}_1^{(2t-1)}\left(W^{(2t-2)}(u,v), \left(M_{ij}^{1(2t-1)}\right)_{0\leqslant i,j\leqslant 1}\right) = \mathrm{CONCAT}\left(M_{01}^{1(2t-1)}, M_{10}^{1(2t-1)}\right),\\\notag\qquad\qquad\qquad\qquad\qquad\qquad\qquad\qquad\qquad\qquad\qquad\qquad \text{for }d(u,v)=1,\\
        \mathrm{HASH}_0^{(2t-1)}\left(W^{(2t-2)}(u,u), \left(M_{ij}^{0(2t-1)}\right)_{0\leqslant i,j\leqslant 1}\right) = W^{(2t-2)}(u,u),\\
        \notag\qquad\qquad\qquad\qquad\qquad\qquad\qquad\qquad\qquad\qquad\qquad\qquad\text{for }d(u,v)=0.
    \end{gather}
    What we did is to ask \textbf{the $(2t-1)$-th iteration of 1-DRFWL(2) to record the WL(1) colors $\left(W^{(t-1)}_{\text{WL(1)}}(u),W^{(t-1)}_{\text{WL(1)}}(v)\right)$ in the 1-DRFWL(2) color of node pair $(u,v)$}. In the $2t$-th iteration, 1-DRFWL(2) then uses this record to update the colors of $(u,u)$ and $(v,v)$:
    \begin{gather}
    \mathrm{HASH}_1^{(2t)}\left(W^{(2t-1)}(u,v),\left(M_{ij}^{1(2t)}\right)_{0\leqslant i,j\leqslant 1}\right) = W^{(2t-1)}(u,v),\\\notag\qquad\qquad\qquad \qquad\qquad\qquad \qquad\qquad\text{for }d(u,v)=1,\\
        \mathrm{HASH}_0^{(2t)}\left(W^{(2t-1)}(u,u),\left(M_{ij}^{0(2t)}\right)_{0\leqslant i,j\leqslant 1}\right) = M_{11}^{0(2t)},\\\notag\qquad\qquad\qquad \qquad\qquad\qquad\qquad\qquad \text{for }d(u,v)=0.
    \end{gather}
    And we ask
    \begin{align}
        M_{11}^{0(2t)}(u,v) = \mathrm{HASH}^{(t)}_{\text{WL(1)}}\left(W^{(2t-2)}(u,u),\mathrm{POOL}^{(t)}_{\text{WL(1)}}\left(\lbbr W^{(2t-2)}(v,v):v\in\mathcal{N}(u)\rbbr\right)\right).
    \end{align}
    Here $W^{(2t-2)}(u,u)$ and $W^{(2t-2)}(v,v)$ are stored in $W^{(2t-1)}(u,v)$ in the last iteration (in the first and the second components respectively). $\mathrm{HASH}^{(t)}_{\text{WL(1)}}$ and $\mathrm{POOL}^{(t)}_{\text{WL(1)}}$ are the hashing functions used by WL(1) test, as in \eqref{wl1}. It is easy to see the above implementation uses 2 iterations of 1-DRFWL(2) update to simulate 1 iteration of WL(1) update. Therefore, 1-DRFWL(2) is at least as powerful as WL(1) in terms of the ability to distinguish between non-isomorphic graphs.

    To see why 1-DRFWL(2) is strictly more powerful than WL(1), we only need to find out a pair of graphs $G$ and $H$ such that WL(1) cannot distinguish between them while 1-DRFWL(2) can. Let $G$ be two 3-cycles and $H$ be one 6-cycle. Of course WL(1) cannot distinguish between $G$ and $H$. However, 1-DRFWL(2) can distinguish between them because there are no triangles in $H$ but two in $G$, as is made clear in the following. In the first iteration of 1-DRFWL(2), the $M_{11}^{1(1)}$ term collects common neighbors of every node pair $(u,v)$ with distance 1. In $G$ this results in non-empty common neighbor multisets for all distance-1 tuples $(u,v)$, while in $H$ all such multisets are empty. 1-DRFWL(2) then makes use of this discrepancy by properly choosing the $\mathrm{HASH}_{1}^{(1)}$ function that assigns different colors for distance-1 tuples in $G$ and in $H$. Therefore, 1-DRFWL(2) can distinguish between $G$ and $H$.
\end{proof}

\subsection{Proof of Theorem \ref{power_and_separation_of_drfwl2}}\label{subsection_proof_power_and_separation}
We restate Theorem \ref{power_and_separation_of_drfwl2} as following,

\begin{theorem}
        In terms of the ability to distinguish between non-isomorphic graphs, FWL(2) is strictly more powerful than $d$-DRFWL(2), for any $d\geqslant 1$. Moreover, $(d+1)$-DRFWL(2) is strictly more powerful than $d$-DRFWL(2).
\end{theorem}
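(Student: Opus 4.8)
The statement packages two separations, and each reduces to showing (i) a ``$\succeq$'' direction, i.e.\ that the stronger test can simulate the weaker one, together with (ii) a concrete pair of non-isomorphic graphs witnessing \emph{strictness}. For FWL(2) versus $d$-DRFWL(2), direction (i) is exactly the simulation sketched immediately before the theorem, so nothing new is needed there. For $(d+1)$-DRFWL(2) versus $d$-DRFWL(2), I would establish (i) by a refinement argument: one chooses the $\mathrm{HASH}$ and $\mathrm{POOL}$ functions of $(d+1)$-DRFWL(2) so that, on distance-$\leq d$ tuples, they discard both the distance-$(d+1)$ tuples and the aggregation terms with $i=d+1$ or $j=d+1$, thereby reproducing the $d$-DRFWL(2) update verbatim. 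An induction on the iteration count then shows the $(d+1)$-DRFWL(2) stable coloring refines the $d$-DRFWL(2) coloring on the shared tuples, so any pair separated by $d$-DRFWL(2) is separated by $(d+1)$-DRFWL(2).

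For both strictness claims I would use a single family of $2$-regular witnesses. Put $r=d+1$ and let $G = C_{3r}\sqcup C_{3r}$ and $H = C_{6r}$; these have the same number of vertices ($6r$) and are both $2$-regular. FWL(2) separates them because it computes all pairwise shortest-path distances, and the largest finite distance is $\lfloor 3r/2\rfloor$ within each component of $G$ but $3r$ in $H$, so the two distance multisets differ; this settles the strict part of the first claim. For the second claim I would show that $(d+1)$-DRFWL(2) \emph{also} separates $G$ from $H$: in $C_{3r}$ the three vertices $0,r,2r$ are pairwise at distance $r=d+1$, so for the distance-$(d+1)$ tuple $(0,r)$ the set $\mathcal{N}_{d+1}(0)\cap\mathcal{N}_{d+1}(r)$ is nonempty (it contains $2r$), whereas for every distance-$(d+1)$ tuple of the single long cycle $C_{6r}$ this intersection is empty. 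Since $(d+1)$-DRFWL(2) tracks distance-$(d+1)$ tuples and aggregates over $\mathcal{N}_{d+1}(u)\cap\mathcal{N}_{d+1}(v)$, its injective hashing detects the discrepancy.

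The remaining and most delicate step is to prove that $d$-DRFWL(2) \emph{cannot} separate $G$ from $H$. The plan is a local-structure invariance argument. I would first record the combinatorial fact that a cycle $C_m$ is ``path-like up to radius $d$'', meaning $|\mathcal{N}_i(u)\cap\mathcal{N}_j(v)|$ agrees with its value on a bi-infinite path for all $i,j,k\leq d$ (with $k=d(u,v)$), precisely once $m\geq 3d+1$ (the first wraparound coincidence needs $m=i+j+k\leq 3d$). Since $3r=3d+3\geq 3d+1$, both components of $G$ and the cycle $H$ are path-like up to radius $d$. Then I would prove by induction on $t$ that the color $W^{(t)}(u,v)$ of any distance-$\leq d$ tuple depends only on $d(u,v)$ and takes identical values in $G$ and $H$: in the inductive step, path-likeness forces each multiset $M_{ij}^{k(t)}(u,v)$ to be determined solely by $k$ together with the intersection counts, which coincide across the two graphs. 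Finally, because $G$ and $H$ have equal numbers of distance-$k$ tuples for every $k\leq d$, the readout multisets coincide and the graphs receive the same $d$-DRFWL(2) representation.

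I expect the main obstacle to be this indistinguishability lemma---both pinning down the exact threshold $m\geq 3d+1$ and making the color-stabilization induction airtight, since one must verify that information propagated across iterations never breaks the distance-only invariance. The simulation/refinement directions and the two positive separations are comparatively routine. A convenient bonus is that the single family $C_{3r}\sqcup C_{3r}$ versus $C_{6r}$ serves both strictness claims, because FWL(2) is at least as powerful as $(d+1)$-DRFWL(2).
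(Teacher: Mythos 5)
Your proposal is correct and follows essentially the same route as the paper's proof: simulation arguments for the two ``at least as powerful'' directions, a witness family of two $k$-cycles versus one $2k$-cycle with $k\geqslant 3d+1$ (you take $k=3d+3$, the paper takes $k=3d+1$), and an induction showing that on such cycles every $d$-DRFWL(2) color depends only on $d(u,v)$ because the local neighborhood structure up to radius $d$ is path-like. The only cosmetic difference is that you detect the $(d+1)$-DRFWL(2) separation via emptiness versus non-emptiness of $\mathcal{N}_{d+1}(u)\cap\mathcal{N}_{d+1}(v)$ (which your choice $k=3(d+1)$ makes immediate), whereas the paper counts the total number of contributing nodes.
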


\begin{proof}
    First we show that FWL(2) can give an implementation of $d$-DRFWL(2) for any $d\geqslant 1$. The implementation is in three steps:
    \paragraph{Distance calculation.} For any 2-tuple $(u,v)\in\mathcal{V}_G^2$, let its FWL(2) color $W^{(0)}(u,v)$ be $0$ if $u=v$, $1$ if $\{u,v\}\in\mathcal{E}_G$, and $\infty$ otherwise. In the first $(n-1)$ iterations, FWL(2) can use the following update rule to calculate distance between any pair of nodes,
    \begin{align}
        W^{(t)}(u,v) = \min \left(W^{(t-1)}(u,v),\min_{w\in\mathcal{V}_G}\left(W^{(t-1)}(w,v)+W^{(t-1)}(u,w)\right)\right).
    \end{align}
    When $t = n-1$, the FWL(2) color for any $(u,v)\in\mathcal{V}_G^2$ gets stable and becomes $d(u,v)$. 
    
    \paragraph{Initial color generation.} At the $n$-th iteration, FWL(2) transforms $W^{(n-1)}(u,v)=d(u,v)$ into
    \begin{align}\label{proof1_1}
        W^{(n)}(u,v) = \left\{\begin{array}{ll}
            \mathrm{CONCAT}\left(d(u,v), W^{(0)}_{d\text{-DRFWL(2)}}(u,v)\right), & \text{if }0\leqslant d(u,v)\leqslant d, \\
            \mathrm{CONCAT}\left(d(u,v),\mathrm{NULL}\right), & \text{otherwise}.
        \end{array}\right.
    \end{align}
    Here $W^{(0)}_{d\text{-DRFWL(2)}}(u,v)$ is the initial $d$-DRFWL(2) color of 2-tuple $(u,v)$. Because $W^{(0)}_{d\text{-DRFWL(2)}}(u,v)$ only depends on $d(u,v)$, the RHS in \eqref{proof1_1} is a function of $W^{(n-1)}(u,v)=d(u,v)$, and thus complies with the general form of FWL(2) as in \eqref{kfwl}. 
    \paragraph{$\bm{d}$-DRFWL(2) Update.} For the $(n+t)$-th iteration with $t\geqslant 1$, the FWL(2) test uses the following $\mathrm{POOL}^{(n+t)}$ and $\mathrm{HASH}^{(n+t)}$ functions to simulate the $t$-th iteration of $d$-DRFWL(2):
    \begin{itemize}
        \item $\mathrm{HASH}^{(n+t)}$ reads the first field of $W^{(n+t-1)}(u,v)$ to get $d(u,v)=k$, and decides whether to update the color for $(u,v)$ (if $0\leqslant k\leqslant d$) or not (otherwise).
        \item For every $w\in\mathcal{V}_G$, $\mathrm{POOL}^{(n+t)}$ reads the first field of $W^{(n+t-1)}(w,v)$ and $W^{(n+t-1)}(u,w)$ to find that $d(w,v)=i$ and $d(u,w)=j$, and then either selects $W^{(t-1)}_{d\text{-DRFWL(2)}}(w,v)$ and $W^{(t-1)}_{d\text{-DRFWL(2)}}(u,w)$ from the second field of $W^{(n+t-1)}(w,v)$ and $W^{(n+t-1)}(u,w)$, and applies \eqref{multiset_update} to get $M_{ij}^{k(t)}(u,v)$, if it finds $0\leqslant i,j\leqslant d$, or ignores the contribution from $\left(W^{(n+t-1)}(w,v),W^{(n+t-1)}(u,w)\right)$ otherwise.
        \item $\mathrm{HASH}^{(n+t)}$ calculates $W^{(t)}_{d\text{-DRFWL(2)}}(u,v)$ using \eqref{aggregation_update}, and assigns it to the second field of $W^{(n+t)}(u,v)$, if it has decided that the color for $(u,v)$ should be updated.
    \end{itemize}

    For the $\mathrm{READOUT}$ part, FWL(2) simply ignores the tuples $(u,v)$ with $d(u,v)>d$. It is easy to see the above construction does provide an implementation for $d$-DRFWL(2).

    Similarly, one can prove that $(d+1)$-DRFWL(2) also gives an implementation of $d$-DRFWL(2), for any $d\geqslant 1$, by only executing the ``initial color generation'' and ``$d$-DRFWL(2) update'' steps stated above. Since the colors for $(u,v)$ with $d(u,v)>d$ never update, it is sufficient only keeping track of pairs with mutual distance $\leqslant d+1$, which $(d+1)$-DRFWL(2) is capable of.

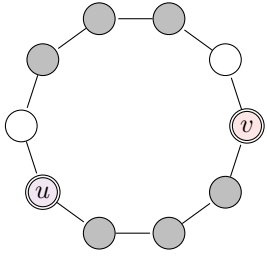
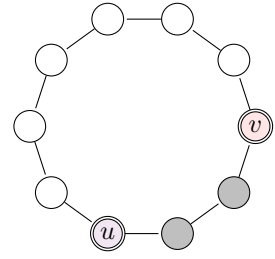
\begin{figure}[t]
\centering
\begin{subfigure}{0.4\textwidth}
\centering
\begin{tikzpicture}[-,shorten >=1pt,on grid,auto,scale=0.5]

\node[state,inner sep=1pt,minimum size=12pt](1) at(36:3){};
\node[state,inner sep=1pt,minimum size=12pt,fill=lightgray](2) at(72:3){};
\node[state,inner sep=1pt,minimum size=12pt,fill=lightgray](3) at (108:3){};
\node[state,inner sep=1pt,minimum size=12pt,fill=lightgray](4) at (144:3){};
\node[state,inner sep=1pt,minimum size=12pt](5) at (180:3){};
\node[state,inner sep=1pt,minimum size=12pt, fill=violet!10, accepting](6) at (216:3){$u$};
\node[state,inner sep=1pt,minimum size=12pt, fill=lightgray](7) at (252:3){};
\node[state,inner sep=1pt,minimum size=12pt, fill=lightgray](8) at (288:3){};
\node[state,inner sep=1pt,minimum size=12pt, fill=lightgray](9) at (324:3){};
\node[state,inner sep=1pt,minimum size=12pt, fill=red!10, accepting](10) at (0:3){$v$};
\path (1) edge (2);
\path (2) edge (3);
\path (3) edge (4);
\path (4) edge (5);
\path (5) edge (6);
\path (6) edge (7);
\path (7) edge (8);
\path (8) edge (9);
\path (9) edge (10);
\path (10) edge (1);
\end{tikzpicture}
\caption{When running 4-DRFWL(2) in a 10-cycle, there are 8 (marked as colored, with 3 on the inferior arc and 3 on the superior arc) nodes contributing to the update of any distance-4 tuple $(u,v)$}
\label{counterexample_a}
\end{subfigure}
\hfill
\begin{subfigure}{0.4\textwidth}
\centering
\begin{tikzpicture}[-,shorten >=1pt,on grid,auto, scale=0.5]

\node[state,inner sep=1pt,minimum size=12pt](1) at(36:3){};
\node[state,inner sep=1pt,minimum size=12pt](2) at(72:3){};
\node[state,inner sep=1pt,minimum size=12pt](3) at (108:3){};
\node[state,inner sep=1pt,minimum size=12pt](4) at (144:3){};
\node[state,inner sep=1pt,minimum size=12pt](5) at (180:3){};
\node[state,inner sep=1pt,minimum size=12pt](6) at (216:3){};
\node[state,inner sep=1pt,minimum size=12pt, fill=violet!10, accepting](7) at (252:3){$u$};
\node[state,inner sep=1pt,minimum size=12pt, fill=lightgray](8) at (288:3){};
\node[state,inner sep=1pt,minimum size=12pt, fill=lightgray](9) at (324:3){};
\node[state,inner sep=1pt,minimum size=12pt, fill=red!10, accepting](10) at (0:3){$v$};
\path (1) edge (2);
\path (2) edge (3);
\path (3) edge (4);
\path (4) edge (5);
\path (5) edge (6);
\path (6) edge (7);
\path (7) edge (8);
\path (8) edge (9);
\path (9) edge (10);
\path (10) edge (1);
\end{tikzpicture}
\caption{When running 3-DRFWL(2) in a 10-cycle (\emph{or any cycle with length }$\geqslant 10$), there are only 4 (marked as colored) nodes contributing to the update of any distance-3 tuple $(u,v)$}
\label{counterexample_b}
\end{subfigure}
\caption{Illustration on the counterexample used in the proof of the separation result in Theorem \ref{power_and_separation_of_drfwl2}. Here we take $d=3$ and a $(3d+1)$-cycle, or 10-cycle, is shown. It is clear that for a distance-$(d+1)$ tuple there are nodes on both the inferior arc and the superior arc that contribute to its update. Contrarily, for a distance-$d$ tuple only nodes on the inferior arc contribute.}
\end{figure}

    Now we will turn to prove the separation result: for any $d\geqslant 1$, there exist graphs $G$ and $H$ that FWL(2) (or $(d+1)$-DRFWL(2)) can distinguish, but $d$-DRFWL(2) cannot. We ask \textbf{$G$ to be two $(3d+1)$-cycles, and $H$ a single $(6d+2)$-cycle}. FWL(2) can distinguish between $G$ and $H$ by (i) calculating distance between every pair of nodes in $G$ and $H$; (ii) check if there is $(u,v)\in \mathcal{V}^2$ such that $d(u,v)=\infty$ at the $\mathrm{READOUT}$ step. The above procedure will give $G$ a true label and $H$ a false label. 

    To see why $(d+1)$-DRFWL(2) can also distinguish between $G$ and $H$, we designate that $\mathrm{POOL}^{d+1(1)}_{ij}$ be a multiset function that counts the elements in the multiset, for all $0\leqslant i,j\leqslant d+1$, and that $\mathrm{HASH}_{d+1}^{(1)}$ be the sum of all items in $\left(M_{ij}^{d+1(1)}(u,v)\right)_{0\leqslant i,j\leqslant d+1}$. Briefly speaking, \textbf{we are asking the first iteration of $(d+1)$-DRFWL(2) to count how many nodes $w$ contribute to the update of a distance-$(d+1)$ tuple $(u,v)$}, via the form $\left(W^{(0)}(w,v),W^{(0)}(u,w)\right)$. For any distance-$(d+1)$ tuple in $G$, the count is $4$ for $d=1$ and $d+5$ for $d>1$ (actually when $d>1$, a distance-$(d+1)$ tuple $(u,v)$ splits a $(3d+1)$-cycle into an \emph{inferior arc} of length $(d+1)$ and a \emph{superior arc} of length $2d$; $d$ nodes on the inferior arc and 3 nodes on the superior arc, along with $u$ and $v$, sum up to the number; Figure \ref{counterexample_a} illustrates the case of $d=3$). For a distance-$(d+1)$ tuple in $H$, the count becomes $d+2$, which is always different from the count in $G$. The $(d+1)$-DRFWL(2) test then uses this discrepancy to render different colors for a distance-$(d+1)$ tuple in $G$ and in $H$, thus telling $G$ from $H$.

    The above proof also reveals why $d$-DRFWL(2) fails to distinguish between $G$ and $H$. Actually, let $(u,v)$ be a distance-$k$ tuple in $\mathcal{V}_G^2$ or $\mathcal{V}_H^2$ with $0\leqslant k\leqslant d$. In the following we use $W^{(t)}(u,v)$ to denote the $d$-DRFWL(2) color of $(u,v)$ at the $t$-th iteration, both for $(u,v)\in\mathcal{V}_G^2$ and $(u,v)\in\mathcal{V}_H^2$. We will now use induction to prove that \emph{for all $t$, $W^{(t)}(u,v)$ only depends on $d(u,v)$, and does not depend on whether $(u,v)$ is in $G$ or $H$}. This implies that for $(u,v)\in\mathcal{V}_G^2$ and $(u',v')\in\mathcal{V}_H^2$, $W^{(t)}(u,v)=W^{(t)}(u',v')$ holds for all iterations $t$ as long as $d(u,v)=d(u',v')$.
    \paragraph{Base case.} Since the initial $d$-DRFWL(2) color of a distance-$k$ tuple $(0\leqslant k\leqslant d)$ only depends on $k$, the $t=0$ case is trivial.
    \paragraph{Induction step.} Now we assume $W^{(t-1)}(u,v)$ only depends on $d(u,v)$ no matter which graph $(u,v)$ is in, for some $t\geqslant 1$. We will then prove that no matter what $\mathrm{HASH}_k^{(t)}$ and $\mathrm{POOL}_{ij}^{k(t)}$ functions we choose, $W^{(t)}(u,v)$ only depends on $d(u,v)$ no matter which graph $(u,v)$ is in. Actually, it is sufficient to prove that the multisets
    \begin{align}\label{multiset_dist}
        \lbbr(i,j):w\in\mathcal{N}_i(u)\cap\mathcal{N}_j(v), 0\leqslant i,j\leqslant d\rbbr
    \end{align}
    are equal, \emph{for any $(u,v)\in \mathcal{V}_G^2 \cup \mathcal{V}_H^2$ with $d(u,v)=k$ and $0\leqslant k\leqslant d$}. This is because the inductive hypothesis leads us to the fact that
    \begin{align*}
    M_{ij}^{k(t)}(u,v) = \mathrm{POOL}_{ij}^{k(t)}\left(\blbbr\big(W^{(t-1)}(w,v), W^{(t-1)}(u,w)\big):w\in\mathcal{N}_i(u)\cap \mathcal{N}_j(v)\brbbr\right).
\end{align*}
can only depend on the \emph{number of elements} of the multiset on the RHS, because \emph{all elements in the RHS multiset must be equal}. Therefore, the aforementioned condition guarantees that $M_{ij}^{k(t)}(u,v)$ are all the same for any distance-$k$ tuple $(u,v)\in\mathcal{V}_G^2\cup\mathcal{V}_H^2$ with any fixed $i,j,k$ and any $t$. This, along with the update rule \eqref{aggregation_update}, makes the induction step.

Now we prove that multisets defined by \eqref{multiset_dist} are equal for all $(u,v)\in\mathcal{V}_G^2\cup\mathcal{V}_H^2$ with any given $d(u,v)=k$,$0\leqslant k\leqslant d$. The intuition of the proof can be obtained from Figure \ref{counterexample_b}, which shows that for any distance-$d$ tuple $(u,v)$ in a cycle not shorter than $(3d+1)$, the nodes that contribute to \eqref{multiset_dist} are exactly those nodes on the \emph{inferior arc} cut out by $(u,v)$, plus $u$ and $v$. This means that \textbf{the distance-$\bm{d}$ tuple is completely agnostic about the length of the cycle in which it lies}, as long as the cycle length is $\geqslant 3d+1$. Therefore, any distance-$d$ tuple cannot tell whether it is in $G$ or in $H$, resulting in the conclusion that the multisets in \eqref{multiset_dist} are equal for all distance-$d$ tuples $(u,v)$. Similar arguments apply to all distance-$k$ tuples in $G$ and $H$, as long as $0\leqslant k\leqslant d$. Therefore, we assert that $W^{(t)}(u,v)$ does only depend on $d(u,v)$, no matter which graph $(u,v)$ is in, at the $t$-th iteration. This finishes the inductive proof.

Notice that there are $(6d+2)$ distance-$k$ tuples in either $G$ or $H$, for any $0\leqslant k\leqslant d$; moreover, each of those distance-$k$ tuples have identical colors. We then assert that $G$ and $H$ must get identical representations after running $d$-DRFWL(2) on them. Therefore, $d$-DRFWL(2) fails to distinguish between $G$ and $H$. 
\end{proof}

We remark that for any $d\geqslant 1$, $d$-DRFWL(2) tests cannot distinguish between two $k$-cycles and a $2k$-cycle, as long as $k\geqslant 3d+1$. The proof for this fact is similar to the one elaborated above.

\subsection{Proof of the equivalence in representation power between \tbm{d}-DRFWL(2) tests and \tbm{d}-DRFWL(2) GNNs}\label{proof_drfwl2_vs_drfwl2gnn}

At the end of \ref{drfwl2_gnns}, we informally state the fact that $d$-DRFWL(2) GNNs have equal representation power to $d$-DRFWL(2) tests under certain assumptions. We now restate the fact as the following
\begin{proposition}\label{drfwl2_vs_drfwl2gnn}
    Let $q:\mathcal{G}\rightarrow \mathrm{colors}$ be a $d$-DRFWL(2) test whose $\mathrm{HASH}_k^{(t)}$, $\mathrm{POOL}_{ij}^{k(t)}$ and $\mathrm{READOUT}$ functions are injective, $\forall 0\leqslant i,j,k\leqslant d$ and $\forall t\geqslant 1$; in addition, $q$ assigns different initial colors to 2-tuples $(u,v)$ with different $u,v$ distances.
    Let $f:\mathcal{G}\rightarrow \mathrm{\mathbb{R}}^p$ be a $d$-DRFWL(2) GNN with $T$ $d$-DRFWL(2) GNN layers; the initial representations $h_{uv}^{(0)}$ are assumed to depend only on $d(u,v)$.
    
    If two graphs $G$ and $H$ get different representations under $f$, i.e. $f(G)\ne f(H)$, then $q$ assigns different colors for $G$ and $H$. Moreover, if $G$ and $H$ are graphs such that $q$ assigns different colors for $G$ and $H$, there exists a $d$-DRFWL(2) GNN $f$ such that $f(G)\ne f(H)$.
\end{proposition}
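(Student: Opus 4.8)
The proof naturally splits into the two implications, and I would handle both by induction on the iteration/layer index, exploiting the standard correspondence between injective color refinement and GNN embeddings. For the \textbf{first direction} (the GNN is no more powerful than the test), the plan is to show by induction on $t$ that the assignment $W^{(t)}(u,v)\mapsto h^{(t)}_{uv}$ is well-defined, i.e.\ whenever two $2$-tuples (in $G$ or in $H$) share the same test color at iteration $t$, they share the same GNN embedding; equivalently, the test coloring refines the GNN embedding at every iteration. The base case is immediate: both $h^{(0)}_{uv}$ and $W^{(0)}(u,v)$ depend only on $d(u,v)$, and since $q$ assigns distinct initial colors to distinct distances, equal initial colors force equal distances and hence equal initial embeddings. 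For the inductive step, assume $h^{(t-1)}$ is a function of $W^{(t-1)}$. Comparing the test update \eqref{aggregation_update}--\eqref{multiset_update} with the GNN update \eqref{drfwl2gnn1}--\eqref{drfwl2gnn2}, injectivity of $\mathrm{HASH}_k^{(t)}$ and $\mathrm{POOL}_{ij}^{k(t)}$ means that $W^{(t)}(u,v)$ determines both $W^{(t-1)}(u,v)$ and every multiset $\{(W^{(t-1)}(w,v),W^{(t-1)}(u,w)):w\in\mathcal{N}_i(u)\cap\mathcal{N}_j(v)\}$; by the inductive hypothesis these determine $h^{(t-1)}_{uv}$ and the multisets feeding \eqref{drfwl2gnn1}, hence $h^{(t)}_{uv}$.

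To finish this direction I would note that the test reaches its stable coloring $W^{(\infty)}$ after finitely many iterations $t^*$, and that $W^{(\infty)}$ refines $W^{(T)}$, so the multiset $\lbbr W^{(\infty)}(u,v)\rbbr$ determines the multiset $\lbbr h^{(T)}_{uv}\rbbr$, which $R$ and $M$ map to $f(G)$. Injectivity of $\mathrm{READOUT}$ then gives that $q(G)=q(H)$ forces the two stable-color multisets to coincide, whence $f(G)=f(H)$; the contrapositive is exactly the first assertion.

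For the \textbf{second direction} (existence of a matching GNN) I would construct $f$ explicitly for the fixed pair $(G,H)$, arranging each embedding $h^{(t)}_{uv}$ to be an injective encoding of the test color $W^{(t)}(u,v)$. Because $G$ and $H$ are finite and of bounded degree, at every layer only finitely many colors and only finitely many bounded-cardinality aggregation multisets occur; on such a finite domain I can invoke the standard multiset-universality fact (as in the analysis of GIN~\citep{xu2018powerful}) that $\bigoplus=\sum$ composed with a suitable encoder $m^{(t)}_{ijk}$ is injective on bounded multisets, and that an MLP $f^{(t)}_k$ can injectively combine the tuple of aggregates with $h^{(t-1)}_{uv}$. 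Choosing $h^{(0)}_{uv}$ injective in $d(u,v)$ (permitted, as it need only depend on distance) gives the base case; the inductive step realizes each injective $\mathrm{POOL}_{ij}^{k(t)}$ and $\mathrm{HASH}_k^{(t)}$ by these learnable functions, so $h^{(t)}_{uv}$ and $W^{(t)}(u,v)$ induce the same partition of $2$-tuples. Taking $T\ge t^*$, and choosing $R$ (again a sum-pool encoder) and $M$ to injectively encode $\lbbr h^{(T)}_{uv}\rbbr$, yields $f(G)\ne f(H)$ exactly when the multisets of stable colors differ, i.e.\ when $q(G)\ne q(H)$.

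The main obstacle, and the only step beyond routine bookkeeping, is this realizability step: justifying that the permutation-invariant aggregator together with the learnable $m^{(t)}_{ijk}$ and $f^{(t)}_k$ can \emph{exactly} reproduce the injective hashing functions of the test. The crux is the injective multiset-encoding lemma, which is clean here only because the degrees are bounded (so the aggregation multisets have bounded cardinality) and the color universe on the two fixed graphs is finite; I would therefore state the cardinality bounds and an explicit encoding (e.g.\ mapping the finitely many colors to distinct basis vectors, so that their sums remain injective) to keep the argument rigorous. A secondary point to treat with care is synchronizing the finite layer count $T$ with the stabilization time $t^*$ and verifying that $W^{(\infty)}$ refines $W^{(T)}$, which is what makes the readout comparison valid in both directions.
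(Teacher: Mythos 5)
Your proposal is correct and follows essentially the same route as the paper's proof: an induction over layers showing that (under the injectivity hypotheses) the test coloring refines the GNN embeddings, combined with the readout comparison for the first direction, and an explicit realization of the injective $\mathrm{POOL}$/$\mathrm{HASH}$/$\mathrm{READOUT}$ maps via the sum-aggregation multiset-encoding lemma of \citet{xu2018powerful} for the second. The only cosmetic differences are that you phrase the induction contrapositively (equal colors imply equal embeddings, where the paper argues unequal embeddings imply unequal colors) and restrict the color universe to the fixed pair $(G,H)$ where the paper works with the countable set of all colors; both are sound.
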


\begin{proof}
    For simplicity, we denote
    \begin{align}
        \mathcal{V}_{G,\leqslant d}^2 =     \{(u,v)\in\mathcal{V}_G^2:0\leqslant d(u,v)\leqslant d\}.
    \end{align}
    
    We prove the first part by proving that, if two tuples $(u,v)$ and $(u',v')$ get different representations $h_{uv}^{(T)}$ and $h_{u'v'}^{(T)}$ after applying $L_T\circ \sigma_{T-1}\circ\cdots\circ\sigma_1\circ L_1$, then
    \begin{align}\label{proof_4.3_unequal_tuple}
        W^{(T)}(u,v)\ne W^{(T)}(u',v'),
    \end{align}
    after we apply $q$ for $T$ iterations. Since $W^{(\infty)}(u,v)$ is a refinement of $W^{(T)}(u,v)$, \eqref{proof_4.3_unequal_tuple} implies that $W^{(\infty)}(u,v)\ne W^{(\infty)}(u',v')$. Now, $f(G)\ne f(H)$ means for all bijections $b$ from $\mathcal{V}_{G,\leqslant d}^2$ to $\mathcal{V}_{H,\leqslant d}^2$, there exists a pair $(u,v)\in \mathcal{V}_{G,\leqslant d}^2$ such that $h_{uv}^{(T)}\ne h_{u'v'}^{(T)}$ with $(u',v') = b(u,v)$. If the above statement holds true, this means for $(u,v)$ and $(u',v')$ we have $W^{(\infty)}(u,v)\ne W^{(\infty)}(u',v')$. Since this is the case for any bijection $b$, we assert
    \begin{align*}
        &\quad        \mathrm{READOUT}\left(\lbbr W^{(\infty)}(u,v):(u,v)\in\mathcal{V}_{G,\leqslant d}^2\rbbr\right)\\&\ne         \mathrm{READOUT}\left(\lbbr W^{(\infty)}(u,v):(u,v)\in\mathcal{V}_{H,\leqslant d}^2\rbbr\right),
    \end{align*}
    or simply $W(G)\ne W(H)$, and the first part is proved.

    Proof for the above statement can be conducted inductively. When $T=0$, $h_{uv}^{(0)}\ne h_{u'v'}^{(0)}$ means $d(u,v)\ne d(u',v')$, since the initial representation $h_{uv}^{(0)}$ of $(u,v)$ only depends on the distance $d(u,v)$. By the second condition of $q$, the above fact further implies $W^{(0)}(u,v)\ne W^{(0)}(u',v')$, and the base case is proved.

    Now, assuming that the above statement is true for $T=\ell -1$ with $\ell\geqslant 1$, we now prove the $T=\ell$ case. Given $h_{uv}^{(\ell)}\ne h_{u'v'}^{(\ell)}$, there are two possibilities: (a) $h_{uv}^{(\ell-1)}\ne h_{u'v'}^{(\ell-1)}$; (b) for some $i,j$, $a_{uv}^{ijk(\ell)}\ne a_{u'v'}^{ijk(\ell)}$, where $k=d(u,v)=d(u',v')$ (we can safely assume $d(u,v)=d(u',v')$, since otherwise \eqref{proof_4.3_unequal_tuple} trivially holds). 

    For possibility (a), the inductive hypothesis tells us $W^{(\ell-1)}(u,v)\ne W^{(\ell-1)}(u',v')$, thus $W^{(\ell)}(u,v)\ne W^{(\ell)}(u',v')$. For possibility (b), notice that $a_{uv}^{ijk(\ell)}$ is a function of the multiset
    \begin{align*}
        \lbbr (h_{wv}^{(\ell-1)}, h_{uw}^{(\ell-1)}): w\in\mathcal{N}_i(u)\cap\mathcal{N}_j(v)\rbbr.
    \end{align*}
    Therefore, $a_{uv}^{ijk(\ell)}\ne a_{u'v'}^{ijk(\ell)}$ means that for all bijection $b'$ from $\mathcal{N}_i(u)\cap\mathcal{N}_j(v)$ to $\mathcal{N}_i(u')\cap\mathcal{N}_j(v')$, there exists $w\in \mathcal{N}_i(u)\cap\mathcal{N}_j(v)$ such that $(h_{wv}^{(\ell-1)}, h_{uw}^{(\ell-1)})\ne (h_{w'v'}^{(\ell-1)}, h_{u'w'}^{(\ell-1)})$, where $w'=b'(w)$. Without loss of generality, let us discuss the case where $h_{wv}^{(\ell-1)}\ne h_{w'v'}^{(\ell-1)}$. In this case, the inductive hypothesis tells us $W^{(\ell-1)}(w,v)\ne W^{(\ell-1)}(w',v')$, thus $(W^{(\ell-1)}(w,v),W^{(\ell-1)}(u,w))\ne(W^{(\ell-1)}(w',v'),W^{(\ell-1)}(u',w'))$. The other case $h_{uw}^{(\ell-1)}\ne h_{u'w'}^{(\ell-1)}$ also leads to this result. Since the bijection $b'$ is arbitrary, we assert that the multisets
    \begin{align*}
        \lbbr (W^{(\ell-1)}(w,v),W^{(\ell-1)}(u,w)): w\in \mathcal{N}_i(u)\cap\mathcal{N}_j(v)\rbbr
    \end{align*}
    and 
    \begin{align*}
        \lbbr (W^{(\ell-1)}(w',v'),W^{(\ell-1)}(u',w')): w'\in \mathcal{N}_i(u')\cap\mathcal{N}_j(v')\rbbr
    \end{align*}
    are not equal. Since the functions $\mathrm{POOL}_{ij}^{k(\ell)}$ and $\mathrm{HASH}_k^{(\ell)}$ are injective, the above result implies $W^{(\ell)}(u,v)\ne W^{(\ell)}(u',v')$. So far, we have finished the induction step. Therefore, the first part of the theorem is true.

    For the second part of the theorem, we quote the well-known result of \citet{xu2018powerful} (Lemma 5 of the paper), that there exists a function $f:\mathcal{X}\rightarrow \mathbb{R}^n$ on a countable input space $\mathcal{X}$ such that \emph{any multiset function} $g$ can be written as 
    \begin{align}\label{xu_multiset_function}
        g(X)=\phi\left(\sum_{x\in X}f(x)\right)
    \end{align}
    for some function $\phi$, where $X$ is an arbitrary multiset whose elements are in $\mathcal{X}$. 
    
    Now, let $G$ and $H$ be graphs that both obtain their $d$-DRFWL(2) stable colorings under $q$ after $T$ iterations, and that $q$ assigns different colors for them. We will now prove that there exists a $d$-DRFWL(2) GNN with $T$ $d$-DRFWL(2) GNN layers that gives $G$ and $H$ different representations. 
    
    Since the set of $d$-DRFWL(2) colors generated by $q$ is countable, we can designate a mapping $\nu:\mathrm{colors}\rightarrow\mathbb{N}$ that assigns a unique integer for every kind of $d$-DRFWL(2) color. Under mapping $\nu$, all $\mathrm{HASH}_k^{(t)}, \mathrm{POOL}_{ij}^{k(t)}$ and $\mathrm{READOUT}$ functions can be seen as functions with codomain $\mathbb{N}$. Then, the initial representation for $(u,v)$ with $0\leqslant d(u,v)\leqslant d$ is 
    $h_{uv}^{(0)} = \nu\left(W^{(0)}(u,v)\right)$. For the update rules, the lemma in \citep{xu2018powerful} tells us that by choosing proper $m_{ijk}^{(t)}$ and $\phi_{ijk}^{(t)}$ functions, the following equality can hold,
    \begin{align}
        \notag\mathrm{POOL}_{ij}^{k(t)}(X)
        =\phi_{ijk}^{(t)}\left(
            \sum_{x\in X} m_{ijk}^{(t)}(\nu(x))
        \right),
    \end{align}
    with $X$ being any \emph{multiset of 2-tuples of $d$-DRFWL(2) colors}. Therefore, we can choose $m_{ijk}^{(t)}$ in \eqref{drfwl2gnn1} following the instructions of the above lemma, and choose $\bigoplus$ as $\sum$. We then choose the functions $f_k^{(t)}$ in \eqref{drfwl2gnn2} as 
    \begin{align}
        f_k^{(t)}\left(h_{uv}^{(t-1)},\left(a_{uv}^{ijk(t)}\right)_{0\leqslant i,j\leqslant d}\right) = \mathrm{HASH}_k^{(t)}\left( \nu^{-1}\left(h^{(t-1)}_{uv}\right),\left(\phi_{ijk}^{(t)}\left(a_{uv}^{ijk(t)}\right) \right)_{0\leqslant i,j\leqslant d}\right).
    \end{align}
    It is now easy to iteratively prove that $h_{uv}^{(t)}$ is exactly $\nu\left(W^{(t)}(u,v)\right)$. For the pooling layer, we again leverage the lemma in \citep{xu2018powerful} to assert that there exist functions $r$ and $M'$ such that
    \begin{align}
        \mathrm{READOUT}(X) = M'\left(\sum_{x\in X} r(\nu(x))\right),
    \end{align}
    where $X$ is any multiset of $d$-DRFWL(2) colors. We then choose 
    \begin{align}
        R(X) = \sum_{x\in X} r(\nu(x)),
    \end{align}
    and choose $M=\nu\circ M'$. Now the $d$-DRFWL(2) GNN constructed above with $T$ $d$-DRFWL(2) GNN layers produces exactly the same output as $q$ (except that the output is converted to integers via $\nu$). Because $q$ assigns $G$ and $H$ different colors, the $d$-DRFWL(2) GNN gives different representations for $G$ and $H$.
\end{proof}

\section{Proofs of theorems in Section \ref{sect_count}}\label{proof_sect_count}

We first give a few definitions that will be useful in the proofs.

\begin{definition}[Pair-wise cycle counts]
    Let $u,v\in\mathcal{V}_G$, we denote $C_{k,l}(u,v)$ as the number of $(k+l)$-cycles $S$ that satisfy: 
    \begin{itemize}
        \item $S$ passes $u$ and $v$.
        \item There exists a $k$-path and an $l$-path (distinct from one another) from $u$ to $v$, such that every edge in either path is an edge included in $S$.
    \end{itemize}
\end{definition}
For instance, the substructures counted by $C_{2,3}(u,v)$ and $C_{3,4}(u,v)$ are depicted in Figures \ref{C_23-fig} and \ref{C_34-fig} respectively.

\begin{definition}[Tailed triangle counts]
    Let $u,v\in\mathcal{V}_G$, $T(u,v)$ is defined as the number of tailed triangles in which $u$ and $v$ are at positions shown in Figure \ref{T_uv-fig}.
\end{definition}

\begin{definition}[Chordal cycle counts]\label{chord}
    Let $u,v\in\mathcal{V}_G$, $CC_1(u,v)$ and $CC_2(u,v)$ are defined as the numbers of chordal cycles in which $u$ and $v$ are at positions shown in Figures \ref{CC_1_uv-fig} and \ref{CC_2_uv-fig}, respectively. Moreover, we use $CC_1(u)$ and $CC_2(u)$ to denote the node-level chordal cycle counts with the node $u$ located at positions shown in Figures \ref{CC_u1-fig} and \ref{CC_u2-fig}, respectively.
\end{definition}

The notations $CC_1$ and $CC_2$ are overloaded in the above definition, which shall not cause ambiguities with a check on the number of arguments. 

\begin{definition}[Triangle-rectangle counts]
    Let $u,v\in\mathcal{V}_G$, $TR_1(u,v)$ and $TR_2(u,v)$ are defined as the numbers of triangle-rectangles in which $u$ and $v$ are at positions shown in Figures \ref{TR_1-fig} and \ref{TR_2-fig}, respectively. We also define three types of node-level triangle-rectangle counts, namely $TR_1(u)$, $TR_2(u)$ and $TR_3(u)$. They are the numbers of triangle-rectangles where node $u$ is located at positions shown in Figures \ref{TR_1-u-fig}, \ref{TR_2-u-fig} and \ref{TR_3-u-fig} respectively.
\end{definition}

\begin{figure}[H]
\centering

\begin{subfigure}{0.22\textwidth}
\centering
\begin{tikzpicture}[-,shorten >=1pt,on grid,auto,scale=0.5]
\node[state,inner sep=1pt,minimum size=12pt,fill=red!10, accepting,inner sep=1pt,minimum size=12pt](1) at(18:2){$v$};
\node[state,inner sep=1pt,minimum size=12pt, inner sep=1pt,minimum size=12pt](2) at(90:2){};
\node[state,inner sep=1pt,minimum size=12pt,fill=violet!10, accepting,inner sep=1pt,minimum size=12pt](3) at (162:2){$u$};
\node[state,inner sep=1pt,minimum size=12pt,inner sep=1pt,minimum size=12pt](4) at (234:2){};
\node[state,inner sep=1pt,minimum size=12pt,inner sep=1pt,minimum size=12pt](5) at (306:2){};
\path (1) edge (2) edge (5) ;
\path (2) edge (3) ;
\path (3) edge (4) ;
\path (4) edge (5) ;
\end{tikzpicture}
\caption{$C_{2,3}(u,v)$}\label{C_23-fig}
\end{subfigure}
\hfill
\begin{subfigure}{0.22\textwidth}
\centering
\begin{tikzpicture}[-,shorten >=1pt,on grid,auto,scale=0.5]
\node[state,inner sep=1pt,minimum size=12pt,fill=red!10, accepting,inner sep=1pt,minimum size=12pt](1) at(13:2){$v$};
\node[state,inner sep=1pt,minimum size=12pt, inner sep=1pt,minimum size=12pt](2) at(64:2){};
\node[state,inner sep=1pt,minimum size=12pt,inner sep=1pt,minimum size=12pt](3) at (115:2){};
\node[state,fill=violet!10, accepting,inner sep=1pt,minimum size=12pt,inner sep=1pt,minimum size=12pt](4) at (166:2){$u$};
\node[state,inner sep=1pt,minimum size=12pt,inner sep=1pt,minimum size=12pt](5) at (217:2){};
\node[state,inner sep=1pt,minimum size=12pt,inner sep=1pt,minimum size=12pt](6) at (268:2){};
\node[state,inner sep=1pt,minimum size=12pt,inner sep=1pt,minimum size=12pt](7) at (319:2){};
\path (1) edge (2) edge (7) ;
\path (2) edge (3) ;
\path (3) edge (4) ;
\path (4) edge (5) ;
\path (5) edge (6) ;
\path (6) edge (7) ;
\end{tikzpicture}
\caption{$C_{3,4}(u,v)$}\label{C_34-fig}
\end{subfigure}
\hfill
\begin{subfigure}{0.22\textwidth}
\centering
\begin{tikzpicture}[-,shorten >=1pt,on grid,auto, scale=0.5]

\node[state,inner sep=1pt,minimum size=12pt,inner sep=1pt,minimum size=12pt](1) at(0:0){};
\node[state,inner sep=1pt,minimum size=12pt, fill=violet!10, accepting,inner sep=1pt,minimum size=12pt](2) at(90:2){$u$};
\node[state,fill=red!10, accepting,inner sep=1pt,minimum size=12pt,inner sep=1pt,minimum size=12pt](3) at (240:2){$v$};
\node[state,inner sep=1pt,minimum size=12pt,inner sep=1pt,minimum size=12pt](4) at (300:2){};
\path (1) edge (2) edge (3) edge (4);
\path (3) edge (4);
\end{tikzpicture}
\caption{$T(u,v)$}\label{T_uv-fig}
\end{subfigure}
\hfill
\begin{subfigure}{0.22\textwidth}
\centering
\begin{tikzpicture}[-,shorten >=1pt,on grid,auto,scale=0.5]
\node[state,inner sep=1pt,minimum size=12pt,inner sep=1pt,minimum size=12pt](1) at(0:2){};
\node[state,inner sep=1pt,minimum size=12pt, fill=red!10, accepting,inner sep=1pt,minimum size=12pt](2) at(90:2){$v$};
\node[state,fill=violet!10, accepting,inner sep=1pt,minimum size=12pt,inner sep=1pt,minimum size=12pt](3) at (180:2){$u$};
\node[state,inner sep=1pt,minimum size=12pt,inner sep=1pt,minimum size=12pt](4) at (270:2){};

\path (1) edge (2) edge (3) edge (4) ;
\path (2) edge (3) ;
\path (3) edge (4) ;
\end{tikzpicture}
\caption{$CC_1(u,v)$}\label{CC_1_uv-fig}
\end{subfigure}

\begin{subfigure}{0.22\textwidth}
\centering
\begin{tikzpicture}[-,shorten >=1pt,on grid,auto,scale=0.5]
\node[state,inner sep=1pt,minimum size=12pt,fill=red!10, accepting,inner sep=1pt,minimum size=12pt](1) at(0:2){$v$};
\node[state,inner sep=1pt,minimum size=12pt, inner sep=1pt,minimum size=12pt](2) at(90:2){};
\node[state,fill=violet!10, accepting,inner sep=1pt,minimum size=12pt,inner sep=1pt,minimum size=12pt](3) at (180:2){$u$};
\node[state,inner sep=1pt,minimum size=12pt,inner sep=1pt,minimum size=12pt](4) at (270:2){};

\path (1) edge (2) edge (3) edge (4) ;
\path (2) edge (3) ;
\path (3) edge (4) ;
\end{tikzpicture}
\caption{$CC_2(u,v)$}\label{CC_2_uv-fig}
\end{subfigure}
\hfill
\begin{subfigure}{0.22\textwidth}
\centering
\begin{tikzpicture}[-,shorten >=1pt,on grid,auto,scale=0.5]
\node[state,inner sep=1pt,minimum size=12pt,inner sep=1pt,minimum size=12pt](1) at(0:2){};
\node[state,inner sep=1pt,minimum size=12pt, fill=violet!10, accepting,inner sep=1pt,minimum size=12pt](2) at(90:2){$u$};
\node[state,inner sep=1pt,minimum size=12pt,inner sep=1pt,minimum size=12pt](3) at (180:2){};
\node[state,inner sep=1pt,minimum size=12pt,inner sep=1pt,minimum size=12pt](4) at (270:2){};

\path (1) edge (2) edge (3) edge (4) ;
\path (2) edge (3) ;
\path (3) edge (4) ;
\end{tikzpicture}
\caption{$CC_1(u)$}\label{CC_u1-fig}
\end{subfigure}
\hfill
\begin{subfigure}{0.22\textwidth}
\centering
\begin{tikzpicture}[-,shorten >=1pt,on grid,auto,scale=0.5]
\node[state,inner sep=1pt,minimum size=12pt,inner sep=1pt,minimum size=12pt](1) at(0:2){};
\node[state,inner sep=1pt,minimum size=12pt,inner sep=1pt,minimum size=12pt](2) at(90:2){};
\node[state, fill=violet!10, accepting,inner sep=1pt,minimum size=12pt,inner sep=1pt,minimum size=12pt](3) at (180:2){$u$};
\node[state,inner sep=1pt,minimum size=12pt,inner sep=1pt,minimum size=12pt](4) at (270:2){};

\path (1) edge (2) edge (3) edge (4) ;
\path (2) edge (3) ;
\path (3) edge (4) ;
\end{tikzpicture}
\caption{$CC_2(u)$}\label{CC_u2-fig}
\end{subfigure}
\hfill
\begin{subfigure}{0.22\textwidth}
\centering
\begin{tikzpicture}[-,shorten >=1pt,on grid,auto,scale=0.5]

\node[state,inner sep=1pt,minimum size=12pt, fill=violet!10, accepting,inner sep=1pt,minimum size=12pt](1) at(90:2){$u$};
\node[state,fill=red!10, accepting,inner sep=1pt,minimum size=12pt,inner sep=1pt,minimum size=12pt](2) at(180:1){$v$};
\node[state,inner sep=1pt,minimum size=12pt,inner sep=1pt,minimum size=12pt](3) at (0:1){};
\node[state,inner sep=1pt,minimum size=12pt,inner sep=1pt,minimum size=12pt](4) at (240:2){};
\node[state,inner sep=1pt,minimum size=12pt,inner sep=1pt,minimum size=12pt](5) at (300:2){};
\path (1) edge (2) edge (3);
\path (2) edge (3) edge (4);
\path (5) edge (3) edge (4);
\end{tikzpicture}
\caption{$TR_1(u,v)$}\label{TR_1-fig}
\end{subfigure}

\begin{subfigure}{0.22\textwidth}
\centering
\begin{tikzpicture}[-,shorten >=1pt,on grid,auto,scale=0.5]

\node[state,inner sep=1pt,minimum size=12pt, inner sep=1pt,minimum size=12pt](1) at(90:2){};
\node[state,inner sep=1pt,minimum size=12pt,inner sep=1pt,minimum size=12pt](2) at(180:1){};
\node[state,inner sep=1pt,minimum size=12pt,fill=violet!10, accepting,inner sep=1pt,minimum size=12pt](3) at (0:1){$u$};
\node[state,inner sep=1pt,minimum size=12pt,fill=red!10, accepting,inner sep=1pt,minimum size=12pt](4) at (240:2){$v$};
\node[state,inner sep=1pt,minimum size=12pt,inner sep=1pt,minimum size=12pt](5) at (300:2){};
\path (1) edge (2) edge (3);
\path (2) edge (3) edge (4);
\path (5) edge (3) edge (4);
\end{tikzpicture}
\caption{$TR_2(u,v)$}\label{TR_2-fig}
\end{subfigure}
\hfill
\begin{subfigure}{0.22\textwidth}
\centering
\begin{tikzpicture}[-,shorten >=1pt,on grid,auto,scale=0.5]

\node[state,inner sep=1pt,minimum size=12pt, fill=violet!10, accepting,inner sep=1pt,minimum size=12pt](1) at(90:2){$u$};
\node[state,inner sep=1pt,minimum size=12pt,inner sep=1pt,minimum size=12pt](2) at(180:1){};
\node[state,inner sep=1pt,minimum size=12pt,inner sep=1pt,minimum size=12pt](3) at (0:1){};
\node[state,inner sep=1pt,minimum size=12pt,inner sep=1pt,minimum size=12pt](4) at (240:2){};
\node[state,inner sep=1pt,minimum size=12pt,inner sep=1pt,minimum size=12pt](5) at (300:2){};
\path (1) edge (2) edge (3);
\path (2) edge (3) edge (4);
\path (5) edge (3) edge (4);
\end{tikzpicture}
\caption{$TR_1(u)$}\label{TR_1-u-fig}
\end{subfigure}
\hfill
\begin{subfigure}{0.22\textwidth}
\centering
\begin{tikzpicture}[-,shorten >=1pt,on grid,auto,scale=0.5]

\node[state,inner sep=1pt,minimum size=12pt, inner sep=1pt,minimum size=12pt](1) at(90:2){};
\node[state,fill=violet!10, accepting,inner sep=1pt,minimum size=12pt,inner sep=1pt,minimum size=12pt](2) at(180:1){$u$};
\node[state,inner sep=1pt,minimum size=12pt,inner sep=1pt,minimum size=12pt](3) at (0:1){};
\node[state,inner sep=1pt,minimum size=12pt,inner sep=1pt,minimum size=12pt](4) at (240:2){};
\node[state,inner sep=1pt,minimum size=12pt,inner sep=1pt,minimum size=12pt](5) at (300:2){};
\path (1) edge (2) edge (3);
\path (2) edge (3) edge (4);
\path (5) edge (3) edge (4);
\end{tikzpicture}
\caption{$TR_2(u)$}\label{TR_2-u-fig}
\end{subfigure}
\hfill
\begin{subfigure}{0.22\textwidth}
\centering
\begin{tikzpicture}[-,shorten >=1pt,on grid,auto,scale=0.5]

\node[state,inner sep=1pt,minimum size=12pt, inner sep=1pt,minimum size=12pt](1) at(90:2){};
\node[state,inner sep=1pt,minimum size=12pt,inner sep=1pt,minimum size=12pt](2) at(180:1){};
\node[state,inner sep=1pt,minimum size=12pt,inner sep=1pt,minimum size=12pt](3) at (0:1){};
\node[state,fill=violet!10, accepting,inner sep=1pt,minimum size=12pt,inner sep=1pt,minimum size=12pt](4) at (240:2){$u$};
\node[state,inner sep=1pt,minimum size=12pt,inner sep=1pt,minimum size=12pt](5) at (300:2){};
\path (1) edge (2) edge (3);
\path (2) edge (3) edge (4);
\path (5) edge (3) edge (4);
\end{tikzpicture}
\caption{$TR_3(u)$}\label{TR_3-u-fig}
\end{subfigure}
\caption{Illustrations of certain substructure counts.}
\end{figure}
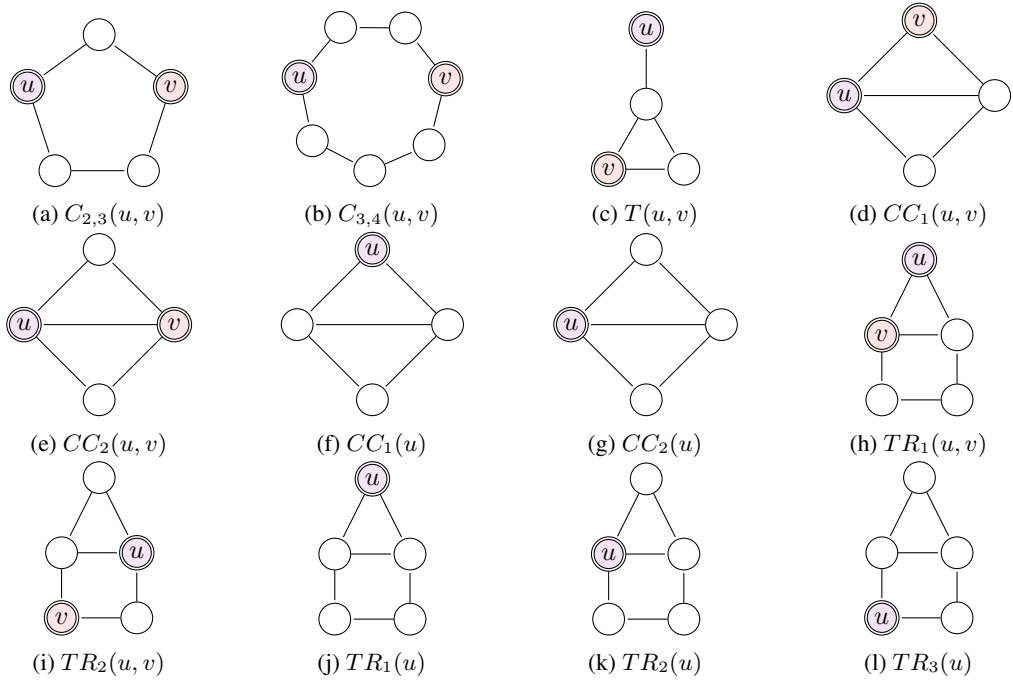\label{substruct_illu}

There are relations between the above-defined counts. For chordal cycle counts we have
\begin{align}\label{cc_1}
CC_1(u)=\frac{1}{2}\sum_{v\in\mathcal{N}_1(u)} CC_1(v,u),\quad CC_2(u)=\frac{1}{2}\sum_{v\in\mathcal{N}_1(u)} CC_1(u,v).
\end{align}
and
\begin{align}\label{cc_2}
    CC_2(u)=\sum_{v\in\mathcal{N}_1(u)} CC_2(u,v).
\end{align}
For triangle-rectangle counts we have
\begin{align}\label{tr_1}
    TR_1(u)=\frac{1}{2}\sum_{v\in\mathcal{N}_1(u)}TR_1(u,v),\quad TR_2(u)=\sum_{v\in\mathcal{N}_1(u)}TR_1(v,u),
\end{align}
and
\begin{align}
TR_2(u)=\sum_{v\in\mathcal{N}_1(u)\cup\mathcal{N}_2(u)} TR_2(u,v),\quad TR_3(u)=\sum_{v\in\mathcal{N}_1(u)\cup\mathcal{N}_2(u)} TR_2(v,u).
\end{align}
We also denote $P_k(u,v)$ as the number of $k$-paths starting at node $u$ and ending at node $v$, $W_k(u,v)$ as the number of $k$-walks from $u$ to $v$, and $C_k(u)$ as the number of $k$-cycles that pass a node $u$.

The following lemma will be used throughout this section.

\begin{lemma}\label{drfwl2_count_vs_gnn_count}
    Let $S$ be a graph substructure. If there exists a $d$-DRFWL(2) test $q$ such that for any graph $G\in\mathcal{G}$,
    \begin{align}
        W^{(T)}(u,u) = C(S,u,G),\quad \forall u\in\mathcal{V}_G
    \end{align}
    after running $q$ for $T$ iterations on $G$, then $d$-DRFWL(2) GNNs can node-level count $S$.
\end{lemma}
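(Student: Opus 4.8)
The plan is to reduce the statement to the already-established equivalence between $d$-DRFWL(2) tests and $d$-DRFWL(2) GNNs (Proposition \ref{drfwl2_vs_drfwl2gnn}). Recall that a $d$-DRFWL(2) GNN $f$ outputs $f(G,u)=h_{uu}^{(T)}$ on the domain $\mathcal{G}\times\mathcal{V}$, and that by Definition \ref{node_level_count} it suffices, in order to node-level count $S$, to exhibit a single GNN $f$ whose diagonal embedding $h_{uu}^{(T)}$ takes distinct values whenever the counts $C(S,u,G)$ differ. So the goal is to build one GNN whose diagonal embedding injectively encodes the quantity that the hypothesized test places on the diagonal.

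First I would address a mismatch: the hypothesis only asserts the existence of \emph{some} test $q$ with $W^{(T)}(u,u)=C(S,u,G)$, whereas the constructive half of Proposition \ref{drfwl2_vs_drfwl2gnn} simulates a test whose $\mathrm{HASH}_k^{(t)}$ and $\mathrm{POOL}_{ij}^{k(t)}$ functions are injective. I would therefore pass to the test $q'$ obtained from $q$ by keeping the same (distance-dependent) initial coloring but replacing every hashing and pooling function with an injective one. A routine induction on $t$ then shows that the coloring $W'^{(t)}$ produced by $q'$ refines that of $q$: assuming $W^{(t-1)}(u,v)$ is a function of $W'^{(t-1)}(u,v)$, injectivity of the pooling makes the relevant neighbor multiset recoverable from the $q'$-colors, hence determines each $M_{ij}^{k(t)}(u,v)$, and injectivity of the hashing then makes $W^{(t)}(u,v)$ a function of $W'^{(t)}(u,v)$. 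In particular $C(S,u,G)=W^{(T)}(u,u)$ is a function of $W'^{(T)}(u,u)$.

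Next I would invoke the construction in the proof of Proposition \ref{drfwl2_vs_drfwl2gnn}. Because $q'$ has injective hashing and distance-dependent initial colors, that construction yields a $d$-DRFWL(2) GNN $f$ with $T$ layers satisfying $h_{uv}^{(t)}=\nu\bigl(W'^{(t)}(u,v)\bigr)$ for every tuple with $d(u,v)\leqslant d$ and every $t\leqslant T$, where $\nu$ is an injective map from colors into $\mathbb{N}$. Specializing to the diagonal, $f(G,u)=h_{uu}^{(T)}=\nu\bigl(W'^{(T)}(u,u)\bigr)$, so the GNN output is an injective image of the $q'$-color $W'^{(T)}(u,u)$, which by the previous paragraph determines $C(S,u,G)$.

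Finally I would conclude. Suppose $(G_1,u_1)$ and $(G_2,u_2)$ satisfy $C(S,u_1,G_1)\ne C(S,u_2,G_2)$. Since the count is a function of the diagonal $q'$-color, we must have $W'^{(T)}(u_1,u_1)\ne W'^{(T)}(u_2,u_2)$; applying the injective $\nu$ gives $f(G_1,u_1)\ne f(G_2,u_2)$. As this single $f$ separates every pair of arguments with distinct counts, $d$-DRFWL(2) GNNs node-level count $S$. The main obstacle is the refinement step: one must check carefully that replacing the hashing by injective functions yields a finer, hence count-determining, coloring, and that the cited construction really produces the \emph{tuple-level} identity $h_{uv}^{(t)}=\nu(W'^{(t)}(u,v))$ rather than merely a graph-separating GNN. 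Granting these, the remainder is bookkeeping.
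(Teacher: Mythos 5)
Your proof is correct and follows essentially the same route as the paper's: both reduce the claim to the tuple-level simulation identity $h_{uv}^{(T)}=\nu\left(W^{(T)}(u,v)\right)$ established in the proof of Proposition \ref{drfwl2_vs_drfwl2gnn} and then read off the conclusion on the diagonal tuples. Your extra refinement step (passing to an injective-hashing test $q'$ and arguing that its coloring determines $C(S,u,G)$) is sound but not strictly necessary, since the construction in that proof simulates arbitrary $\mathrm{HASH}_k^{(t)}$ and $\mathrm{POOL}_{ij}^{k(t)}$ functions via the multiset lemma of \citet{xu2018powerful}, and the paper simply applies it to the given $q$ directly.
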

\begin{proof}
    Let $(G_1,u_1),(G_2,u_2)\in\mathcal{G}\times\mathcal{V}$, and $C(S,u_1,G_1)\ne C(S, u_2, G_2)$. Then by assumption, 
    \begin{align}\label{proof_lemma_1_ne}
        W^{(T)}(u_1,u_1)\ne W^{(T)}(u_2,u_2)
    \end{align}
    after running $q$ for $T$ iterations on $G_1$ and $G_2$. From the proof of Proposition \ref{drfwl2_vs_drfwl2gnn} (which is in Appendix \ref{proof_drfwl2_vs_drfwl2gnn}), we see that there exists a $d$-DRFWL(2) GNN $f$ with $T$ $d$-DRFWL(2) GNN layers such that a tuple $(u,v)$ with $0\leqslant d(u,v)\leqslant d$ gets its representation $h_{uv}^{(T)} = \nu\left(W^{(T)}(u,v)\right)$, where $\nu$ is an injective mapping from the color space of $q$ to $\mathbb{N}$. Therefore, \eqref{proof_lemma_1_ne} implies $h_{u_1u_1}^{(T)}\ne h_{u_2u_2}^{(T)}$, or $f(G_1,u_1)\ne f(G_2,u_2)$. This means that $d$-DRFWL(2) GNNs can node-level count $S$.
\end{proof}

\subsection{Proof of Theorem \ref{1_count}}

We restate Theorem \ref{1_count} as following,

\begin{theorem}
    1-DRFWL(2) GNNs can node-level count 3-cycles, but cannot graph-level count any longer cycles.
\end{theorem}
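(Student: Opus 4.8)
For the positive direction (node-level counting of $3$-cycles), the plan is to invoke Lemma \ref{drfwl2_count_vs_gnn_count} and exhibit a 1-DRFWL(2) test whose diagonal color stabilizes to the triangle count after $T=2$ iterations, i.e.\ $W^{(2)}(u,u) = C(3\text{-cycle},u,G)$. The key observation is that for a distance-$1$ tuple $(u,v)$, the set $\mathcal{N}_1(u)\cap\mathcal{N}_1(v)$ consists precisely of the common neighbors of $u$ and $v$, each of which closes a triangle on the edge $\{u,v\}$; hence $|\mathcal{N}_1(u)\cap\mathcal{N}_1(v)|$ equals the number of triangles through $\{u,v\}$. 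In the first iteration I would choose $\mathrm{POOL}_{11}^{1(1)}$ to return the cardinality of its input multiset and $\mathrm{HASH}_1^{(1)}$ to store this number, so that $W^{(1)}(u,v)$ records the per-edge triangle count $t(u,v)$ for every distance-$1$ tuple (distance-$0$ colors are passed through unchanged). In the second iteration, at a distance-$0$ tuple $(u,u)$ we have $\mathcal{N}_1(u)\cap\mathcal{N}_1(u)=\mathcal{N}_1(u)$, so $M_{11}^{0(2)}(u,u)$ aggregates the pairs $\big(W^{(1)}(w,u),W^{(1)}(u,w)\big)$ over $w\in\mathcal{N}_1(u)$, each component of which encodes $t(u,w)$. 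Summing one component yields $\sum_{w\in\mathcal{N}_1(u)} t(u,w)$, which double-counts every triangle through $u$ (once per incident edge); choosing $\mathrm{HASH}_0^{(2)}$ to divide this sum by $2$ gives exactly $C(3\text{-cycle},u,G)$, and Lemma \ref{drfwl2_count_vs_gnn_count} then finishes this direction.

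For the negative direction (no graph-level counting of longer cycles), the plan is to produce, for each fixed $k\geq 4$, a pair of graphs with distinct $k$-cycle counts that 1-DRFWL(2) cannot tell apart. I would take $G_1$ to be the disjoint union of two $k$-cycles and $G_2$ a single $2k$-cycle. These have equal node counts ($2k$) and hence equal numbers of distance-$0$ and distance-$1$ tuples, but $C(k\text{-cycle},G_1)=2\neq 0=C(k\text{-cycle},G_2)$, since the only cycle subgraph of a $2k$-cycle is itself. The indistinguishability is exactly the separation phenomenon of Theorem \ref{power_and_separation_of_drfwl2} specialized to $d=1$ (where $3d+1=4$): by induction on the iteration count one shows that the 1-DRFWL(2) color of any distance-$0$ or distance-$1$ tuple depends only on its distance and not on which graph it lives in. The inductive step rests on the fact that within any cycle of length $\geq 4$ the local configuration of nodes lying within distance $1$ of a fixed vertex, or within distance $1$ of both endpoints of a fixed edge, is identical; in particular, because no triangle can occur in a cycle of length $\geq 4$, every distance-$1$ tuple has an empty common-neighbor multiset in both graphs, so the distance-labelled neighbor multisets $\lbbr(i,j):w\in\mathcal{N}_i(u)\cap\mathcal{N}_j(v)\rbbr$ agree across $G_1$ and $G_2$ for matching distances. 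Consequently the two graphs receive identical readouts, no 1-DRFWL(2) GNN separates them, and so 1-DRFWL(2) cannot graph-level count $k$-cycles for any $k\geq 4$.

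The main obstacle I anticipate is the negative direction, specifically making the induction airtight: one must verify that the distance-labelled neighbor multisets coincide for \emph{all} distance-$0$ and distance-$1$ tuples across both graphs, uniformly in $k\geq 4$, and confirm that the per-distance tuple counts match so that the final readout multisets are genuinely equal. The positive direction is comparatively routine, the only care needed being the factor-of-$2$ bookkeeping when converting per-edge triangle counts into a per-node count, and confirming that all tuples used (namely $(w,u)$ and $(u,w)$ with $w\in\mathcal{N}_1(u)$) are themselves distance-$1$ tuples that 1-DRFWL(2) actually tracks.
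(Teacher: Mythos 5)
Your proposal is correct and follows essentially the same route as the paper: the positive direction uses the common-neighbor count $|\mathcal{N}_1(u)\cap\mathcal{N}_1(v)|$ on distance-$1$ tuples as the per-edge triangle count and then halves the sum over $w\in\mathcal{N}_1(u)$ at the diagonal tuple, exactly as in the paper's two-iteration construction. The negative direction invokes the same counterexample family (two $k$-cycles versus one $2k$-cycle for $k\geqslant 4$) and the same locality induction that the paper imports from its remark following the proof of Theorem \ref{power_and_separation_of_drfwl2} specialized to $d=1$.
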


\begin{proof}
    To prove that 1-DRFWL(2) GNNs can node-level count 3-cycles, it suffices to find a 1-DRFWL(2) test $q$ such that after running $q$ for $T$ iterations on a graph $G$,
    \begin{align*}
        W^{(T)}(u,u)=C_3(u),\quad\forall u\in\mathcal{V}_G.
    \end{align*}
    Below we explicitly construct $q$. Notice that if $d(u,v)=1$, $M_{11}^{1(1)}(u,v)$ can know the count $C(u,v)=|\mathcal{N}_1(u)\cap \mathcal{N}_1(v)|$ in the first 1-DRFWL(2) iteration. This is essentially the number of triangles in which $\{u,v\}\in\mathcal{E}_G$ is included as an edge. In the second iteration, we can ask
    \begin{align}
        W^{(2)}(u,u)=\frac{1}{2}\sum_{v\in\mathcal{N}_1(u)} C(u,v),
    \end{align}
    and we have $W^{(2)}(u,u)=C_3(u)$. Therefore, the positive result is proved.

    To prove that 1-DRFWL(2) GNNs cannot graph-level count $k$-cycles with $k\geqslant 4$, we make use of the fact mentioned at the end of Appendix \ref{subsection_proof_power_and_separation}, that 1-DRFWL(2) tests (and thus GNNs) cannot distinguish between two $k$-cycles and one $2k$-cycle, as long as $k\geqslant 4$. This set of counterexamples leads to the negative result.
\end{proof}

\subsection{Proofs of Theorems \ref{paths_count}--\ref{cant_count}}

Before proving the theorems, we state and prove a series of lemmas.

\begin{lemma}\label{can_pair_level_count_2_path_lemma}
    There exists a 2-DRFWL(2) test $q$ such that for any graph $G\in\mathcal{G}$ and for any 2-tuple $(u,v)\in\mathcal{V}_G^2$ with $d(u,v)=1$ or $d(u,v)=2$, $q$ assigns 
    \begin{align}\label{pair_level_count_2_path}
        W^{(T)}(u,v) = P_2(u,v),
    \end{align}
    for some integer $T\geqslant 1$.
\end{lemma}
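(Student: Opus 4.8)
The plan is to realize $P_2(u,v)$ directly as the size of the common-neighbor set of $u$ and $v$, which is exactly the index set over which the $M_{11}$ aggregation term of 2-DRFWL(2) runs. First I would record the elementary combinatorial reduction. A 2-path from $u$ to $v$ is a sequence $\{u,w\},\{w,v\}$ of two edges with $u,w,v$ pairwise distinct, so it is uniquely determined by its middle node $w$, which must be adjacent to both $u$ and $v$. Since any such $w$ satisfies $d(u,w)=d(w,v)=1$, it is automatically distinct from $u$ and from $v$, and hence
\[
    P_2(u,v) = |\mathcal{N}_1(u)\cap\mathcal{N}_1(v)|
\]
holds for every pair $(u,v)$ with $u\neq v$, in particular whenever $d(u,v)\in\{1,2\}$.

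Next I would construct $q$ so that it terminates after a single iteration, i.e.\ take $T=1$. The key point is that for a tuple $(u,v)$ with $d(u,v)=k\in\{1,2\}$, the term $M_{11}^{k(1)}(u,v)$ aggregates over precisely $w\in\mathcal{N}_1(u)\cap\mathcal{N}_1(v)$. For each such $w$ both $(w,v)$ and $(u,w)$ are distance-$1$ tuples, hence lie within the distance-restricted set tracked by 2-DRFWL(2), so their colors are well defined; moreover, because initial colors depend only on mutual distance, every element of the multiset
\[
    \blbbr\big(W^{(0)}(w,v),W^{(0)}(u,w)\big):w\in\mathcal{N}_1(u)\cap\mathcal{N}_1(v)\brbbr
\]
is identical. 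I would therefore choose $\mathrm{POOL}_{11}^{k(1)}$ (for $k=1,2$) to be the function returning the cardinality of its input multiset, so that $M_{11}^{k(1)}(u,v)=|\mathcal{N}_1(u)\cap\mathcal{N}_1(v)|=P_2(u,v)$, and choose $\mathrm{HASH}_k^{(1)}$ to simply output the $M_{11}^{k(1)}$ component. (The remaining $\mathrm{POOL}_{ij}^{k(1)}$ and the $k=0$ hash can be set arbitrarily, as their values are discarded.) This yields $W^{(1)}(u,v)=P_2(u,v)$ for all $(u,v)$ with $d(u,v)\in\{1,2\}$, establishing the claim with $T=1$.

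This argument is essentially a direct unfolding of the update rule, so there is no serious obstacle; the only points requiring care are the two routine checks made above, namely that the distinctness constraint in the definition of a 2-path is automatically met by common neighbors (so no over- or under-counting occurs), and that the two tuples $(w,v),(u,w)$ appearing in the aggregation are genuinely within the distance-$\leqslant 2$ regime that 2-DRFWL(2) maintains. I would also note for later use that the same single-iteration construction simultaneously produces the count on both distance-$1$ and distance-$2$ tuples, since the recipe is uniform in $k$; this uniformity is what makes $P_2(u,v)$ available as a building block for the subsequent path- and cycle-counting lemmas.
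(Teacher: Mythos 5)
Your proposal is correct and follows essentially the same route as the paper's proof: both realize $P_2(u,v)=|\mathcal{N}_1(u)\cap\mathcal{N}_1(v)|$ via the $M_{11}^{k(1)}$ term with a cardinality-counting $\mathrm{POOL}_{11}^{k(1)}$ and a selecting $\mathrm{HASH}_k^{(1)}$ in a single iteration. Your explicit check that common neighbors automatically satisfy the distinctness constraint in the definition of a 2-path is a routine detail the paper leaves implicit.
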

\begin{proof}
    Let $q$ be a 2-DRFWL(2) test that satisfies: at the first iteration, $M_{11}^{1(1)}(u,v)$ and $M_{11}^{2(1)}(u,v)$ both count the number of nodes in $\mathcal{N}_1(u)\cap\mathcal{N}_1(v)$, and $\mathrm{HASH}_k^{(1)}$ of $q$ simply selects the $M_{11}^{k(1)}$ component, for $k=1,2$; $q$ stops updating colors for any tuple from the second iteration. It is easy to see that $q$ satisfies the condition stated in the lemma.
\end{proof}

\begin{lemma}\label{can_node_level_label_degree_lemma}
        There exists a 2-DRFWL(2) test $q$ such that for any graph $G\in\mathcal{G}$ and for any node $u\in\mathcal{V}_G$, $q$ assigns        
    \begin{align}
        W^{(T)}(u,u) = \mathrm{deg}(u),
    \end{align}
    for some integer $T\geqslant 1$. Here $\mathrm{deg}(u)$ is the degree of node $u\in\mathcal{V}_G$.
\end{lemma}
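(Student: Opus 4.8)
The plan is to construct the test $q$ explicitly and show that a single update iteration suffices, i.e. $T=1$. The key observation concerns the distance-$0$ tuple $(u,u)$: in the aggregation rule \eqref{multiset_update}, the inner multiset ranges over $w\in\mathcal{N}_i(u)\cap\mathcal{N}_j(u)$, and since every node has a \emph{unique} shortest-path distance to $u$, this intersection is empty whenever $i\ne j$ and equals $\mathcal{N}_i(u)$ when $i=j$. In particular $\mathcal{N}_1(u)\cap\mathcal{N}_1(u)=\mathcal{N}_1(u)=\mathcal{N}(u)$ is exactly the neighborhood of $u$, whose cardinality is $\mathrm{deg}(u)$. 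Thus the degree is already ``visible'' as the size of one of the multisets contributing to the update of $(u,u)$, and all that remains is to read it off.

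First I would fix the $\mathrm{POOL}$ function: let $\mathrm{POOL}_{11}^{0(1)}$ be the multiset function that returns the number of elements in its argument. Because the initial coloring $W^{(0)}$ depends only on distance, every pair $\big(W^{(0)}(w,u),W^{(0)}(u,w)\big)$ with $w\in\mathcal{N}_1(u)$ is identical (all such $w$ are at distance $1$ from $u$), so the multiset fed to $\mathrm{POOL}_{11}^{0(1)}$ consists of $\mathrm{deg}(u)$ equal elements; counting them yields $M_{11}^{0(1)}(u,u)=\mathrm{deg}(u)$. Next I would choose $\mathrm{HASH}_0^{(1)}$ in \eqref{aggregation_update} to simply select and output the $M_{11}^{0(1)}$ component from the tuple $\left(M_{ij}^{0(1)}(u,u)\right)_{0\leqslant i,j\leqslant 2}$, discarding $W^{(0)}(u,u)$ and all other entries. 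This gives $W^{(1)}(u,u)=\mathrm{deg}(u)$, establishing the claim with $T=1$. The remaining hashing and pooling functions (for $k=1,2$, and for $(i,j)\ne(1,1)$ in the $k=0$ case) are immaterial and may be chosen arbitrarily, since they do not affect $W^{(1)}(u,u)$.

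I do not expect any real obstacle here: the degree is recovered by a single local count, and the distance-restricted neighbor structure of $2$-DRFWL(2) already isolates $\mathcal{N}_1(u)$ among the terms updating a distance-$0$ tuple, so no iteration beyond the first is needed. The only point requiring minor care is the verification that $\mathcal{N}_i(u)\cap\mathcal{N}_j(u)=\emptyset$ for $i\ne j$, which is immediate from the uniqueness of shortest-path distances and ensures that the $M_{11}^{0(1)}$ term is genuinely the neighbor count rather than being contaminated by nodes at other distances.
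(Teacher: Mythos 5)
Your proposal is correct and matches the paper's own proof essentially verbatim: both take $\mathrm{POOL}_{11}^{0(1)}$ to count the elements of the multiset over $w\in\mathcal{N}_1(u)\cap\mathcal{N}_1(u)=\mathcal{N}(u)$, so that $M_{11}^{0(1)}(u,u)=\mathrm{deg}(u)$, and then let $\mathrm{HASH}_0^{(1)}$ select that component, giving $W^{(1)}(u,u)=\mathrm{deg}(u)$ with $T=1$. Your extra remarks on the disjointness of the $\mathcal{N}_i(u)\cap\mathcal{N}_j(u)$ for $i\ne j$ and the homogeneity of the pooled multiset are harmless elaborations of the same argument.
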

\begin{proof}
    Let $q$ be a 2-DRFWL(2) test that satisfies: at the first iteration, $M_{11}^{0(1)}(u,u)$ counts $|\mathcal{N}_1(u)|=\mathrm{deg}(u)$, and $\mathrm{HASH}_0^{(1)}$ of $q$ selects the $M_{11}^{0(1)}$ component, then $W^{(1)}(u,u) = \mathrm{deg}(u)$. $q$ then stops updating colors for any tuple from the second iteration. It is obvious that $q$ satisfies the condition stated in the lemma.
\end{proof}

\begin{lemma}\label{can_label_deg_lemma}
    There exists a 2-DRFWL(2) test $q$ such that for any graph $G\in\mathcal{G}$ and for any 2-tuple $(u,v)\in\mathcal{V}_G^2$ with $d(u,v)=1$ or $d(u,v)=2$, $q$ assigns 
    \begin{align}
        W^{(T)}(u,v) = \mathrm{deg}(u)+\mathrm{deg}(v),
    \end{align}
    for some integer $T\geqslant 2$.
\end{lemma}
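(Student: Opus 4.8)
The plan is to build the test $q$ in two stages, reusing the degree-labelling test from Lemma \ref{can_node_level_label_degree_lemma} and then propagating the two endpoint degrees onto the off-diagonal tuple $(u,v)$ in one further iteration. First I would take the test of Lemma \ref{can_node_level_label_degree_lemma} as the first iteration, so that after it every diagonal tuple carries its own degree, $W^{(1)}(w,w)=\mathrm{deg}(w)$ for all $w\in\mathcal{V}_G$. For off-diagonal tuples the first-iteration $\mathrm{HASH}$ may be chosen arbitrarily (say, keeping the initial color), since only the diagonal values matter for what follows.

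The key observation is to identify, among the $(i,j)$ message-passing slots in \eqref{multiset_update}, the ones through which a distance-$k$ tuple $(u,v)$ can read off the diagonal colors of its own endpoints. For such a tuple one has $\mathcal{N}_0(u)\cap\mathcal{N}_k(v)=\{u\}$ and $\mathcal{N}_k(u)\cap\mathcal{N}_0(v)=\{v\}$, because $\mathcal{N}_0(x)=\{x\}$ and $d(u,v)=k$. Hence in the second iteration the slot $M_{0k}^{k(2)}(u,v)$ aggregates the single pair $\big(W^{(1)}(u,v),W^{(1)}(u,u)\big)=\big(W^{(1)}(u,v),\mathrm{deg}(u)\big)$, while $M_{k0}^{k(2)}(u,v)$ aggregates the single pair $\big(W^{(1)}(v,v),W^{(1)}(u,v)\big)=\big(\mathrm{deg}(v),W^{(1)}(u,v)\big)$. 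I would therefore choose $\mathrm{POOL}_{0k}^{k(2)}$ to return the second coordinate of its unique element (yielding $\mathrm{deg}(u)$) and $\mathrm{POOL}_{k0}^{k(2)}$ to return the first coordinate of its unique element (yielding $\mathrm{deg}(v)$), for $k=1,2$.

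Finally I would let $\mathrm{HASH}_k^{(2)}$ discard the previous color and all other slots and output $M_{0k}^{k(2)}(u,v)+M_{k0}^{k(2)}(u,v)=\mathrm{deg}(u)+\mathrm{deg}(v)$, giving $W^{(2)}(u,v)=\mathrm{deg}(u)+\mathrm{deg}(v)$ for every distance-$1$ and distance-$2$ tuple, so that $T=2$ suffices (consistent with the claim $T\geqslant 2$).

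There are no hard estimates here; the only point requiring care is the distance bookkeeping, namely verifying that the intersections $\mathcal{N}_0(u)\cap\mathcal{N}_k(v)$ and $\mathcal{N}_k(u)\cap\mathcal{N}_0(v)$ are exactly the singletons $\{u\}$ and $\{v\}$, which is precisely what makes the endpoint degrees retrievable in a single additional round. As with the preceding lemmas, the $\mathrm{POOL}$ and $\mathrm{HASH}$ functions selected above are not injective, but this is immaterial: we only need the existence of one test realizing the value $\mathrm{deg}(u)+\mathrm{deg}(v)$, and such coordinate-selecting, summing operations are exactly the kind that a $d$-DRFWL(2) GNN can implement through its learnable aggregation and update functions.
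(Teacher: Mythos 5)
Your proposal is correct and matches the paper's proof essentially verbatim: both first realize $W(u,u)=\mathrm{deg}(u)$ via Lemma \ref{can_node_level_label_degree_lemma}, then use the singleton slots $M_{0k}^{k}$ and $M_{k0}^{k}$ (with $\mathcal{N}_0(u)\cap\mathcal{N}_k(v)=\{u\}$ and $\mathcal{N}_k(u)\cap\mathcal{N}_0(v)=\{v\}$) to pull back $\mathrm{deg}(u)$ and $\mathrm{deg}(v)$ and sum them in one further iteration. Your extra care about which coordinate of the aggregated pair to select is a correct elaboration of what the paper leaves implicit.
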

\begin{proof}
    By Lemma \ref{can_node_level_label_degree_lemma}, there exists a 2-DRFWL(2) test $q$ that assigns $W^{(T)}(u,u) = \mathrm{deg}(u)$ for distance-0 tuples $(u,u)$ with $u\in\mathcal{V}_G$, where $T\geqslant 1$. Now, for the $(T+1)$-th iteration of $q$, we ask
    \begin{align}
        M_{0k}^{k(T+1)}(u,v) &= W^{(T)}(u,u),\\
        M_{k0}^{k(T+1)}(u,v) &= W^{(T)}(v,v),
    \end{align}
    where $k=1$ or $2$; $q$ then assigns
    \begin{align}
        W^{(T+1)}(u,v) = M_{0k}^{k(T+1)}(u,v)+M_{k0}^{k(T+1)}(u,v),
    \end{align}
    for $d(u,v)=k$ and $k=1,2$. Now $W^{(T+1)}(u,v) = \mathrm{deg}(u)+\mathrm{deg}(v)$ for $T+1\geqslant 2$.
\end{proof}

\begin{lemma}\label{can_pair_level_count_3_path_lemma}
    There exists a 2-DRFWL(2) test $q$ such that for any graph $G\in\mathcal{G}$ and for any 2-tuple $(u,v)\in\mathcal{V}_G^2$ with $d(u,v)=1$ or $d(u,v)=2$, $q$ assigns 
    \begin{align}
        W^{(T)}(u,v) = P_3(u,v),
    \end{align}
    for some integer $T\geqslant 2$.
\end{lemma}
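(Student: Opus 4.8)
The plan is to reduce the lemma to a single combinatorial identity expressing $P_3(u,v)$ in terms of quantities that 2-DRFWL(2) already computes, and then to realize that identity in one extra round of message passing. Concretely, I would first invoke Lemma~\ref{can_pair_level_count_2_path_lemma}, Lemma~\ref{can_node_level_label_degree_lemma} and Lemma~\ref{can_label_deg_lemma} to build, in one iteration, a test $q$ whose color of each tuple $(u,v)$ with $0\leqslant d(u,v)\leqslant 2$ simultaneously records $d(u,v)$ together with $P_2(u,v)$ (for distance-$1,2$ tuples) and $\deg(u)=W(u,u)$ (for distance-$0$ tuples). Since every $\mathrm{HASH}_k^{(t)}$ is injective, we may freely bundle these fields into one color.

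The heart of the argument is the identity, valid for every tuple with $d(u,v)\in\{1,2\}$,
\begin{align}
    P_3(u,v) = \sum_{w\in\mathcal{N}_1(v)} P_2(u,w) - [\{u,v\}\in\mathcal{E}_G]\cdot(\deg(v)-1).
\end{align}
To justify it, expand $\sum_{w\in\mathcal{N}_1(v)} P_2(u,w)$ as the number of configurations $(a,w)$ with $\{u,a\},\{a,w\},\{w,v\}\in\mathcal{E}_G$ and $u,a,w$ pairwise distinct. Writing $b:=w$, such a configuration is a genuine $3$-path $u\to a\to b\to v$ exactly when $u,a,b,v$ are \emph{all} distinct; the edge constraints already force $u\ne a$, $a\ne b$, $u\ne b$, $b\ne v$, and $u\ne v$ holds because $d(u,v)\geqslant 1$. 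Hence the only overcounted configurations are those with $a=v$, which require $\{u,v\}\in\mathcal{E}_G$ and are in bijection with the neighbours $b$ of $v$ other than $u$; there are $\deg(v)-1$ of them, which gives the correction term.

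I would then realize the right-hand side in iteration $T$. For a tuple $(u,v)$ with $d(u,v)=k\in\{1,2\}$, every $w\in\mathcal{N}_1(v)$ carrying a nonzero $P_2(u,w)$ satisfies $d(u,w)\in\{1,2\}$ (since a $2$-path forces $u\ne w$ and $d(u,w)\leqslant 2$), so the sum $\sum_{w\in\mathcal{N}_1(v)}P_2(u,w)$ is collected by choosing $\mathrm{POOL}_{11}^{k(T)}$ and $\mathrm{POOL}_{21}^{k(T)}$ to read the second component $W^{(T-1)}(u,w)=P_2(u,w)$ of each pair and add it up, i.e.\ it equals $M_{11}^{k(T)}(u,v)+M_{21}^{k(T)}(u,v)$ in the notation of \eqref{multiset_update}. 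The factor $[\{u,v\}\in\mathcal{E}_G]$ is encoded by $k$ itself, since $\mathrm{HASH}_k^{(T)}$ acts separately for $k=1$ and $k=2$: for $k=2$ set $W^{(T)}(u,v)=M_{11}^{2(T)}(u,v)+M_{21}^{2(T)}(u,v)$, while for $k=1$ additionally subtract $\deg(v)-1$, where $\deg(v)$ is fetched as $M_{10}^{1(T)}(u,v)=W^{(T-1)}(v,v)$. Via \eqref{aggregation_update} this yields $W^{(T)}(u,v)=P_3(u,v)$ with $T\geqslant 2$, and Lemma~\ref{drfwl2_count_vs_gnn_count} is not even needed here, only the construction of $q$.

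The main obstacle, and the only place requiring care, is the degeneracy bookkeeping in the identity: one must verify that $a=v$ is genuinely the \emph{sole} coincidence that can occur once $d(u,v)\leqslant 2$ is assumed, and that its multiplicity is exactly $\deg(v)-1$ rather than $\deg(v)$, so that the correction is subtracted precisely when $\{u,v\}\in\mathcal{E}_G$. Everything else — bundling fields into one injective color and rewriting a neighbour-restricted sum as the $j=1$ messages $M_{11}+M_{21}$ — is a routine transcription of the earlier lemmas into the update rules \eqref{aggregation_update}--\eqref{multiset_update}.
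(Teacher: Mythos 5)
Your proof is correct and follows essentially the same route as the paper's: both express $P_3(u,v)$ by aggregating the already-computed $P_2$ values over neighbours via the $M_{\cdot 1}$ messages in one extra iteration and then subtracting closed-form degeneracy terms built from degrees stored in the tuple colors. The only difference is cosmetic --- the paper symmetrizes through the walk count $W_3(u,v)=\frac{1}{2}\bigl(\sum_{w\in\mathcal{N}_1(v)}W_2(u,w)+\sum_{w\in\mathcal{N}_1(u)}W_2(w,v)\bigr)$ and corrects by $1_{(u,v)\in\mathcal{E}_G}(\mathrm{deg}(u)+\mathrm{deg}(v)-1)$, whereas your one-sided identity $P_3(u,v)=\sum_{w\in\mathcal{N}_1(v)}P_2(u,w)-1_{(u,v)\in\mathcal{E}_G}(\mathrm{deg}(v)-1)$ uses only $M_{11}+M_{21}$ and a single correction; the two identities are algebraically equivalent and your degeneracy bookkeeping (the sole coincidence being $a=v$, with multiplicity $\mathrm{deg}(v)-1$) is right.
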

\begin{proof}
    A 3-path starting at $u$ and ending at $v$ is a 3-walk $u\rightarrow x\rightarrow y\rightarrow v$ with $u\ne y$ and $x\ne v$. Therefore,
    \begin{align}
        P_3(u,v) = W_3(u,v) - \#(u\rightarrow x\rightarrow u\rightarrow v) - \#(u\rightarrow v\rightarrow y\rightarrow v) + 1_{(u,v)\in\mathcal{E}_G}.
    \end{align}
    Here $\#(u\rightarrow x\rightarrow u\rightarrow v)$ is the number of different ways to walk from $u$ to a neighboring node $x$, then back to $u$, and finally to $v$, which is also a neighbor of $u$ ($x$ can coincide with $v$). The term $\#(u\rightarrow v\rightarrow y\rightarrow v)$ can be interpreted analogously. The $1_{(u,v)\in\mathcal{E}_G}$ takes value $1$ if $(u,v)\in\mathcal{E}_G$, and $0$ otherwise. This term accounts for the count $\#(u\rightarrow v\rightarrow u\rightarrow v)$ which is subtracted twice.

    It is easy to see that
    \begin{align}
        \#(u\rightarrow x\rightarrow u\rightarrow v) &=1_{(u,v)\in\mathcal{E}_G}\mathrm{deg}(u),\\
        \#(u\rightarrow v\rightarrow y\rightarrow v)&=1_{(u,v)\in\mathcal{E}_G}\mathrm{deg}(v).
    \end{align}
    Now we construct $q$ as following: at the first iteration, $q$ chooses proper $\mathrm{POOL}_{ij}^{k(1)}$ and $\mathrm{HASH}_k^{(1)}$ functions ($0\leqslant i, j, k\leqslant 2$) such that
    \begin{align}
        W^{(1)}(u,v) = \left\{ \begin{array}{ll}
            \text{arbitrary}, & \text{if }d(u,v)=0, \\
            (P_2(u,v), \mathrm{deg}(u)+\mathrm{deg}(v)), & \text{if }d(u,v)=1,\\
            P_2(u,v), &\text{if }d(u,v)=2.
        \end{array}\right.
    \end{align}
    From Lemmas \ref{can_pair_level_count_2_path_lemma} and \ref{can_label_deg_lemma}, this is possible. At the second iteration, we ask $M_{11}^{k(2)}$, $M_{12}^{k(2)}$ and $M_{21}^{k(2)}$ to be
    \begin{align}
        \label{temp1}M_{11}^{k(2)}(u,v) &= \sum_{w\in\mathcal{N}_1(u)\cap\mathcal{N}_1(v)} \left(P_2(u,w) + P_2(w,v)\right),\\
        M_{12}^{k(2)}(u,v) &= \sum_{w\in\mathcal{N}_1(u)\cap\mathcal{N}_2(v)} P_2(w, v),\\
        M_{21}^{k(2)}(u,v) &= \sum_{w\in\mathcal{N}_2(u)\cap\mathcal{N}_1(v)} P_2(u,w),
    \end{align}
    respectively. Here $k$ takes $1$ or $2$. We notice that if $d(u,v) = k$ with $k=1,2$, 
    \begin{align}\label{proof_lemma_3_path_why_3_walk}
        W_3(u,v) = \frac{1}{2}\left(M_{11}^{k(2)}(u,v)+M_{12}^{k(2)}(u,v)+M_{21}^{k(2)}(u,v)+1_{d(u,v)=1}(\mathrm{deg}(u) + \mathrm{deg}(v))\right)
    \end{align}
    because the RHS of \eqref{proof_lemma_3_path_why_3_walk} is exactly $\frac{1}{2}\left(\sum_{w\in\mathcal{N}_1(v)} W_2(u,w) + \sum_{w\in\mathcal{N}_1(u)} W_2(w,v)\right)$. Therefore, for a distance-$k$ tuple $(u,v)$ where $k=1$ or $2$,
    \begin{align}\label{temp2}
        P_3(u,v) = W_3(u,v) - 1_{d(u,v)=1}(\mathrm{deg}(u)+\mathrm{deg}(v)-1)
    \end{align}
    is a function of $M_{11}^{k(2)}$, $M_{12}^{k(2)}$, $M_{21}^{k(2)}$ and $W^{(1)}(u,v)$, and thus can be assigned to $W^{(2)}(u,v)$ by choosing proper $\mathrm{HASH}_k^{(2)}$ functions. After $q$ assigns $W^{(2)}(u,v) = P_3(u,v)$ for those $(u,v)$ with $d(u,v)=1$ or $d(u,v)=2$, it then stops updating colors for any tuple from the third iteration. It is now clear that $q$ satisfies the condition stated in the lemma.
\end{proof}

\begin{lemma}\label{can_node_level_count_3_cycle_lemma}
        There exists a 2-DRFWL(2) test $q$ such that for any graph $G\in\mathcal{G}$ and for any node $u\in\mathcal{V}_G$, $q$ assigns        
    \begin{align}
        W^{(T)}(u,u) = C_3(u),
    \end{align}
    for some integer $T\geqslant 2$.
\end{lemma}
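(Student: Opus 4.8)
The plan is to reuse the construction from the proof of Theorem \ref{1_count} almost verbatim, since every update rule available to 1-DRFWL(2) is also available to a 2-DRFWL(2) test (the latter merely tracks additional distance-2 tuples that we can ignore here). The key quantity to compute first is, for each edge, the number of triangles containing it; summing this quantity around a node and dividing by two then yields the triangle count at that node.

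Concretely, I would proceed in two iterations. At the first iteration, for every distance-1 tuple $(u,v)$ I would choose $\mathrm{POOL}_{11}^{1(1)}$ to count the elements of the multiset indexed by $w\in\mathcal{N}_1(u)\cap\mathcal{N}_1(v)$, so that $M_{11}^{1(1)}(u,v) = |\mathcal{N}_1(u)\cap\mathcal{N}_1(v)| =: C(u,v)$, the number of common neighbors of $u$ and $v$. This is exactly the number of triangles in which the edge $\{u,v\}$ participates. Choosing $\mathrm{HASH}_1^{(1)}$ to output this single component gives $W^{(1)}(u,v) = C(u,v)$ for all distance-1 tuples; the colors assigned to distance-0 and distance-2 tuples are irrelevant and may be left arbitrary.

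At the second iteration, I would update only the distance-0 tuples $(u,u)$. The message $M_{11}^{0(2)}(u,u)$ aggregates, over $w\in\mathcal{N}_1(u)\cap\mathcal{N}_1(u) = \mathcal{N}_1(u)$, the pairs $\big(W^{(1)}(w,u), W^{(1)}(u,w)\big) = \big(C(u,w), C(u,w)\big)$; choosing $\mathrm{POOL}_{11}^{0(2)}$ to sum the first component over all such $w$ gives $M_{11}^{0(2)}(u,u) = \sum_{w\in\mathcal{N}_1(u)} C(u,w)$. I would then set $\mathrm{HASH}_0^{(2)}$ to output $\tfrac12 M_{11}^{0(2)}(u,u)$, so that $W^{(2)}(u,u) = \tfrac12 \sum_{w\in\mathcal{N}_1(u)} C(u,w)$, and stop updating from the third iteration on.

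To finish, I would verify the counting identity $\tfrac12 \sum_{w\in\mathcal{N}_1(u)} C(u,w) = C_3(u)$: every triangle $\{u,w,x\}$ through $u$ is counted exactly twice in the sum, once via the edge $\{u,w\}$ (whose common neighbor $x$ contributes to $C(u,w)$) and once via the edge $\{u,x\}$ (with common neighbor $w$), so the factor $\tfrac12$ corrects the double count and yields $W^{(2)}(u,u) = C_3(u)$ with $T=2$. The only part requiring genuine care is this combinatorial bookkeeping — recognizing that each triangle is counted twice and that the resulting factor of $\tfrac12$ is exactly right — whereas everything else is a routine selection of $\mathrm{POOL}$ and $\mathrm{HASH}$ functions that the $d$-DRFWL(2) update rules \eqref{aggregation_update} and \eqref{multiset_update} permit.
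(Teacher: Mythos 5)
Your proposal is correct and follows essentially the same route as the paper: the paper proves this lemma by citing Theorem \ref{1_count}, whose proof computes $C(u,v)=|\mathcal{N}_1(u)\cap\mathcal{N}_1(v)|$ via $M_{11}^{1(1)}$ in the first iteration and then sets $W^{(2)}(u,u)=\frac{1}{2}\sum_{v\in\mathcal{N}_1(u)}C(u,v)$ in the second, exactly as you do. You have merely unfolded the reference and spelled out the $\mathrm{POOL}$/$\mathrm{HASH}$ choices and the double-counting argument explicitly, which is fine.
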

\begin{proof}
    This lemma follows from Theorem \ref{1_count} and the fact that the update rule of 2-DRFWL(2) tests encompasses that of 1-DRFWL(2) tests.
\end{proof}

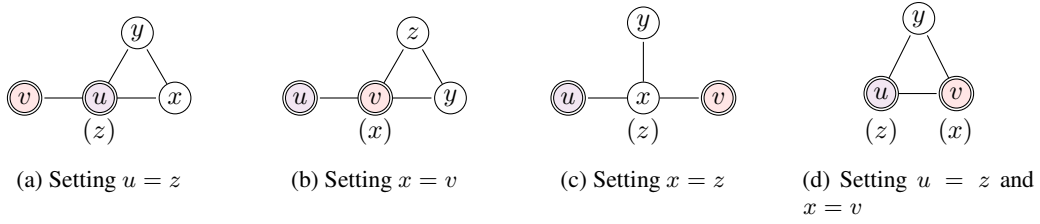
\begin{figure}[t]
\centering

\begin{subfigure}[t]{0.22\textwidth}
\centering
\begin{tikzpicture}[-,shorten >=1pt,on grid,auto,scale=0.5]

\node[state,inner sep=1pt,minimum size=12pt, fill=violet!10, accepting](1) at(0:0){$u$};
\node[state,inner sep=1pt,minimum size=12pt, fill=red!10, accepting](2) at(180:2){$v$};
\node[state,inner sep=1pt,minimum size=12pt](3) at (0:2){$x$};
\node[state,inner sep=1pt,minimum size=12pt](4) at (60:2){$y$};

\path (1) edge node[below right=0.2]{$(z)$} (2) edge (3) edge(4);
\path (3) edge (4);
\end{tikzpicture}
\caption{Setting $u=z$}\label{proof_4_path_a}
\end{subfigure}
\hfill
\begin{subfigure}[t]{0.22\textwidth}
\centering
\begin{tikzpicture}[-,shorten >=1pt,on grid,auto, scale=0.5]

\node[state,inner sep=1pt,minimum size=12pt, fill=red!10, accepting](1) at(0:0){$v$};
\node[state,inner sep=1pt,minimum size=12pt, fill=violet!10, accepting](2) at(180:2){$u$};
\node[state,inner sep=1pt,minimum size=12pt](3) at (0:2){$y$};
\node[state,inner sep=1pt,minimum size=12pt](4) at (60:2){$z$};

\path (1) edge node [below right=0.2]{$(x)$} (2) edge (3) edge(4);
\path (3) edge (4);
\end{tikzpicture}
\caption{Setting $x=v$}\label{proof_4_path_b}
\end{subfigure}\hfill
\begin{subfigure}[t]{0.22\textwidth}
\centering
\begin{tikzpicture}[-,shorten >=1pt,on grid,auto, scale=0.5]

\node[state,inner sep=1pt,minimum size=12pt, fill=red!10, accepting](1) at(0:2){$v$};
\node[state,inner sep=1pt,minimum size=12pt, fill=violet!10, accepting](2) at(180:2){$u$};
\node[state,inner sep=1pt,minimum size=12pt](3) at (90:2){$y$};
\node[state,inner sep=1pt,minimum size=12pt](4) at (0:0){$x$};
\path (4) edge(1) edge node [below right=0.2]{$(z)$}(2) edge(3);
\end{tikzpicture}
\caption{Setting $x=z$}\label{proof_4_path_c}
\end{subfigure}
\hfill
\begin{subfigure}[t]{0.22\textwidth}
\centering
\begin{tikzpicture}[-,shorten >=1pt,on grid,auto,scale=0.5]

\node[state,inner sep=1pt,minimum size=12pt, fill=violet!10, accepting](1) at(180:1){$u$};
\node[state,inner sep=1pt,minimum size=12pt, fill=red!10, accepting](2) at(0:1){$v$};
\node[state,inner sep=1pt,minimum size=12pt](3) at (90:2){$y$};
\path (1) edge node [below=0.2] {$(z)\ \ \quad(x)$} (2) edge(3);
\path (2) edge(3);
\end{tikzpicture}
\caption{Setting $u=z$ and $x=v$}\label{proof_4_path_d}
\end{subfigure}
\caption{The four types of redundant 4-walks requiring subtraction in the proof of Lemma \ref{can_pair_level_count_4_path_lemma}. Each of the 4 situations is obtained by coalescing one or more node pairs from $(u,z)$, $(x,z)$ and $(x,v)$ in a 4-walk $u\rightarrow x\rightarrow y\rightarrow z\rightarrow v$, while subject to constraints $u\ne y$ and $y\ne v$.}
\end{figure}

\begin{lemma}\label{can_pair_level_count_4_path_lemma}
    There exists a 2-DRFWL(2) test $q$ such that for any graph $G\in\mathcal{G}$ and for any 2-tuple $(u,v)\in\mathcal{V}_G^2$ with $d(u,v)=1$ or $d(u,v)=2$, $q$ assigns 
    \begin{align}
        W^{(T)}(u,v) = P_4(u,v),
    \end{align}
    for some integer $T\geqslant 3$.
\end{lemma}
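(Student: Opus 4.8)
The plan is to mirror the proof of Lemma \ref{can_pair_level_count_3_path_lemma}: express $P_4(u,v)$ as a count of $4$-walks minus the degenerate (non-path) $4$-walks, and realize every ingredient as a color stored on distance-$\leqslant 2$ tuples. A $4$-path from $u$ to $v$ is a $4$-walk $u\to x\to y\to z\to v$ with $u,x,y,z,v$ pairwise distinct; consecutive nodes are automatically distinct and $u\neq v$ holds because $d(u,v)\in\{1,2\}$, so the only coincidences to rule out are $u{=}y$, $u{=}z$, $x{=}z$, $x{=}v$, $y{=}v$. First I would compute the raw walk count by splitting on the middle node $w=y$, which always satisfies $d(u,w)\leqslant 2$ and $d(w,v)\leqslant 2$:
\begin{align*}
W_4(u,v) = \sum_{0\leqslant i,j\leqslant 2}\ \sum_{w\in\mathcal{N}_i(u)\cap\mathcal{N}_j(v)} W_2(u,w)\, W_2(w,v),
\end{align*}
where $W_2(u,w)$ equals $\mathrm{deg}(u)$, $P_2(u,w)$, $P_2(u,w)$ according to whether $d(u,w)$ is $0,1,2$; by Lemmas \ref{can_pair_level_count_2_path_lemma} and \ref{can_node_level_label_degree_lemma} each of these can be materialized as a field of the color of $(u,w)$, so the product is a function of the message pair $\big(W^{(t-1)}(w,v),W^{(t-1)}(u,w)\big)$ and the sum is exactly an aggregation of the form \eqref{multiset_update}. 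Imposing $y\neq u$ and $y\neq v$ just removes the $w=u$ and $w=v$ terms, giving $\tilde W_4(u,v)=W_4(u,v)-(\mathrm{deg}(u)+\mathrm{deg}(v))P_2(u,v)$.

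Within the walks counted by $\tilde W_4$, I would then perform inclusion--exclusion over the three remaining bad events $A:\,u{=}z$, $B:\,x{=}v$, $C:\,x{=}z$ (these are precisely the four configurations of Figures \ref{proof_4_path_a}--\ref{proof_4_path_d}). The key simplification is that $A\cap C$ forces $u{=}z{=}x$ and $B\cap C$ forces $v{=}z{=}x$, both impossible since consecutive nodes are adjacent hence distinct; thus $A\cap C=B\cap C=\varnothing$ and inclusion--exclusion collapses to
\begin{align*}
P_4(u,v) = \tilde W_4(u,v) - |A| - |B| - |C| + |A\cap B|.
\end{align*}
A short walk-counting argument then evaluates each term for $d(u,v)\in\{1,2\}$: $|A|=1_{(u,v)\in\mathcal{E}_G}\big(2C_3(u)-P_2(u,v)\big)$, symmetrically $|B|=1_{(u,v)\in\mathcal{E}_G}\big(2C_3(v)-P_2(u,v)\big)$, the event $C$ gives $|C|=\sum_{x\in\mathcal{N}_1(u)\cap\mathcal{N}_1(v)}(\mathrm{deg}(x)-2)$, and $|A\cap B|=1_{(u,v)\in\mathcal{E}_G}\,P_2(u,v)$. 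For $d(u,v)=2$ the three $\mathcal{E}_G$-indicators vanish and only $W_4$, the degree-path correction and $|C|$ survive, which one can cross-check on a small example.

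Finally I would package everything into an explicit $q$: over the first iterations, use Lemma \ref{can_node_level_label_degree_lemma} to place $\mathrm{deg}(u)$ on every diagonal tuple $(u,u)$, Lemma \ref{can_node_level_count_3_cycle_lemma} to place $C_3(u)$ there, and Lemma \ref{can_pair_level_count_2_path_lemma} to place $P_2$ on distance-$1,2$ tuples; then propagate $\mathrm{deg}$ and $C_3$ from diagonals onto off-diagonal tuples using the selector pools $M_{0k}^{k}$ and $M_{k0}^{k}$ (which read $W(u,u)$ and $W(v,v)$), exactly as in Lemma \ref{can_label_deg_lemma}. The last iteration reads $\mathrm{deg}(x)$ off the message colors to assemble $|C|$ together with $W_4$ and the scalar corrections, yielding $W^{(T)}(u,v)=P_4(u,v)$ via an appropriate $\mathrm{HASH}_k^{(T)}$. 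The main obstacle is purely bookkeeping rather than conceptual: one must verify that every scalar appearing in the formula ($\mathrm{deg}(x)$ of the intermediate common neighbors, and $C_3(u),C_3(v)$) can genuinely be exposed inside the multiset aggregation \eqref{multiset_update}, i.e.\ stored as fields of the correct tuple colors and pulled from the diagonal before the final combine; once this is confirmed, injectivity of the $\mathrm{POOL}$/$\mathrm{HASH}$ choices lets us compute the exact integer $P_4(u,v)$.
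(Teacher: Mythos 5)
Your proposal is correct and follows essentially the same route as the paper's proof: you count raw $4$-walks by splitting on the middle node (your $\tilde W_4$ coincides with the paper's $P_{2,2}$), perform the same inclusion--exclusion over the degenerate configurations $u{=}z$, $x{=}v$, $x{=}z$ (arriving at the identical closed-form corrections $1_{(u,v)\in\mathcal{E}_G}(2C_3(u)-P_2(u,v))$, etc.), and realize the formula over a few 2-DRFWL(2) iterations by pre-storing $P_2$, degrees and $C_3$ on the appropriate tuples. The only difference is presentational: the paper resolves the bookkeeping you flag (exposing $\mathrm{deg}(x)$ of the common neighbors inside the aggregation) explicitly via auxiliary sums $D_{11}$ and $N_{11}$, but this is exactly the verification you anticipate.
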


\begin{proof}
    A 4-path starting at $u$ and ending at $v$ is a 4-walk $u\rightarrow x\rightarrow y\rightarrow z\rightarrow v$ with $u\ne y, u\ne z, x\ne z, x\ne v$ and $y\ne v$. We will calculate $P_4(u,v)$ as following: first define
    \begin{align}\label{temp3}
        P_{2,2}(u,v) = \sum_{y\ne u, y\ne v} P_2(u,y) P_2(y,v).
    \end{align}
    It is easy to see that $P_{2,2}(u,v)$ gives the number of 4-walks $u\rightarrow x\rightarrow y\rightarrow z\rightarrow v$ with only constraints $u\ne y$ and $y\ne v$. Therefore,
    \begin{align}
 P_4(u,v) = P_{2,2}(u,v) - \#(\text{a})- \#(\text{b})-\#(\text{c})+\#(\text{d}),
    \end{align}
    where $\#(\text{a})$, $\#(\text{b})$, $\#(\text{c})$ and $\#(\text{d})$ are the numbers of four types of 4-walks illustrated in Figure \ref{proof_4_path_a}, \ref{proof_4_path_b}, \ref{proof_4_path_c} and \ref{proof_4_path_d}, respectively. They correspond to setting $u=z$, $x=v$, $x=z$ and both setting $u=z$ and $x=v$ in the 4-walk $u\rightarrow x\rightarrow y\rightarrow z\rightarrow v$, respectively, while keeping $u\ne y$ and $y\ne v$.

    Now, it is not hard to get the following results,
    \begin{align}
        \label{proof_4_path_count_a_formula}\#(\text{a})&=1_{(u,v)\in\mathcal{E}_G}(2C_3(u)-P_2(u,v)),\\
        \label{proof_4_path_count_b_formula}\#(\text{b})&=1_{(u,v)\in\mathcal{E}_G}(2C_3(v)-P_2(u,v)),\\
        \label{proof_4_path_count_c_formula}\#(\text{c})&=\sum_{x\in\mathcal{N}_1(u)\cap \mathcal{N}_1(v)}(\mathrm{deg}(x)-2),\\
        \label{proof_4_path_count_d_formula}\#(\text{d})&=1_{(u,v)\in\mathcal{E}_G}P_2(u,v).
    \end{align}
    We will briefly explain \eqref{proof_4_path_count_a_formula}--\eqref{proof_4_path_count_d_formula} in the following. The contribution from \eqref{proof_4_path_count_a_formula}, \eqref{proof_4_path_count_b_formula} and \eqref{proof_4_path_count_d_formula} only exists when $u$ and $v$ are neighbors, which accounts for the $1_{(u,v)\in\mathcal{E}_G}$ factor. For $\#(\text{a})$, if we allow $v=y$, then the count is simply $C_3(u)$ times 2 (because two directions of a 3-cycle that passes $u$ are counted as two different walks); the contribution of counts from the additional $v=y$ case is exactly $\#(\text{d})$, and is $P_2(u,v)$, as long as $(u,v)\in\mathcal{E}_G$. A similar argument applies to $\#(\text{b})$. Determining $\#(\text{c})$ and $\#(\text{d})$ is easier and the calculation will not be elaborated here.

    Combining the above results, we can give a formula for $P_4(u,v)$,
    \begin{align}\label{formula_for_4_path}
        P_4(u,v) = P_{2,2}(u,v) - \sum_{x\in\mathcal{N}_1(u)\cap \mathcal{N}_1(v)}(\mathrm{deg}(x)-2) - 1_{(u,v)\in\mathcal{E}_G}(2 C_3(u)+2C_3(v)-3P_2(u,v)).
    \end{align}
    Given the explicit formula \eqref{formula_for_4_path}, we now construct $q$ as follows: at the first iteration, $q$ chooses proper $\mathrm{POOL}_{ij}^{k(1)}$ and $\mathrm{HASH}_k^{(1)}$ functions ($0\leqslant i,j,k\leqslant 2$) such that 
    \begin{align}
                W^{(1)}(u,v) = \left\{ \begin{array}{ll}
            \text{arbitrary}, & \text{if }d(u,v)=0, \\
            (1, P_2(u,v), \mathrm{deg}(u)+\mathrm{deg}(v)), & \text{if }d(u,v)=1,\\
            (2, P_2(u,v), \mathrm{deg}(u)+\mathrm{deg}(v)), &\text{if }d(u,v)=2.
        \end{array}\right.
    \end{align}
    From Lemmas \ref{can_pair_level_count_2_path_lemma} and \ref{can_label_deg_lemma}, this is possible. At the second iteration, we let
    \begin{align}
        \label{def_for_22ij_count}P_{2,2}^{ij}(u,v) = \sum_{w\in\mathcal{N}_i(u)\cap\mathcal{N}_j(v)} P_2(u,w)P_2(w,v),
    \end{align}
    where $1\leqslant i,j\leqslant 2$, and
    \begin{align}
        \label{def_for_d11_count}D_{11}(u,v) &= \sum_{w\in\mathcal{N}_1(u)\cap\mathcal{N}_1(v)}[(\mathrm{deg}(u)+\mathrm{deg}(w)) + (\mathrm{deg}(w) + \mathrm{deg}(v))],\\
        \label{def_for_n11_count}N_{11}(u,v) &= |\mathcal{N}_1(u)\cap\mathcal{N}_1(v)|,
    \end{align}
    for every $(u,v)$ with $d(u,v)=1$ or $d(u,v)=2$. Notice that \eqref{def_for_22ij_count}, \eqref{def_for_d11_count} and \eqref{def_for_n11_count} can be calculated by an iteration of 2-DRFWL(2) with proper $\mathrm{POOL}_{ij}^{k(2)}$ functions, where $1\leqslant i,j,k\leqslant 2$. Therefore, we assert that
    \begin{align}
        \sum_{x\in\mathcal{N}_1(u)\cap \mathcal{N}_1(v)}(\mathrm{deg}(x)-2) = \frac{1}{2} D_{11}(u,v) - \left(2+\frac{\mathrm{deg}(u)+\mathrm{deg}(v)}{2}\right) N_{11}(u,v)
    \end{align}
    can be calculated by an iteration of 2-DRFWL(2), with proper $\mathrm{POOL}_{ij}^{k(2)}$ and $\mathrm{HASH}_k^{(2)}$ functions, $1\leqslant i,j,k\leqslant 2$. We now ask $q$ to assign
    \begin{align}\label{proof_4_path_count_2_iter}
        W^{(2)}(u,v) = \left\{ \begin{array}{ll}
            C_3(u), & \text{if }d(u,v)=0, \\
            \left(P_2(u,v), \sum_{x\in\mathcal{N}_1(u)\cap \mathcal{N}_1(v)}(\mathrm{deg}(x)-2), \sum_{1\leqslant i,j\leqslant 2}P_{2,2}^{ij}(u,v)\right), & \text{if }d(u,v)=1,\\
            \left(\sum_{x\in\mathcal{N}_1(u)\cap \mathcal{N}_1(v)}(\mathrm{deg}(x)-2), \sum_{1\leqslant i,j\leqslant 2}P_{2,2}^{ij}(u,v)\right), &\text{if }d(u,v)=2,
        \end{array}\right.
    \end{align}
    at the second iteration. From the above discussion and Lemma \ref{can_node_level_count_3_cycle_lemma}, this is possible. At the third iteration, $q$ gathers information from \eqref{proof_4_path_count_2_iter} and assigns $W^{(3)}(u,v) = P_4(u,v)$ for those $(u,v)$ with $d(u,v)=1$ or $d(u,v)=2$, using formula \eqref{formula_for_4_path}; since we already have all the terms of \eqref{formula_for_4_path} prepared in \eqref{proof_4_path_count_2_iter} (notice that $\sum_{1\leqslant i,j\leqslant 2} P_{2,2}^{ij}(u,v) = P_{2,2}(u,v)$), it is easy to see that $P_4(u,v)$ can be calculated by an iteration of 2-DRFWL(2) with proper $\mathrm{POOL}_{ij}^{k(3)}$ and $\mathrm{HASH}_k^{(3)}$ functions, $1\leqslant i,j,k\leqslant 2$. We then ask $q$ to stop updating colors for any tuple from the fourth iteration. Now $q$ satisfies the condition stated in the lemma.
\end{proof}

\begin{lemma}\label{can_pair_level_count_4_walk_lemma}
    There exists a 2-DRFWL(2) test $q$ such that for any graph $G\in\mathcal{G}$ and for any 2-tuple $(u,v)\in\mathcal{V}_G^2$ with $d(u,v)=1$ or $d(u,v)=2$, $q$ assigns 
    \begin{align}
        W^{(T)}(u,v) = W_4(u,v),
    \end{align}
    for some integer $T\geqslant 3$.
\end{lemma}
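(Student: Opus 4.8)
The plan is to exploit the elementary walk-counting identity $W_4 = W_2\cdot W_2$: splitting each 4-walk from $u$ to $v$ at its middle node $w$ gives
\[
W_4(u,v)=\sum_{w\in\mathcal{V}_G} W_2(u,w)\,W_2(w,v).
\]
The crucial locality fact is that a node $w$ contributes a nonzero summand only when $d(u,w)\leqslant 2$ and $d(w,v)\leqslant 2$: if $d(u,w)\geqslant 3$ then $u$ and $w$ share no common neighbor, so $W_2(u,w)=0$, and symmetrically for $v$. Hence the summation ranges only over $w\in\mathcal{N}_i(u)\cap\mathcal{N}_j(v)$ with $0\leqslant i,j\leqslant 2$, which is exactly the set of nodes that 2-DRFWL(2) aggregates over when it updates the tuple $(u,v)$.

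First I would arrange the colors so that every distance-$k$ tuple stores its own 2-walk count. Merging Lemma \ref{can_pair_level_count_2_path_lemma} (which gives $W^{(T_0)}(u,v)=P_2(u,v)=W_2(u,v)$ for distance-1 and distance-2 tuples, since for $u\ne v$ a 2-walk is automatically a 2-path) with Lemma \ref{can_node_level_label_degree_lemma} (which gives $W^{(T_0)}(u,u)=\mathrm{deg}(u)=W_2(u,u)$ for distance-0 tuples) into a single test — these constructions act on disjoint distance classes through distinct $\mathrm{HASH}_k$ functions and therefore combine without interference — yields a 2-DRFWL(2) test $q$ with
\[
W^{(T_0)}(a,b)=W_2(a,b)\quad\text{for all }(a,b)\text{ with }0\leqslant d(a,b)\leqslant 2.
\]
Retaining the distance-0 (degree) colors is essential here: the summands with $w=u$ or $w=v$ encode the 4-walks $u\to x\to u\to y\to v$ and $u\to x\to v\to y\to v$, whose counts involve $\mathrm{deg}(u)$ and $\mathrm{deg}(v)$.

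Next, at iteration $T_0+1$ I would, for each distance-$k$ tuple $(u,v)$ with $k\in\{1,2\}$, choose $\mathrm{POOL}_{ij}^{k(T_0+1)}$ so that
\[
M_{ij}^{k(T_0+1)}(u,v)=\sum_{w\in\mathcal{N}_i(u)\cap\mathcal{N}_j(v)} W^{(T_0)}(u,w)\,W^{(T_0)}(w,v),
\]
which is a genuine function of the aggregated multiset $\lbbr\big(W^{(T_0)}(w,v),W^{(T_0)}(u,w)\big):w\in\mathcal{N}_i(u)\cap\mathcal{N}_j(v)\rbbr$, and then pick $\mathrm{HASH}_k^{(T_0+1)}$ to output $\sum_{0\leqslant i,j\leqslant 2} M_{ij}^{k(T_0+1)}(u,v)$. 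By the locality observation this equals $\sum_{w\in\mathcal{V}_G} W_2(u,w)W_2(w,v)=W_4(u,v)$, so $W^{(T_0+1)}(u,v)=W_4(u,v)$. I would then freeze all colors from the following iteration, padding with identity updates if necessary so that the stated bound $T\geqslant 3$ holds.

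The construction is essentially routine once the preparatory lemmas are available; the main point requiring care is the bookkeeping of the boundary terms $i=0$ and $j=0$ (the cases $w=u$ and $w=v$), which is where the degree colors enter and where one must check that $\mathcal{N}_0(u)\cap\mathcal{N}_0(v)=\varnothing$ for $u\ne v$ so no term is double-counted. The reason the walk count is markedly simpler than the path count of Lemma \ref{can_pair_level_count_4_path_lemma} is that the single identity $W_4=W_2\cdot W_2$ already captures every 4-walk exactly, with no inclusion–exclusion correction for coincident nodes.
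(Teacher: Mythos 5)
Your proof is correct. It rests on the same underlying identity as the paper's---splitting each 4-walk at its middle vertex---but packages it differently. The paper writes $W_4(u,v)=P_{2,2}(u,v)+(\mathrm{deg}(u)+\mathrm{deg}(v))P_2(u,v)$, where $P_{2,2}$ excludes the middle vertices $y\in\{u,v\}$, and then reuses the machinery of Lemma \ref{can_pair_level_count_4_path_lemma} to obtain $P_{2,2}(u,v)$ before adding the two correction terms. You instead keep the full sum $\sum_{w}W_2(u,w)W_2(w,v)$ and absorb the $w=u$ and $w=v$ cases into the $i=0$ and $j=0$ aggregation slots by storing $W_2(u,u)=\mathrm{deg}(u)$ on distance-$0$ tuples; the locality observation that $W_2(u,w)=0$ whenever $d(u,w)\geqslant 3$ then makes the whole sum computable in a single 2-DRFWL(2) aggregation over the disjoint sets $\mathcal{N}_i(u)\cap\mathcal{N}_j(v)$. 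The two decompositions are algebraically identical, but yours is more self-contained---it needs only Lemmas \ref{can_pair_level_count_2_path_lemma} and \ref{can_node_level_label_degree_lemma} rather than the heavier Lemma \ref{can_pair_level_count_4_path_lemma}---and it makes transparent why the walk count needs no inclusion--exclusion, unlike the path count. The boundary checks you flag ($\mathcal{N}_0(u)\cap\mathcal{N}_0(v)=\varnothing$ for $u\ne v$; $W_2$ coinciding with $P_2$ on distinct endpoints of a simple graph) are correct, and padding with identity updates to reach $T\geqslant 3$ is unproblematic.
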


\begin{proof}
    Following the notations in the proof of Lemma \ref{can_pair_level_count_4_path_lemma}, we have
    \begin{align}
        W_4(u,v) = W_{2,2}(u,v) + \#(u\rightarrow x\rightarrow u\rightarrow z\rightarrow v) + \#(u\rightarrow x\rightarrow v\rightarrow z\rightarrow v),
    \end{align}
    where $\#(u\rightarrow x\rightarrow u\rightarrow z\rightarrow v)$ is the number of different ways to walk from $u$ to a neighboring node $x$, then back to $u$, and, through another neighboring node $z$, finally to $v$ ($x,z,v$ can coincide with each other). The number $\#(u\rightarrow x\rightarrow v\rightarrow z\rightarrow v)$ can be interpreted analogously. It is easy to see that
    \begin{align}
        \#(u\rightarrow x\rightarrow u\rightarrow z\rightarrow v) &= \mathrm{deg}(u) P_2(u,v),\\
        \#(u\rightarrow x\rightarrow v\rightarrow z\rightarrow v) &= \mathrm{deg}(v) P_2(u,v),
    \end{align}
    for $d(u,v)=1$ or $d(u,v)=2$. Therefore,
    \begin{align}
        W_4(u,v) = W_{2,2}(u,v) + (\mathrm{deg}(u) + \mathrm{deg}(v)) P_2(u,v).
    \end{align}
    From Lemma \ref{can_pair_level_count_4_path_lemma}, there exists a 2-DRFWL(2) test $q$ such that $q$ assigns $W^{(2)}(u,v) = P_{2,2}(u,v)$ for $(u,v)$ with $d(u,v)=1$ or $d(u,v)=2$. Moreover, from Lemmas \ref{can_pair_level_count_2_path_lemma} and \ref{can_label_deg_lemma}, there exists another 2-DRFWL(2) test $q'$ such that $q'$ assigns $W^{(2)}(u,v) = (\mathrm{deg}(u) + \mathrm{deg}(v)) P_2(u,v)$, for $d(u,v)=1$ or $d(u,v)=2$. Synthesizing the colors that $q$ and $q'$ assign to $(u,v)$ respectively, it is easy to construct a 2-DRFWL(2) test that assigns $W^{(3)}(u,v) = W_4(u,v)$, where $d(u,v)=1$ or $d(u,v)=2$. We then let it stop updating colors for any tuple from the fourth iteration to make it satisfy the condition stated in the lemma.
\end{proof}

In the following, we denote $P_k(u)$ as the number of $k$-paths that starts at a node $u$, and $W_k(u)$ as the number of $k$-walks that starts at a node $u$.

\begin{lemma}\label{can_node_level_count_walk_lemma}
        There exists a 2-DRFWL(2) test $q$ for any $k\geqslant 1$ such that for any graph $G\in\mathcal{G}$ and for any node $u\in\mathcal{V}_G$, $q$ assigns        
    \begin{align}
        W^{(T)}(u,u) = W_k(u),
    \end{align}
    for some integer $T$.
\end{lemma}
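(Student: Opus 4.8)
The plan is to prove the statement by induction on $k$, exploiting the elementary recursion that a $k$-walk starting at $u$ is obtained by first stepping to a neighbour $w\in\mathcal{N}_1(u)$ and then extending it by an arbitrary $(k-1)$-walk starting at $w$. In formula,
\begin{align*}
    W_0(u) = 1, \qquad W_k(u) = \sum_{w\in\mathcal{N}_1(u)} W_{k-1}(w),\quad k\geqslant 1.
\end{align*}
The base case $k=1$ gives $W_1(u)=\mathrm{deg}(u)$, which is exactly the content of Lemma \ref{can_node_level_label_degree_lemma}; so there is already a 2-DRFWL(2) test recording $W_1(u)$ on the distance-0 tuple $(u,u)$.

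For the inductive step, suppose some 2-DRFWL(2) test $q$ satisfies $W^{(T_{k-1})}(u,u)=W_{k-1}(u)$ for all $u\in\mathcal{V}_G$. The obstacle is that the recursion needs the neighbour values $W_{k-1}(w)$, but in 2-DRFWL(2) these are stored on the distance-0 tuples $(w,w)$, which are never directly visible from the update of $(u,u)$. I would overcome this with a two-iteration \emph{transport-then-aggregate} construction, mirroring the mechanism already used in Lemma \ref{can_label_deg_lemma}. In the transport iteration, note that for a distance-1 tuple $(u,v)$ the intersection $\mathcal{N}_1(u)\cap\mathcal{N}_0(v)=\{v\}$ is a singleton, so $M_{10}^{1(T_{k-1}+1)}(u,v)$ reads off exactly the single pair $\big(W^{(T_{k-1})}(v,v),W^{(T_{k-1})}(u,v)\big)$, whose first component is $W_{k-1}(v)$. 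Choosing $\mathrm{POOL}_{10}^{1(T_{k-1}+1)}$ to select this component and $\mathrm{HASH}_1^{(T_{k-1}+1)}$ to store it, every distance-1 tuple $(u,v)$ then carries $W_{k-1}(v)$, while $\mathrm{HASH}_0^{(T_{k-1}+1)}$ is chosen to preserve $W_{k-1}(u)$ on $(u,u)$.

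In the aggregate iteration, the update of the distance-0 tuple $(u,u)$ ranges over $w\in\mathcal{N}_1(u)\cap\mathcal{N}_1(u)=\mathcal{N}_1(u)$ in the term $M_{11}^{0(T_{k-1}+2)}(u,u)$, which collects the multiset $\lbbr\big(W(w,u),W(u,w)\big):w\in\mathcal{N}_1(u)\rbbr$. By the previous iteration the second component $W(u,w)$ carries $W_{k-1}(w)$, so choosing $\mathrm{POOL}_{11}^{0(T_{k-1}+2)}$ to extract that component and sum yields $\sum_{w\in\mathcal{N}_1(u)}W_{k-1}(w)=W_k(u)$, which $\mathrm{HASH}_0^{(T_{k-1}+2)}$ assigns to $W^{(T_{k-1}+2)}(u,u)$. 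Setting $T_k=T_{k-1}+2$ completes the induction, and the test halts once $W_k(u)$ has been produced. I expect the only delicate points to be the bookkeeping that the singleton intersections behave as claimed and that the per-distance $\mathrm{HASH}$ functions can simultaneously propagate the new quantity while preserving whatever colors are needed in the next step---both routine given the injectivity and freedom of the $\mathrm{POOL}/\mathrm{HASH}$ functions, exactly as in the earlier lemmas of this section.
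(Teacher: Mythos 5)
Your proof is correct and follows essentially the same route as the paper's: induction on $k$ with base case $W_1(u)=\mathrm{deg}(u)$, followed by a two-iteration transport-then-aggregate step that moves $W_{k-1}$ onto distance-1 tuples and then sums over $\mathcal{N}_1(u)$ via $M_{11}^{0}(u,u)$. The only (immaterial) difference is that you transport $W_{k-1}(v)$ alone through the singleton term $M_{10}^{1}(u,v)$, whereas the paper stores the symmetric sum $W_{k-1}(u)+W_{k-1}(v)$ on $(u,v)$ and corrects by subtracting $\mathrm{deg}(u)\,W_{k-1}(u)$ after aggregation; both are valid uses of the $\mathrm{POOL}/\mathrm{HASH}$ freedom already exercised in the earlier lemmas.
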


\begin{proof}
    We prove by induction. When $k=1$, $W_1(u)=\mathrm{deg}(u)$, and by Lemma \ref{can_node_level_label_degree_lemma}, the conclusion is obvious. Now we assume that for $k=\ell$, there exists a 2-DRFWL(2) test $q$ such that $q$ assigns
    \begin{align}
        W^{(T)}(u,u) = W_\ell (u),\quad \forall u\in\mathcal{V}_G,
    \end{align}
    for some integer $T$. Then, at the $(T+1)$-th iteration, we ask
    \begin{align}
        W^{(T+1)}(u,v) = \left\{
            \begin{array}{ll}
                \left(\mathrm{deg}(u), W^{(T)}(u,u)\right), &\text{if }d(u,v)=0,\\
                W^{(T)}(u,u) + W^{(T)}(v,v), &\text{if }d(u,v)=1,\\
                \text{arbitrary}, &\text{if }d(u,v)=2.
            \end{array}
        \right.
    \end{align}
    At the $(T+2)$-th iteration, we ask
    \begin{align}
        W^{(T+2)}(u,u) = \sum_{v\in\mathcal{N}_1(u)} W^{(T+1)}(u,v) - \mathrm{deg}(u) W^{(T)}(u,u),
    \end{align}
    for every distance-0 tuple $(u,u)\in\mathcal{V}_G^2$. It is easy to see that
    \begin{align}
        W^{(T+2)}(u,u) = \sum_{v\in\mathcal{N}_1(u)} W_\ell(v).
    \end{align}
    Therefore, $W^{(T+2)}(u,u) = W_{\ell+1}(u)$, and the induction step is finished. We then assert that the lemma is true.
\end{proof}

\begin{lemma}\label{can_node_level_count_4_cycle_lemma}
        There exists a 2-DRFWL(2) test $q$ such that for any graph $G\in\mathcal{G}$ and for any node $u\in\mathcal{V}_G$, $q$ assigns        
    \begin{align}
        W^{(T)}(u,u) = C_4(u),
    \end{align}
    for some integer $T\geqslant 3$.
\end{lemma}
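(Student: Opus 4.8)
The plan is to reduce the node-level count of $4$-cycles to a local aggregation of the pairwise $2$-path counts $P_2(u,w)$, which are already computable by Lemma \ref{can_pair_level_count_2_path_lemma}. The combinatorial heart of the argument is the identity
\begin{align*}
    C_4(u) = \sum_{w\in\mathcal{N}_1(u)\cup\mathcal{N}_2(u)} \binom{P_2(u,w)}{2}.
\end{align*}
To justify it, observe that every $4$-cycle through $u$ can be written uniquely as $u\rightarrow a\rightarrow w\rightarrow b\rightarrow u$, where $w$ is the vertex \emph{opposite} to $u$ and $a\ne b$ are its two neighbors on the cycle. This decomposition exhibits two \emph{distinct} $2$-paths $u\rightarrow a\rightarrow w$ and $u\rightarrow b\rightarrow w$ from $u$ to $w$; conversely, any unordered pair of distinct $2$-paths from $u$ to a common endpoint $w$ has distinct midpoints $a\ne b$ and hence yields a genuine $4$-cycle $u\rightarrow a\rightarrow w\rightarrow b\rightarrow u$ on four distinct vertices. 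Thus the $4$-cycles through $u$ with opposite vertex $w$ are in bijection with the $\binom{P_2(u,w)}{2}$ unordered pairs of distinct $2$-paths, and since $P_2(u,w)\ne 0$ forces $d(u,w)\in\{1,2\}$, summing over $w\in\mathcal{N}_1(u)\cup\mathcal{N}_2(u)$ counts each such cycle exactly once.

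With this identity in hand, the construction of $q$ is immediate. First I would run the test of Lemma \ref{can_pair_level_count_2_path_lemma}, so that after some iteration every distance-$1$ and distance-$2$ tuple $(u,w)$ carries the color $W(u,w)=P_2(u,w)$. I then perform one further iteration acting on the distance-$0$ tuples $(u,u)$. Because $\mathcal{N}_i(u)\cap\mathcal{N}_j(u)$ is empty unless $i=j$, only $M_{11}^{0}$ and $M_{22}^{0}$ are nontrivial, and for $k=1,2$ I choose $\mathrm{POOL}_{kk}^{0}$ so that
\begin{align*}
    M_{kk}^{0}(u,u) = \sum_{w\in\mathcal{N}_k(u)} \binom{P_2(u,w)}{2},
\end{align*}
which is possible since each summand depends only on the already-stored value $P_2(u,w)=W(w,u)=W(u,w)$. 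Finally, choosing $\mathrm{HASH}_0$ to output $M_{11}^{0}(u,u)+M_{22}^{0}(u,u)$ gives $W(u,u)=C_4(u)$ by the identity above; I then freeze all colors, padding with idle iterations if necessary so that the total iteration count satisfies $T\geq 3$. If a corresponding GNN statement were desired, Lemma \ref{drfwl2_count_vs_gnn_count} would upgrade this directly.

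The steps are all routine once the identity is established, so the only real content is the combinatorial claim itself. The one point demanding care is the treatment of the \emph{chordal} case $d(u,w)=1$: a $4$-cycle may possess a chord $\{u,w\}$, and I must confirm that such a cycle is still counted---exactly once---through the distance-$1$ term $M_{11}^{0}$, rather than being lost or double-counted. This is handled by the bijection above, which refers only to the two length-$2$ sides of the cycle and is indifferent to whether the extra edge $\{u,w\}$ is present; the union $\mathcal{N}_1(u)\cup\mathcal{N}_2(u)$ over opposite vertices then captures both the chordless and chordal cases without overlap, since each cycle determines its opposite vertex $w$ uniquely.
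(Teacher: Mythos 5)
Your proof is correct, but it takes a genuinely different route from the paper's. The paper decomposes each $4$-cycle through $u$ at an incident \emph{edge}: it invokes Lemma \ref{can_pair_level_count_3_path_lemma} to obtain $P_3(u,v)=C_{1,3}(u,v)$ for adjacent pairs, and then recovers $C_4(u)=\tfrac{1}{2}\sum_{v\in\mathcal{N}_1(u)}C_{1,3}(u,v)$ in one more iteration. You instead decompose at the \emph{antipodal} vertex, using the identity $C_4(u)=\sum_{w\in\mathcal{N}_1(u)\cup\mathcal{N}_2(u)}\binom{P_2(u,w)}{2}$, i.e.\ the $4=2+2$ split rather than $4=1+3$. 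Your combinatorial justification is sound: distinct $2$-paths from $u$ to $w$ are determined by their midpoints, so an unordered pair of them always closes into a genuine $4$-cycle on four distinct vertices, and your remark that the presence or absence of a chord $\{u,w\}$ is irrelevant to the count is exactly right given the paper's non-induced definition of cycles. The practical difference is that your argument needs only the one-iteration $2$-path Lemma \ref{can_pair_level_count_2_path_lemma}, whereas the paper routes through the $3$-path Lemma \ref{can_pair_level_count_3_path_lemma}, which itself requires the walk-to-path correction $P_3=W_3-\cdots$ and degree bookkeeping; so your proof is more elementary and self-contained for this particular lemma. The paper's choice is natural in context only because the $3$-path machinery is needed anyway for $5$-cycles (via $C_{1,4}$) and later lemmas. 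Both constructions fit the $\mathrm{POOL}_{kk}^{0}$/$\mathrm{HASH}_0$ template and meet the $T\geqslant 3$ requirement after padding.
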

\begin{proof}
    From Lemma \ref{can_pair_level_count_3_path_lemma}, there exists a 2-DRFWL(2) test $q$ such that $q$ assigns $W^{(T)}(u,v) = C_{1,3}(u,v)$ for some integer $T\geqslant 2$, for any graph $G\in\mathcal{G}$ and any 2-tuple $(u,v)\in\mathcal{V}_G^2$ with $d(u,v)=1$. $C_4(u)$ can be calculated from $C_{1,3}(u,v)$ in another iteration.
\end{proof}

\begin{lemma}\label{can_node_level_count_5_cycle_lemma}
        There exists a 2-DRFWL(2) test $q$ such that for any graph $G\in\mathcal{G}$ and for any node $u\in\mathcal{V}_G$, $q$ assigns        
    \begin{align}
        W^{(T)}(u,u) = C_5(u),
    \end{align}
    for some integer $T\geqslant 4$.
\end{lemma}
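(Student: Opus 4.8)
The plan is to reduce the node-level count of 5-cycles to the pairwise 4-path counts $P_4(u,v)$ that are already available from Lemma~\ref{can_pair_level_count_4_path_lemma}, and then to realize a single additional neighbor-aggregation over $\mathcal{N}_1(u)$, exactly mirroring the construction used for 4-cycles in Lemma~\ref{can_node_level_count_4_cycle_lemma}.

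The combinatorial heart of the argument is the identity
\begin{align*}
    C_5(u) = \frac{1}{2}\sum_{v\in\mathcal{N}_1(u)} P_4(u,v).
\end{align*}
To justify it, observe that a 5-cycle passing through $u$ contains exactly two edges incident to $u$. Deleting any one such edge $\{u,v\}$ leaves a 4-path from $u$ to $v$; conversely, for any neighbor $v\in\mathcal{N}_1(u)$, adjoining the edge $\{u,v\}$ to any 4-path $u\rightarrow x\rightarrow y\rightarrow z\rightarrow v$ yields a genuine 5-cycle through $u$. Indeed, the five nodes $u,x,y,z,v$ of a 4-path are pairwise distinct by definition, and the edge $\{u,v\}$ differs from all four path edges (since $x\ne v$ and $z\ne u$), so the five edges form a valid 5-cycle. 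Hence $P_4(u,v)=C_{1,4}(u,v)$ whenever $\{u,v\}\in\mathcal{E}_G$, i.e. $P_4(u,v)$ counts precisely the 5-cycles through $u$ that use the edge $\{u,v\}$. Summing over $v\in\mathcal{N}_1(u)$ counts each 5-cycle once for each of its two edges at $u$, which accounts for the factor $\tfrac12$.

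With the identity in hand the construction is immediate. By Lemma~\ref{can_pair_level_count_4_path_lemma} there is a 2-DRFWL(2) test $q$ that assigns $W^{(T_0)}(u,v)=P_4(u,v)$ to every distance-1 and distance-2 tuple, for some $T_0\geqslant 3$. I add one more iteration: for each distance-0 tuple $(u,u)$, I choose $\mathrm{POOL}_{11}^{0(T_0+1)}$ so that
\begin{align*}
    M_{11}^{0(T_0+1)}(u,u) = \sum_{w\in\mathcal{N}_1(u)} W^{(T_0)}(u,w) = \sum_{w\in\mathcal{N}_1(u)} P_4(u,w),
\end{align*}
which is legitimate because the set aggregated by $M_{11}^{0}$ is exactly $\mathcal{N}_1(u)\cap\mathcal{N}_1(u)=\mathcal{N}_1(u)$ and each summand $W^{(T_0)}(u,w)=P_4(u,w)$ is available (every $w\in\mathcal{N}_1(u)$ is at distance $1$). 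Taking $\mathrm{HASH}_0^{(T_0+1)}$ to output half of $M_{11}^{0(T_0+1)}(u,u)$ then gives $W^{(T_0+1)}(u,u)=C_5(u)$ with $T_0+1\geqslant 4$, and letting $q$ stop updating from the next iteration makes it satisfy the lemma.

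The only nontrivial point is the combinatorial identity, and within it the verification that adjoining the edge to a 4-path can never produce a degenerate or shorter cycle; since this follows at once from the distinctness of the nodes and edges of a path, I do not expect a genuine obstacle. Everything else is the same bookkeeping already carried out for Lemma~\ref{can_node_level_count_4_cycle_lemma}, with $P_4$ in place of $P_3$.
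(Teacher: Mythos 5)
Your proposal is correct and follows essentially the same route as the paper: the paper's proof likewise observes that Lemma \ref{can_pair_level_count_4_path_lemma} yields $W^{(T)}(u,v)=P_4(u,v)=C_{1,4}(u,v)$ on distance-1 tuples and then computes $C_5(u)$ in one further iteration; you have merely spelled out the identity $C_5(u)=\tfrac12\sum_{v\in\mathcal{N}_1(u)}P_4(u,v)$ and the aggregation over $\mathcal{N}_1(u)\cap\mathcal{N}_1(u)$ that the paper leaves implicit.
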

\begin{proof}
    From Lemma \ref{can_pair_level_count_4_path_lemma}, there exists a 2-DRFWL(2) test $q$ such that $q$ assigns $W^{(T)}(u,v) = C_{1,4}(u,v)$ for some integer $T\geqslant 3$, for any graph $G\in\mathcal{G}$ and any 2-tuple $(u,v)\in\mathcal{V}_G^2$ with $d(u,v)=1$. $C_5(u)$ can be calculated from $C_{1,4}(u,v)$ in another iteration.
\end{proof}

\begin{lemma}\label{can_node_level_count_chordal_cycle_lemma}
            There exists a 2-DRFWL(2) test $q$ such that for any graph $G\in\mathcal{G}$ and for any node $u\in\mathcal{V}_G$, $q$ assigns        
    \begin{align}
        W^{(T)}(u,u) = CC_1(u),
    \end{align}
    for some integer $T\geqslant 3$.
\end{lemma}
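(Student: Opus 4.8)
The plan is to exploit the relation \eqref{cc_1}, namely $CC_1(u)=\tfrac12\sum_{v\in\mathcal{N}_1(u)}CC_1(v,u)$, which reduces the node-level count to a \emph{pair-level} quantity living only on distance-$1$ tuples. Concretely, I would first build a test computing $W(a,b)=CC_1(a,b)$ for every tuple $(a,b)$ with $d(a,b)=1$ (here $a$ is the chord endpoint and $b$ the adjacent degree-$2$ node, as in Figure \ref{CC_1_uv-fig}), and then recover $CC_1(u)$ in one further iteration: in the update of the distance-$0$ tuple $(u,u)$ the message $M_{11}^{0(\cdot)}(u,u)$ ranges over $w\in\mathcal{N}_1(u)$ and has access to the pair $\bigl(W(w,u),W(u,w)\bigr)$, so summing the first components and halving gives $\tfrac12\sum_{w\in\mathcal{N}_1(u)}CC_1(w,u)=CC_1(u)$ through an appropriate $\mathrm{HASH}_0$.

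The heart of the argument is a local formula for $CC_1(a,b)$. A chordal cycle with $a$ a chord endpoint and $b$ an adjacent degree-$2$ node consists of the edge $\{a,b\}$, a second chord endpoint $w$ adjacent to both $a$ and $b$ (forming triangle $abw$), and a second degree-$2$ node $x$ adjacent to both chord endpoints $a$ and $w$, with $x\ne b$. Summing over the choice of $w$ and counting the admissible $x$ for each $w$,
\begin{align}
    CC_1(a,b)=\sum_{w\in\mathcal{N}_1(a)\cap\mathcal{N}_1(b)}\bigl(|\mathcal{N}_1(a)\cap\mathcal{N}_1(w)|-1\bigr)=\sum_{w\in\mathcal{N}_1(a)\cap\mathcal{N}_1(b)}P_2(a,w)-P_2(a,b),
\end{align}
where the $-1$ removes the degenerate choice $x=b$ (note $b\in\mathcal{N}_1(a)\cap\mathcal{N}_1(w)$ always), and the number of summands equals $|\mathcal{N}_1(a)\cap\mathcal{N}_1(b)|=P_2(a,b)$. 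Since chord endpoints and degree-$2$ nodes play distinguishable roles in the pattern, distinct pairs $(w,x)$ yield distinct chordal cycles, so this expression neither over- nor under-counts; I would sanity-check it on $K_4$ (giving $CC_1(a,b)=2$ and $CC_1(u)=3$).

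Given this identity the realization is routine. By Lemma \ref{can_pair_level_count_2_path_lemma} there is a $2$-DRFWL(2) test with $W^{(1)}(a,b)=P_2(a,b)$ for all tuples with $d(a,b)\in\{1,2\}$. At the second iteration I read off $P_2(a,w)$ as the second component of the pairs aggregated in $M_{11}^{1(2)}(a,b)=\sum_{w\in\mathcal{N}_1(a)\cap\mathcal{N}_1(b)}W^{(1)}(a,w)$ (valid because every such $w$ satisfies $d(a,w)=d(w,b)=1$), and then set $W^{(2)}(a,b)=M_{11}^{1(2)}(a,b)-W^{(1)}(a,b)=CC_1(a,b)$ via $\mathrm{HASH}_1^{(2)}$; on the other distances $W^{(2)}$ may be left arbitrary. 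The third iteration performs the readout over $\mathcal{N}_1(u)$ described above, producing $W^{(3)}(u,u)=CC_1(u)$, so that $T=3$ suffices and meets the required $T\geqslant 3$.

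The main obstacle is purely the combinatorial bookkeeping behind the displayed identity: pinning down which auxiliary vertices ($w$ as the second chord endpoint, $x$ as the second degree-$2$ node) realize the pattern, handling the degenerate coincidence $x=b$, and confirming that the two triangles sharing the edge $\{a,w\}$ biject with the chordal cycles at the prescribed positions. Once the formula is verified, each ingredient—$P_2$ on edges, the neighbor-restricted sum, the scalar subtraction, and the final halved aggregation—is directly expressible by the primitives \eqref{multiset_update}--\eqref{aggregation_update}, so no further difficulty arises.
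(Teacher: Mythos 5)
Your proof is correct and follows essentially the same route as the paper's: both reduce $CC_1(u)$ to $\tfrac12\sum_{v\in\mathcal{N}_1(u)}CC_1(v,u)$ via \eqref{cc_1}, expand the pairwise count as $\sum_{w\in\mathcal{N}_1(u)\cap\mathcal{N}_1(v)}\bigl(P_2(v,w)-1\bigr)$ over the common neighbors $w$ (the second chord endpoint), and realize every ingredient through Lemma~\ref{can_pair_level_count_2_path_lemma}. The only, immaterial, difference is bookkeeping: you store the asymmetric pair color $CC_1(a,b)$ on distance-$1$ tuples and aggregate it in a final iteration, whereas the paper rearranges the same double sum into a symmetrized term plus a correction term.
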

\begin{proof}
    By the first equation of \eqref{cc_1},
    \begingroup
    \allowdisplaybreaks
    \begin{align}
        \notag CC_1(u) 
        &= \frac{1}{2}\sum_{v\in\mathcal{N}_1(u)} CC_1(v,u)\\
        \notag &= \frac{1}{2}\sum_{v\in\mathcal{N}_1(u)}\sum_{x\in\mathcal{N}_1(u)\cap\mathcal{N}_1(v)} (P_2(x,v)-1)\\
        \notag &=\frac{1}{2}\sum_{v\in\mathcal{N}_1(u)}\sum_{x\in\mathcal{N}_1(u)\cap\mathcal{N}_1(v)} (P_2(x,v) + P_2(u,x)-1)\\
        \notag&\quad -\frac{1}{2}\sum_{v\in\mathcal{N}_1(u)}\sum_{x\in\mathcal{N}_1(u)\cap\mathcal{N}_1(v)}P_2(u,x)\\
        \notag &=\frac{1}{2}\sum_{v\in\mathcal{N}_1(u)}\sum_{x\in\mathcal{N}_1(u)\cap\mathcal{N}_1(v)} (P_2(x,v) + P_2(u,x)-1)\\
        &\quad -\frac{1}{2}\sum_{x\in\mathcal{N}_1(u)}P_2(u,x)\left(\sum_{v\in\mathcal{N}_1(u)\cap\mathcal{N}_1(x)}1\right).
    \end{align}
    \endgroup
    Now, using Lemma \ref{can_pair_level_count_2_path_lemma}, $CC_1(u)$ can be assigned by a 2-DRFWL(2) test to $W^{(T)}(u,u)$, for some integer $T\geqslant 3$ and for all nodes $u\in\mathcal{V}_G$.
\end{proof}

\begin{lemma}\label{can_node_level_count_triangle_rectangle_lemma}
    There exists a 2-DRFWL(2) test $q$ such that for any graph $G\in\mathcal{G}$ and for any node $u\in\mathcal{V}_G$, $q$ assigns        
    \begin{align}
        W^{(T)}(u,u) = TR_1(u),
    \end{align}
    for some integer $T\geqslant 4$.
\end{lemma}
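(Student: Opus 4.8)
The plan is to reuse the template of the preceding lemmas: reduce the node-level count $TR_1(u)$ to a pair-level count on adjacent tuples, then express that pair-level count through an inclusion--exclusion over quantities already shown computable by 2-DRFWL(2). By the first relation in \eqref{tr_1} we have $TR_1(u)=\tfrac{1}{2}\sum_{v\in\mathcal{N}_1(u)}TR_1(u,v)$, so it suffices to assign $TR_1(u,v)$ to the color of every distance-1 tuple $(u,v)$ and then perform one final aggregation over $v\in\mathcal{N}_1(u)$, realized by the $M_{11}^{0}(u,u)$ term reading off the $(u,w)$-component of its summands, with the factor $\tfrac{1}{2}$ absorbed into $\mathrm{HASH}_0$.

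For the pair-level count, I would fix an edge $\{u,v\}$ and decompose a triangle-rectangle counted by $TR_1(u,v)$ (Figure \ref{TR_1-fig}) into its triangle $u$--$v$--$b$ and its rectangle $v$--$b$--$d$--$c$ glued along the edge $\{v,b\}$. The apex condition forces $b\in\mathcal{N}_1(u)\cap\mathcal{N}_1(v)$, and the rectangle is precisely a $3$-path $v\to c\to d\to b$, i.e.\ one of the $P_3(v,b)$ three-paths joining the adjacent pair $\{v,b\}$. The only distinctness constraints not already enforced by $P_3(v,b)$ are $c\ne u$ and $d\ne u$, so I would write
\begin{align*}
TR_1(u,v)=\sum_{b\in\mathcal{N}_1(u)\cap\mathcal{N}_1(v)}P_3(v,b)-\#(c=u)-\#(d=u),
\end{align*}
where the two subtracted terms collect the degenerate rectangles passing through the apex. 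A short count then gives $\#(c=u)=\sum_{b\in\mathcal{N}_1(u)\cap\mathcal{N}_1(v)}\bigl(P_2(u,b)-1\bigr)$ and $\#(d=u)=P_2(u,v)\bigl(P_2(u,v)-1\bigr)$, using that $v$ is always a common neighbor of $u$ and $b$ and that $b$ is always a common neighbor of $u$ and $v$.

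Every ingredient is available: $P_2$ on distance-1 tuples by Lemma \ref{can_pair_level_count_2_path_lemma} and $P_3$ on distance-1 tuples by Lemma \ref{can_pair_level_count_3_path_lemma}. After precomputing $P_3(v,b)$ and $P_2(u,v)$ (requiring $T\geqslant 2$), a single further iteration assigns $TR_1(u,v)$ to each distance-1 tuple through the term $M_{11}^{1}(u,v)$, whose summands $\bigl(W(b,v),W(u,b)\bigr)=\bigl(P_3(v,b),P_2(u,b)\bigr)$ supply both sums $\sum_b P_3(v,b)$ and $\sum_b P_2(u,b)$, while $|\mathcal{N}_1(u)\cap\mathcal{N}_1(v)|=P_2(u,v)$ is read from the current color; one last iteration produces $TR_1(u)$, giving $T\geqslant 4$. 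Invoking Lemma \ref{drfwl2_count_vs_gnn_count} then yields the claimed node-level counting power.

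The main obstacle is getting the inclusion--exclusion bookkeeping exactly right, since the aggregation window of 2-DRFWL(2) is restricted to $\mathcal{N}_i(\cdot)\cap\mathcal{N}_j(\cdot)$ with $i,j\leqslant 2$: I must check that each degenerate configuration (a rectangle vertex coinciding with the apex $u$) is counted purely locally and that nothing is double-subtracted. In particular, the $c=u$ and $d=u$ cases are disjoint precisely because a $3$-path has distinct interior vertices, and one must confirm the two correction counts reduce to the local quantities $P_2(u,b)$ and $P_2(u,v)$ above. Verifying these distinctness corrections, rather than the aggregation mechanics, is where the care is needed.
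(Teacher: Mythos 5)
Your proof is correct and follows essentially the same route as the paper: you derive the identical pair-level identity $TR_1(u,v)=\sum_{b\in\mathcal{N}_1(u)\cap\mathcal{N}_1(v)}P_3(v,b)-\sum_{b}(P_2(u,b)-1)-P_2(u,v)(P_2(u,v)-1)$ via the same triangle-plus-3-path decomposition and the same two degenerate corrections, then aggregate with the factor $\tfrac12$ from \eqref{tr_1}. The only (immaterial) difference is that the paper rewrites the sums in a symmetrized form before aggregating, whereas you extract the asymmetric component sums directly from the ordered message pairs, which a general $\mathrm{POOL}_{11}^{k}$ is free to do.
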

\begin{proof}
    We first calculate $TR_1(u,v)$ as following,
    \begin{align}
        TR_1(u,v) = \sum_{z\in\mathcal{N}_1(u)\cap\mathcal{N}_1(v)} P_3(z,v) - \sum_{z\in\mathcal{N}_1(u)\cap\mathcal{N}_1(v)} (P_2(u,z)-1) - P_2(u,v)(P_2(u,v)-1).
    \end{align}
    Therefore, by the first equation of \eqref{tr_1},
    \begin{align}
        \notag TR_1(u) &=\frac{1}{2}\sum_{v\in\mathcal{N}_1(u)}TR_1(u,v)\\
        \notag &= \frac{1}{2}\sum_{v\in\mathcal{N}_1(u)}\sum_{z\in\mathcal{N}_1(u)\cap\mathcal{N}_1(v)} (P_3(u,z) + P_3(z,v)) \\
        \notag &\quad - \frac{1}{2}\sum_{z\in\mathcal{N}_1(u)}P_3(u,z)\left(\sum_{v\in\mathcal{N}_1(u)\cap\mathcal{N}_1(z)}1\right) \\\notag &\quad- \frac{1}{2}\sum_{z\in\mathcal{N}_1(u)}(P_2(u,z)-1)\left(\sum_{v\in\mathcal{N}_1(u)\cap\mathcal{N}_1(z)}1\right)  \\&\quad- \frac{1}{2}\sum_{v\in\mathcal{N}_1(u)}P_2(u,v)(P_2(u,v)-1).
    \end{align}
    By Lemmas \ref{can_pair_level_count_2_path_lemma} and \ref{can_pair_level_count_3_path_lemma}, $TR_1(u)$ can be assigned by a 2-DRFWL(2) test to $W^{(T)}(u,u)$, for some integer $T\geqslant 4$ and for all nodes $u\in\mathcal{V}_G$.
\end{proof}

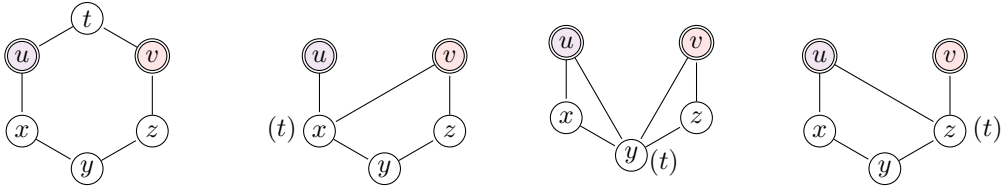
\begin{figure}[t]
\centering

\begin{subfigure}[t]{0.22\textwidth}
\centering
\begin{tikzpicture}[-,shorten >=1pt,on grid,auto,scale=0.5]

\node[state,inner sep=1pt,minimum size=12pt, fill=violet!10, accepting](1) at(150:2){$u$};
\node[state,inner sep=1pt,minimum size=12pt, fill=red!10, accepting](2) at(30:2){$v$};
\node[state,inner sep=1pt,minimum size=12pt](3) at (90:2){$t$};
\node[state,inner sep=1pt,minimum size=12pt](4) at (210:2){$x$};
\node[state,inner sep=1pt,minimum size=12pt](5) at (270:2){$y$};
\node[state,inner sep=1pt,minimum size=12pt](6) at (330:2){$z$};

\path (3) edge (1);
\path (2) edge (3);
\path (1) edge (4);
\path (4) edge (5);
\path (5) edge (6);
\path (6) edge (2);
\end{tikzpicture}
\caption{A 6-cycle with $u$ and $v$ on meta-positions. The number of such 6-cycles is exactly $C_{2,4}(u,v)$. We require $t\ne x, t\ne y, t\ne z$}\label{proof_6_cycle_a}
\end{subfigure}
\hfill
\begin{subfigure}[t]{0.22\textwidth}
\centering
\begin{tikzpicture}[-,shorten >=1pt,on grid,auto, scale=0.5]

\node[state,inner sep=1pt,minimum size=12pt, fill=violet!10, accepting](1) at(150:2){$u$};
\node[state,inner sep=1pt,minimum size=12pt, fill=red!10, accepting](2) at(30:2){$v$};
\node[state,inner sep=1pt,minimum size=12pt](4) at (210:2){$x$};
\node[state,inner sep=1pt,minimum size=12pt](5) at (270:2){$y$};
\node[state,inner sep=1pt,minimum size=12pt](6) at (330:2){$z$};

\path (2) edge (4);
\path (1) edge node [below left=0.25] {$(t)$} (4);
\path (4) edge (5);
\path (5) edge (6);
\path (6) edge (2);
\end{tikzpicture}
\caption{Setting $t=x$, the count of the substructure is $\#(\text{b})$}\label{proof_6_cycle_b}
\end{subfigure}\hfill
\begin{subfigure}[t]{0.22\textwidth}
\centering
\begin{tikzpicture}[-,shorten >=1pt,on grid,auto, scale=0.5]

\node[state,inner sep=1pt,minimum size=12pt, fill=violet!10, accepting](1) at(150:2){$u$};
\node[state,inner sep=1pt,minimum size=12pt, fill=red!10, accepting](2) at(30:2){$v$};
\node[state,inner sep=1pt,minimum size=12pt](4) at (210:2){$x$};
\node[state,inner sep=1pt,minimum size=12pt](5) at (270:2){$y$};
\node[state,inner sep=1pt,minimum size=12pt](6) at (330:2){$z$};

\path (1) edge (5);
\path (2) edge (5);
\path (1) edge (4);
\path (4) edge (5);
\path (5) edge node[below =0.05] {$(t)$} (6);
\path (6) edge (2);
\end{tikzpicture}
\caption{Setting $t=y$, the count of the substructure is $\#(\text{c})$}\label{proof_6_cycle_c}
\end{subfigure}
\hfill
\begin{subfigure}[t]{0.22\textwidth}
\centering
\begin{tikzpicture}[-,shorten >=1pt,on grid,auto,scale=0.5]

\node[state,inner sep=1pt,minimum size=12pt, fill=violet!10, accepting](1) at(150:2){$u$};
\node[state,inner sep=1pt,minimum size=12pt, fill=red!10, accepting](2) at(30:2){$v$};
\node[state,inner sep=1pt,minimum size=12pt](4) at (210:2){$x$};
\node[state,inner sep=1pt,minimum size=12pt](5) at (270:2){$y$};
\node[state,inner sep=1pt,minimum size=12pt](6) at (330:2){$z$};

\path (1) edge (6);
\path (1) edge (4);
\path (4) edge (5);
\path (5) edge (6);
\path (6) edge node[below right=0.25] {$(t)$} (2);
\end{tikzpicture}
\caption{Setting $t=z$, the count of the substructure is $\#(\text{d})$}\label{proof_6_cycle_d}
\end{subfigure}
\caption{Illustrations accompanying the proof of Lemma \ref{can_node_level_count_6_cycle_lemma}. In all subfigures, it is assumed that $u\rightarrow x\rightarrow y\rightarrow z\rightarrow v$ is a 4-path and $u\rightarrow t\rightarrow v$ is a 2-path.}
\end{figure}

\begin{lemma}\label{can_node_level_count_6_cycle_lemma}
        There exists a 2-DRFWL(2) test $q$ such that for any graph $G\in\mathcal{G}$ and for any node $u\in\mathcal{V}_G$, $q$ assigns        
    \begin{align}
        W^{(T)}(u,u) = C_6(u),
    \end{align}
    for some integer $T\geqslant 5$.
\end{lemma}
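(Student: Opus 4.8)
The plan is to reduce the node-level six-cycle count $C_6(u)$ to the pair-wise count $C_{2,4}(u,v)$ introduced in the definition of pair-wise cycle counts, and then to express $C_{2,4}(u,v)$ through quantities already shown to be computable by 2-DRFWL(2). The starting observation is that a $6$-cycle through $u$ has exactly two nodes lying at cycle-distance $2$ from $u$ (the two ``meta'' nodes $v_3$ and $v_5$ if the cycle is $u=v_1,\dots,v_6$), and for each such $v$ the cycle splits uniquely into a $2$-path and an edge-disjoint $4$-path joining $u$ and $v$ --- precisely the structure counted by $C_{2,4}(u,v)$ and depicted in Figure \ref{proof_6_cycle_a}. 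Counting each cycle once per meta node therefore gives
\begin{align}
    C_6(u) = \frac{1}{2}\sum_{v\in\mathcal{N}_1(u)\cup\mathcal{N}_2(u)} C_{2,4}(u,v).
\end{align}
Crucially, because the short arc $u\to t\to v$ is a $2$-path, every meta node $v$ satisfies $d(u,v)\leqslant 2$, so the sum only ever ranges over distance-$1$ and distance-$2$ tuples; and since $C_{2,4}$ is symmetric in its arguments, this aggregation can be realised as a diagonal update $W^{(T)}(u,u)$ that combines the $M_{11}^{0}$ term (over $\mathcal{N}_1(u)$) and the $M_{22}^{0}$ term (over $\mathcal{N}_2(u)$).

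Next I would compute $C_{2,4}(u,v)$ for a fixed distance-$1$ or distance-$2$ tuple $(u,v)$ by inclusion-exclusion. The product $P_2(u,v)\,P_4(u,v)$ counts all pairs consisting of a $2$-path $u\to t\to v$ and a $4$-path $u\to x\to y\to z\to v$, without requiring the two paths to be internally disjoint. The only way such a pair fails to form a genuine $6$-cycle is that the midpoint $t$ coincides with one of the interior nodes $x,y,z$; since $x,y,z$ are distinct, these three events are mutually exclusive, so
\begin{align}
    C_{2,4}(u,v) = P_2(u,v)\,P_4(u,v) - \#(\mathrm{b}) - \#(\mathrm{c}) - \#(\mathrm{d}),
\end{align}
where $\#(\mathrm{b})$, $\#(\mathrm{c})$, $\#(\mathrm{d})$ are the counts of the degenerate configurations $t=x$, $t=y$, $t=z$ shown in Figures \ref{proof_6_cycle_b}, \ref{proof_6_cycle_c} and \ref{proof_6_cycle_d}. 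Each degenerate term can be written as a sum over common neighbours $w\in\mathcal{N}_1(u)\cap\mathcal{N}_1(v)$ of expressions built from $P_2$ and $P_3$ on tuples incident to $w$ (for instance $\#(\mathrm{b})$ is essentially $\sum_{w\in\mathcal{N}_1(u)\cap\mathcal{N}_1(v)}P_3(w,v)$), all of which are available by Lemmas \ref{can_pair_level_count_2_path_lemma} and \ref{can_pair_level_count_3_path_lemma}, and all of which match the aggregation form \eqref{drfwl2gnn1} because the summation index $w$ ranges over an intersection $\mathcal{N}_i(u)\cap\mathcal{N}_j(v)$ with $i,j\leqslant 2$.

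Finally I would assemble the construction. By Lemma \ref{can_pair_level_count_4_path_lemma} a 2-DRFWL(2) test can store $P_4(u,v)$ --- together with $P_2(u,v)$ and $P_3(u,v)$ --- on every distance-$\leqslant 2$ tuple by iteration $3$; one further iteration produces $C_{2,4}(u,v)$ from the formula above; and one last diagonal iteration yields $C_6(u)=W^{(5)}(u,u)$, giving $T\geqslant 5$ as claimed. The node-level counting conclusion then follows from Lemma \ref{drfwl2_count_vs_gnn_count}. The main obstacle is the bookkeeping in the second step: the naive expressions for $\#(\mathrm{b})$, $\#(\mathrm{c})$, $\#(\mathrm{d})$ over-count whenever an interior node of a residual sub-path collides with $u$, $v$, or another interior node, so each must be corrected by lower-order terms (sums of $P_2$, degrees, and common-neighbour counts) exactly in the spirit of the proof of Lemma \ref{can_pair_level_count_4_path_lemma}. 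Verifying that every such correction is itself a function of quantities local to the intersections $\mathcal{N}_i(u)\cap\mathcal{N}_j(v)$ --- and hence realisable by a single 2-DRFWL(2) update --- is the delicate part of the argument.
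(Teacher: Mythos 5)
Your overall strategy is the same as the paper's: write $C_6(u)=\tfrac12\sum_{v\in\mathcal{N}_1(u)\cup\mathcal{N}_2(u)}C_{2,4}(u,v)$, expand $C_{2,4}(u,v)=P_2(u,v)P_4(u,v)-\#(\text{b})-\#(\text{c})-\#(\text{d})$ over the three (mutually exclusive) collisions $t=x$, $t=y$, $t=z$, and correct each degenerate count using $P_2$, $P_3$, $P_4$ and common-neighbour sums. The decomposition, the figures you invoke, and the supporting lemmas are all exactly those of the paper.

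There is, however, one concrete gap in your assembly step. You claim that ``one further iteration produces $C_{2,4}(u,v)$'' as a pair-level colour on each distance-$\leqslant 2$ tuple, and your closing paragraph asserts that every correction term should be ``a function of quantities local to the intersections $\mathcal{N}_i(u)\cap\mathcal{N}_j(v)$.'' This fails for one of the sub-corrections of $\#(\text{c})$: the configuration obtained from Figure~\ref{proof_6_cycle_c} by further coalescing $x=z$ counts, for a fixed pair $(u,v)$, the number of \emph{edges inside} $\mathcal{N}_1(u)\cap\mathcal{N}_1(v)$ (a chordal $4$-cycle with $u$ and $v$ both off the chord). A 2-DRFWL(2) update of $(u,v)$ only sees the multiset of colour pairs $(W(w,v),W(u,w))$ for individual common neighbours $w$; it has no access to adjacency \emph{between two different} common neighbours, so this quantity is not realisable as a pair-level colour. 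The paper sidesteps this (see the term flagged in \eqref{no_way_to_pair_level_count}) by never computing $C_{2,4}(u,v)$ on individual tuples: it only computes the aggregate $\sum_{v}\#(\text{c},x=z)$, which collapses to the node-level chordal-cycle count $CC_1(u)$ and is then supplied by Lemma~\ref{can_node_level_count_chordal_cycle_lemma}. Your proof goes through once you reorganise the last two iterations in this way --- defer the problematic correction to the diagonal update rather than the pair update --- but as written, the plan stalls at exactly the point you yourself identify as delicate.
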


\begin{proof}
    We first try to calculate $C_{2,4}(u,v)$, which is shown in Figure \ref{proof_6_cycle_a}. (If the 6-cycle is thought to be a benzene ring in organic chemistry, then $u$ and $v$ are on \emph{meta-positions}.) It is easy to see that
    \begin{align}
        C_6(u) = \frac{1}{2}\left( \sum_{v\in\mathcal{N}_1(u)}C_{2,4}(u,v) + \sum_{v\in\mathcal{N}_2(u)}C_{2,4}(u,v)\right).
    \end{align}
    We calculate $C_{2,4}(u,v)$ as following,
    \begin{align}\label{calculate_6_cycle_pair_level}
        C_{2,4}(u,v) = P_2(u,v)P_4(u,v) - \#(\text{b}) - \#(\text{c}) - \#(\text{d}),
    \end{align}
    where $\#(\text{b})$, $\#(\text{c})$ and $\#(\text{d})$ are the numbers of substructures illustrated in Figure \ref{proof_6_cycle_b}, \ref{proof_6_cycle_c} and \ref{proof_6_cycle_d}, respectively. In those figures, we always assume that $u\rightarrow x\rightarrow y\rightarrow z\rightarrow v$ is a 4-path and $u\rightarrow t\rightarrow v$ is a 2-path. Since every 4-path $u\rightarrow x\rightarrow y\rightarrow z\rightarrow v$ and every 2-path $u\rightarrow t\rightarrow v$ with $t\ne x, t\ne y$ and $t\ne z$ contribute one count to $C_{2,4}(u,v)$, it is clear that \eqref{calculate_6_cycle_pair_level} gives the correct number $C_{2,4}(u,v)$. 

    We now calculate $\#(\text{b})$, $\#(\text{c})$ and $\#(\text{d})$. For $\#(\text{b})$, we have
    \begin{align}
        \#(\text{b}) = \sum_{x\in\mathcal{N}_1(u)\cap\mathcal{N}_1(v)}P_3(x, v) - \#(\text{b}, u=y) - \#(\text{b}, u=z).
    \end{align}
    Here, $\#(\text{b}, u=y)$ or $\#(\text{b}, u=z)$ is the count of substructure obtained by coalescing nodes $u,y$ or $u,z$ in Figure \ref{proof_6_cycle_b}, while keeping $y\ne v$ and $x\ne z$. It is easy to see that
    \begin{align}
        \#(\text{b}, u=y) &= P_2(u,v)(P_2(u,v)-1),\\
        \#(\text{b}, u=z) &= 1_{d(u,v)=1} \sum_{x\in\mathcal{N}_1(u)\cap\mathcal{N}_1(v)}(P_2(u,x)-1).
    \end{align}
    The count $\#(\text{d})$ can be calculated analogously, and we summarize the results below.
    \begin{align}
        \notag \#(\text{b})+\#(\text{d}) &=\sum_{x\in\mathcal{N}_1(u)\cap\mathcal{N}_1(v)}(P_3(x, v) + P_3(u, x)) - 2P_2(u,v)(P_2(u,v)-1) \\
        &\quad - 1_{d(u,v)=1} \sum_{x\in\mathcal{N}_1(u)\cap\mathcal{N}_1(v)}(P_2(u,x) + P_2(x,v)-2).
    \end{align}
    By Lemmas \ref{can_pair_level_count_2_path_lemma}, \ref{can_pair_level_count_3_path_lemma} and \ref{can_pair_level_count_4_path_lemma}, there exist 2-DRFWL(2) tests that assign the numbers $P_2(u,v)P_4(u,v)$ and $\#(\text{b})+\#(\text{d})$ to $W^{(T)}(u,v)$ respectively, for some integer $T\geqslant 4$ and for all pairs $(u,v)$ with $d(u,v)=1$ or $d(u,v)=2$. Therefore, both
    \begin{align*}
        \sum_{v\in\mathcal{N}_1(u)} P_2(u,v)P_4(u,v) + \sum_{v\in\mathcal{N}_2(u)} P_2(u,v)P_4(u,v)
    \end{align*}
    and 
    \begin{align*}
        \sum_{v\in\mathcal{N}_1(u)}(\#(\text{b})+\#(\text{d})) + \sum_{v\in\mathcal{N}_2(u)}(\#(\text{b})+\#(\text{d}))
    \end{align*}
    can be assigned by a specific 2-DRFWL(2) test $q$ to $W^{(T)}(u,u)$ for some $T\geqslant 5$ and for all nodes $u$. To prove the lemma it now suffices to show that
    \begin{align*}
        \sum_{v\in\mathcal{N}_1(u)}\#(\text{c}) + \sum_{v\in\mathcal{N}_2(u)}\#(\text{c})
    \end{align*}
    can be assigned by a 2-DRFWL(2) test to $W^{(T)}(u,u)$ for some $T\geqslant 5$ and for all nodes $u$. We calculate $\#(\text{c})$ as following,
    \begin{align}
       \notag  \#(\text{c}) = \sum_{x\in\mathcal{N}_1(u)\cap\mathcal{N}_1(v)} P_2(u, x)P_2(x, v) &- \#(\text{c}, u=z) - \#(\text{c}, x=v) - \#(\text{c}, x=z)\\
       \label{calculate_c_count}&- \#(\text{c}, u=z, x=v).
    \end{align}
    Here $\#(\text{c}, u=z)$ is the count of substructure obtained by coalescing nodes $u,z$ in Figure \ref{proof_6_cycle_c}, while keeping $x\ne v$. The counts $\#(\text{c}, x=v)$, $\#(\text{c}, x=z)$ and $\#(\text{c}, u=z, x=v)$ can be understood analogously, but one needs to notice that when counting $\#(\text{c}, x=v)$ we keep $u\ne z$. We have
    \begin{align}
        \#(\text{c}, u=z) &= 1_{d(u,v)=1}\sum_{z\in\mathcal{N}_1(u)\cap\mathcal{N}_1(v)} (P_2(u,z)-1),\\
        \#(\text{c}, x=v) &= 1_{d(u,v)=1}CC_1(v,u),\\
        \#(\text{c}, u=z, x=v)&= 1_{d(u,v)=1}P_2(u,v),
    \end{align}
    and
    \begin{align}\label{no_way_to_pair_level_count}
    \sum_{v\in\mathcal{N}_1(u)}\#(\text{c}, x=z) + \sum_{v\in\mathcal{N}_2(u)}\#(\text{c}, x=z) = CC_1(u).
    \end{align}
    By Lemmas \ref{can_pair_level_count_2_path_lemma}, \ref{can_node_level_count_chordal_cycle_lemma} and \ref{can_node_level_count_triangle_rectangle_lemma}, after summing over all nodes $v\in\mathcal{N}_1(u)\cup\mathcal{N}_2(u)$, each of the terms in \eqref{calculate_c_count} can be assigned by a 2-DRFWL(2) test to $W^{(T)}(u,u)$, for some $T\geqslant 5$ and for all nodes $u$. Therefore, we have finally reached the end of the proof.
\end{proof}

We are now in a position to give the proofs for Theorems \ref{paths_count}, \ref{cycles_count}, \ref{others_count} and \ref{cant_count}. 

We restate Theorem \ref{paths_count} as following.

\begin{theorem}
    2-DRFWL(2) GNNs can node-level count 2, 3, 4-paths. 
\end{theorem}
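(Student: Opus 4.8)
The plan is to invoke Lemma \ref{drfwl2_count_vs_gnn_count}: it suffices to exhibit, for each $\ell\in\{2,3,4\}$, a 2-DRFWL(2) test $q$ that after finitely many iterations assigns $W^{(T)}(u,u)=P_\ell(u)$ for every node $u$, where $P_\ell(u)$ is the number of $\ell$-paths starting at $u$. The pair-level ingredients are already available: Lemmas \ref{can_pair_level_count_2_path_lemma}, \ref{can_pair_level_count_3_path_lemma} and \ref{can_pair_level_count_4_path_lemma} produce $P_2(u,v)$, $P_3(u,v)$, $P_4(u,v)$ on every tuple with $d(u,v)\in\{1,2\}$, and Lemmas \ref{can_node_level_label_degree_lemma} and \ref{can_label_deg_lemma} make $\mathrm{deg}(u)$ and $\mathrm{deg}(u)+\mathrm{deg}(v)$ available on distance-$0$ and distance-$\{1,2\}$ tuples. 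The remaining work is purely the pair-to-node reduction, and the only genuine difficulty is that the \emph{endpoint} of an $\ell$-path from $u$ can sit at distance up to $\ell$, i.e. beyond the distance-$2$ horizon that 2-DRFWL(2) stores.

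For $\ell=2$ this difficulty does not arise: every endpoint $v$ of a $2$-path from $u$ satisfies $d(u,v)\in\{1,2\}$, so aggregating the distance-$0$ tuple $(u,u)$ over its $M_{11}^{0}$ and $M_{22}^{0}$ terms realizes
\begin{align*}
    P_2(u) = \sum_{v\in\mathcal{N}_1(u)\cup\mathcal{N}_2(u)} P_2(u,v),
\end{align*}
once $P_2(u,v)$ is stored on each distance-$\{1,2\}$ tuple by Lemma \ref{can_pair_level_count_2_path_lemma}. To overcome the horizon problem for longer paths, I would count by a \emph{central pivot} guaranteed to lie within distance $2$ of $u$, rather than by the endpoint. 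For a $3$-path $u\to x\to y\to v$ the third node $y$ satisfies $d(u,y)\in\{1,2\}$; fixing $y$, the path is a $2$-path $u\to x\to y$ (there are $P_2(u,y)$ of these) extended by any $v\in\mathcal{N}(y)\setminus\{x,u\}$, giving the closed form
\begin{align*}
    P_3(u) = \sum_{y\in\mathcal{N}_1(u)\cup\mathcal{N}_2(u)} P_2(u,y)\bigl(\mathrm{deg}(y)-1-1_{d(u,y)=1}\bigr).
\end{align*}
Every factor here ($P_2(u,y)$, $\mathrm{deg}(y)$, and the indicator $1_{d(u,y)=1}$, which the test reads off the distance label of $(u,y)$) is supported on the tuple $(u,y)$; forming the summand there and aggregating over $y$ at $(u,u)$ yields $P_3(u)$, and crucially the far endpoint $v$ is never addressed directly.

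For $\ell=4$ I would again pivot at the center node $y$ (the third of the five nodes $u,x,y,z,v$), which still satisfies $d(u,y)\in\{1,2\}$; the path then splits into a left $2$-path $u\to x\to y$ and a right $2$-path $y\to z\to v$ meeting only at $y$. The leading term is the product $P_2(u,y)\cdot P_2(y)$ of the pair-level count to $y$ and the node-level count $P_2(y)$ (obtainable on $(y,y)$ by the $\ell=2$ case and then broadcast onto $(u,y)$ exactly as degree is broadcast in Lemma \ref{can_label_deg_lemma}), from which one must subtract, by inclusion--exclusion, the configurations violating $z\notin\{u,x\}$ and $v\notin\{u,x\}$. \textbf{This bookkeeping is the main obstacle:} each forbidden coincidence ($z=u$, $z=x$, $v=u$, $v=x$, and their overlaps) must be rewritten as a quantity supported on nodes within distance $2$ of $u$ and $y$ --- e.g. common-neighbor counts $|\mathcal{N}_1(u)\cap\mathcal{N}_1(y)|$, triangle- and chordal-type corrections, and terms gated by the indicator $1_{d(u,y)=1}$ --- so that each is computable by a single 2-DRFWL(2) layer through the already established lemmas. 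Once every correction is shown to be local in this sense, summing the resulting pair-level quantity over $y\in\mathcal{N}_1(u)\cup\mathcal{N}_2(u)$ at $(u,u)$ produces $P_4(u)$, and Lemma \ref{drfwl2_count_vs_gnn_count} closes all three cases. I expect the most delicate point to be verifying that none of these corrections secretly requires a distance-$3$ tuple.
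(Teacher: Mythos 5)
Your proposal is correct in its overall strategy but takes a genuinely different route from the paper's for $\ell=3,4$ (the $\ell=2$ case and the appeal to Lemma \ref{drfwl2_count_vs_gnn_count} are identical). The paper also decomposes by the \emph{far endpoint} $v$, and resolves the endpoints beyond the distance-$2$ horizon with a walk-counting identity: for $\ell\leqslant 4$ and $d(u,v)\geqslant 3$ every $\ell$-walk from $u$ to $v$ is automatically an $\ell$-path, so $\sum_{v:\,d(u,v)\geqslant 3}P_\ell(u,v)$ equals the total walk count $W_\ell(u)$ (computable by the recursion of Lemma \ref{can_node_level_count_walk_lemma}) minus the closed-walk term $W_\ell(u,u)$ and the walk counts to nodes at distance $1$ and $2$ (Lemmas \ref{can_pair_level_count_3_path_lemma} and \ref{can_pair_level_count_4_walk_lemma}), with $W_3(u,u)=2C_3(u)$ and $W_4(u,u)=2C_4(u)+\mathrm{deg}(u)^2+P_2(u)$. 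You instead pivot at an interior node $y$ with $d(u,y)\leqslant 2$ and count extensions, paying with inclusion--exclusion. Your $\ell=3$ formula is correct (it only needs $\mathrm{deg}(y)$ alone, rather than $\mathrm{deg}(u)+\mathrm{deg}(y)$, to be stored on $(u,y)$ --- a trivial strengthening of Lemma \ref{can_label_deg_lemma} via $M^{k}_{k0}(u,y)=W(y,y)$). For $\ell=4$ you leave the corrections unverified, but they do all turn out to be local: writing $A,B,C,D$ for the events $z=u$, $z=x$, $v=u$, $v=x$, one gets $|A|=1_{d(u,y)=1}P_2(u,y)(\mathrm{deg}(u)-1)$, $|B|=\sum_{x\in\mathcal{N}_1(u)\cap\mathcal{N}_1(y)}(\mathrm{deg}(x)-1)$, $|C|=P_2(u,y)^2$, $|D|=\sum_{x\in\mathcal{N}_1(u)\cap\mathcal{N}_1(y)}P_2(x,y)$, and the only nonempty overlaps $|A\cap D|=1_{d(u,y)=1}P_2(u,y)$ and $|B\cap C|=P_2(u,y)$; each term is supported on $(u,y)$ or obtained by one aggregation over $\mathcal{N}_1(u)\cap\mathcal{N}_1(y)$, hence within one 2-DRFWL(2) iteration, so your plan does close (one can check it on a $4$-cycle and on a $5$-node path). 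What the paper's route buys is that it avoids this coincidence bookkeeping entirely and produces intermediate quantities ($W_k(u)$, $W_4(u,v)$, $P_4(u,v)$ on distance-$\{1,2\}$ tuples) that are reused verbatim in the $5$- and $6$-cycle proofs; your pivot argument would have to be redone per pattern. What your route buys is that it never needs the node-level walk recursion or the closed-walk identities.
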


\begin{proof}
    By Lemma \ref{drfwl2_count_vs_gnn_count}, it suffices to prove that for each $k=2,3,4$, there exists a 2-DRFWL(2) test $q$ such that for any graph $G\in\mathcal{G}$ and for any node $u\in\mathcal{V}_G$, $q$ assigns
    \begin{align}
        W^{(T)}(u,u) = P_k(u),
    \end{align}
    for some integer $T$.

    In the following proofs, we no longer care the exact value of $T$; instead, we are only interested in whether a graph property \emph{can be calculated} by a 2-DRFWL(2) test, i.e. there exists a finite integer $T$ and a subset of 2-DRFWL(2) colors (for example, $W^{(T)}(u,u), u\in\mathcal{V}_G$ for node-level properties, or $W^{(T)}(u,v), d(u,v)\leqslant 2$ for pair-level properties) that the subset of 2-DRFWL(2) colors gives exactly the values of the graph property, at the $T$-th iteration.

    For $k=2$, 
    \begin{align}\label{node_level_count_2_path}
        P_2(u) = \sum_{v\in\mathcal{N}_1(u)} P_2(u,v)+\sum_{v\in\mathcal{N}_2(u)} P_2(u,v).
    \end{align}
    But by Lemma \ref{can_pair_level_count_2_path_lemma}, there exists a 2-DRFWL(2) test $q$ and an integer $T$ such that \eqref{pair_level_count_2_path} holds for all pairs $(u,v)$ with $d(u,v)=1$ or $d(u,v)=2$. Therefore, at the $(T+1)$-th iteration, $q$ can assign $W^{(T+1)}(u,u) = P_2(u)$ using \eqref{node_level_count_2_path}.

    For $k=3$,
    \begin{align}\label{node_level_count_3_path}
        P_3(u) = \sum_{v\in\mathcal{N}_1(u)} P_3(u,v)+\sum_{v\in\mathcal{N}_2(u)} P_3(u,v)+\sum_{v\in\mathcal{N}_3(u)} P_3(u,v).
    \end{align}
    Using Lemma \ref{can_pair_level_count_3_path_lemma} and a similar argument to above, the first two terms of \eqref{node_level_count_3_path} can be calculated by a 2-DRFWL(2) test. For the last term of \eqref{node_level_count_3_path}, we notice that
    \begin{align}
        \notag \sum_{v\in\mathcal{N}_3(u)} P_3(u,v) &= \sum_{v\in\mathcal{N}_3(u)} W_3(u,v) \\
        \label{proof_node_level_3_path_count}&= W_3(u) - W_3(u,u) - \sum_{v\in\mathcal{N}_1(u)} W_3(u,v) - \sum_{v\in\mathcal{N}_2(u)} W_3(u,v).
    \end{align}
    By Lemma \ref{can_node_level_count_walk_lemma}, $W_3(u)$ can be calculated by a 2-DRFWL(2) test. From the proof of Lemma \ref{can_pair_level_count_3_path_lemma}, one can see that $W_3(u,v)$ can be calculated by a 2-DRFWL(2) test, for $d(u,v)=1$ or $d(u,v)=2$. Therefore, the last two terms of \eqref{proof_node_level_3_path_count} can also be calculated by a 2-DRFWL(2) test. Since $W_3(u,u)=2C_3(u)$, it can be seen from Lemma \ref{can_node_level_count_3_cycle_lemma} that this term can also be calculated by a 2-DRFWL(2) test. Therefore, we conclude that $\sum_{v\in\mathcal{N}_3(u)}P_3(u,v)$, thus $P_3(u)$, can be calculated by a 2-DRFWL(2) test.

    For $k=4$, 
    \begin{align}\label{node_level_count_4_path}
        P_4(u) = \sum_{v\in\mathcal{N}_1(u)}P_4(u,v)+\sum_{v\in\mathcal{N}_2(u)}P_4(u,v)+\sum_{v\in\mathcal{N}_3(u)}P_4(u,v)+\sum_{v\in\mathcal{N}_4(u)}P_4(u,v).
    \end{align}
    Using Lemma \ref{can_pair_level_count_4_path_lemma} and a similar argument to above, the first two terms of \eqref{node_level_count_4_path} can be calculated by a 2-DRFWL(2) test. Like in the $k=3$ case, we treat the last two terms of \eqref{node_level_count_4_path} as following,
    \begin{align}
        \notag \sum_{v\in\mathcal{N}_3(u)}P_4(u,v)+\sum_{v\in\mathcal{N}_4(u)}P_4(u,v)&=\sum_{v\in\mathcal{N}_3(u)}W_4(u,v)+\sum_{v\in\mathcal{N}_4(u)}W_4(u,v)\\
        \notag &=W_4(u) - W_4(u,u)\\
        &\quad - \sum_{v\in\mathcal{N}_1(u)}W_4(u,v)-\sum_{v\in\mathcal{N}_2(u)}W_4(u,v).
    \end{align}
    By Lemmas \ref{can_pair_level_count_4_walk_lemma} and \ref{can_node_level_count_walk_lemma}, the terms $W_4(u)$, $\sum_{v\in\mathcal{N}_1(u)}W_4(u,v)$ and $\sum_{v\in\mathcal{N}_2(u)}W_4(u,v)$ can be calculated by 2-DRFWL(2) tests. Moreover, it is easy to verify that 
    \begin{align}
        W_4(u,u) = 2C_4(u) + (\mathrm{deg}(u))^2 + P_2(u).
    \end{align}
    Therefore, by Lemmas \ref{can_pair_level_count_2_path_lemma}, \ref{can_node_level_label_degree_lemma} and \ref{can_node_level_count_4_cycle_lemma}, $W_4(u,u)$ can also be calculated by a 2-DRFWL(2) test. Summarizing the results above, we have proved that $P_4(u)$ can be calculated by a 2-DRFWL(2) test.
\end{proof}

We restate Theorem \ref{cycles_count} as following.

\begin{theorem}
    2-DRFWL(2) GNNs can node-level count 3, 4, 5, 6-cycles.
\end{theorem}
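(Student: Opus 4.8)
The plan is to invoke Lemma~\ref{drfwl2_count_vs_gnn_count}, which reduces the claim about 2-DRFWL(2) GNNs to a claim about 2-DRFWL(2) \emph{tests}: it suffices to exhibit, for each $k\in\{3,4,5,6\}$, a 2-DRFWL(2) test $q$ whose stable coloring of diagonal tuples satisfies $W^{(T)}(u,u)=C_k(u)$ for all $u\in\mathcal{V}_G$. Once such tests exist, the lemma automatically upgrades them to GNNs that node-level count the corresponding cycles, so the entire burden falls on the test side.

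The central idea for constructing these tests is to route all cycle counting through \emph{pair-wise} quantities and then aggregate to node level. Concretely, I would express each node-level cycle count $C_k(u)$ as a sum over nearby nodes $v$ (those with $d(u,v)\le 2$) of a pair-wise cycle count $C_{i,j}(u,v)$ with $i+j=k$, dividing by a symmetry factor and taking care that the two arcs of the cycle do not share interior vertices. For $k=3,4,5$ the natural decomposition uses a $1$-path along the edge $\{u,v\}$ and an $(k-1)$-path on the complementary arc, so the relevant pair-wise ingredient is $C_{1,k-1}(u,v)$; for $k=6$ the decomposition that stays within distance $2$ is $C_{2,4}(u,v)$, splitting the $6$-cycle into a $2$-path and a $4$-path between $u$ and $v$ lying at meta-positions.

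Each pair-wise cycle count is in turn obtained from pair-level \emph{path} counts by a product-minus-correction (inclusion--exclusion) argument: $C_{2,4}(u,v)$, for instance, equals $P_2(u,v)\,P_4(u,v)$ minus the degenerate configurations in which the $2$-path and the $4$-path overlap at an interior vertex. The building blocks I would assume are the pair-level path-counting tests of Lemmas~\ref{can_pair_level_count_2_path_lemma}, \ref{can_pair_level_count_3_path_lemma} and~\ref{can_pair_level_count_4_path_lemma}, together with the degree and $3$-cycle tests. The hard part is the computation of $P_4(u,v)$ for a distance-$2$ tuple $(u,v)$: this is where the restriction to distance $\le 2$ must be shown to be harmless. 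The trick is that a $4$-walk decomposes as $P_{2,2}(u,v)=\sum_{y}P_2(u,y)P_2(y,v)$, a concatenation of two $2$-paths through an intermediate node $y$ that is itself at distance $\le 2$ from both $u$ and $v$; hence all of $\mathcal{N}_i(u)\cap\mathcal{N}_j(v)$ with $i,j\le 2$ is reachable by the 2-DRFWL(2) update rule, and the spurious (non-path) walks can be removed using the already-available degree, $P_2$, and $C_3$ labels. This is precisely the ``$6=2+2+2$'' message passing emphasized in Section~\ref{sect_count}, and it is the one place where the argument would break if distance-$2$ tuples were forbidden to aggregate from two other distance-$2$ tuples.

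Granting the chain above, the four node-level cycle tests are exactly the content of Lemmas~\ref{can_node_level_count_3_cycle_lemma}--\ref{can_node_level_count_6_cycle_lemma}. Combining each of them with Lemma~\ref{drfwl2_count_vs_gnn_count} then yields that 2-DRFWL(2) GNNs node-level count $3,4,5,6$-cycles, completing the proof.
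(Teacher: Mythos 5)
Your proposal follows essentially the same route as the paper: it reduces the claim to 2-DRFWL(2) tests via Lemma~\ref{drfwl2_count_vs_gnn_count}, decomposes each $C_k(u)$ into pair-wise counts $C_{1,k-1}(u,v)$ for $k=3,4,5$ and $C_{2,4}(u,v)$ for $k=6$, and recovers these by inclusion--exclusion from the pair-level path counts of Lemmas~\ref{can_pair_level_count_2_path_lemma}--\ref{can_pair_level_count_4_path_lemma}, which is exactly the content of Lemmas~\ref{can_node_level_count_3_cycle_lemma}--\ref{can_node_level_count_6_cycle_lemma}. The argument is correct and matches the paper's proof in both structure and key technical ideas, including the role of distance-2-to-distance-2 aggregation for the 6-cycle case.
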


\begin{proof}
    By Lemma \ref{drfwl2_count_vs_gnn_count}, it suffices to prove that for each $k=3,4,5,6$, $C_k(u)$ can be calculated by a 2-DRFWL(2) test. From Lemmas \ref{can_node_level_count_3_cycle_lemma}, \ref{can_node_level_count_4_cycle_lemma}, \ref{can_node_level_count_5_cycle_lemma} and \ref{can_node_level_count_6_cycle_lemma}, the above statement is true.
\end{proof}

We restate Theorem \ref{others_count} as following.

\begin{theorem}
    2-DRFWL(2) GNNs can node-level count tailed triangles, chordal cycles and triangle-rectangles. 
\end{theorem}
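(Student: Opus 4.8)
The plan is to invoke Lemma \ref{drfwl2_count_vs_gnn_count}, which reduces the theorem to exhibiting, for each of the three substructures $S$, a 2-DRFWL(2) test $q$ that assigns $W^{(T)}(u,u)=C(S,u,G)$ to every node $u$. Matching the node position marked in the figures of Definition \ref{def-other-count} against the position-specific counts introduced at the start of Appendix \ref{proof_sect_count}, the node-level count of a chordal cycle (with $u$ at the non-chord vertex of Figure \ref{chordal-cycle-fig}) is exactly $CC_1(u)$, and the node-level count of a triangle-rectangle (with $u$ at the triangle apex of Figure \ref{triangle-rectangle-fig}) is exactly $TR_1(u)$. These two cases are therefore already settled by Lemmas \ref{can_node_level_count_chordal_cycle_lemma} and \ref{can_node_level_count_triangle_rectangle_lemma}, so the only genuinely new work is the tailed triangle.

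For the tailed triangle I would first express its node-level count in terms of quantities already known to be computable. With $u$ the pendant vertex (Figure \ref{tailed-triangle-fig}), every such subgraph is determined by an apex $w\in\mathcal{N}_1(u)$ together with a triangle through $w$ that avoids $u$. The triangles through $w$ that \emph{do} contain $u$ are precisely the triangles on the edge $\{u,w\}$, whose number equals the number of common neighbors $P_2(u,w)$, so
\begin{align}
    C(\text{tailed triangle}, u, G) = \sum_{w\in\mathcal{N}_1(u)}\bigl(C_3(w) - P_2(u,w)\bigr).
\end{align}
Both ingredients are computable by 2-DRFWL(2) tests: $C_3(w)$ by Lemma \ref{can_node_level_count_3_cycle_lemma} and $P_2(u,w)$ by Lemma \ref{can_pair_level_count_2_path_lemma}.

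It remains to realize the outer aggregation $\sum_{w\in\mathcal{N}_1(u)}(\cdots)$ within a constant number of 2-DRFWL(2) iterations, which I expect to be the only delicate point. The standard maneuver is: once $C_3(w)$ sits on the diagonal tuple $(w,w)$, one iteration copies it onto each distance-$1$ tuple $(u,w)$ through the singleton aggregation over $\mathcal{N}_1(u)\cap\mathcal{N}_0(w)=\{w\}$, and there one subtracts the locally available color $P_2(u,w)$; a further iteration then sums the resulting distance-$1$ colors over $w\in\mathcal{N}_1(u)$ via the $M_{11}^{0}$ aggregation on $(u,u)$. This produces a test with $W^{(T)}(u,u)=C(\text{tailed triangle},u,G)$, after which Lemma \ref{drfwl2_count_vs_gnn_count} completes the argument. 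The subtlety here is purely bookkeeping—making sure each intermediate color carries enough information to be read back during the transfer-and-aggregate step—rather than any counting identity, which is why I regard this as the main (though mild) obstacle.
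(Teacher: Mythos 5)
Your proposal is correct and follows essentially the same route as the paper: it reduces the chordal-cycle and triangle-rectangle cases to Lemmas \ref{can_node_level_count_chordal_cycle_lemma} and \ref{can_node_level_count_triangle_rectangle_lemma} (identifying the marked positions with $CC_1(u)$ and $TR_1(u)$), and handles tailed triangles via the identity $C(\text{tailed triangle},u,G)=\sum_{w\in\mathcal{N}_1(u)}(C_3(w)-P_2(u,w))$, which is exactly the paper's formula, implemented with the same copy-then-aggregate maneuver through $M^{1}_{10}$ and $M^{0}_{11}$. No gaps.
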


\begin{proof}
    By Lemmas \ref{can_node_level_count_chordal_cycle_lemma} and \ref{can_node_level_count_triangle_rectangle_lemma}, and using a similar argument to above, it is easy to prove that 2-DRFWL(2) GNNs can node-level count chordal cycles and triangle-rectangles. To see why 2-DRFWL(2) GNNs can node-level count tailed triangles, we only need to give a 2-DRFWL(2) test $q$ that assigns 
    \begin{align}\label{can_count_tailed_triangle_test}
        W^{(T)}(u,u) = C(\text{tailed triangle}, u, G),\quad \forall u\in\mathcal{V}_G,
    \end{align}
    for some integer $T$. Actually, we have
    \begin{align}
        C(\text{tailed triangle}, u, G) = \sum_{v\in\mathcal{N}_1(u)}(C_3(v)-P_2(u,v)),
    \end{align}
    or in a symmetric form,
    \begin{align}
        C(\text{tailed triangle}, u, G) = \sum_{v\in\mathcal{N}_1(u)}(C_3(u) + C_3(v)-P_2(u,v)) - C_3(u)\mathrm{deg}(u).
    \end{align}
    Using this formula, along with Lemmas \ref{can_pair_level_count_2_path_lemma}, \ref{can_node_level_label_degree_lemma} and \ref{can_node_level_count_3_cycle_lemma}, it is straightforward to construct the 2-DRFWL(2) test $q$ that satisfies \eqref{can_count_tailed_triangle_test}, as long as $T\geqslant 3$. Therefore, 2-DRFWL(2) GNNs can also node-level count tailed triangles.
\end{proof}

We restate Theorem \ref{cant_count} as following.

\begin{theorem}
    2-DRFWL(2) GNNs cannot graph-level count more than 7-cycles or more than 4-cliques.
\end{theorem}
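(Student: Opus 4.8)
The plan is to prove both impossibility results by exhibiting, for each forbidden substructure $S$, a pair of graphs $G_1,G_2$ with $C(S,G_1)\ne C(S,G_2)$ that the 2-DRFWL(2) test cannot tell apart. Because the representation power of any 2-DRFWL(2) GNN is upper-bounded by the 2-DRFWL(2) test (the first part of Proposition \ref{drfwl2_vs_drfwl2gnn}: $f(G)\ne f(H)$ implies the test assigns different colors), it suffices to show that the \emph{test} gives $G_1$ and $G_2$ identical graph-level colors; then no 2-DRFWL(2) GNN can separate them, so the family cannot graph-level count $S$. I would split the argument into the long-cycle case and the clique case, since they call for different counterexamples.

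For $k$-cycles with $k\geqslant 7$, I would reuse the separation construction from the proof of Theorem \ref{power_and_separation_of_drfwl2}. The remark closing Appendix \ref{subsection_proof_power_and_separation} shows that, for $d=2$, the test cannot distinguish the disjoint union of two $k$-cycles from a single $2k$-cycle whenever $k\geqslant 3d+1=7$. Taking $G_1$ to be two disjoint $k$-cycles and $G_2$ a single $2k$-cycle, the only cycle subgraph of $G_2$ is itself (of length $2k\ne k$), so $C(C_k,G_2)=0$ while $C(C_k,G_1)=2$. Hence $G_1,G_2$ witness that 2-DRFWL(2) cannot graph-level count $k$-cycles, and since this works for every $k\geqslant 7$ the first half of the theorem follows.

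For $k$-cliques with $k\geqslant 4$ I would instead exploit that the 2-DRFWL(2) test is at most as powerful as FWL(2) (Theorem \ref{power_and_separation_of_drfwl2}): whenever FWL(2) fails to distinguish a pair of graphs, so does 2-DRFWL(2). It therefore suffices to produce two graphs that \emph{FWL(2)} cannot separate but whose $k$-clique counts differ. This is where the known weakness of FWL(2) enters: writing $\mathrm{sub}(K_k,G)$ for the number of $k$-cliques, every vertex identification in $K_k$ creates a loop, so for simple $G$ one has $\mathrm{hom}(K_k,G)=k!\,\mathrm{sub}(K_k,G)$; since $K_k$ has treewidth $k-1\geqslant 3$ for $k\geqslant 4$, the homomorphism-count characterization of FWL(2) (equivalently $3$-WL) indistinguishability yields a pair $G_1,G_2$ that agree on all homomorphism counts from graphs of treewidth $\leqslant 2$---hence are FWL(2)-indistinguishable---yet differ on $\mathrm{hom}(K_k,\cdot)$, and therefore on $\mathrm{sub}(K_k,\cdot)$. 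Inheriting this pair down to 2-DRFWL(2) completes the clique case.

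The main obstacle I anticipate is the clique case, specifically pinning down a concrete, verifiable pair of FWL(2)-indistinguishable graphs with different $k$-clique counts: the treewidth argument gives existence, but one typically wants an explicit CFI-type construction and must confirm the reduction direction ``$G_1,G_2$ FWL(2)-indistinguishable $\Rightarrow$ 2-DRFWL(2)-indistinguishable'' is applied correctly. I would also flag that the FWL(2)-inheritance shortcut is \emph{unavailable} for the boundary case $k=7$ of the cycle claim, because FWL(2) \emph{can} count $7$-cycles; there the large-diameter construction of the second paragraph is essential, which is precisely why I treat the two cases by separate counterexamples rather than a single inheritance argument.
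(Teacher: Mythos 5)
Your cycle argument is exactly the paper's: the paper's proof invokes the remark at the end of Appendix \ref{subsection_proof_power_and_separation} (no $d$-DRFWL(2) test separates two $k$-cycles from one $2k$-cycle when $k\geqslant 3d+1$, hence $k\geqslant 7$ for $d=2$) and you correctly note that the FWL(2)-inheritance shortcut is unavailable there. For cliques the overall reduction is the same --- produce an FWL(2)-indistinguishable pair with different $k$-clique counts and push it down to 2-DRFWL(2) via Theorem \ref{power_and_separation_of_drfwl2} --- but your justification for the existence of such a pair differs: the paper simply cites Theorem 4.2 of \citep{yan2023efficiently}, which directly supplies, for every $k\geqslant 2$, FWL($k-1$)-indistinguishable graphs with different $(k+1)$-clique counts, whereas you derive existence from the homomorphism-count characterization of FWL(2) together with $\mathrm{tw}(K_k)=k-1\geqslant 3$ and the identity $\mathrm{hom}(K_k,G)=k!\,\mathrm{sub}(K_k,G)$. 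The one soft spot in your route is that the basic characterization (agreement on homomorphism counts from treewidth-$\leqslant 2$ graphs is equivalent to FWL(2)-equivalence) does not by itself produce a pair differing on $\mathrm{hom}(K_k,\cdot)$; you additionally need the converse invariance statement that for every $F$ of treewidth $>2$ the count $\mathrm{hom}(F,\cdot)$ fails to be FWL(2)-invariant, which is a separate and harder homomorphism-distinguishing-closedness result. For cliques this is nonetheless classical and can be made fully explicit (e.g., for $k=4$ the $4\times 4$ rook's graph and the Shrikhande graph are both SRG$(16,6,2,2)$, hence FWL(2)-equivalent, yet have $8$ and $0$ copies of $K_4$ respectively), so your proposal is sound once that step is backed by an explicit construction or a citation, which is precisely what the paper's reference to \citep{yan2023efficiently} accomplishes in one stroke.
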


\begin{proof}
    The theorem is a direct corollary of Theorem \ref{power_and_separation_of_drfwl2}. In the remark at the end of Appendix \ref{subsection_proof_power_and_separation}, we have shown that no $d$-DRFWL(2) test can distinguish between two $k$-cycles and a single $2k$-cycle, as long as $k\geqslant 3d+1$. For $d=2$, we assert that no 2-DRFWL(2) test (thus no 2-DRFWL(2) GNN) can distinguish between two $k$-cycles and a $2k$-cycle, as long as $k\geqslant 7$. Therefore, 2-DRFWL(2) GNNs cannot graph-level count more than 7-cycles.

    Moreover, Theorem 4.2 of \citep{yan2023efficiently} states that for any $k\geqslant 2$, there exists a pair of graphs with different numbers of $(k+1)$-cliques that FWL($k-1$) fails to distinguish between. Therefore, we assert that FWL(2) cannot graph-level count more than 4-cliques. By Theorem \ref{power_and_separation_of_drfwl2}, the 2-DRFWL(2) tests (thus 2-DRFWL(2) GNNs) are strictly less powerful than FWL(2) in terms of the ability to distinguish between non-isomorphic graphs. Therefore, 2-DRFWL(2) GNNs cannot graph-level count more than 4-cliques.
\end{proof}

\subsection{Proof of Theorem \ref{geq_3_count}}

Before proving the theorem, we state and prove a series of lemmas.

\begin{lemma}\label{geq3-3-path}
There exists a 3-DRFWL(2) test $q$ such that for any graph $G\in\mathcal{G}$ and for any 2-tuple $(u,v)\in\mathcal{V}_G^2$ with $1\leqslant d(u,v)\leqslant 3$, $q$ assigns
\begin{align}
    W^{(T)}(u,v)=P_3(u,v),
\end{align}
for some integer $T$.
\end{lemma}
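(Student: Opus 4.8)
The plan is to reduce the $d(u,v)=3$ case to a single message-passing term and to inherit the $d(u,v)\in\{1,2\}$ cases directly from the 2-DRFWL(2) construction of Lemma \ref{can_pair_level_count_3_path_lemma}. Every aggregation rule available to a 2-DRFWL(2) test only ever pools over $w\in\mathcal{N}_i(u)\cap\mathcal{N}_j(v)$ with $0\leqslant i,j\leqslant 2$, and all such rules are also available to a 3-DRFWL(2) test; hence the test built in Lemma \ref{can_pair_level_count_3_path_lemma} can be replayed verbatim inside 3-DRFWL(2) (simply ignoring, or noting the vanishing of, any contribution from distance-3 tuples) to obtain $W^{(T)}(u,v)=P_3(u,v)$ for all tuples with $d(u,v)=1$ or $d(u,v)=2$. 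Thus the only genuinely new work is the case $d(u,v)=3$.

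First I would record the key structural fact: when $d(u,v)=3$, every 3-walk $u\rightarrow z\rightarrow w\rightarrow v$ is automatically a 3-path. Any coincidence among the four nodes not already excluded by adjacency (namely $u=w$ or $z=v$) would force $d(u,v)=1$, a contradiction; hence $P_3(u,v)=W_3(u,v)$ whenever $d(u,v)=3$. This removes the backtracking corrections that complicated the $d(u,v)=1$ case, so it suffices to compute the 3-walk count $W_3(u,v)$.

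Next I would compute $W_3(u,v)$ by splitting each 3-walk at its penultimate node $w\in\mathcal{N}_1(v)$, giving $W_3(u,v)=\sum_{w\in\mathcal{N}_1(v)}W_2(u,w)$. The triangle inequality forces $d(u,w)\geqslant d(u,v)-d(w,v)=2$ for such $w$, while $W_2(u,w)\neq 0$ requires $d(u,w)\leqslant 2$; hence only nodes $w$ with $d(u,w)=2$ contribute, and for these $W_2(u,w)=P_2(u,w)$. Therefore $P_3(u,v)=W_3(u,v)=\sum_{w\in\mathcal{N}_2(u)\cap\mathcal{N}_1(v)}P_2(u,w)$, which is exactly the form of the single aggregation term $M_{21}^{3(t)}(u,v)$. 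Assuming, via the 3-DRFWL(2) analogue of Lemma \ref{can_pair_level_count_2_path_lemma} (whose construction is identical), that a previous iteration has stored $P_2(u,w)$ in the color of every tuple with distance $\leqslant 3$, a single further iteration with $\mathrm{POOL}_{21}^{3(t)}$ summing the stored $P_2$ values and $\mathrm{HASH}_3^{(t)}$ selecting this component assigns $W^{(t)}(u,v)=P_3(u,v)$ for $d(u,v)=3$. Combining all three distance cases into one test and letting it freeze afterwards yields the claim.

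The step I expect to be the main obstacle---or at least the point most in need of care---is the distance-range argument in the penultimate-node decomposition. A priori one might fear that some neighbors $w$ of $v$ lie at distance $d(u,w)=4$ from $u$, so that the tuple $(u,w)$ is not even tracked by 3-DRFWL(2) and its contribution is invisible to the message passing. The resolution, which is the crux of the lemma, is that any such $w$ contributes $W_2(u,w)=0$ because no 2-walk can bridge a distance of $4$; thus the out-of-range tuples carry no information, and the in-range term $M_{21}^{3}$ already captures all of $W_3(u,v)$. Once this is established, the remaining bookkeeping---initializing the colors so that $P_2$ is available, and checking that the reused $d\leqslant 2$ construction does not interfere---is routine.
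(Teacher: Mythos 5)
Your proposal is correct and follows essentially the same route as the paper: inherit the $d(u,v)\leqslant 2$ cases from Lemma \ref{can_pair_level_count_3_path_lemma}, observe that for $d(u,v)=3$ the backtracking corrections vanish so $P_3(u,v)=W_3(u,v)$, and express $W_3(u,v)$ via pooled $P_2$ values over in-range neighbors. The only (immaterial) difference is that you use the one-sided decomposition $\sum_{w\in\mathcal{N}_2(u)\cap\mathcal{N}_1(v)}P_2(u,w)$ while the paper takes the symmetric half-sum of the $M_{12}^{3}$ and $M_{21}^{3}$ terms; both are valid single-iteration aggregations.
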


\begin{proof}
If $1\leqslant d(u,v)\leqslant 2$, the result follows from Lemma \ref{can_pair_level_count_3_path_lemma} since 2-DRFWL(2) tests constitute a subset of 3-DRFWL(2) tests. If $d(u,v)=3$, then equations \eqref{temp1}--\eqref{temp2} still hold (with the value of $k$ changed to 3), giving rise to
\begin{align}\label{3-path-formula}
    P_3(u,v)=W_3(u,v)=\frac{1}{2}\left(\sum_{w\in\mathcal{N}_1(u)\cap\mathcal{N}_2(v)}P_2(w,v)+\sum_{w\in\mathcal{N}_2(u)\cap\mathcal{N}_1(v)}P_2(u,w)\right),\ \ d(u,v)=3.
\end{align}
The RHS of \eqref{3-path-formula} is obviously calculable by 3-DRFWL(2) tests. Therefore, the lemma holds.
\end{proof}

\begin{lemma}\label{geq3-4-path}
    There exists a 3-DRFWL(2) test $q$ such that for any graph $G\in\mathcal{G}$ and for any 2-tuple $(u,v)\in\mathcal{V}_G^2$ with $1\leqslant d(u,v)\leqslant 3$, $q$ assigns
    \begin{align}
        W^{(T)}(u,v)=P_4(u,v),
    \end{align}
    for some integer $T$.
\end{lemma}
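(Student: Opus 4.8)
The plan is to reduce to the already-established 2-DRFWL(2) computation for the small distances and to treat the genuinely new distance-$3$ case separately, where a decisive simplification occurs. For tuples $(u,v)$ with $1\leqslant d(u,v)\leqslant 2$, every 2-DRFWL(2) test is in particular a 3-DRFWL(2) test, so Lemma \ref{can_pair_level_count_4_path_lemma} already supplies a test computing $P_4(u,v)$; the only new work is therefore the case $d(u,v)=3$.

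The key observation for $d(u,v)=3$ is that every $4$-walk $u\rightarrow x\rightarrow y\rightarrow z\rightarrow v$ from $u$ to $v$ is automatically a $4$-path. Adjacency of consecutive vertices already forces $u\ne x$, $x\ne y$, $y\ne z$, $z\ne v$, so it remains only to rule out the coincidences $u=y$, $u=z$, $x=z$, $x=v$, $y=v$. But each of these would exhibit a $u$-$v$ walk of length $\leqslant 2$ (for instance $u=z$ yields the edge $u\rightarrow v$, while $u=y$ yields $u\rightarrow z\rightarrow v$), forcing $d(u,v)\leqslant 2$ and contradicting $d(u,v)=3$. Hence, with $P_{2,2}(u,v)=\sum_{w} P_2(u,w)P_2(w,v)$ as in the proof of Lemma \ref{can_pair_level_count_4_path_lemma}, we get $P_4(u,v)=W_4(u,v)=P_{2,2}(u,v)$ for $d(u,v)=3$; in particular all of the subtraction terms $\#(\text{a})$--$\#(\text{d})$ of that proof vanish at distance $3$.

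Next I would realize $P_{2,2}(u,v)$ by a single 3-DRFWL(2) aggregation. Running the construction of Lemma \ref{can_pair_level_count_2_path_lemma} inside 3-DRFWL(2), I first store $W(u,w)=P_2(u,w)$ on every tuple of distance $\leqslant 2$ (the value being $0$ on distance-$3$ tuples), carrying $d(u,w)$ alongside in the color as in the earlier lemmas. Since $P_2(u,w)P_2(w,v)\ne 0$ forces $d(u,w)\leqslant 2$ and $d(w,v)\leqslant 2$, the triangle inequality $d(u,w)+d(w,v)\geqslant 3$ restricts the contributing midpoints to $\mathcal{N}_i(u)\cap\mathcal{N}_j(v)$ with $(i,j)\in\{(1,2),(2,1),(2,2)\}$, all of which are reachable by the neighbor aggregation of 3-DRFWL(2) (note that 3-DRFWL(2), unlike 2-DRFWL(2), also stores the distance-$3$ tuple $(u,v)$ itself, which is precisely why $d=3$ is needed). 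Choosing each $\mathrm{POOL}_{ij}^{3(t)}$ to pool the product of the two stored components and summing over $(i,j)$ yields
\begin{align}
    P_4(u,v)=\sum_{(i,j)}\ \sum_{w\in\mathcal{N}_i(u)\cap\mathcal{N}_j(v)} P_2(u,w)\,P_2(w,v),\qquad d(u,v)=3,
\end{align}
which is computable in one 3-DRFWL(2) iteration. Finally I would merge the two regimes: reading $k=d(u,v)$ off the color, the $\mathrm{HASH}_k^{(t)}$ functions emit the Lemma \ref{can_pair_level_count_4_path_lemma} result when $k\in\{1,2\}$ and the displayed formula when $k=3$, after which the test stops updating. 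The main point requiring care is the case analysis of the previous paragraph: establishing that no nontrivial vertex coincidence can survive at distance $3$ is exactly what collapses $P_4$ onto the easily-aggregated $P_{2,2}$ and makes the distance-$3$ extension clean; the remaining steps are color bookkeeping across iterations, entirely analogous to the 2-DRFWL(2) lemmas.
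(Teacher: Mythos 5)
Your proposal is correct and follows essentially the same route as the paper: reduce distances $1,2$ to Lemma \ref{can_pair_level_count_4_path_lemma}, and for $d(u,v)=3$ show that $P_4(u,v)$ collapses to $P_{2,2}(u,v)=\sum_{i,j}P_{2,2}^{ij}(u,v)$, which a single 3-DRFWL(2) aggregation over midpoints $w$ with $(i,j)\in\{(1,2),(2,1),(2,2)\}$ computes. The only cosmetic difference is that the paper obtains the collapse by observing that the correction terms in its formula \eqref{formula_for_4_path} vanish at distance $3$, whereas you rederive it directly via the vertex-coincidence analysis showing every $4$-walk between nodes at distance $3$ is a $4$-path; both arguments are valid and equivalent.
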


\begin{proof}
If $1\leqslant d(u,v)\leqslant 2$, the result follows from Lemma \ref{can_pair_level_count_4_path_lemma}. If $d(u,v)=3$, then equation \eqref{formula_for_4_path} still holds, giving rise to
\begin{align}
    P_4(u,v)=P_{2,2}(u,v)=\sum_{1\leqslant i,j\leqslant 2}P_{2,2}^{ij}(u,v),\quad d(u,v)=3.
\end{align}
The definitions of $P_{2,2}(u,v)$ and $P_{2,2}^{ij}(u,v)$ are given in \eqref{temp3} and \eqref{def_for_22ij_count}, respectively. In particular, we point out that
\begin{align*}
    P_{2,2}^{ij}(u,v)=\sum_{w\in\mathcal{N}_i(u)\cap\mathcal{N}_j(v)}P_2(u,w)P_2(w,v)
\end{align*}
is calculable by 3-DRFWL(2) tests, for any $1\leqslant i,j\leqslant 2$ and for $d(u,v)=3$. Therefore, the lemma holds.
\end{proof}

\begin{lemma}\label{geq3-tailed-triangle}
    There exists a 2-DRFWL(2) test $q$ such that for any graph $G\in\mathcal{G}$ and for any 2-tuple $(u,v)\in\mathcal{V}_G^2$ with $1\leqslant d(u,v)\leqslant 2$, $q$ assigns
    \begin{align}
        W^{(T_0)}(u,v)=T(u,v),
    \end{align}
    for some integer $T_0$.
\end{lemma}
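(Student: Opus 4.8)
The plan is to derive a closed-form expression for $T(u,v)$ in terms of quantities already computable by a 2-DRFWL(2) test, and then realize that expression with one additional aggregation step. Recalling Figure \ref{T_uv-fig}, a tailed triangle contributing to $T(u,v)$ consists of an \emph{attachment node} $w$ adjacent to both $u$ and $v$ (so $w\in\mathcal{N}_1(u)\cap\mathcal{N}_1(v)$), together with a third node $x$ completing the triangle $\{w,v,x\}$, i.e.\ $x\in\mathcal{N}_1(w)\cap\mathcal{N}_1(v)$; the tail is the edge $\{u,w\}$. The distinctness requirement on $u,v,w,x$ reduces to $x\ne u$, since $w\ne u,v$ is forced by $w\in\mathcal{N}_1(u)\cap\mathcal{N}_1(v)$ and $x\ne w,v$ is forced by $x\in\mathcal{N}_1(w)\cap\mathcal{N}_1(v)$. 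For fixed $w$ the number of valid $x$ is $|\mathcal{N}_1(w)\cap\mathcal{N}_1(v)|=P_2(w,v)$, minus one exactly when $u$ itself lies in $\mathcal{N}_1(w)\cap\mathcal{N}_1(v)$. Because $u\sim w$ always holds, $u$ belongs to this set precisely when $u\sim v$, i.e.\ when $d(u,v)=1$. This yields the identity
\begin{align}
T(u,v)=\sum_{w\in\mathcal{N}_1(u)\cap\mathcal{N}_1(v)}P_2(w,v)-1_{d(u,v)=1}\,P_2(u,v),
\end{align}
valid for both $d(u,v)=1$ and $d(u,v)=2$.

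Given this formula, I would construct $q$ in two stages. First, invoke Lemma \ref{can_pair_level_count_2_path_lemma} to obtain a 2-DRFWL(2) test that stores $W(u,v)=P_2(u,v)$ in the color of every tuple with $1\leqslant d(u,v)\leqslant 2$. In one further iteration, for a tuple $(u,v)$ of distance $k\in\{1,2\}$, compute the message
\begin{align}
M_{11}^{k}(u,v)=\sum_{w\in\mathcal{N}_1(u)\cap\mathcal{N}_1(v)}P_2(w,v),
\end{align}
which is a legitimate 2-DRFWL(2) aggregation: each summoned $w$ satisfies $d(u,w)=d(w,v)=1$, so $P_2(w,v)$ is exactly the first component $W(w,v)$ of the pair $(W(w,v),W(u,w))$ delivered to $(u,v)$, and $\mathrm{POOL}_{11}^{k}$ can select and sum those first components. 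Finally, since $\mathrm{HASH}_k$ may depend on $k$, I set $W(u,v)=M_{11}^{1}(u,v)-P_2(u,v)$ for $k=1$ and $W(u,v)=M_{11}^{2}(u,v)$ for $k=2$, using the previous-iteration self-color $P_2(u,v)$ of the tuple. This reproduces the identity above and gives $W^{(T_0)}(u,v)=T(u,v)$.

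The main point to get right is the exclusion $x\ne u$ and its dependence on $d(u,v)$, which is the only place where the distance of the tuple enters the count; the derivation must confirm that no further coincidence among $u,v,w,x$ can occur, so that the single correction $-1_{d(u,v)=1}P_2(u,v)$ accounts for all spurious configurations. This is exactly the subtlety that the distance-indexed $\mathrm{HASH}_k$ functions are designed to handle. The remaining check is pure bookkeeping: every node $w$ and $x$ in the formula lies within distance $1$ of $u$ or $v$, so the whole computation stays among the distance-$\leqslant 2$ tuples that 2-DRFWL(2) maintains, and no distance-$3$ information is ever required.
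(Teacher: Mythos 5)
Your proposal is correct and takes essentially the same route as the paper: it derives the identical identity $T(u,v)=\sum_{w\in\mathcal{N}_1(u)\cap\mathcal{N}_1(v)}P_2(w,v)-1_{d(u,v)=1}P_2(u,v)$ and combines it with Lemma \ref{can_pair_level_count_2_path_lemma}. The only difference is that you spell out the justification of the identity and the one-iteration implementation, which the paper leaves implicit.
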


\begin{proof}
The lemma follows from 
\begin{align}
T(u,v)=\sum_{w\in\mathcal{N}_1(u)\cap\mathcal{N}_1(v)} P_2(w,v) - 1_{d(u,v)=1} P_2(u,v),
\end{align}
and Lemma \ref{can_pair_level_count_2_path_lemma}.
\end{proof}

\begin{lemma}\label{geq3-chordal-cycle}
    There exist two 2-DRFWL(2) tests $q_1,q_2$ such that for any graph $G\in\mathcal{G}$ and for any 2-tuple $(u,v)\in\mathcal{V}_G^2$ with $1\leqslant d(u,v)\leqslant 2$, $q_1,q_2$ assign
    \begin{align}
        W_1^{(T_1)}(u,v)=CC_1(u,v),\quad W_2^{(T_2)}(u,v)=CC_2(u,v),
    \end{align}
    respectively, for some integers $T_1$ and $T_2$.
\end{lemma}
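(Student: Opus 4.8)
The plan is to reduce both chordal-cycle counts to the pair-level 2-path count $P_2$ already furnished by Lemma \ref{can_pair_level_count_2_path_lemma}, after expressing them in closed form. First I would note that in both Figures \ref{CC_1_uv-fig} and \ref{CC_2_uv-fig} the nodes $u$ and $v$ are joined by an edge, so $CC_1(u,v)=CC_2(u,v)=0$ whenever $d(u,v)=2$; the tests can simply output $0$ for all such distance-2 tuples, and the entire substance lies in the distance-1 case.

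For $d(u,v)=1$, I would derive the formulas
\begin{align}
    CC_2(u,v)&=\binom{P_2(u,v)}{2}=\frac{1}{2}P_2(u,v)\bigl(P_2(u,v)-1\bigr),\\
    CC_1(u,v)&=\sum_{x\in\mathcal{N}_1(u)\cap\mathcal{N}_1(v)}\bigl(P_2(u,x)-1\bigr).
\end{align}
The first holds because in $CC_2$ the nodes $u$ and $v$ are the two chord endpoints, so such a chordal cycle is determined exactly by choosing an unordered pair of distinct common neighbors of $u$ and $v$ to play the two (interchangeable) cycle vertices. The second holds because in $CC_1$ the node $u$ is a chord endpoint and $v$ a cycle vertex adjacent to both chord endpoints: the remaining chord endpoint $x$ must lie in $\mathcal{N}_1(u)\cap\mathcal{N}_1(v)$, and for each such $x$ the remaining cycle vertex $w$ may be any common neighbor of $u$ and $x$ other than $v$ itself, giving $P_2(u,x)-1$ admissible choices.

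I would then build the two tests in two iterations. By Lemma \ref{can_pair_level_count_2_path_lemma} there is a test whose first iteration sets $W^{(1)}(u,v)=P_2(u,v)$ for every tuple with $d(u,v)\leqslant 2$. For $q_2$, the second iteration lets $\mathrm{HASH}_1^{(2)}$ map $P_2(u,v)$ to $\tfrac{1}{2}P_2(u,v)(P_2(u,v)-1)$; this is a function of $W^{(1)}(u,v)$ alone and hence admissible, yielding $W_2^{(2)}(u,v)=CC_2(u,v)$. For $q_1$, the second iteration employs the aggregation $M_{11}^{1(2)}(u,v)=\sum_{w\in\mathcal{N}_1(u)\cap\mathcal{N}_1(v)}\bigl(W^{(1)}(u,w)-1\bigr)$, extracting $P_2(u,w)$ from the second coordinate of the aggregated color tuple $\bigl(W^{(1)}(w,v),W^{(1)}(u,w)\bigr)$; then $\mathrm{HASH}_1^{(2)}$ outputs this value, giving $W_1^{(2)}(u,v)=CC_1(u,v)$. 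Both tests stop updating after the second iteration, so $T_1=T_2=2$.

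The test construction itself is routine and mirrors the earlier lemmas. The main point to get right is the combinatorial bookkeeping behind the $CC_1$ formula: I would verify that the four vertices $u,v,x,w$ of the chordal cycle are forced to be pairwise distinct, and that the single correction $P_2(u,x)-1$ removes exactly the coincidence $w=v$ and no other degeneracy, since $w$ is a common neighbor of $u$ and $x$ and therefore cannot coincide with $u$ or $x$. I would also confirm the slot alignment, namely that the quantity $P_2(u,w)$ needed in the aggregation indeed appears as the second component of the aggregated color tuple for the pair $(u,v)$ and is aggregated over precisely $\mathcal{N}_1(u)\cap\mathcal{N}_1(v)$, matching the index choice $i=j=1$ in $M_{11}^{1(2)}$.
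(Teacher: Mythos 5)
Your proposal is correct and follows essentially the same route as the paper: the paper likewise reduces $CC_2(u,v)$ to $\tfrac{1}{2}P_2(u,v)(P_2(u,v)-1)$ via Lemma \ref{can_pair_level_count_2_path_lemma}, and obtains $CC_1(u,v)$ from the identity $\sum_{x\in\mathcal{N}_1(u)\cap\mathcal{N}_1(v)}(P_2(u,x)-1)$ already established in the proof of Lemma \ref{can_node_level_count_chordal_cycle_lemma}. Your write-up merely makes explicit what the paper leaves implicit (the vanishing of both counts at distance $2$, the pairwise-distinctness check, and the slot alignment in $M_{11}^{1(2)}$), all of which is accurate.
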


\begin{proof} 
The existence of $q_1$ follows from the proof of Lemma \ref{can_node_level_count_chordal_cycle_lemma}. To see why 2-DRFWL(2) tests can count $CC_2(u,v)$, notice that
\begin{align}
    CC_2(u,v)=\frac{1}{2} P_2(u,v)(P_2(u,v)-1),
\end{align}
and the result follows from Lemma \ref{can_pair_level_count_2_path_lemma}.

Due to the relations \eqref{cc_1} and \eqref{cc_2}, we assert that 2-DRFWL(2) tests can also node-level count $CC_1(u)$ and $CC_2(u)$.
\end{proof}

\begin{lemma}\label{geq3-triangle-rectangle}
    There exist three 2-DRFWL(2) tests $q_1, q_2$ and $q_3$ such that for any graph $G\in\mathcal{G}$ and for any node $u\in\mathcal{V}_G$, $q_1,q_2$ and $q_3$ assign
    \begin{align}
        W_1^{(T_1)}(u,u)=TR_1(u),\quad W_2^{(T_2)}(u,u)=TR_2(u),\quad W_3^{(T_3)}(u,u)=TR_3(u),
    \end{align}
    respectively, for some integers $T_1,T_2$ and $T_3$.
\end{lemma}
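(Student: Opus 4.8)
The plan is to reduce each of the three node-level counts to an aggregation of pair-level quantities, using the relations \eqref{tr_1} together with the two displays that follow it, and then to verify that every pair-level quantity involved (or, where that fails, a suitable \emph{sum} of pair-level quantities over a neighborhood) is computable by a 2-DRFWL(2) test. The count $TR_1(u)$ requires no new work: it is exactly the content of Lemma \ref{can_node_level_count_triangle_rectangle_lemma}.

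For $TR_2(u)$ I would use $TR_2(u)=\sum_{v\in\mathcal{N}_1(u)}TR_1(v,u)$. The proof of Lemma \ref{can_node_level_count_triangle_rectangle_lemma} already exhibits a \emph{pair-level} formula for $TR_1(u,v)$ that a 2-DRFWL(2) test can evaluate; running the same construction with the two endpoints exchanged stores $TR_1(v,u)$ in the color $W(u,v)$ of every distance-1 tuple. One further iteration summing these colors over $v\in\mathcal{N}_1(u)$ then gives $W^{(T_2)}(u,u)=TR_2(u)$.

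The genuinely new case is $TR_3(u)=\sum_{v\in\mathcal{N}_1(u)\cup\mathcal{N}_2(u)}TR_2(v,u)$, which forces me to analyze the pair-level count $TR_2(v,u)$ (here $v$ is a shared corner and $u$ the opposite rectangle corner, so $d(u,v)\in\{1,2\}$). Unwinding the substructure, $TR_2(v,u)$ counts triples $(a,b,c)$ with $b,c\in\mathcal{N}_1(u)\cap\mathcal{N}_1(v)$ distinct, and $a\in\mathcal{N}_1(v)\cap\mathcal{N}_1(b)$ distinct from $c$ (and from $u$ when $d(u,v)=1$). Inclusion--exclusion gives, for $d(u,v)=2$,
\begin{align*}
TR_2(v,u)=(P_2(u,v)-1)\sum_{b\in\mathcal{N}_1(u)\cap\mathcal{N}_1(v)}P_2(v,b)\;-\;F(u,v),
\end{align*}
where $F(u,v)$ is the number of \emph{ordered} pairs of adjacent common neighbors of $u$ and $v$; for $d(u,v)=1$ there are additional $1_{d(u,v)=1}$ correction terms that are handled in the same manner. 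The first term is pair-level computable, since it aggregates the stored values $P_2(v,b)=W(v,b)$ over $b\in\mathcal{N}_1(u)\cap\mathcal{N}_1(v)$ (Lemma \ref{can_pair_level_count_2_path_lemma}) and multiplies by $P_2(u,v)-1$.

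The main obstacle is the term $F(u,v)$: detecting an edge \emph{between} two common neighbors of the fixed pair $(u,v)$ is a three-node correlation that a single 2-DRFWL(2) aggregation over one intermediate node cannot resolve, so $TR_2(v,u)$ is \emph{not} computable pair-level in isolation. The observation that rescues the argument is that I never need $TR_2(v,u)$ on its own, only its sum over $v$, and summing the offending term collapses it to a node-level quantity:
\begin{align*}
\sum_{v\in\mathcal{N}_1(u)\cup\mathcal{N}_2(u)}F(u,v)=2\,CC_1(u),
\end{align*}
because each ``diamond'' having $u$ as a degree-2 corner is counted once through its unique second corner $v$, with multiplicity $2$ coming from the two orderings of the chord endpoints, and $CC_1(u)$ is exactly the number of such diamonds. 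Since $CC_1(u)$ is node-level computable by Lemma \ref{can_node_level_count_chordal_cycle_lemma}, and the computable first term of $TR_2(v,u)$ can be stored in $W(u,v)$ and then summed over $v\in\mathcal{N}_1(u)\cup\mathcal{N}_2(u)$ (a standard distance-1-and-2 aggregation into $W(u,u)$), the full expression for $TR_3(u)$ becomes a 2-DRFWL(2)-computable quantity. Assembling the three constructions, and using the injectivity of the $\mathrm{HASH}$ and $\mathrm{POOL}$ functions so that each intermediate quantity can be recorded in a single color, yields the required tests $q_1,q_2,q_3$.
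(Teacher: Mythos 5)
Your proposal is correct and follows essentially the same route as the paper: $TR_1(u)$ and $TR_2(u)$ are reduced to the pair-level formula for $TR_1(u,v)$ via \eqref{tr_1}, and $TR_3(u)$ is obtained from the identity $\sum_{v\in\mathcal{N}_1(u)\cup\mathcal{N}_2(u)}TR_2(v,u)=\sum_{v}\left(P_2(u,v)-1\right)T(u,v)-2CC_1(u)$, which is exactly what the paper asserts (your $F(u,v)$ term summing to $2CC_1(u)$ is the paper's chordal-cycle correction). The only difference is that you spell out the inclusion--exclusion behind that identity, which the paper states without derivation.
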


\begin{proof}
From the proof of Lemma \ref{can_node_level_count_triangle_rectangle_lemma} we see that 2-DRFWL(2) tests can count $TR_1(u,v)$. By \eqref{tr_1}, 2-DRFWL(2) tests can count $TR_1(u)$ and $TR_2(u)$. We only need to prove that 2-DRFWL(2) tests can count $TR_3(u)$.

Notice that
\begin{align}
    \notag TR_3(u)&=\sum_{v\in\mathcal{N}_1(u)\cup\mathcal{N}_2(u)}TR_2(v,u)\\&=\sum_{v\in\mathcal{N}_1(u)\cup\mathcal{N}_2(u)}\left(P_2(u,v)-1\right)T(u,v)-2CC_1(u).
\end{align}
Therefore, by Lemmas \ref{can_pair_level_count_2_path_lemma}, \ref{geq3-tailed-triangle} and \ref{geq3-chordal-cycle}, the lemma is true.
\end{proof}

\begin{lemma}\label{geq3-5-cycle}
    There exists a 2-DRFWL(2) test $q$ such that for any graph $G\in\mathcal{G}$ and for any 2-tuple $(u,v)\in\mathcal{V}_G^2$ with $1\leqslant d(u,v)\leqslant 2$, $q$ assigns
    \begin{align}
        W^{(T)}(u,v)=C_{2,3}(u,v),
    \end{align}
    for some integer $T$.
\end{lemma}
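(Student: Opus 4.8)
The plan is to derive a closed-form expression for $C_{2,3}(u,v)$ in terms of pair-level quantities that earlier lemmas already guarantee to be computable by a 2-DRFWL(2) test, and then assemble them in one final iteration. The natural starting point is the product $P_2(u,v)\cdot P_3(u,v)$: since (by Figure \ref{C_23-fig}) a substructure counted by $C_{2,3}(u,v)$ is exactly a vertex-disjoint pairing of a 2-path $u\rightarrow a\rightarrow v$ with a 3-path $u\rightarrow b\rightarrow c\rightarrow v$, the product counts all such pairings but overcounts those in which the internal node $a$ of the 2-path coincides with one of the internal nodes $b,c$ of the 3-path. Thus I would write
\begin{align}
    C_{2,3}(u,v) = P_2(u,v)P_3(u,v) - \#(a=b) - \#(a=c),
\end{align}
and the first task is to evaluate the two correction terms.

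I expect these to reduce to tailed-triangle counts. When $a=b$, the four nodes $u,a,c,v$ form a triangle on $\{a,c,v\}$ with a tail $u$ attached to $a$, so a short count gives $\#(a=b)=\sum_{w\in\mathcal{N}_1(u)\cap\mathcal{N}_1(v)}P_2(w,v)-1_{d(u,v)=1}P_2(u,v)=T(u,v)$, matching the formula used in Lemma \ref{geq3-tailed-triangle}; the symmetric case $a=c$ gives $\#(a=c)=T(v,u)$. A key observation that removes any inclusion--exclusion correction is that the two internal nodes of a 3-path are always distinct, so the events $a=b$ and $a=c$ cannot occur simultaneously. This yields the clean identity $C_{2,3}(u,v)=P_2(u,v)P_3(u,v)-T(u,v)-T(v,u)$ for every $(u,v)$ with $1\leqslant d(u,v)\leqslant 2$.

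For computability, $P_2(u,v)$ and $P_3(u,v)$ are supplied by Lemmas \ref{can_pair_level_count_2_path_lemma} and \ref{can_pair_level_count_3_path_lemma}. For the tailed-triangle part I would not try to transport $T(v,u)$ across the ordered tuple; instead I would compute the symmetric sum $T(u,v)+T(v,u)$ directly, exploiting that a single $M_{11}^{k}$ aggregation over $w\in\mathcal{N}_1(u)\cap\mathcal{N}_1(v)$ exposes both components $P_2(w,v)$ and $P_2(u,w)$ of each message pair, so that $\sum_w P_2(w,v)$ and $\sum_w P_2(w,u)$ are simultaneously available once $W(\cdot,\cdot)=P_2(\cdot,\cdot)$ has been established (invoking Lemma \ref{can_pair_level_count_2_path_lemma}). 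Having $P_2(u,v)$, $P_3(u,v)$ and $T(u,v)+T(v,u)$ all stored at the tuple $(u,v)$, a final injective $\mathrm{HASH}_k^{(T)}$ forms the product minus the sum and assigns it to $W^{(T)}(u,v)$.

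The main obstacle is the combinatorial bookkeeping of the second paragraph: carefully characterizing the degenerate overlaps $a=b$ and $a=c$, getting the $1_{d(u,v)=1}$ edge corrections right, and verifying the claimed disjointness so that no second-order correction is needed. Once the identity is nailed down, the reduction to the cited lemmas is routine, and the only mild technical point is arranging the simultaneous computation of $T(u,v)$ and $T(v,u)$ within a single 2-DRFWL(2) aggregation, as indicated above.
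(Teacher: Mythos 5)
Your proposal is correct and follows essentially the same route as the paper, which likewise uses the identity $C_{2,3}(u,v)=P_2(u,v)P_3(u,v)-T(u,v)-T(v,u)$ and then cites Lemmas \ref{can_pair_level_count_2_path_lemma}, \ref{can_pair_level_count_3_path_lemma} and \ref{geq3-tailed-triangle}; your derivation of the correction terms (including the disjointness of the $a=b$ and $a=c$ events) just makes explicit what the paper leaves unstated. The remark on obtaining $T(u,v)+T(v,u)$ from a single $M_{11}^{k}$ aggregation is a valid implementation detail and does not change the argument.
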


\begin{proof}
Notice that
\begin{align}
    C_{2,3}(u,v)=P_2(u,v)P_3(u,v)-T(u,v)-T(v,u).
\end{align}
Therefore, the lemma follows from Lemmas \ref{can_pair_level_count_2_path_lemma}, \ref{can_pair_level_count_3_path_lemma} and \ref{geq3-tailed-triangle}.
\end{proof}

Now, we turn to proving Theorem \ref{geq_3_count}. We restate the theorem as following.

\begin{theorem}
    For any $d\geqslant 3$, $d$-DRFWL(2) GNNs can node-level count 3, 4, 5, 6, 7-cycles, but cannot graph-level count any longer cycles.
\end{theorem}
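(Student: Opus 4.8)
The plan is to separate the claim into a positive part—node-level counting of $3,4,5,6,7$-cycles—and a negative part—no graph-level counting of $k$-cycles for $k\geqslant 8$—and to reduce everything except the $7$-cycle case to results already in hand. For the $3,4,5,6$-cycles I would not redo any work: the proof of Theorem \ref{power_and_separation_of_drfwl2} shows that for $d\geqslant 3$ a $d$-DRFWL(2) test can \emph{implement} any $2$-DRFWL(2) test (by keeping only distance-$\leqslant 2$ tuples active), so the $2$-DRFWL(2) tests that compute $C_k(u)$ for $k=3,4,5,6$ underlying Theorem \ref{cycles_count} are available verbatim; Lemma \ref{drfwl2_count_vs_gnn_count} then converts these into the corresponding $d$-DRFWL(2) GNN counting statements.

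For the negative part I would argue exactly as in the clique case of Theorem \ref{cant_count}. It is known (Arvind et al.; F\"urer) that FWL(2) graph-level counts cycles of length at most $7$ and no longer, so for each $k\geqslant 8$ there is a pair of graphs with different $k$-cycle counts that FWL(2) cannot distinguish. By Theorem \ref{power_and_separation_of_drfwl2}, $d$-DRFWL(2) is strictly less powerful than FWL(2); a model weaker than FWL(2) also fails on that pair, hence $d$-DRFWL(2) GNNs cannot graph-level count $k$-cycles for any $k\geqslant 8$.

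The substantive work is the $7$-cycle case, and this is where $d\geqslant 3$ is essential. I would introduce the pairwise count $C_{3,4}(u,v)$, the number of $7$-cycles through $u$ and $v$ admitting a $3$-path and a $4$-path that are internally disjoint between $u$ and $v$, and recover the node-level count by $C_7(u)=\tfrac12\sum_{1\leqslant d(u,v)\leqslant 3}C_{3,4}(u,v)$: each $7$-cycle through $u$ has exactly two vertices (the two vertices at cyclic distance $3$ from $u$) realising a $(3,4)$-split, and both lie within graph-distance $3$ of $u$, so they are tracked by $d$-DRFWL(2) for $d\geqslant 3$. To compute $C_{3,4}(u,v)$ I would begin with the product $P_3(u,v)\,P_4(u,v)$, which counts all pairs of a $3$-path and a $4$-path from $u$ to $v$, and subtract by inclusion--exclusion the degenerate configurations in which the two paths share an internal vertex. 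The factors $P_3(u,v)$ and $P_4(u,v)$ are computable for $1\leqslant d(u,v)\leqslant 3$ by Lemmas \ref{geq3-3-path} and \ref{geq3-4-path}, and every degenerate term, after summation over $v\in\mathcal{N}_1(u)\cup\mathcal{N}_2(u)\cup\mathcal{N}_3(u)$, should collapse to one of the counts already shown computable for $d\geqslant 3$: path counts, tailed-triangle counts $T(u,v)$ (Lemma \ref{geq3-tailed-triangle}), chordal-cycle counts (Lemma \ref{geq3-chordal-cycle}), triangle-rectangle counts (Lemma \ref{geq3-triangle-rectangle}), and the $5$-cycle pair count $C_{2,3}(u,v)$ (Lemma \ref{geq3-5-cycle}). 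A final application of Lemma \ref{drfwl2_count_vs_gnn_count} passes from the test to the GNN.

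The main obstacle is the clean enumeration of the degenerate configurations in this inclusion--exclusion: the $3$-path has two internal vertices and the $4$-path three, so there are many ways for them to coincide, and I must check that \emph{every} resulting error term reduces to an already-handled count rather than to something genuinely new. This bookkeeping is heavier than, but structurally parallel to, the $6$-cycle argument of Lemma \ref{can_node_level_count_6_cycle_lemma}. The conceptual reason $d\geqslant 3$ cannot be relaxed is visible precisely here: the balanced $(3,4)$-split forces one endpoint of the decomposition to sit at distance $3$ from the other, and distance-$3$ tuples are exactly what $2$-DRFWL(2) refuses to track, consistent with the negative $7$-cycle result for $d=2$ in Theorem \ref{cant_count}.
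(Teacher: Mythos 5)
Your proposal follows essentially the same route as the paper's proof: the negative part via comparison with FWL(2), the reduction of the 3, 4, 5, 6-cycle cases to $d=2$, and for 7-cycles the identity $C_7(u)=\tfrac{1}{2}\sum_{v:1\leqslant d(u,v)\leqslant 3}C_{3,4}(u,v)$ with $C_{3,4}(u,v)=P_3(u,v)P_4(u,v)$ minus degenerate configurations, reduced to the same auxiliary pair-level and node-level counts. The one piece you explicitly leave unexecuted --- verifying that \emph{every} degenerate configuration collapses to an already-computable quantity --- is precisely where the paper's proof spends its effort: it enumerates twelve such configurations (Figure \ref{subtract_struct}) and expresses each, after summation over $v$ with $1\leqslant d(u,v)\leqslant 3$, in terms of $P_2$, $P_3$, $T$, $CC_1$, $CC_2$, $TR_1$, $TR_2$, $TR_3$, $C_{2,3}$, $C_3$, $C_5$ and tailed-triangle counts, confirming that the inclusion--exclusion closes exactly as you anticipate.
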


\begin{proof}
We first prove the negative result: for any $d\geqslant 3$, $d$-DRFWL(2) GNNs cannot graph-level count $k$-cycles, with $k\geqslant 8$. This is because in terms of the ability to distinguish between non-isomorphic graphs, $d$-DRFWL(2) tests (and thus GNNs) are strictly less powerful than FWL(2), for any $d$, as shown in Theorem \ref{power_and_separation_of_drfwl2}. Nevertheless, even the more powerful FWL(2) tests fail to graph-level count $k$-cycles with $k\geqslant 8$. ~\citep{ARVIND202042}

Next, we will prove the positive result. Notice that the update rule of $(d+1)$-DRFWL(2) GNNs always encompasses that of $d$-DRFWL(2) GNNs. Therefore, for any $d\geqslant 3$, the node-level cycle counting power of $d$-DRFWL(2) GNNs is at least as strong as that of 2-DRFWL(2) GNNs. This implies that $d$-DRFWL(2) GNNs can node-level count 3, 4, 5, 6-cycles for any $d\geqslant 3$.

For the same reason, to prove that $d$-DRFWL(2) GNNs can node-level count 7-cycles for any $d\geqslant3$, it suffices to prove that 3-DRFWL(2) GNNs can do so. By Lemma \ref{drfwl2_count_vs_gnn_count}, this reduces to proving the existence of a 3-DRFWL(2) test $q$, such that for any graph $G\in\mathcal{G}$ and for any node $u\in\mathcal{V}_G$, $q$ assigns
\begin{align}
    W^{(T)}(u,u)=C_7(u),
\end{align}
for some integer $T$. 

We try to calculate $C_7(u)$ via
\begin{align}
    C_7(u)=\frac{1}{2}\sum_{v: 1\leqslant d(u,v)\leqslant 3} C_{3,4}(u,v).
\end{align}
We assert that
\begin{align}
    C_{3,4}(u,v)=P_3(u,v)P_4(u,v)-\#(\text{a})-\#(\text{b})-\cdots -\#(\text{l}),
\end{align}
where $\#(\text{a}), \#(\text{b}), \ldots, \#(\text{l})$ refer to numbers of the substructures depicted in (a), (b), \ldots, (l) of Figure \ref{subtract_struct}, respectively. Since $P_3(u,v)$ and $P_4(u,v)$ are calculable by 3-DRFWL(2) tests (by Lemmas \ref{geq3-3-path} and \ref{geq3-4-path}), it now suffices to show that 3-DRFWL(2) tests can calculate the counts $\#(\text{a}), \#(\text{b}), \ldots, \#(\text{l})$, summed over all $v$ with $1\leqslant d(u,v)\leqslant 3$.

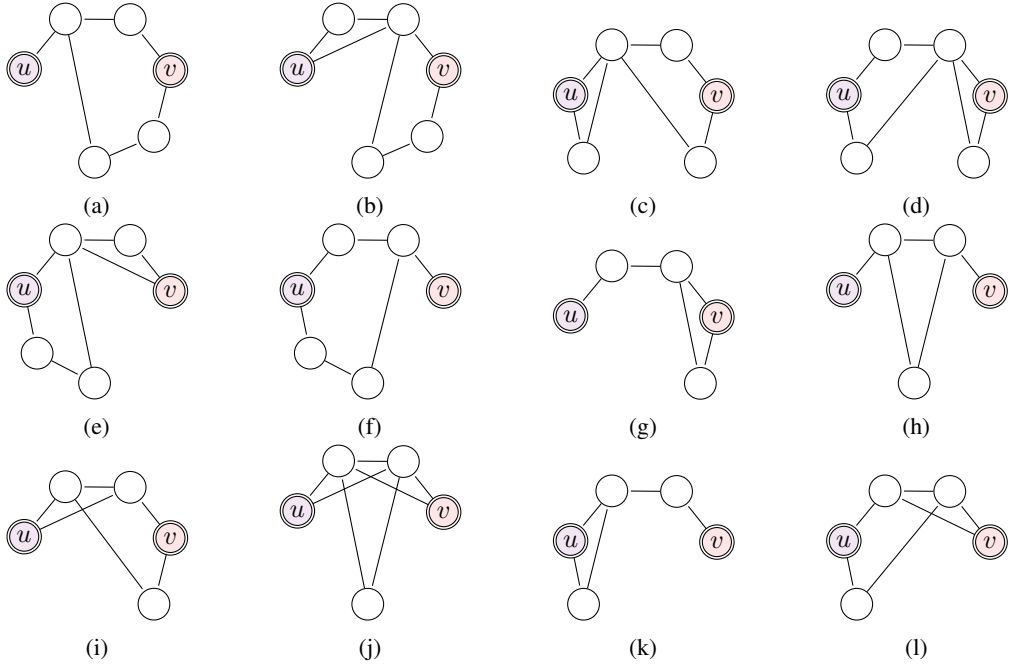
\begin{figure}[H]
\centering

\begin{subfigure}{0.22\textwidth}
\centering
\begin{tikzpicture}[-,shorten >=1pt,on grid,auto,scale=0.5]
\node[state,inner sep=1pt,minimum size=12pt,fill=red!10, accepting,inner sep=1pt,minimum size=12pt](1) at(13:2){$v$};
\node[state,inner sep=1pt,minimum size=12pt, inner sep=1pt,minimum size=12pt](2) at(64:2){};
\node[state,inner sep=1pt,minimum size=12pt,inner sep=1pt,minimum size=12pt](3) at (115:2){};
\node[state,fill=violet!10, accepting,inner sep=1pt,minimum size=12pt,inner sep=1pt,minimum size=12pt](4) at (166:2){$u$};
\node[state,inner sep=1pt,minimum size=12pt,inner sep=1pt,minimum size=12pt](6) at (268:2){};
\node[state,inner sep=1pt,minimum size=12pt,inner sep=1pt,minimum size=12pt](7) at (319:2){};
\path (1) edge (2) edge (7) ;
\path (2) edge (3) ;
\path (3) edge (4) ;
\path (3) edge (6) ;
\path (6) edge (7) ;
\end{tikzpicture}
\caption{}
\end{subfigure}
\hfill
\begin{subfigure}{0.22\textwidth}
\centering
\begin{tikzpicture}[-,shorten >=1pt,on grid,auto,scale=0.5]
\node[state,inner sep=1pt,minimum size=12pt,fill=red!10, accepting,inner sep=1pt,minimum size=12pt](1) at(13:2){$v$};
\node[state,inner sep=1pt,minimum size=12pt, inner sep=1pt,minimum size=12pt](2) at(64:2){};
\node[state,inner sep=1pt,minimum size=12pt,inner sep=1pt,minimum size=12pt](3) at (115:2){};
\node[state,fill=violet!10, accepting,inner sep=1pt,minimum size=12pt,inner sep=1pt,minimum size=12pt](4) at (166:2){$u$};
\node[state,inner sep=1pt,minimum size=12pt,inner sep=1pt,minimum size=12pt](6) at (268:2){};
\node[state,inner sep=1pt,minimum size=12pt,inner sep=1pt,minimum size=12pt](7) at (319:2){};
\path (1) edge (2) edge (7) ;
\path (2) edge (3) ;
\path (3) edge (4) ;
\path (4) edge (2) ;
\path (2) edge (6) ;
\path (6) edge (7) ;
\end{tikzpicture}
\caption{}
\end{subfigure}
\hfill
\begin{subfigure}{0.22\textwidth}
\centering
\begin{tikzpicture}[-,shorten >=1pt,on grid,auto,scale=0.5]
\node[state,inner sep=1pt,minimum size=12pt,fill=red!10, accepting,inner sep=1pt,minimum size=12pt](1) at(13:2){$v$};
\node[state,inner sep=1pt,minimum size=12pt, inner sep=1pt,minimum size=12pt](2) at(64:2){};
\node[state,inner sep=1pt,minimum size=12pt,inner sep=1pt,minimum size=12pt](3) at (115:2){};
\node[state,fill=violet!10, accepting,inner sep=1pt,minimum size=12pt,inner sep=1pt,minimum size=12pt](4) at (166:2){$u$};
\node[state,inner sep=1pt,minimum size=12pt,inner sep=1pt,minimum size=12pt](5) at (217:2){};
\node[state,inner sep=1pt,minimum size=12pt,inner sep=1pt,minimum size=12pt](7) at (319:2){};
\path (1) edge (2) edge (7) ;
\path (2) edge (3) ;
\path (3) edge (4) ;
\path (4) edge (5) ;
\path (5) edge (3) ;
\path (3) edge (7) ;
\end{tikzpicture}
\caption{}
\end{subfigure}
\hfill
\begin{subfigure}{0.22\textwidth}
\centering
\begin{tikzpicture}[-,shorten >=1pt,on grid,auto,scale=0.5]
\node[state,inner sep=1pt,minimum size=12pt,fill=red!10, accepting,inner sep=1pt,minimum size=12pt](1) at(13:2){$v$};
\node[state,inner sep=1pt,minimum size=12pt, inner sep=1pt,minimum size=12pt](2) at(64:2){};
\node[state,inner sep=1pt,minimum size=12pt,inner sep=1pt,minimum size=12pt](3) at (115:2){};
\node[state,fill=violet!10, accepting,inner sep=1pt,minimum size=12pt,inner sep=1pt,minimum size=12pt](4) at (166:2){$u$};
\node[state,inner sep=1pt,minimum size=12pt,inner sep=1pt,minimum size=12pt](5) at (217:2){};
\node[state,inner sep=1pt,minimum size=12pt,inner sep=1pt,minimum size=12pt](7) at (319:2){};
\path (1) edge (2) edge (7) ;
\path (2) edge (3) ;
\path (3) edge (4) ;
\path (4) edge (5) ;
\path (5) edge (2) ;
\path (2) edge (7) ;
\end{tikzpicture}
\caption{}
\end{subfigure}

\begin{subfigure}{0.22\textwidth}
\centering
\begin{tikzpicture}[-,shorten >=1pt,on grid,auto,scale=0.5]
\node[state,inner sep=1pt,minimum size=12pt,fill=red!10, accepting,inner sep=1pt,minimum size=12pt](1) at(13:2){$v$};
\node[state,inner sep=1pt,minimum size=12pt, inner sep=1pt,minimum size=12pt](2) at(64:2){};
\node[state,inner sep=1pt,minimum size=12pt,inner sep=1pt,minimum size=12pt](3) at (115:2){};
\node[state,fill=violet!10, accepting,inner sep=1pt,minimum size=12pt,inner sep=1pt,minimum size=12pt](4) at (166:2){$u$};
\node[state,inner sep=1pt,minimum size=12pt,inner sep=1pt,minimum size=12pt](5) at (217:2){};
\node[state,inner sep=1pt,minimum size=12pt,inner sep=1pt,minimum size=12pt](6) at (268:2){};
\path (1) edge (2) edge (3) ;
\path (2) edge (3) ;
\path (3) edge (4) ;
\path (4) edge (5) ;
\path (5) edge (6) ;
\path (6) edge (3) ;
\end{tikzpicture}
\caption{}
\end{subfigure}
\hfill
\begin{subfigure}{0.22\textwidth}
\centering
\begin{tikzpicture}[-,shorten >=1pt,on grid,auto,scale=0.5]
\node[state,inner sep=1pt,minimum size=12pt,fill=red!10, accepting,inner sep=1pt,minimum size=12pt](1) at(13:2){$v$};
\node[state,inner sep=1pt,minimum size=12pt, inner sep=1pt,minimum size=12pt](2) at(64:2){};
\node[state,inner sep=1pt,minimum size=12pt,inner sep=1pt,minimum size=12pt](3) at (115:2){};
\node[state,fill=violet!10, accepting,inner sep=1pt,minimum size=12pt,inner sep=1pt,minimum size=12pt](4) at (166:2){$u$};
\node[state,inner sep=1pt,minimum size=12pt,inner sep=1pt,minimum size=12pt](5) at (217:2){};
\node[state,inner sep=1pt,minimum size=12pt,inner sep=1pt,minimum size=12pt](6) at (268:2){};
\path (1) edge (2);
\path (2) edge (3) ;
\path (3) edge (4) ;
\path (4) edge (5) ;
\path (5) edge (6) ;
\path (6) edge (2) ;
\end{tikzpicture}
\caption{}
\end{subfigure}
\hfill
\begin{subfigure}{0.22\textwidth}
\centering
\begin{tikzpicture}[-,shorten >=1pt,on grid,auto,scale=0.5]
\node[state,inner sep=1pt,minimum size=12pt,fill=red!10, accepting,inner sep=1pt,minimum size=12pt](1) at(13:2){$v$};
\node[state,inner sep=1pt,minimum size=12pt, inner sep=1pt,minimum size=12pt](2) at(64:2){};
\node[state,inner sep=1pt,minimum size=12pt,inner sep=1pt,minimum size=12pt](3) at (115:2){};
\node[state,fill=violet!10, accepting,inner sep=1pt,minimum size=12pt,inner sep=1pt,minimum size=12pt](4) at (166:2){$u$};
\node[state,inner sep=1pt,minimum size=12pt,inner sep=1pt,minimum size=12pt](7) at (319:2){};
\path (1) edge (2) edge (7) ;
\path (2) edge (3) ;
\path (3) edge (4) ;
\path (2) edge (7) ;
\end{tikzpicture}
\caption{}
\end{subfigure}
\hfill
\begin{subfigure}{0.22\textwidth}
\centering
\begin{tikzpicture}[-,shorten >=1pt,on grid,auto,scale=0.5]
\node[state,inner sep=1pt,minimum size=12pt,fill=red!10, accepting,inner sep=1pt,minimum size=12pt](1) at(13:2){$v$};
\node[state,inner sep=1pt,minimum size=12pt, inner sep=1pt,minimum size=12pt](2) at(64:2){};
\node[state,inner sep=1pt,minimum size=12pt,inner sep=1pt,minimum size=12pt](3) at (115:2){};
\node[state,fill=violet!10, accepting,inner sep=1pt,minimum size=12pt,inner sep=1pt,minimum size=12pt](4) at (166:2){$u$};
\node[state,inner sep=1pt,minimum size=12pt,inner sep=1pt,minimum size=12pt](6) at (268:2){};
\path (1) edge (2);
\path (2) edge (3) ;
\path (3) edge (4) ;
\path (3) edge (6) ;
\path (6) edge (2) ;
\end{tikzpicture}
\caption{}
\end{subfigure}

\begin{subfigure}{0.22\textwidth}
\centering
\begin{tikzpicture}[-,shorten >=1pt,on grid,auto,scale=0.5]
\node[state,inner sep=1pt,minimum size=12pt,fill=red!10, accepting,inner sep=1pt,minimum size=12pt](1) at(13:2){$v$};
\node[state,inner sep=1pt,minimum size=12pt, inner sep=1pt,minimum size=12pt](2) at(64:2){};
\node[state,inner sep=1pt,minimum size=12pt,inner sep=1pt,minimum size=12pt](3) at (115:2){};
\node[state,fill=violet!10, accepting,inner sep=1pt,minimum size=12pt,inner sep=1pt,minimum size=12pt](4) at (166:2){$u$};
\node[state,inner sep=1pt,minimum size=12pt,inner sep=1pt,minimum size=12pt](7) at (319:2){};
\path (1) edge (2) edge (7) ;
\path (2) edge (3) ;
\path (3) edge (4) ;
\path (4) edge (2) ;
\path (3) edge (7) ;
\end{tikzpicture}
\caption{}
\end{subfigure}
\hfill
\begin{subfigure}{0.22\textwidth}
\centering
\begin{tikzpicture}[-,shorten >=1pt,on grid,auto,scale=0.5]
\node[state,inner sep=1pt,minimum size=12pt,fill=red!10, accepting,inner sep=1pt,minimum size=12pt](1) at(13:2){$v$};
\node[state,inner sep=1pt,minimum size=12pt, inner sep=1pt,minimum size=12pt](2) at(64:2){};
\node[state,inner sep=1pt,minimum size=12pt,inner sep=1pt,minimum size=12pt](3) at (115:2){};
\node[state,fill=violet!10, accepting,inner sep=1pt,minimum size=12pt,inner sep=1pt,minimum size=12pt](4) at (166:2){$u$};
\node[state,inner sep=1pt,minimum size=12pt,inner sep=1pt,minimum size=12pt](6) at (268:2){};
\path (1) edge (2) edge (3) ;
\path (2) edge (3) ;
\path (3) edge (4) ;
\path (4) edge (2) ;
\path (2) edge (6) ;
\path (6) edge (3) ;
\end{tikzpicture}
\caption{}
\end{subfigure}
\hfill
\begin{subfigure}{0.22\textwidth}
\centering
\begin{tikzpicture}[-,shorten >=1pt,on grid,auto,scale=0.5]
\node[state,inner sep=1pt,minimum size=12pt,fill=red!10, accepting,inner sep=1pt,minimum size=12pt](1) at(13:2){$v$};
\node[state,inner sep=1pt,minimum size=12pt, inner sep=1pt,minimum size=12pt](2) at(64:2){};
\node[state,inner sep=1pt,minimum size=12pt,inner sep=1pt,minimum size=12pt](3) at (115:2){};
\node[state,fill=violet!10, accepting,inner sep=1pt,minimum size=12pt,inner sep=1pt,minimum size=12pt](4) at (166:2){$u$};
\node[state,inner sep=1pt,minimum size=12pt,inner sep=1pt,minimum size=12pt](5) at (217:2){};
\path (1) edge (2) ;
\path (2) edge (3) ;
\path (3) edge (4) ;
\path (4) edge (5) ;
\path (5) edge (3) ;
\end{tikzpicture}
\caption{}
\end{subfigure}
\hfill
\begin{subfigure}{0.22\textwidth}
\centering
\begin{tikzpicture}[-,shorten >=1pt,on grid,auto,scale=0.5]
\node[state,inner sep=1pt,minimum size=12pt,fill=red!10, accepting,inner sep=1pt,minimum size=12pt](1) at(13:2){$v$};
\node[state,inner sep=1pt,minimum size=12pt, inner sep=1pt,minimum size=12pt](2) at(64:2){};
\node[state,inner sep=1pt,minimum size=12pt,inner sep=1pt,minimum size=12pt](3) at (115:2){};
\node[state,fill=violet!10, accepting,inner sep=1pt,minimum size=12pt,inner sep=1pt,minimum size=12pt](4) at (166:2){$u$};
\node[state,inner sep=1pt,minimum size=12pt,inner sep=1pt,minimum size=12pt](5) at (217:2){};
\path (1) edge (2) edge (3) ;
\path (2) edge (3) ;
\path (3) edge (4) ;
\path (4) edge (5) ;
\path (5) edge (2) ;
\end{tikzpicture}
\caption{}
\end{subfigure}

\caption{The twelve substructures whose counts should be subtracted from $P_3(u,v)P_4(u,v)$ to get $C_{3,4}(u,v)$.}\label{subtract_struct}
\end{figure}

For the rest of the proof, we express each of the counts $\#(\text{a}), \#(\text{b}), \ldots, \#(\text{l})$ (after summing over $\{v\in\mathcal{V}_G:1\leqslant d(u,v)\leqslant 3\}$) in terms of numbers already known calculable by 3-DRFWL(2) tests. Since the enumeration procedure is rather tedious, we directly provide the results below.

\begingroup
\allowdisplaybreaks
\begin{align}
    \notag \sum_{v:1\leqslant d(u,v)\leqslant 3}\#(\text{a})&=\sum_{v:1\leqslant d(u,v)\leqslant 2}\sum_{w\in\mathcal{N}_1(u)\cap\mathcal{N}_1(v)} C_{2,3}(w,v)\\
    \notag &\quad +\sum_{v:1\leqslant d(u,v)\leqslant 3}\sum_{w\in\mathcal{N}_1(u)\cap\mathcal{N}_2(v)}C_{2,3}(w,v)\\
    &\quad -4C_5(u)-TR_2(u),\\
    \notag \sum_{v:1\leqslant d(u,v)\leqslant 3}\#(\text{b})&=\sum_{v:1\leqslant d(u,v)\leqslant 2}\sum_{w\in\mathcal{N}_1(u)\cap\mathcal{N}_1(v)}P_2(u,w)P_3(w,v)\\
    \notag &\quad -\sum_{v\in\mathcal{N}_1(u)}\sum_{w\in\mathcal{N}_1(u)\cap\mathcal{N}_1(v)}P_2(u,w)(P_2(u,w)-1)\\
    \notag &\quad -\sum_{v\in\mathcal{N}_1(u)}\sum_{w\in\mathcal{N}_1(u)\cap\mathcal{N}_1(v)}P_2(w,v)(P_2(w,v)-1)\\
    &\quad -CC_1(u)-2CC_2(u)-4TR_1(u)-TR_2(u),\\
    \notag \sum_{v:1\leqslant d(u,v)\leqslant 3}\#(\text{c})&=\sum_{v:1\leqslant d(u,v)\leqslant 2}\sum_{w\in\mathcal{N}_1(u)\cap\mathcal{N}_1(v)}P_2(u,w)P_2(w,v)(P_2(w,v)-1)\\
    \notag &\quad +\sum_{v:1\leqslant d(u,v)\leqslant 3}\sum_{w\in\mathcal{N}_1(u)\cap\mathcal{N}_2(v)}P_2(u,w)P_2(w,v)(P_2(w,v)-1)\\
    &\quad - 2TR_2(u)-4CC_2(u),\\
    \notag \sum_{v:1\leqslant d(u,v)\leqslant 3}\#(\text{d})&=\sum_{v:1\leqslant d(u,v)\leqslant 2}\sum_{w\in\mathcal{N}_1(u)\cap\mathcal{N}_1(v)}P_2(u,w)P_2(w,v)(P_2(u,w)-1)\\
    \notag &\quad +\sum_{v:1\leqslant d(u,v)\leqslant 3}\sum_{w\in\mathcal{N}_2(u)\cap\mathcal{N}_1(v)}P_2(u,w)P_2(w,v)(P_2(u,w)-1)\\
    \notag &\quad -\sum_{v\in\mathcal{N}_1(u)}\sum_{w\in\mathcal{N}_1(u)\cap\mathcal{N}_1(v)}P_2(u,w)(P_2(u,w)-1)\\
    &\quad -4CC_1(u)-4TR_3(u),\\
    \notag \sum_{v:1\leqslant d(u,v)\leqslant 3}\#(\text{e})&=\sum_{v:1\leqslant d(u,v)\leqslant 2}\sum_{w\in\mathcal{N}_1(u)\cap\mathcal{N}_1(v)}P_3(u,w)P_2(w,v)\\
    \notag &\quad -2\sum_{v\in\mathcal{N}_1(u)}\sum_{w\in\mathcal{N}_1(u)\cap\mathcal{N}_1(v)}P_2(w,v)(P_2(w,v)-1)\\
    &\quad +2CC_1(u)-2CC_2(u)-TR_2(u)-2TR_3(u),\\
    \notag \sum_{v:1\leqslant d(u,v)\leqslant 3}\#(\text{f})&=\sum_{v:1\leqslant d(u,v)\leqslant 2}\sum_{w\in\mathcal{N}_1(u)\cap\mathcal{N}_1(v)} C_{2,3}(u,w)\\
    \notag &\quad +\sum_{v:1\leqslant d(u,v)\leqslant 3}\sum_{w\in\mathcal{N}_2(u)\cap\mathcal{N}_1(v)}C_{2,3}(u,w)\\
    &\quad -4C_5(u)-TR_3(u),\\
    \notag \sum_{v:1\leqslant d(u,v)\leqslant 3}\#(\text{g})&=\sum_{v:1\leqslant d(u,v)\leqslant 2}\sum_{w\in\mathcal{N}_1(u)\cap\mathcal{N}_1(v)}P_2(u,w)P_2(w,v)\\
    \notag &\quad + \sum_{v:1\leqslant d(u,v)\leqslant 3}\sum_{w\in\mathcal{N}_2(u)\cap\mathcal{N}_1(v)}P_2(u,w)P_2(w,v)\\
    &\quad -2CC_2(u)-C(\text{tailed triangle}, u, G),\\
    \notag \sum_{v:1\leqslant d(u,v)\leqslant 3}\#(\text{h})&=\sum_{v:1\leqslant d(u,v)\leqslant 2}\sum_{w\in\mathcal{N}_1(u)\cap\mathcal{N}_1(v)}T(u,w)\\
    \notag &\quad + \sum_{v:1\leqslant d(u,v)\leqslant 3}\sum_{w\in\mathcal{N}_2(u)\cap\mathcal{N}_1(v)}T(u,w)\\
    &\quad -4C(\text{tailed triangle}, u, G),\\
    \sum_{v:1\leqslant d(u,v)\leqslant 3}\#(\text{i})&=TR_1(u),\\
    \sum_{v:1\leqslant d(u,v)\leqslant 3}\#(\text{j})&=\sum_{v\in\mathcal{N}_1(u)}\sum_{w\in\mathcal{N}_1(u)\cap\mathcal{N}_1(v)}P_2(w,v)(P_2(w,v)-1)-4CC_1(u),\\
    \notag \sum_{v:1\leqslant d(u,v)\leqslant 3}\#(\text{k})&=\sum_{v:1\leqslant d(u,v)\leqslant 2}\sum_{w\in\mathcal{N}_1(u)\cap\mathcal{N}_1(v)}P_2(u,w)P_2(w,v)\\
    \notag &\quad + \sum_{v:1\leqslant d(u,v)\leqslant 3}\sum_{w\in\mathcal{N}_1(u)\cap\mathcal{N}_2(v)}P_2(u,w)P_2(w,v)\\
    &\quad -3\sum_{v:1\leqslant d(u,v)\leqslant 2}T(v,u)-4C_3(u)(d(u)-2),\\
    \sum_{v:1\leqslant d(u,v)\leqslant 3}\#(\text{l})&=TR_3(u).
\end{align}
\endgroup
Combining the above results, we conclude that the theorem holds.
\end{proof}

\vspace{-1em}


\section{Comparison between \tbm{d}-DRFWL(2) and localized FWL(2)}\label{comp_with_sparse}

As mentioned in Section \ref{sect_relwork} of the main paper, $d$-DRFWL(2) can be seen as a sparse version of FWL(2). Another representative of this class of methods (adding sparsity to FWL(2)) is the \textbf{localized FWL(2)}, proposed by \citet{zhang2023complete}. In \citep{zhang2023complete}, two instances of localized FWL(2) are provided, i.e., LFWL(2) and SLFWL(2). Different from $d$-DRFWL(2), both LFWL(2) and SLFWL(2) assign a color $W(u,v)$ to \emph{every} 2-tuple $(u,v)\in\mathcal{V}_G^2$, for any given graph $G$. Therefore, their space complexity is equal to that of FWL(2), namely $O(n^2)$. The major difference between LFWL(2)/SLFWL(2) and FWL(2) is in their update rules. Similar to FWL(2), both LFWL(2) and SLFWL(2) update $W(u,v)$ using a multiset of the form $\lbbr(W(w,v),W(u,w))\rbbr$. However, for LFWL(2), we restrict $w$ to be in $\mathcal{N}(v)$; for SLFWL(2), we restrict $w$ to be in $\mathcal{N}(u)\cup\mathcal{N}(v)$. Since restricting the range of $w$ reduces the multiset size from $O(n)$ (as in FWL(2)) to $O(\mathrm{deg})$, the time complexity of LFWL(2) and SLFWL(2) is $O(n^2\ \mathrm{deg})$, which is usually much lower than the $O(n^3)$ time complexity of FWL(2).

We now compare $d$-DRFWL(2) with LFWL(2)/SLFWL(2) in terms of discriminative power. We have the following results:

\begin{itemize}
    \item \textbf{No $\bm{d}$-DRFWL(2) with a fixed $\bm{d}$ value can be more powerful than LFWL(2) or SLFWL(2).} This is because for any finite $d$, we can construct graph pairs that are separable by LFWL(2) and SLFWL(2) but not separable by $d$-DRFWL(2). One of such graph pairs can be two $(3d+1)$-cycles and one $(6d+2)$-cycle, between which $d$-DRFWL(2) cannot discriminate (as shown in the proof of Theorem \ref{power_and_separation_of_drfwl2}). However, since both LFWL(2) and SLFWL(2) can calculate distance between every pair of nodes (following a procedure constructed in the proof of Theorem \ref{power_and_separation_of_drfwl2}), it is easy for LFWL(2) or SLFWL(2) to distinguish between the aforementioned pair of graphs, since they have different diameters.
    \item \textbf{With sufficiently large $\bm{d}$, $\bm{d}$-DRFWL(2) is not less powerful than LFWL(2) or SLFWL(2).} This is because for graphs with diameter $\leqslant d$, $d$-DRFWL(2) has equal power to FWL(2). Since it is shown in \citep{zhang2023complete} that there exist graph pairs separable by FWL(2) but not separable by LFWL(2) or SLFWL(2), once the value of $d$ grows larger than the diameters of such graph pairs, those graph pairs can be separated by $d$-DRFWL(2) but not LFWL(2) or SLFWL(2).
\end{itemize}

Therefore, for sufficiently large $d$, the discriminative power of $d$-DRFWL(2) is neither stronger nor weaker than LFWL(2) or SLFWL(2). However, it remains unknown the relation in discriminative power between LFWL(2)/SLFWL(2) and $d$-DRFWL(2) with a \emph{practical} $d$ value, such as $d=2$ or $d=3$. \textbf{Are $\bm{d}$-DRFWL(2) with those smaller $\bm{d}$ values less powerful than LFWL(2)/SLFWL(2), or are they incomparable?} We leave this question open for future research. 

\section{Experimental details}\label{sect_impl_detail}

\subsection{Model implementation}\label{sect_impl_model_detail}

Let $G$ be a graph with node features $f_u, u\in\mathcal{V}_G$ and edge features $e_{uv},\{u,v\}\in\mathcal{E}_G$. In most of our experiments, we implement $d$-DRFWL(2) GNNs as following: we generate $h_{uv}^{(0)}$ as

\begin{align}
    h_{uv}^{(0)} = \left\{\begin{array}{ll}
       \mathrm{LIN}_0(f_u),  & \text{if }d(u,v)=0, \\
        \mathrm{LIN}_1(f_u+f_v, e_{uv}), & \text{if }d(u,v)=1,\\
        \mathrm{LIN}_k(f_u+f_v), &\text{if }d(u,v)=k\geqslant 2,
    \end{array}\right.
\end{align}
where $\mathrm{LIN}_i, i=0,1,\ldots, d$ are linear functions. When node and edge features are absent, we assign identical values to $h_{uv}^{(0)}$ with the same $d(u,v)$.

The $m_{ijk}^{(t)}$ and $f_k^{(t)}$ functions in \eqref{drfwl2gnn1} and \eqref{drfwl2gnn2} are chosen as
\begin{align}
    \label{impl1}m_{ijk}^{(t)}\left(h_{wv}^{(t-1)}, h_{uw}^{(t-1)}\right) &= \mathrm{ReLU}\left(\mathrm{LIN}^{(t)}\left(h_{wv}^{(t-1)} + h_{uw}^{(t-1)}\right)\right),\\
    \notag f_k^{(t)}\left(h_{uv}^{(t-1)}, \left(a_{uv}^{ijk(t)}\right)_{0\leqslant i,j\leqslant d}\right) &= h_{uv}^{(t-1)} + \mathrm{MLP}_k^{(t)}\Bigg((1+\epsilon)h_{uv}^{(t-1)}\\
    \label{impl2}&\quad +\sum_{\substack{|i-j|\leqslant k\leqslant i+j\\0\leqslant i,j\leqslant d}} \mathrm{LIN}^{(t)}_{\lbbr i,j,k\rbbr}\left(a_{uv}^{ijk(t)}\right)\Bigg),
\end{align}
where $\mathrm{LIN}^{(t)}$ in \eqref{impl1} is a linear module \textbf{whose parameters are shared among all $i,j,k$ combinations}; $\mathrm{LIN}_{\lbbr i,j,k\rbbr}^{(t)}$ in \eqref{impl2} is a linear module \textbf{whose parameters are shared among all $i,j,k$ that form the same multiset $\lbbr i,j,k\rbbr$} (for example, the contributions $a_{uv}^{122}, a_{uv}^{212}$ and $a_{uv}^{221}$ are all linearly transformed by an identical $\mathrm{LIN}_{\lbbr1,2,2\rbbr}^{(t)}$); $\mathrm{MLP}_k^{(t)}$ is a multilayer perceptron, and can depend on $k$; $\epsilon$ can be a constant or a learnable parameter.
 After the $d$-DRFWL(2) GNN layers, we apply a sum-pooling layer
 \begin{align}
     R\left(\lbbr h_{uv}^{(T)}:(u,v)\in\mathcal{V}_G^2\text{ and }0\leqslant d(u,v)\leqslant d\rbbr \right) = \sum_{0\leqslant d(u,v)\leqslant d} h_{uv}^{(T)},
 \end{align}
and then a final MLP. 

We point out that our implementation of $d$-DRFWL(2) GNNs has the property $h_{uv}^{(t)}=h_{vu}^{(t)}$, for any $u,v\in\mathcal{V}_G$ with $0\leqslant d(u,v)\leqslant d$, and for any $t=0,1,\ldots, T$. We call an instance of $d$-DRFWL(2) GNN \emph{symmetrized}, if it preserves the above property. It is obvious that symmetrized $d$-DRFWL(2) GNNs only constitute a subspace of the function space of all $d$-DRFWL(2) GNNs. 

Nevertheless, we remark that for the case of $d=2$, \textbf{symmetrized 2-DRFWL(2) GNNs have equal node-level cycle counting power to 2-DRFWL(2) GNNs}. Actually, in the proofs of Lemmas \ref{can_pair_level_count_2_path_lemma}--\ref{can_node_level_count_6_cycle_lemma} and Theorems \ref{paths_count}--\ref{others_count} (see Appendix \ref{proof_sect_count} for details), all the 2-DRFWL(2) tests we constructed have the property $W^{(t)}(u,v)=W^{(t)}(v,u)$, for any $u,v\in\mathcal{V}_G$ with $0\leqslant d(u,v)\leqslant 2$ and any~$t$; on the other hand, it is easy to establish the equivalence between such kind of 2-DRFWL(2) tests and symmetrized 2-DRFWL(2) GNNs. Similar facts can be verified for $d$-DRFWL(2) GNNs with $d=1$ or $d=3$. Therefore, at least for all our experiments (where $d\leqslant 3$), our symmetrized implementation of $d$-DRFWL(2) GNNs does no harm to their theoretical cycle counting power, although the model is greatly simplified.

\subsection{Experimental settings}

\subsubsection{Substructure counting}\label{impl_detail_on_substruct_counting}

\paragraph{Datasets.} The synthetic dataset is provided by open-source code of GNNAK on \href{https://github.com/LingxiaoShawn/GNNAsKernel/tree/main/data}{github}. The node-level substructure counts are calculated by simple DFS algorithms, which we implement in C language. (This part of code is also available in our \href{https://github.com/zml72062/DR-FWL-2}{repository}.)

\paragraph{Models.} Implementations of all baseline methods (MPNN, ID-GNN, NGNN, GNNAK+, PPGN and I$^2$-GNN) follow~\citep{huang2023boosting}. For $d$-DRFWL(2) GNN ($d=1,2,3$), we use 5 $d$-DRFWL(2) GNN layers. In each layer (numbered by $t=1,\ldots, 5$), $\mathrm{MLP}_k^{(t)}$ is a 2-layer MLP, for $k=0,1,\ldots,d$. (See \eqref{impl2} for the definition of $\mathrm{MLP}_k^{(t)}$.) The embedding size is 64. 

\paragraph{Training settings.} We use Adam optimizer with initial learning rate 0.001, and use plateau scheduler with patience 10, decay factor 0.9 and minimum learning rate $10^{-5}$. We train our model for 2,000 epochs. The batch size is 256.

\subsubsection{Molecular property prediction}\label{dataset_molecular}

\paragraph{Datasets.} The QM9 and ZINC datasets are provided by PyTorch Geometric package \citep{fey2019fast}. The ogbg-molhiv and ogbg-molpcba datasets are provided by Open Graph Benchmark (OGB)~\citep{hu2020open}. We rewrite preprocessing code for all four datasets.

The training/validation/test splitting for QM9 is 0.8/0.1/0.1. The training/validation/test splittings for ZINC, ogbg-molhiv and ogbg-molpcba are provided in the original releases.

\paragraph{Models.} For QM9, we adopt a 2-DRFWL(2) GNN with 5 2-DRFWL(2) GNN layers. In each layer (numbered by $t=1,\ldots, 5$), $\mathrm{MLP}_k^{(t)}$ is a 2-layer MLP, for $k=0,1,2$. The embedding size is 64. 

For ZINC, we adopt a simplified version of 3-DRFWL(2) GNN, which only adds terms $a_{uv}^{313}$, $a_{uv}^{133}$ and $a_{uv}^{331}$ to the update rules \eqref{drfwl2gnn1} and \eqref{drfwl2gnn2} of a 2-DRFWL(2) GNN. Notice that the adopted model architecture is slightly different from the one described in Appendix \ref{sect_impl_model_detail}. This is because we observe that including other kinds of message passing in 3-DRFWL(2) GNN does not improve the performance on ZINC. The adopted 3-DRFWL(2) GNN has 6 3-DRFWL(2) GNN layers. In each layer (numbered by $t=1,\ldots, 6$), $\mathrm{MLP}_k^{(t)}$ is a 2-layer MLP, for $k=0,1,2,3$. The embedding size is 64. We apply batch normalization between every two 3-DRFWL(2) GNN layers.

For ogbg-molhiv, we also adopt a simplified version of 2-DRFWL(2) GNN; namely, we remove the term $a_{uv}^{222}$ in the update rules of the original 2-DRFWL(2) GNN. The adopted 2-DRFWL(2) GNN has 5 2-DRFWL(2) GNN layers. In each layer (numbered by $t=1,\ldots, 5$), $\mathrm{MLP}_k^{(t)}$ is a 2-layer MLP, for $k=0,1,2$. The embedding size is 300. We apply a dropout layer with $p=0.2$ after every 2-DRFWL(2) GNN layer.

For ogbg-molpcba, we adopt a 2-DRFWL(2) GNN with 3 2-DRFWL(2) GNN layers. In each layer (numbered by $t=1,2,3$), $\mathrm{MLP}^{(t)}_k$ is a 2-layer MLP, with layer normalization applied after every MLP layer, for $k=0,1,2$. The embedding size is 256. We apply layer normalization between every two 2-DRFWL(2) GNN layers, and also apply a dropout layer with $p=0.2$ after every 2-DRFWL(2) GNN layer. 

\paragraph{Training settings.} For QM9, we use the Adam optimizer, and use plateau scheduler with patience 10, decay factor 0.9 and minimum learning rate $10^{-5}$. For each of the 12 targets on QM9, we search hyperparameters from the following space: (i) initial learning rate $\in\{0.001,0.002,0.005\}$; (ii) whether to apply layer normalization between every two 2-DRFWL(2) GNN layers (yes/no). We train our model for 400 epochs. The batch size is 64. 

For both ZINC-12K and ZINC-250K, we use Adam optimizer with initial learning rate 0.001. For ZINC-12K, we use plateau scheduler with patience 20, decay factor 0.5 and minimum learning rate $10^{-5}$; for ZINC-250K, we use plateau scheduler with patience 25, decay factor 0.5 and minimum learning rate $5\times 10^{-5}$. We train our model for 500 epochs on ZINC-12K and 800 epochs on ZINC-250K. The batch size is 128. 

For both ogbg-molhiv and ogbg-molpcba, we use Adam optimizer with initial learning rate 0.0005. For ogbg-molhiv, we use step scheduler with step size 20 and decay factor 0.5. We train our model for 100 epochs and the batch size is 64. For ogbg-molpcba, we use step scheduler with step size 10 and decay factor 0.8. We train our model for 150 epochs and the batch size is 256.

\section{Additional experiments}\label{sect_additional_exp}

In Appendix \ref{sect_additional_exp}, we present
\begin{itemize}
    \item Experiments that answer \textbf{Q4} in Section \ref{sect_exp} of the main paper;
    \item Ablation studies on the substructure counting power of 2-DRFWL(2) GNNs;
    \item Additional experiments that study the performance of $d$-DRFWL(2) GNNs on molecular datasets;
    \item Experiments that study the ability of $d$-DRFWL(2) GNNs to capture long-range interactions;
    \item Additional experiments that study the cycle counting power and empirical efficiency of 2-DRFWL(2) GNNs on larger graphs.
\end{itemize}

\subsection{Discriminative power}\label{subsect_exp_dis}

\paragraph{Datasets.} To answer \textbf{Q4}, we evaluate the discriminative power on three synthetic datasets: (1) EXP~\citep{abboud2020surprising}, containing 600 pairs of non-isomorphic graphs that cannot be distinguished by the WL(1) test; (2) SR25~\citep{balcilar2021breaking}, containing 15 non-isomorphic strongly regular graphs that the FWL(2) test fails to distinguish; (3) BREC~\citep{wang2023towards}, containing 400 pairs of non-isomorphic graphs generated from a variety of sources.

For EXP, we follow the evaluation process in~\citep{zhang2021nested}, use 10-fold cross validation, and report the average binary classification accuracy; for SR25, we follow~\citep{zhao2022stars}, treat the task as a 15-way classification, and report the accuracy. Raw data for both EXP and SR25 datasets are provided by open-source code of GNNAK on \href{https://github.com/LingxiaoShawn/GNNAsKernel/tree/main/data}{github}.

For BREC, we use Reliable Paired Comparison (RPC) as the evaluation method, following~\citep{wang2023towards}. For every graph pair $(G,H)$, the RPC procedure consists of two stages: \emph{major procedure} and \emph{reliability check}. 
\begin{itemize}
    \item In the major procedure, we generate $q=32$ copies $(G_i,H_i)$ of $(G,H)$, $i=1,\ldots, q$. The copies $G_i$ and $H_i$ ($i=1,\ldots, q$) are obtained by randomly relabeling nodes in $G$ and $H$, respectively. We then apply a Hotelling's $T^2$ test to check whether we should reject the null hypothesis $H_0:\mathbb{E}_\text{relabel}[f(G)-f(H)]=0$, meaning that the model $f$ cannot distinguish between $G$ and $H$.
    \item  In the reliability check, we replace $H_i$ in each copy $(G_i, H_i)$ with $G_i^\pi$, which is obtained by randomly relabeling nodes in $G_i$, for $i=1,\ldots,q$. We then apply a second $T^2$ test to check whether we should reject the null hypothesis $H'_0:\mathbb{E}_\text{relabel}[f(G)-f(G^\pi)]=0$, meaning that the representations of two isomorphic graphs will not deviate too much from one another due to numerical errors.
\end{itemize}

Finally, $G$ and $H$ are considered separable by $f$ iff we should reject $H_0$ but should not reject $H'_0$. The detailed description of the RPC procedure is provided in~\citep{wang2023towards}.

\begin{table}[H]
\begin{minipage}[h]{0.35\linewidth}
\caption{Accuracy on EXP/SR25.} \vspace{10pt}
\renewcommand\arraystretch{1.25}
\centering
    \vspace{-8pt}
\begin{tabular}{lcc}
 \Xhline{2.5\arrayrulewidth}
Method  & EXP & SR25  \\ \Xhline{1.7\arrayrulewidth}
GIN & 50\% & 6.67\% \\
Nested GIN    &99.9\% & 6.67\%\\
GIN-AK+  &100\% & 6.67\%\\
PPGN   &100\% & 6.67\% \\
3-GCN & 99.7\% & 6.67\%\\
I$^2$-GNN &100\% & 100\% \\\hline
2-DRFWL(2) GNN & 99.8\% & 6.67\%\\
3-DRFWL(2) GNN & 100\% & 6.67\%\\

 \Xhline{2.5\arrayrulewidth}
\end{tabular}
\label{exp_dis}
\end{minipage}\hspace{3em}
\begin{minipage}[h]{0.57\linewidth}

\textbf{Baselines.}   For EXP and SR25, baseline methods are chosen from: (1) Basic MPNNs. These methods have discriminative power upper-bounded by the WL(1) test, and we choose GIN~\citep{xu2018powerful}, which can achieve WL(1) expressive power theoretically. (2) Subgraph MPNNs. These methods are strictly more powerful than WL(1) but strictly upper-bounded by the FWL(2) test~\citep{zhang2023complete}. We choose Nested GIN~\citep{zhang2021nested} and GIN-AK+~\citep{zhao2022stars}. (3) Higher-order GNNs with expressive power equal to FWL(2). We choose PPGN~\citep{maron2019provably} and 3-GCN~\citep{abboud2020surprising}. (4) Methods with discriminative power partially stronger than FWL(2), e.g. I$^2$-GNN~\citep{huang2023boosting}. 

For BREC, the complete list of baseline results is given in Table 2 of~\citep{wang2023towards}. We select 3-WL, NGNN, NGNN with distance encoding (DE), SUN~\citep{frascaunderstanding2022}, SSWL\_P~\citep{zhang2023complete}, GNN-AK, I$^2$-GNN and PPGN as our baselines.
\end{minipage}
\end{table}

\paragraph{Models.} We evaluate both 2-DRFWL(2) GNN and 3-DRFWL(2) GNN on the three datasets. On all three datasets, we adopt a $d$-DRFWL(2) GNN with 5 $d$-DRFWL(2) GNN layers ($d=2$ or $3$). In each layer (numbered by $t=1,\ldots, 5$), $\mathrm{MLP}_k^{(t)}$ is a 2-layer MLP, for $k=0,\ldots,d$. The embedding size is 64 for EXP/SR25, and 32 for BREC. 

\paragraph{Training settings.}  For both EXP and SR25 datasets, we use Adam optimizer with learning rate 0.001. We train our model for 10 epochs on EXP and 100 epochs on SR25. The batch size is 20 on EXP and 64 on SR25. For BREC, we use Adam optimizer with learning rate 0.0001. We train for 10 epochs and the batch size is 32.

\paragraph{Results.} We can see from Table \ref{exp_dis} that 2-DRFWL(2) GNN already achieves 99.8\% accuracy on the EXP dataset, while 3-DRFWL(2) GNN achieves 100\% accuracy. This complies with our theoretical result (Theorem \ref{drfwl_vs_wl1}) that $d$-DRFWL(2) GNNs are strictly more powerful than WL(1); moreover, 3-DRFWL(2) GNN achieves higher accuracy than 2-DRFWL(2) GNN, verifying the expressiveness hierarchy we obtain in Theorem \ref{power_and_separation_of_drfwl2}. Table \ref{exp_dis} also shows that both 2-DRFWL(2) GNN and 3-DRFWL(2) GNN fail on SR25 with a 1/15 accuracy. This verifies our assertion that $d$-DRFWL(2) GNNs are strictly less powerful than FWL(2).

\begin{table}[H]
\caption{Numbers of distinguishable graph pairs on BREC.} 
\renewcommand\arraystretch{1.25}
\centering
\resizebox{0.97\textwidth}{!}{
\begin{tabular}{l|ccccc}
 \Xhline{2.5\arrayrulewidth}
Method & Basic (60) & Regular (140) & Extension (100) & CFI (100) & Total (400)
\\ \Xhline{1.7\arrayrulewidth}
3-WL & 60 & 50 & 100 & 60 & 270\\
NGNN & 59 & 48 & 59 & 0 & 166\\
NGNN+DE & 60 & 50 & 100 & 21 & 231\\
SUN & 60 & 50 & 100 & 13 & 223\\
SSWL\_P & 60 & 50 & 100 & 38 & 248\\
GNN-AK & 60 & 50 & 97 & 15 & 222\\
I$^2$-GNN & 60 & 100 & 100 & 21 & 281\\
PPGN & 60 & 50 & 100 & 23 & 233\\
\Xhline{1.7\arrayrulewidth}
2-DRFWL(2) GNN & 60 & 50 & 99 & 0 & 209\\
3-DRFWL(2) GNN & 60 & 50 & 100 & 13 & 223\\
 \Xhline{2.5\arrayrulewidth}
\end{tabular}
}
\label{exp_brec}
\end{table}

The experimental results on BREC are shown in Table \ref{exp_brec}. Since the 400 graph pairs in BREC are categorized into four classes---basic graphs (60 pairs), regular graphs (140 pairs), extension graphs (100 pairs) and CFI graphs (100 pairs), we report the number of distinguishable graph pairs in each class, and also the total number. From Table \ref{exp_brec}, it is easy to observe the expressiveness gap between $d$-DRFWL(2) and 3-WL (which has equal discriminative power to FWL(2)), and the gap between 2-DRFWL(2) and 3-DRFWL(2). This again verifies Theorem \ref{power_and_separation_of_drfwl2}. 

\subsection{Ablation studies on substructure counting}\label{sect_ablation_study}

We study the impact of different kinds of message passing in 2-DRFWL(2) GNNs on their substructure (especially, cycle) counting power, by comparing the full 2-DRFWL(2) GNN with modified versions in which some kinds of message passing are forbidden. The dataset, model hyperparameters and training settings are the same as those in Appendix \ref{impl_detail_on_substruct_counting}.

\begin{table}[H]
\caption{Ablation studies of node-level counting substructures on synthetic dataset. The colored cell means an error less than 0.01.} 
\renewcommand\arraystretch{1.25}
\centering
\resizebox{\textwidth}{!}{
\begin{tabular}{lcccccccc}
 \Xhline{2.5\arrayrulewidth}
\multirow{2}{*}{Method} & \multicolumn{7}{c}{Synthetic (norm. MAE)}  \\ 
\cmidrule(r){2-8}
& 3-Cyc. & 4-Cyc. & 5-Cyc. & 6-Cyc. & Tail. Tri. & Chor. Cyc. & Tri.-Rect.     \\ \Xhline{1.7\arrayrulewidth}
2-DRFWL(2) GNN  & \ly 0.0004 & \ly 0.0015 & \ly 0.0034 & \ly 0.0087&\ly 0.0030& \ly 0.0026& \ly 0.0070\\
\makecell[l]{2-DRFWL(2) GNN \\ (w/o distance-0 tuples)}&\ly 0.0006& \ly 0.0017& \ly 0.0035& \ly 0.0092& \ly 0.0030& \ly 0.0030 &\ly 0.0070\\
\makecell[l]{2-DRFWL(2) GNN \\ (w/o $a_{uv}^{222}$)} &\ly 0.0006& \ly 0.0013& \ly 0.0033& 0.0448&\ly 0.0029& \ly 0.0029& \ly 0.0067\\
\makecell[l]{2-DRFWL(2) GNN \\ (w/o $a_{uv}^{222}$, $a_{uv}^{122}$,$a_{uv}^{212}$, $a_{uv}^{221}$)}&\ly 0.0004& \ly 0.0009& 0.0852& 0.0735& \ly 0.0020&\ly 0.0020& \ly 0.0051\\
1-DRFWL(2) GNN &\ly 0.0002& 0.0379&0.1078&0.0948&\ly 0.0037& \ly 0.0013& 0.0650\\

 \Xhline{2.5\arrayrulewidth}
\end{tabular}
}
\label{exp_ablation}
\end{table}

\paragraph{Results.} From Table \ref{exp_ablation}, we see that by removing the term $a_{uv}^{222}$ in \eqref{drfwl2gnn2} (i.e., by forbidding a distance-2 tuple $(u,v)$ to receive messages from other two distance-2 tuples $(u,w)$ and $(w,v)$), we make 2-DRFWL(2) GNN unable to node-level count 6-cycles. This complies with our intuitive discussion at the end of Section \ref{sect_count} that such kind of message passing is essential for identifying closed 6-walks, and thus for counting 6-cycles.

Similarly, if we further remove $a_{uv}^{122}$, $a_{uv}^{212}$ and $a_{uv}^{221}$, then 2-DRFWL(2) GNN will be unable to count 5-cycles. If $a_{uv}^{121}$, $a_{uv}^{211}$ and $a_{uv}^{112}$ are also removed, the 2-DRFWL(2) GNN actually degenerates into 1-DRFWL(2) GNN, and becomes unable to count 4-cycles and triangle-rectangles (the latter also containing a 4-cycle as its subgraph). Finally, we can see from Table \ref{exp_ablation} that even 1-DRFWL(2) GNN can count 3-cycles, tailed triangles and chordal cycles, since the numbers of these substructures only depend on the number of closed 3-walks.

We also study the effect of distance-0 tuples in 2-DRFWL(2) GNN. Removing embeddings for those tuples does not affect the theoretical counting power of 2-DRFWL(2) GNN, but we observe that the empirical performance on counting tasks slightly drops, possibly due to optimization issues.

\subsection{Additional experiments on molecular datasets}\label{sect_additional_molecule}

In the following, we present the results on ZINC, ogbg-molhiv and ogbg-molpcba datasets.

\begin{table*}[h]
  \vspace{-5pt}\caption{Ten-runs MAE results on ZINC-12K (smaller the better), four-runs MAE results on ZINC-250K (smaller the better), ten-runs ROC-AUC results on ogbg-molhiv (larger the better), and four-runs AP results on ogbg-molpcba (larger the better). The * indicates the model uses virtual node on \mbox{ogbg-molhiv} and ogbg-molpcba.}
    
    \centering
     \vspace{-5pt}
     \resizebox{\textwidth}{!}{
    \begin{tabular}{lcccc}
    \Xhline{2.5\arrayrulewidth}
        Method & ZINC-12K (MAE) & ZINC-250K (MAE) & ogbg-molhiv (AUC) & ogbg-molpcba (AP) \\ \hline
        GIN* & $0.163\!\pm\!0.004$&$0.088\!\pm\!0.002$ &$77.07\!\pm\!1.49$ & $27.03\!\pm\!0.23$\\
        PNA & $0.188\!\pm\!0.004$&-- &$79.05\!\pm\!1.32$ &$28.38\!\pm\!0.35$ \\
        DGN & $0.168\!\pm\!0.003$ & --&$79.70\!\pm\!0.97$&$28.85\!\pm\!0.30$  \\
        HIMP &$0.151\!\pm\!0.006$ & $0.036\!\pm\! 0.002$&$78.80\!\pm\!0.82$&--  \\
        GSN & $0.115\!\pm\!0.012$&-- & $80.39\!\pm\!0.90$&-- \\
        Deep LRP &-- &-- &$77.19\!\pm\!1.40$ &-- \\
        CIN-small & $0.094\!\pm\!0.004$& $0.044\!\pm\!0.003$& $80.05\!\pm\!1.04$&-- \\
        CIN & $0.079\!\pm\!0.006$& $\pmb{0.022}\!\pm\! 0.002$&$\pmb{80.94}\!\pm\!0.57$&--  \\ 
        Nested GIN* &$0.111\!\pm\!0.003$ & $0.029\!\pm \!0.001$&$78.34\!\pm\!1.86$ &$28.32\!\pm\!0.41$ \\
        GNNAK+ & $0.080\!\pm\!0.001$ & -- & $79.61\!\pm\!1.19$&$\pmb{29.30}\!\pm\!0.44$ \\
        SUN (EGO) & $0.083\!\pm\!0.003$ & -- & $80.03\!\pm\!0.55$ &--  \\
        I$^2$-GNN & $0.083\!\pm\!0.001$& $0.023\!\pm \!0.001$&$78.68\!\pm\!0.93$&-- \\\hline $d$-DRFWL(2) GNN & $\pmb{0.077}\!\pm \!0.002$& $0.025\!\pm\!0.003$&$78.18\!\pm \!2.19$&$25.38\!\pm\!0.19$ \\
    \Xhline{2.5\arrayrulewidth}
    \end{tabular}
    }
    \label{exp-zinc}
\end{table*}

\subsection{Long-range interactions}\label{subsect_lrgb}

In \ref{mol_prop_pred} of the main paper, we mention that 2-DRFWL(2) GNN greatly outperforms subgraph GNNs like NGNN or I$^2$-GNN on the targets $U_0,U,H$ and $G$ of the QM9 dataset, a phenomenon we ascribe to the stronger ability of 2-DRFWL(2) GNN to capture long-range interactions. Actually, a bit knowledge from physical chemistry tells us that the targets $U_0,U,H$ and $G$ not only depend on the chemical bonds between atoms, but also the intermolecular forces (such as hydrogen bonds). Such intermolecular forces can exist between atoms that are distant from each other on the molecular graph (since the two atoms belong to different molecules). However, most subgraph GNNs process the input graphs by extracting $k$-hop subgraphs around each node, usually with a small $k$. This operation thus prevents information from propagating between nodes that are more than $k$-hop away from one another, and makes subgraph GNNs fail to learn the long-range interactions that are vital for achieving good performance on the targets $U_0,U,H$ and $G$. In contrast, 2-DRFWL(2) GNN directly performs message passing between distance-restricted 2-tuples on the \emph{raw graph}, resulting in its capability to capture such long-range interactions. 

So far, it remains an interesting question whether 2-DRFWL(2) GNNs' ability to capture long-range interactions is comparable with that of \textbf{globally expressive} models, such as $k$-GNNs~\citep{morris2019weisfeiler} or GNNs based on SSWL, LFWL(2) or SLFWL(2)~\citep{zhang2023complete}. Those models keep embeddings for \emph{all possible} $k$-tuples $\mathbf{v}\in\mathcal{V}_G^k$ (instead of only distance-restricted ones). In this sense, they treat the input graphs as fully-connected ones, and presumably have stronger capability to learn long-range dependencies than 2-DRFWL(2) GNNs. We verify this assertion by comparing the performance of 2-DRFWL(2) GNN on QM9 with that of 1-2-GNN, 1-2-3-GNN, SSWL+ GNN, LFWL(2) GNN and SLFWL(2) GNN, among which the first two are proposed by \citet{morris2019weisfeiler}, and the others by \citet{zhang2023complete}. We also include the result of 1-GNN for reference.

We present the results in Table \ref{qm9-long-range}. The baseline results for 1-GNN, 1-2-GNN and 1-2-3-GNN are from \citep{morris2019weisfeiler}. For experiments on SSWL+ GNN, LFWL(2) GNN and SLFWL(2) GNN, we implement the three models following the style of PPGN~\citep{maron2019provably}; namely, we use a $B\times d\times n_\mathrm{max}\times n_\mathrm{max}$ matrix to store the embeddings of all 2-tuples within a batch, with $B$ being the batch size, $d$ being the size of the embedding dimension, and $n_\mathrm{max}$ being the maximal number of nodes among all graphs in the batch. 

For each of SSWL+, LFWL(2) and SLFWL(2), we use 5 layers with embedding size 64, and apply layer normalization after each layer. We train for 400 epochs using Adam optimizer with initial learning rate searched from $\{0.001,0.002\}$, and plateau scheduler with patience 10, decay factor 0.9 and minimum learning rate $10^{-5}$. The batch size is 64. We report the best MAE (mean absolute error) for each target. The experimental details for 2-DRFWL(2) GNN follow Appendix \ref{dataset_molecular}.

\begin{table*}[t]
  \vspace{-5pt}\caption{MAE results on QM9 (smaller the better).}\vspace{2pt}
    
    \centering
     \vspace{-8pt}
     \resizebox{\textwidth}{!}{
    \begin{tabular}{l|cccccc|c}
    \Xhline{2.5\arrayrulewidth}
        Target & 1-GNN & 1-2-GNN &1-2-3-GNN&SSWL+ & LFWL(2) & SLFWL(2) & 2-DRFWL(2) GNN\\ \hline
        $\mu$ & 0.493&0.493 &0.476 & 0.418 & 0.439& 0.435&\textbf{0.346}\\
        $\alpha$ & 0.78 & 0.27&0.27&0.271 & 0.315& 0.289& \textbf{0.222}\\
        $\varepsilon_{\text{homo}}$ &0.00321&0.00331&0.00337&0.00298 &0.00332&0.00308&\textbf{0.00226}  \\
        $\varepsilon_{\text{lumo}}$ &0.00355&0.00350& 0.00351&0.00291&0.00332&0.00322&\textbf{0.00225}\\
        $\Delta\varepsilon$ &0.0049 &0.0047& 0.0048 &0.00414&0.00455&0.00447&\textbf{0.00324}\\ 
        $R^2$ &34.1&21.5 &22.9& 18.36&19.10&18.80&\textbf{15.04}\\
        ZPVE &0.00124&0.00018 &0.00019&0.00020&0.00022&0.00020&\textbf{0.00017}\\
        $U_0$ &2.32&\textbf{0.0357}&0.0427&0.110&0.144&0.083&0.156\\
        $U$ &2.08&0.107 &0.111&\textbf{0.106}&0.143&0.121&0.153\\
        $H$ &2.23&0.070&\textbf{0.0419}&0.120&0.164&0.124&0.145\\
        $G$ &1.94 &0.140&\textbf{0.0469}&0.115&0.164&0.103&0.156\\
        $C_v$ &0.27&0.0989&0.0944&0.1083&0.1192&0.1167&\textbf{0.0901}\\
    \Xhline{2.5\arrayrulewidth}
    \end{tabular}
    }
    \label{qm9-long-range}
\end{table*}

\begin{table}[t]
\begin{minipage}[h]{\linewidth}
\centering
\caption{Four-runs results on Long Range Graph Benchmark. Methods are categorized into: MPNNs, Transformer-based methods, our method.} \vspace{10pt}
\renewcommand\arraystretch{1.25}
    \vspace{-8pt}
\begin{tabular}{lcc}
 \Xhline{2.5\arrayrulewidth}
Method  & \texttt{Peptides-func} (AP $\uparrow$) & \texttt{Peptides-struct} (MAE $\downarrow$)  \\ \Xhline{1.7\arrayrulewidth}
GCN&0.5930 $\pm$ 0.0023& 0.3496 $\pm$ 0.0013\\
GCNII & 0.5543 $\pm$ 0.0078& 0.3471 $\pm$ 0.0010\\
GINE & 0.5498 $\pm$ 0.0079 & 0.3547 $\pm$ 0.0045\\
GatedGCN&	0.5864 $\pm$ 0.0077	&0.3420 $\pm$ 0.0013\\
GatedGCN+RWSE & 0.6069 $\pm$ 0.0035 & 0.3357 $\pm$ 0.0006\\
\hline
Transformer+LapPE	&0.6326 $\pm$ 0.0126 & \textbf{0.2529} $\pm$ 0.0016\\
SAN+LapPE	&0.6384 $\pm$ 0.0121&	0.2683 $\pm$ 0.0043\\
SAN+RWSE &	\textbf{0.6439} $\pm$ 0.0075	&0.2545 $\pm$ 0.0012\\\hline
2-DRFWL(2) GNN & 0.5953 $\pm$ 0.0048& 0.2594 $\pm$ 0.0038\\
 \Xhline{2.5\arrayrulewidth}
\end{tabular}\label{lrgb-result}
\end{minipage}
\end{table}

From Table \ref{qm9-long-range}, we see that although 2-DRFWL(2) GNN outperforms all baseline methods on 8 out of the 12 targets, its performance is inferior to globally expressive models on the targets $U_0,U,H$ and $G$. This result implies that the ability of 2-DRFWL(2) GNN to capture long-range interactions is weaker than globally expressive models.

We also evaluate the performance of 2-DRFWL(2) GNN on the Long Range Graph Benchmark (LRGB)~\citep{dwivedi2022long}. LRGB is a collection of graph datasets suitable for testing GNN models' power to learn from long-range interactions. We select the two graph-level property prediction datasets, \texttt{Peptides-func} and \texttt{Peptides-struct}, from LRGB. The task of \texttt{Peptides-func} is multi-label binary classification, and the average precision (AP) is adopted as the metric. The task of \texttt{Peptides-struct} is multi-label regression, and the mean absolute error (MAE) is adopted as the metric.

For experiments on both datasets, we use 5 2-DRFWL(2) GNN layers with hidden size 120. We train 200 epochs using Adam optimizer with initial learning rate 0.001, and plateau scheduler with patience 10, decay factor 0.5, and minimal learning rate $10^{-5}$. The batch size is 64. We also list all baseline results given in \citep{dwivedi2022long}, including MPNNs like GCN~\citep{kipf2016semi}, GCNII~\citep{chen2020simple}, GINE~\citep{hu2019strategies, xu2018powerful} and GatedGCN~\citep{bresson2017residual}, as well as Transformer-based models like fully connected Transformer~\citep{vaswani2017attention} with Laplacian PE (LapPE)~\citep{dwivedi2020generalization, dwivedi2020benchmarking} and SAN~\citep{kreuzer2021rethinking}. The results are shown in Table \ref{lrgb-result}.

From Table \ref{lrgb-result}, we see that 2-DRFWL(2) GNN outperforms almost all MPNNs. Surprisingly, on \texttt{Peptides-struct} the performance of 2-DRFWL(2) GNN is even comparable to Transformer-based methods (which inherently capture long-range information by treating the graph as fully connected). Therefore, we conclude that although designed to capture local structural information, our model still possesses the ability to learn long-range dependencies.

\subsection{Additional experiments on cycle counting power and scalability}\label{subsect_scalability}

\paragraph{Datasets.} To further compare the cycle counting power and scalability of $d$-DRFWL(2) GNNs (especially for the case of $d=2$) with other powerful GNNs, we perform node-level cycle counting tasks on two protein datasets, ProteinsDB and HomologyTAPE, collected from~\citep{hermosilla2020intrinsic}. We evaluate the performance and empirical efficiency of 2-DRFWL(2) GNN as well as baseline models such as MPNN, NGNN, I$^2$-GNN and PPGN, on both datasets. 

The \href{https://github.com/phermosilla/IEConv_proteins}{official code} of~\citep{hermosilla2020intrinsic} provides raw data for three protein datasets---``Enzymes vs Non-Enzymes'', ``Scope 1.75'' and ``Protein Function''. All three datasets contain graphs that represent protein molecules, in which nodes represent amino acids, and edges represent the peptide bonds or hydrogen bonds between amino acids. Each of the three datasets contains a number of \emph{directed} graphs, in which only a small portion of edges are directed. 

We collect the first two datasets from the three and convert all graphs to undirected graphs by adding inverse edges for the directed edges. Since we are only interested in cycle counting tasks, we remove node features (e.g., amino acid types, spatial positions of amino acids) and edge features (e.g., whether the bond is a peptide bond or a hydrogen bond). The node-level cycle counts are generated by the same method as the one described in Appendix \ref{impl_detail_on_substruct_counting}. We then rename the two processed datasets as ProteinsDB and HomologyTAPE respectively. (The new names of the datasets correspond to their original .zip file names.) 

Some statistics of the two processed datasets are shown in Table \ref{stat_proteins}. We also show the statistics of some other datasets we use in Table \ref{stat_other}. From Table \ref{stat_proteins} and Table \ref{stat_other} we can see that the graphs in ProteinsDB and HomologyTAPE datasets are much larger than those in QM9, ZINC, ogbg-molhiv, ogbg-molpcba or the synthetic dataset, thus suitable for evaluating the scalability of GNN models.

\begin{table}[H]
\caption{Statistics of two protein datasets.} \vspace{10pt}
\renewcommand\arraystretch{1.25}
\centering
    \vspace{-8pt}
\begin{tabular}{lccc}
 \Xhline{2.5\arrayrulewidth}
Dataset  & Number of graphs & Average number of nodes & Average number of edges \\ \Xhline{1.7\arrayrulewidth}
ProteinsDB & 1,178 & 475.9 & 714.8 \\
HomologyTAPE & 16,292 &  167.3& 256.7 \\
 \Xhline{2.5\arrayrulewidth}
\end{tabular}
\label{stat_proteins}
\end{table}

\begin{table}[H]
\caption{Statistics of other datasets we use.} \vspace{10pt}
\renewcommand\arraystretch{1.25}
\centering
    \vspace{-8pt}
\begin{tabular}{lccc}
 \Xhline{2.5\arrayrulewidth}
Dataset  & Number of graphs & Average number of nodes & Average number of edges \\ \Xhline{1.7\arrayrulewidth}
 Synthetic & 5,000 & 18.8 & 31.3 \\
QM9 & 130,831 &  18.0 & 18.7 \\
ZINC-12K  & 12,000 & 23.2 & 24.9\\
ZINC-250K & 249,456 & 23.2 & 24.9\\
ogbg-molhiv & 41,127 & 25.5 & 27.5\\
ogbg-molpcba & 437,929 & 26.0 & 28.1 \\
 \Xhline{2.5\arrayrulewidth}
\end{tabular}
\label{stat_other}
\end{table}

For ProteinsDB, we use 10-fold cross validation and report the average normalized MAE for each counting task. We use the splitting provided in the raw data. 

For HomologyTAPE, there are three test splits in the raw data, called ``test\_fold'', ``test\_family'' and ``test\_superfamily'' respectively. The training/validation/test\_fold/test\_family/test\_superfamily splitting is 12,312/736/718/1,272/1,254. We report the weighted average normalized MAE on the three test splits for each counting task.

\paragraph{Models.} We adopt MPNN, NGNN, I$^2$-GNN and PPGN as our baselines. The implementation details of 2-DRFWL(2) GNN as well as all baseline methods follow Appendix \ref{impl_detail_on_substruct_counting}, except that (i) for NGNN and I$^2$-GNN, the subgraph height is 3 for all tasks, and (ii) for PPGN, the number of PPGN layers is 5 and the embedding size is 64.

\paragraph{Training settings.} We use Adam optimizer with initial learning rate 0.001, and use plateau scheduler with patience 10, decay factor 0.9 and minimum learning rate $10^{-5}$. We train all models for 1,500 epochs on ProteinsDB and 2,000 epochs on HomologyTAPE. The batch size is 32.

\paragraph{Results.} From Table \ref{exp_count_cycle_on_proteins}, we see that 2-DRFWL(2) GNN achieves relatively low errors (average normalized MAE $<0.002$ for ProteinsDB, and $<0.0002$ for HomologyTAPE) for counting all 3, 4, 5 and 6-cycles on both datasets. This again verifies our theoretical result (Theorem \ref{cycles_count}). Although I$^2$-GNN and PPGN are also provably capable of counting up to 6-cycles, PPGN fails to scale to either ProteinsDB or HomologyTAPE dataset, while I$^2$-GNN fails to scale to ProteinsDB. Therefore, 2-DRFWL(2) GNN maintains good scalability compared with other existing GNNs that can count up to 6-cycles.

We also measure (i) the maximal GPU memory usage during training, (ii) the preprocessing time, and (iii) the training time per epoch, for all models on both datasets. The result is shown in Table \ref{exp_efficiency_on_proteins}. We again see that 2-DRFWL(2) GNN uses much less GPU memory while training, compared with subgraph GNNs. Besides, the training time of 2-DRFWL(2) GNN is shorter than that of I$^2$-GNN, while the preprocessing time is comparable. 

\begin{table}[t]
  \vspace{-5pt}\caption{Average normalized MAE results of node-level counting cycles on ProteinsDB and HomologyTAPE datasets. The colored cell means an error less than 0.002 (for ProteinsDB) or 0.0002 (for HomologyTAPE). ``Out of GPU Memory'' means the method takes an amount of GPU memory more than 24 GB.} 
\renewcommand\arraystretch{1.25}
\centering
    \vspace{8pt}
\resizebox{\textwidth}{!}{
\begin{tabular}{lcccc|cccc}
 \Xhline{2.5\arrayrulewidth}
\multirow{2}{*}{Method} & \multicolumn{4}{c|}{ProteinsDB} & \multicolumn{4}{c}{HomologyTAPE} \\ 
\cmidrule(r){2-5}\cmidrule(r){6-9}
& 3-Cycle     & 4-Cycle  & 5-Cycle & 6-Cycle  & 3-Cycle &4-Cycle & 5-Cycle     & 6-Cycle     \\ \Xhline{1.7\arrayrulewidth}
MPNN  & 0.422208 & 0.223064   &  0.254188   & 0.203454 & 0.185634 & 0.162396 & 0.211261 & 0.158633\\
NGNN  &  \ly\textbf{0.000070}  &  \ly\textbf{0.000189} & 0.003395 & 0.009955  &  \ly 0.000003&  \ly 0.000016 &  0.003706 & 0.002541\\
I$^2$-GNN  & \multicolumn{4}{c|}{Out of GPU Memory}  &  \ly 0.000002 & \ly 0.000010 & \ly 0.000069 &  \ly \textbf{0.000177}\\
PPGN  &   \multicolumn{4}{c|}{Out of GPU Memory}   &  \multicolumn{4}{c}{Out of GPU Memory}\\\hline
2-DRFWL(2) GNN &\ly 0.000077 & 
 \ly 0.000192 & \ly \textbf{0.000317} & \ly \textbf{0.001875} & \ly \textbf{0.000001} & \ly \textbf{0.000007} & \ly \textbf{0.000021} & \ly 0.000178\\
 \Xhline{2.5\arrayrulewidth}
\end{tabular}
}
\label{exp_count_cycle_on_proteins}
\end{table}

\begin{table}[t]
\caption{Empirical efficiency on ProteinsDB and HomologyTAPE datasets. ``OOM'' means the method takes an amount of GPU memory more than 24 GB.} 
\renewcommand\arraystretch{1.25}
\centering
\resizebox{\textwidth}{!}{
\begin{tabular}{lccc|ccc}
 \Xhline{2.5\arrayrulewidth}
\multirow{2}{*}{Method} & \multicolumn{3}{c|}{ProteinsDB} & \multicolumn{3}{c}{HomologyTAPE} \\ 
\cmidrule(r){2-7}
& Memory (GB) & Pre. (s) & Train (s/epoch)  & Memory (GB) & Pre. (s) & Train (s/epoch)      \\ \Xhline{1.7\arrayrulewidth}
MPNN & 2.60 & 235.7 & 0.597 & 1.99 & 243.8 & 5.599\\
NGNN    &  16.94 & 941.8 & 2.763 & 8.44 & 1480.7 & 15.249 \\
I$^2$-GNN  & OOM & 1293.4 & OOM & 21.97 & 3173.6 & 38.201\\
PPGN & OOM & 235.7 & OOM & OOM & 243.8 & OOM\\
\hline
2-DRFWL(2) GNN  & 8.11 & 1843.7 & 3.809 & 3.82 & 2909.3 & 30.687\\
 \Xhline{2.5\arrayrulewidth}
\end{tabular}
}
\label{exp_efficiency_on_proteins}
\end{table}

From Table \ref{exp_efficiency_on_proteins}, we also see one limitation of our implementation of 2-DRFWL(2) GNN, which we have mentioned in Section \ref{sect_concl}---the preprocessing of 2-DRFWL(2) GNN is not efficient enough on larger graphs or graphs with a larger average degree. We believe this limitation can be addressed by developing more efficient, parallelized operators for the preprocessing of 2-DRFWL(2) GNN, and we leave the exploration of such implementations to future work.
\end{document}